\providecommand{\tabularnewline}{\\}
\providecommand{\algorithmname}{Algorithm}
\theoremstyle{plain}
\newtheorem{thm}{\protect\theoremname}[section]
\theoremstyle{plain}
\newtheorem{cor}[thm]{\protect\corollaryname}
\theoremstyle{plain}
\newtheorem{lem}[thm]{\protect\lemmaname}
\newenvironment{proof}[1][\protect\proofname]{\par
	\normalfont\topsep6\p@\@plus6\p@\relax
	\trivlist
	\itemindent\parindent
	\item[\hskip\labelsep\scshape #1]\ignorespaces
}{%
	\endtrivlist\@endpefalse
}
\providecommand{\proofname}{Proof}
\theoremstyle{remark}
\newtheorem{rem}[thm]{\protect\remarkname}
\providecommand{\corollaryname}{Corollary}
\providecommand{\lemmaname}{Lemma}
\providecommand{\remarkname}{Remark}
\providecommand{\theoremname}{Theorem}
\global\long\def\bH{\mathbb{\mathbf{\mathbf{H}}}}%
\global\long\def\bD{\mathbb{\mathbf{\mathbf{D}}}}%
\global\long\def\bE{\mathbb{\mathbf{\mathbf{E}}}}%
\global\long\def\hH{\mathbb{\widehat{\bH}}}%
\global\long\def\tH{\widetilde{\bH}}%
\global\long\def\E{\mathbb{\mathbb{E}}}%
\global\long\def\F{\mathcal{F}}%
\global\long\def\norm#1{\|#1\|}%
\global\long\def\R{\mathbb{R}}%
\global\long\def\algnameold{\text{META-STORM-SG}}%
\global\long\def\algnamenew{\text{META-STORM}}%
\global\long\def\algnamena{\text{META-STORM-NA}}%
\global\long\def\domxi{\mathcal{D}}%
\global\long\def\hG{\widehat{G}}%
\global\long\def\hS{\widehat{\sigma}}%
\global\long\def\rd{\hS}%
\global\long\def\rn{\hG}%
\global\long\def\hp{\widehat{p}}%
\global\long\def\xout{x_{\text{out}}}%
\title{META-STORM: Generalized Fully-Adaptive Variance Reduced SGD for Unbounded
Functions}
\author{ 
Zijian Liu\thanks{Equal contribution. New York University, Stern School of Business,
\texttt{{zl3067@nyu.edu}}. Work done while at Boston University.} \\ \and 
Ta Duy Nguyen\thanks{Equal contribution. Department of Computer Science, Boston University,
\texttt{{taduy@bu.edu}.}}  \\ \and 
Thien Hang Nguyen\thanks{Equal contribution. Khoury College of Computer and Information Science, Northeastern University,
\texttt{{nguyen.thien@northeastern.edu}.}}  \\ 
\end{tabular}\hfil\linebreak[4]\hfil
\begin{tabular}[t]{l} \ignorespaces
Alina Ene\thanks{Department of Computer Science, Boston University, \texttt{{aene@bu.edu}.}} \\ \and 
Huy L. Nguyen\thanks{Khoury College of Computer and Information Science, Northeastern University,
\texttt{{hu.nguyen@northeastern.edu}.}}}
\begin{document}

\maketitle
\begin{abstract}
We study the application of variance reduction (VR) techniques to
general non-convex stochastic optimization problems. In this setting,
the recent work STORM \cite{cutkosky2019momentum} overcomes the
drawback of having to compute gradients of ``mega-batches'' that
earlier VR methods rely on. There, STORM utilizes recursive momentum
to achieve the VR effect and is then later made fully adaptive in
STORM+ \cite{levy2021storm+}, where full-adaptivity removes the
requirement for obtaining certain problem-specific parameters such
as the smoothness of the objective and bounds on the variance and
norm of the stochastic gradients in order to set the step size. However,
STORM+ crucially relies on the assumption that the function values
are bounded, excluding a large class of useful functions. In this
work, we propose $\algnamenew$, a generalized framework of STORM+
that removes this bounded function values assumption while still attaining
the optimal convergence rate for non-convex optimization. $\algnamenew$
not only maintains full-adaptivity, removing the need to obtain problem
specific parameters, but also improves the convergence rate's dependency
on the problem parameters. Furthermore, $\algnamenew$ can utilize
a large range of parameter settings that subsumes previous methods
allowing for more flexibility in a wider range of settings. Finally,
we demonstrate the effectiveness of META-STORM through experiments
across common deep learning tasks. Our algorithm improves upon the
previous work STORM+ and is competitive with widely used algorithms
after the addition of per-coordinate update and exponential moving
average heuristics.
\end{abstract}

\section{Introduction }

In this paper, we consider the stochastic optimization problem in
the form 
\begin{align}
\min_{x\in\R^{d}}F(x) & :=\E_{\xi\sim\domxi}\left[f(x,\xi)\right],\label{eq:problem}
\end{align}
where $F:\R^{d}\to\R$ is possibly non-convex. We assume only access
to a first-order stochastic oracle via sample functions $f(x,\xi)$,
where $\xi$ comes from a distribution $\domxi$ representing the
randomness in the sampling process. Optimization problems of this
form are ubiquitous in machine learning and deep learning. Empirical
risk minimization (ERM) is one instance, where $F(x)$ is the loss
function that can be evaluated by a sample or a minibatch represented
by $\xi$.

An important advance in solving Problem (\ref{eq:problem}) is the
recent development of variance reduction (VR) techniques that improve
the convergence rate to critical points of vanilla SGD from $O(1/T^{1/4})$
to $O(1/T^{1/3})$ \cite{fang2018spider,li2021page} for the class
of \emph{mean-squared smooth} functions \cite{arjevani2019lower}.
In contrast to earlier VR algorithms which often require the computation
of the gradients over large batches, recent methods such as \cite{cutkosky2019momentum,levy2021storm+,huang2021super}
avoid this drawback by using a weighted average of past gradients,
often known as \emph{momentum}. When the weights are selected appropriately,
momentum reduces the error in the gradient estimates which improves
the convergence rate.

A different line of work on adaptive methods \cite{duchi2011adaptive, kingma2014adam},
some of which incorporate momentum techniques, have shown tremendous
success in practice. These adaptive methods remove the burden of obtaining
certain problem-specific parameters, such as smoothness, in order
to set the right step size to guarantee convergence. STORM+ \cite{levy2021storm+}
is the first algorithm to bridge the gap between \emph{fully-adaptive}
algorithms and VR methods, achieving the variance-reduced convergence
rate of $O(1/T^{1/3})$ while not requiring knowledge of any problem-specific
parameter. This is also the first work to demonstrate the interplay
between adaptive momentum and step sizes to adapt to the problem's
structure, while still achieving the VR rate. However, STORM+ relies
on a strong assumption that the function values are bounded, which
generally does not hold in practice. Moreover, the convergence rate
of STORM+ has high polynomial dependencies on the problem parameters,
compared to what can be achieved by appropriately configuring the
step sizes and momentum parameters given knowledge of the problem
parameters (see Section \ref{subsec:Non-adaptive-algorithm}).

\textbf{Our contributions:} In this work, we propose $\algnameold$
and $\algnamenew$, two flexible algorithmic frameworks that attain
the optimal variance-reduced convergence rate for general non-convex
objectives. Both of them generalize STORM+ by allowing a wider range
of parameter selection and removing the restrictive bounded function
value assumption while maintaining its desirable fully-adaptive property
-- eliminating the need to obtain any problem-specific parameter.
These have been enabled via our novel analysis framework that also
establishes a convergence rate with much better dependency on the
problem parameters. We present a comparison of $\algnamenew$ and
its sibling $\algnameold$ against recent VR methods in Table \ref{tab:Algs-Comparison}.
In the appendix, we propose another algorithm, $\text{META-STORM-NA}$,
with even less restrictive assumptions; however, with a tradeoff of
losing the adaptivity to the variance parameter. 

We complement our theoretical results with experiments across three
common tasks: image classification, masked language modeling, and
sentiment analysis. Our algorithms improve upon the previous work,
STORM+. Furthermore, the addition of heuristics such as exponential
moving average and per-coordinate updates improves our algorithms'
generalization performance. These versions of our algorithms are shown
to be competitive with widely used algorithms such as Adam and AdamW.

\begin{table}[t]
\begin{onehalfspace}
\caption{{\small{}Comparison of the convergence rate after $T$ iterations
under constant success probability. The assumptions and definitions
of the parameters referenced can be found in Section \ref{sec:preliminaries}.
Assumptions }\textbf{\small{}1}{\small{} and }\textbf{\small{}2}{\small{}
are used in all algorithms, thus we leave them out from the table.\label{tab:Algs-Comparison}}}

\end{onehalfspace}
\centering{}{\footnotesize{}}%
\begin{tabular}{>{\raggedright}m{0.16\textwidth}c>{\centering}p{0.5\textwidth}c}
\hline 
\noalign{\vskip\doublerulesep}
{\footnotesize{}Methods} & {\footnotesize{}Adaptive?} & {\footnotesize{}Convergence rate} & {\footnotesize{}Assumptions}\tabularnewline[4pt]
\hline 
\hline 
\noalign{\vskip\doublerulesep}
\multirow{2}{0.16\textwidth}{{\footnotesize{}STORM \cite{cutkosky2019momentum}}} & \multirow{2}{*}{{\footnotesize{}$\times$}} & {\footnotesize{}$O\left(\frac{\kappa^{1/2}+\kappa^{3/4}\hG^{-1/2}+\sigma+\hG\log^{3/4}T}{T^{1/2}}+\frac{\sigma^{1/3}}{T^{1/3}}\right)$} & \multirow{2}{*}{\textbf{\footnotesize{}3', 4}}\tabularnewline[4pt]
\noalign{\vskip\doublerulesep}
 &  & {\footnotesize{}$\kappa=O\left(\beta\left(F(x_{1})-F^{*}\right)\right)$} & \tabularnewline[4pt]
\hline 
\noalign{\vskip\doublerulesep}
\multirow{3}{0.16\textwidth}{{\footnotesize{}Super-ADAM \cite{huang2021super}}} & \multirow{3}{*}{{\footnotesize{}$\times$}} & {\footnotesize{}$O\left(\left(\kappa^{1/2}+\sigma\log T\right)\left(\frac{1}{T^{1/2}}+\frac{1}{T^{1/3}}\right)\right)$} & \multirow{3}{*}{\textbf{\footnotesize{}3'}}\tabularnewline[4pt]
\noalign{\vskip\doublerulesep}
 &  & {\footnotesize{}$\kappa=O\left(\beta\left(F(x_{1})-F^{*}\right)\right)$} & \tabularnewline[4pt]
\noalign{\vskip\doublerulesep}
 &  & {\footnotesize{}Does not adapt to $\sigma$} & \tabularnewline[4pt]
\hline 
\noalign{\vskip\doublerulesep}
\multirow{3}{0.16\textwidth}{{\footnotesize{}STORM+ \cite{levy2021storm+}}} & \multirow{3}{*}{{\footnotesize{}$\checkmark$}} & {\footnotesize{}$O\left(\frac{\kappa_{1}}{T^{1/2}}+\frac{\kappa_{2}\sigma^{1/3}}{T^{1/3}}\right)$} & \multirow{3}{*}{\textbf{\footnotesize{}3', 4, 6}}\tabularnewline[4pt]
\noalign{\vskip\doublerulesep}
 &  & {\footnotesize{}$\kappa_{1}=O\left(\beta^{9/4}+\hG^{5}+\beta^{3/2}\hG^{6}+B^{9/8}\right)$} & \tabularnewline[4pt]
\noalign{\vskip\doublerulesep}
 &  & {\footnotesize{}$\kappa_{2}=O\left(\beta^{3/2}+B^{3/4}\right)$} & \tabularnewline[4pt]
\hline 
\hline 
\noalign{\vskip\doublerulesep}
\multirow{3}{0.16\textwidth}{{\footnotesize{}$\algnameold$, $p=\frac{1}{2}$ (Ours)}\tablefootnote{{\footnotesize{}This bound holds when $\sigma^{2}>0$ and $T$ is
large enough.}}} & \multirow{3}{*}{{\footnotesize{}$\checkmark$}} & {\footnotesize{}$O\left(\left(\kappa_{1}+\kappa_{2}\log\left(1+\sigma^{2}T\right)\right)\left(\frac{1}{T^{1/2}}+\frac{\sigma^{1/3}}{T^{1/3}}\right)\right)$} & \multirow{3}{*}{\textbf{\footnotesize{}3}{\footnotesize{},}\textbf{\footnotesize{}
4}}\tabularnewline[4pt]
\noalign{\vskip\doublerulesep}
 &  & {\footnotesize{}$\kappa_{1}=O\left(F(x_{1})-F^{*}+\sigma^{2}+\hG^{2}+\kappa_{2}\log\kappa_{2}\right)$} & \tabularnewline[4pt]
\noalign{\vskip\doublerulesep}
 &  & {\footnotesize{}$\kappa_{2}=O((1+\hG^{2})\beta)$} & \tabularnewline[4pt]
\hline 
\noalign{\vskip\doublerulesep}
\multirow{3}{0.16\textwidth}{{\footnotesize{}$\algnamenew$, $p=\frac{1}{2}$ (Ours)}} & \multirow{3}{*}{{\footnotesize{}$\checkmark$}} & {\footnotesize{}$O\left(\left(\kappa_{1}+\kappa_{2}\log\left(1+\sigma^{2}T\right)\right)\left(\frac{1}{T^{1/2}}+\frac{\sigma^{1/3}}{T^{1/3}}\right)\right)$} & \multirow{3}{*}{\textbf{\footnotesize{}3}{\footnotesize{},}\textbf{\footnotesize{}
5}}\tabularnewline[4pt]
\noalign{\vskip\doublerulesep}
 &  & {\footnotesize{}$\kappa_{1}=O\left(F(x_{1})-F^{*}+\rd\sigma+\sigma^{2}+\rd^{3}+\kappa_{2}\log\kappa_{2}\right)$} & \tabularnewline[4pt]
\noalign{\vskip\doublerulesep}
 &  & {\footnotesize{}$\kappa_{2}=O((1+\rd^{3})\beta)$} & \tabularnewline[4pt]
\hline 
\end{tabular}{\footnotesize\par}
\end{table}

\subsection{Related work}

\textbf{Variance reduction methods for stochastic non-convex optimization:}
Variance reduction is introduced for non-convex optimization by \cite{allen2016variance,reddi2016stochastic}
in the context of finite sum optimization, achieving faster convergence
over the full gradient descent method. These methods are first improved
by \cite{lei2017non} and later by \cite{fang2018spider,li2021page}
both of which achieve an $O(1/T^{1/3})$ convergence rate, matching
the lower bounds in \cite{arjevani2019lower}. However, these earlier
methods periodically need to compute the full gradient (in the finite-sum
case) or a giant batch at a check point, which can be quite costly.
Shortly after, \cite{cutkosky2019momentum} and \cite{tran2019hybrid}
introduce a different approach that utilizes stochastic gradients
from previous time steps instead of computing the full gradient at
a checkpoints. These methods are framed as momentum-based methods
as they are similar to using a weighted average of the gradient estimates
to achieve the variance reduction. Recently, SUPER-ADAM \cite{huang2021super}
integrates STORM in a larger framework of adaptive algorithms, but
loses adaptivity to the variance parameter $\sigma$. At the same
time, STORM+ \cite{levy2021storm+} proposes a \emph{fully adaptive}
version of STORM, which our work builds upon.

\textbf{Adaptive methods for stochastic non-convex optimization: }Classical
methods, like SGD \cite{ghadimi2013stochastic}, typically require
the knowledge of problem parameters, such as the smoothness and the
variance of the stochastic gradients, to set the step sizes. In contrast,
adaptive methods \cite{duchi2011adaptive, tieleman2012lecture, kingma2014adam}
forgo this requirement: their step sizes only rely on the stochastic
gradients obtained by the algorithms. Although these adaptive methods
are originally designed for \emph{convex} optimization, they enjoy
great successes and popularity in highly \emph{non-convex} practical
applications such as training deep neural networks, often making them
the method of choice in practice. As a result, theoretical understanding
of adaptive methods for non-convex problems has received significant
attention in recent years. The works by \cite{ward2019adagrad,kavis2021high}
propose a convergence analysis of AdaGrad under various assumptions.
Among VR methods, STORM+ is the only fully adaptive algorithm that
does not require knowledge of any problem parameter. Our work builds
on and generalizes STORM+, removing the bounded function value assumption
while obtaining much better dependencies on the problem parameters.

\subsection{Problem definition and assumptions \label{sec:preliminaries}}

We study stochastic non-convex optimization problems for which the
objective function $F:\R^{d}\to\R$ that has form $F(x):=\E_{\xi\sim\domxi}\left[f(x,\xi)\right]$
and $f(\cdot,\xi)$ is a sampling function depending on a random variable
$\xi$ drawn from a distribution $\domxi$. We will omit the writing
of $\domxi$ in $\E_{\xi\sim\domxi}\left[f(x,\xi)\right]$ for simplicity
in the remaining paper. $\|\cdot\|$ represents $\|\cdot\|_{2}$ for
brevity. $\left[T\right]$ is defined as $\left\{ 1,2,\cdots,T\right\} $.

The analysis of our algorithms relies on the following assumptions
1--5:

\textbf{1. Lower bounded function value}: $F^{*}\coloneqq\inf_{x\in\R^{d}}F(x)>-\infty$.

\textbf{2. Unbiased estimator with bounded variance}: We assume to
have access to $\nabla f(x,\xi)$ satisfying $\E_{\xi}\left[\nabla f(x,\xi)\right]=\nabla F(x)$,
$\E_{\xi}\left[\norm{\nabla f(x,\xi)-\nabla F(x)}^{2}\right]\le\sigma^{2}$
for some $\sigma\geq0$.

\textbf{3. Averaged $\beta$-smoothness}: $\E_{\xi}\left[\norm{\nabla f(x,\xi)-\nabla f(y,\xi)}^{2}\right]\le\beta^{2}\norm{x-y}^{2},\forall x,y\in\R^{d}$.

\textbf{4. Bounded stochastic gradients:} $\|\nabla f(x,\xi)\|\leq\widehat{G},\forall x\in\R^{d},\xi\in\mathbf{support}(\domxi)$
for some $\hG\geq0$.

\textbf{5. Bounded stochastic gradient differences}: $\norm{\nabla f(x,\xi)-\nabla f(x,\xi')}\le2\hS,\forall x\in\R^{d},\xi,\xi'\in\mathbf{support}(\domxi)$
for some $\hS\geq0$.

Assumptions 1, 2 and 3 are standard in the VR setting \cite{arjevani2019lower}.
Assumption 5 is weaker than the assumptions made in the prior works
based on the STORM framework \cite{cutkosky2019momentum,levy2021storm+}.
These works assume that the stochastic gradients are bounded, i.e.,
Assumption 4. We note that assumption 4 implies that assumption 5
holds by replacing $\widehat{\sigma}$ by $\hG$, thus we only have
to consider $\widehat{\sigma}=O(\hG)$. To better understand assumption
5, we fix $\xi\in\mathbf{support}(\domxi)$ and consider another $\xi'\sim\domxi$,
then due to the convexity of $\|\cdot\|$, $\|\nabla f(x,\xi)-\nabla F(x)\|=\|\nabla f(x,\xi)-\E_{\xi'}\left[\nabla f(x,\xi')\right]\|\leq\E_{\xi'}\left[\|\nabla f(x,\xi)-\nabla f(x,\xi)'\|\right]\leq2\hS$.
This means assumption 5 implies a stronger version of assumption 2.
For this reason, we can consider $\sigma=O(\widehat{\sigma})$.

Additional assumptions made in the prior works \cite{cutkosky2019momentum,levy2021storm+, huang2021super}
include the following: 

\textbf{3'. Almost surely $\beta$-smooth}: $\norm{\nabla f(x,\xi)-\nabla f(y,\xi)}\leq\beta\|x-y\|,\forall x,y\in\R^{d},\xi\in\mathbf{support}(\domxi)$. 

\textbf{6. Bounded function values:} There exists $B\geq0$ such that
$\left|F(x)-F(y)\right|\leq B$  for all $x,y\in\R^{d}$.

We remark that 3' is strictly stronger than 3 and it is \textbf{NOT}
a standard assumption in \cite{arjevani2019lower}. Moreover, assumption
6, which plays a critical role in the analysis of \cite{levy2021storm+},
is relatively strong and cannot be always satisfied in non-convex
optimization. Our work removes these two restrictive assumptions and
also improves the dependency on the problem parameters. 

\section{Our algorithms\label{sec:Our-algorithm}}

\begin{figure*}[t]

\begin{minipage}[t]{0.49\columnwidth}%
\begin{algorithm}[H]
\caption{$\protect\algnameold$}
\label{alg:fully-adaptive-norm}

\textbf{Input:} Initial point $x_{1}\in\R^{d}$\vspace{2.5pt}

\textbf{Parameters: }$a_{0},b_{0},\eta,p\in[\frac{1}{4},\frac{1}{2}],p+2q=1$

Sample $\xi_{1}\sim\domxi,d_{1}=\nabla f(x_{1},\xi_{1})$

\textbf{for} $t=1,\cdots,T$ \textbf{do:}

$\quad$$a_{t+1}=\left(1+\sum_{i=1}^{t}\frac{\norm{\nabla f(x_{i},\xi_{i})}^{2}}{a_{0}^{2}}\right)^{-\frac{2}{3}}$

$\quad$$b_{t}=(b_{0}^{1/p}+\sum_{i=1}^{t}\|d_{i}\|^{2})^{p}/a_{t+1}^{q}$

$\quad$$x_{t+1}=x_{t}-\frac{\eta}{b_{t}}d_{t}$

$\quad$Sample $\xi_{t+1}\sim\domxi$

$\quad$$d_{t+1}=\nabla f(x_{t+1},\xi_{t+1})+(1-a_{t+1})(d_{t}-\nabla f(x_{t},\xi_{t+1}))$

\textbf{end for}

\textbf{Output $\xout=x_{t}$} where $t\sim\mathrm{Uniform}\left(\left[T\right]\right)$.
\end{algorithm}
\end{minipage}\enskip{}%
\begin{minipage}[t]{0.505\columnwidth}%
\begin{algorithm}[H]
\caption{$\protect\algnamenew$}
\label{alg:fully-adaptive-diff}

\textbf{Input:} Initial point $x_{1}\in\R^{d}$

\textbf{Parameters: }$a_{0},b_{0},\eta,p\in[\frac{3-\sqrt{7}}{2},\frac{1}{2}],p+2q=1$

Sample $\xi_{1}\sim\domxi,d_{1}=\nabla f(x_{1},\xi_{1}),a_{1}=1$ 

\textbf{for} $t=1,\cdots,T$ \textbf{do:}

$\quad$$b_{t}=(b_{0}^{1/p}+\sum_{i=1}^{t}\|d_{i}\|^{2})^{p}/a_{t}^{q}$

$\quad$$x_{t+1}=x_{t}-\frac{\eta}{b_{t}}d_{t}$

$\quad$Sample $\xi_{t+1}\sim\domxi$

$\quad$$a_{t+1}=\left(1+\sum_{i=1}^{t}\frac{\|\nabla f(x_{i},\xi_{i})-\nabla f(x_{i},\xi_{i+1})\|^{2}}{a_{0}^{2}}\right)^{-\frac{2}{3}}$

$\quad$$d_{t+1}=\nabla f(x_{t+1},\xi_{t+1})+(1-a_{t+1})(d_{t}-\nabla f(x_{t},\xi_{t+1}))$

\textbf{end for}

\textbf{Output $\xout=x_{t}$} where $t\sim\mathrm{Uniform}\left(\left[T\right]\right)$.
\end{algorithm}
\end{minipage}\end{figure*}

In this section, we introduce our two main algorithms, $\algnameold$
and $\algnamenew$, shown in Algorithm \ref{alg:fully-adaptive-norm}
and Algorithm \ref{alg:fully-adaptive-diff} respectively. Our algorithms
follow the generic framework of momentum-based variance-reduced SGD
put forward by STORM \cite{cutkosky2019momentum}. The STORM template
incorporates momentum and variance reduction as follows:
\begin{align}
d_{t} & =\underbrace{a_{t}\nabla f(x_{t},\xi_{t})+\left(1-a_{t}\right)d_{t-1}}_{\text{momentum}}+\underbrace{\left(1-a_{t}\right)\left(\nabla f(x_{t},\xi_{t})-\nabla f(x_{t-1},\xi_{t})\right)}_{\text{variance reduction}}\label{eq:update-1}\\
x_{t+1} & =x_{t}-\frac{\eta}{b_{t}}d_{t}.\label{eq:update-2}
\end{align}
The first variant, $\algnameold$, similar to prior works, uses the
gradient norms when setting $a_{t}$ and similarly, requires the strong
assumption on the boundedness of the stochastic gradients. The major
difference lies in the structure of the momentum parameters and the
step sizes and their relationship, which is further developed in the
second algorithm $\algnamenew$ so that assumption 4 can be relaxed
to assumption 5. We now highlight our key algorithmic contributions
and how they depart from prior works. 

A first point of departure is our use of stochastic gradient differences
when setting the momentum parameter $a_{t}$ in $\algnamenew$: prior
works set $a_{t}$ based on the stochastic gradients, while $\algnamenew$
sets $a_{t}$ based on the difference of two gradient estimators taken
at two different time step $\xi_{t-1}$ and $\xi_{t}$ at the same
point $x_{t-1}$. The gradient difference can be viewed as a proxy
for the variance $\sigma^{2}$, which allows us to require the mild
assumption 5 in the analysis. With this choice, our algorithm obtains
the best dependency on the problem parameters. On the other hand,
the coefficient $1-a_{t+1}$ in the update for $d_{t+1}$ now depends
on $\xi_{t+1}$, and addressing this correlation requires a more careful
analysis. The second point of departure is the setting of the step
sizes $b_{t}$ and their relationship to the momentum parameters $a_{t}$
in both $\algnameold$ and $\algnamenew$. We propose a general update
rule $b_{t}=(b_{0}^{1/p}+\sum_{i=1}^{t}\|d_{i}\|^{2})^{p}/a_{t}^{q}$
that allows for a broad range of choices for $p$ and $q$ that subsume
prior works. In practice, different problem domains may benefit from
different choices of $p$ and $q$. Our framework allows us to capture
prior works such as the STORM+ update $b_{t}=(\sum_{i=1}^{t}\|d_{i}\|^{2}/a_{i+1})^{1/3}$
using a different but related choice of momentum parameters and a
simpler update that uses only the current momentum value $a_{t}$
instead of all the previous momentum values $a_{i+1}$ with $i\leq t$.
We further motivate and provide intuition for our algorithmic choices
in Section \ref{sec:Analysis-outline}. We note that our algorithm
uses only the stochastic gradient information received, and it does
not require any knowledge of the problem parameters.

We provide an overview and intuition for our algorithm in Section
\ref{sec:Analysis-outline}, and give the complete analysis in the
appendix. Our analysis departs significantly from prior works such
as STORM+, and it allows us to forgo the bounded function value assumption
and improve the convergence rate's dependency on the problem parameters.
It remains an interesting open question to determine the best convergence
rate that can be achieved when the function values are bounded. 

We can further alleviate assumption 5 in another new algorithm, $\algnamena$
(Algorithm \ref{alg:fully-adaptive-na}), provided in Section \ref{sec:Algorithm-NA}
in the appendix. To the best of our knowledge, $\algnamena$ is the
only adaptive algorithm that enjoys the convergence rate $\widetilde{O}(1/T^{1/3})$
under only the weakest assumptions 1-3. It also allows a wide range
of choices for $p\in\left(0,\frac{1}{2}\right].$ However, the tradeoff
is that the algorithm does not adapt to the variance parameter $\sigma$.
For the detailed analysis, we refer readers to Section \ref{sec:Algorithm-NA}.

Finally, we show the convergence rate obtained by Algorithms \ref{alg:fully-adaptive-norm}
and \ref{alg:fully-adaptive-diff} in the following theorems. The
convergence rates for general $p$ are given in the appendix.
\begin{thm}
\label{thm:Main-SG-convergence-rate}Under the assumptions 1-4 in
Section \ref{sec:preliminaries}, with the choice $p=\frac{1}{2}$
and setting $a_{0}=b_{0}=\eta=1$ to simplify the final bound, $\algnameold$
ensures that
\begin{align*}
\E\left[\|\nabla F(\xout)\|^{\frac{2}{3}}\right] & =O\Big(\frac{W_{1}\mathds{1}\left[(\sigma^{2}T)^{1/3}\leq W_{1}\right]}{T^{1/3}}+\left(W_{2}+W_{3}\log^{\frac{2}{3}}\big(1+\sigma^{2}T\big)\right)\Big(\frac{1}{T^{1/3}}+\frac{\sigma^{2/9}}{T^{2/9}}\Big)\Big)
\end{align*}
where $W_{1}=O\big(F(x_{1})-F^{*}+\sigma^{2}+\rn^{2}+\beta\big(1+\rn^{2}\big)\log\big(\beta+\rn^{2}\beta\big)\big)$,
$W_{2}=O\big((F(x_{1})-F^{*})^{2/3}+\sigma^{4/3}+\rn^{4/3}+(1+\rn^{4/3})\beta^{2/3}\log^{2/3}\big(\beta+\rn^{2}\beta\big)\big)$
and $W_{3}=O\big((1+\rn^{4/3})\beta^{2/3}\big)$.
\end{thm}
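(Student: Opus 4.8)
The plan is to run the momentum/variance-reduction descent argument, but to absorb the gradient-estimation error using an \emph{adaptively weighted} form of the STORM error recursion whose weights are calibrated to the particular step-size/momentum coupling of $\algnameold$. First, Assumption 3 gives that $F$ is $\beta$-smooth (apply Jensen to $\E_{\xi}$), so the descent step combined with $x_{t+1}=x_{t}-\frac{\eta}{b_{t}}d_{t}$ and the polarization identity $\langle\nabla F(x_{t}),d_{t}\rangle=\tfrac12(\|\nabla F(x_{t})\|^{2}+\|d_{t}\|^{2}-\|\epsilon_{t}\|^{2})$, with $\epsilon_{t}:=d_{t}-\nabla F(x_{t})$, yields
\[
\frac{\eta}{2b_{t}}\|\nabla F(x_{t})\|^{2}\le F(x_{t})-F(x_{t+1})+\frac{\eta}{2b_{t}}\|\epsilon_{t}\|^{2}+\Big(\frac{\beta\eta^{2}}{2b_{t}^{2}}-\frac{\eta}{2b_{t}}\Big)\|d_{t}\|^{2}.
\]
For the error I expand $\epsilon_{t+1}=(1-a_{t+1})\epsilon_{t}+\zeta_{t+1}$ from the update for $d_{t+1}$; the decisive point for $\algnameold$ is that $a_{t+1}$, and hence $b_{t},x_{t+1}$, are $\mathcal F_{t}$-measurable with $\mathcal F_{t}:=\sigma(\xi_{1},\dots,\xi_{t})$, so $\E[\zeta_{t+1}\mid\mathcal F_{t}]=0$, the cross term vanishes, and using Assumption 2 for the noise part, Assumption 3 for the displacement part $\|x_{t+1}-x_{t}\|=\frac{\eta}{b_{t}}\|d_{t}\|$, and $(1-a_{t+1})^{2}\le1-a_{t+1}$,
\[
a_{t+1}\|\epsilon_{t}\|^{2}\le\|\epsilon_{t}\|^{2}-\E[\|\epsilon_{t+1}\|^{2}\mid\mathcal F_{t}]+2a_{t+1}^{2}\sigma^{2}+\frac{2\beta^{2}\eta^{2}}{b_{t}^{2}}\|d_{t}\|^{2}.
\]

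Multiplying the error recursion by $c_{t}:=\frac{\eta}{2a_{t+1}b_{t}}$ makes $c_{t}a_{t+1}\|\epsilon_{t}\|^{2}=\frac{\eta}{2b_{t}}\|\epsilon_{t}\|^{2}$ cancel the bad term after adding to the descent inequality and summing. Here the choice $p=\tfrac12$ (hence $q=\tfrac14$ from $p+2q=1$) is what makes $c_{t}$ tractable: with $a_{0}=b_{0}=\eta=1$ one gets $b_{t}^{2}=(1+\sum_{i\le t}\|d_{i}\|^{2})(1+\sum_{i\le t}\|\nabla f(x_{i},\xi_{i})\|^{2})^{1/3}$ and $a_{t+1}b_{t}=\big((1+\sum_{i\le t}\|d_{i}\|^{2})/(1+\sum_{i\le t}\|\nabla f(x_{i},\xi_{i})\|^{2})\big)^{1/2}$, so $c_{t}=\tfrac12\big((1+\sum_{i\le t}\|\nabla f(x_{i},\xi_{i})\|^{2})/(1+\sum_{i\le t}\|d_{i}\|^{2})\big)^{1/2}$ and in particular $c_{1}=\tfrac12$ since $d_{1}=\nabla f(x_{1},\xi_{1})$. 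Because $c_{t}$ is $\mathcal F_{t}$-measurable, summation by parts turns $\sum_{t}c_{t}(\|\epsilon_{t}\|^{2}-\E[\|\epsilon_{t+1}\|^{2}\mid\mathcal F_{t}])$, in expectation, into $\E[c_{1}\|\epsilon_{1}\|^{2}+\sum_{t\ge2}(c_{t}-c_{t-1})_{+}\|\epsilon_{t}\|^{2}]$ up to a nonpositive boundary term, with $\E\|\epsilon_{1}\|^{2}\le\sigma^{2}$. The increments I control via the explicit formula: $c_{t}^{2}-c_{t-1}^{2}\le\frac{\eta^{2}}{4}\frac{\|\nabla f(x_{t},\xi_{t})\|^{2}}{1+\sum_{i\le t}\|d_{i}\|^{2}}$, hence $(c_{t}-c_{t-1})_{+}\le\frac{\eta\|\nabla f(x_{t},\xi_{t})\|^{2}}{2\sqrt{(1+\sum_{i\le t}\|d_{i}\|^{2})(1+\sum_{i\le t}\|\nabla f(x_{i},\xi_{i})\|^{2})}}$, which I then combine with $\|\epsilon_{t}\|^{2}\le2\|d_{t}\|^{2}+2\|\nabla F(x_{t})\|^{2}$, Assumption 4 (to bound $\|\nabla F(x_{t})\|\le\hG$ and the a priori growth of $\|d_{t}\|,\|\epsilon_{t}\|$), and telescoping / ``log-sum'' estimates.

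The remaining bookkeeping is the following. The $\|d_{t}\|^{2}$-terms $\big(\frac{\beta\eta^{2}}{2b_{t}^{2}}-\frac{\eta}{2b_{t}}+\frac{\beta^{2}\eta^{3}}{a_{t+1}b_{t}^{3}}\big)\|d_{t}\|^{2}$ are handled by noting the negative $-\frac{\eta}{2b_{t}}\|d_{t}\|^{2}$ dominates once $b_{t}$ is large while the finitely many ``early'' terms (where $b_{t}$ is small) are bounded using $b_{t}^{2}\ge1+\sum_{i\le t}\|d_{i}\|^{2}$, $\sum_{t}\frac{u_{t}}{\sum_{i\le t}u_{i}}\le1+\log(\sum_{i}u_{i})$, and similar inequalities — this is where the $\beta(1+\hG^{2})\log(\beta+\hG^{2}\beta)$ contribution to $W_{1}$ arises. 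The noise remainder $\sum_{t}2c_{t}a_{t+1}^{2}\sigma^{2}=\sum_{t}\frac{\eta a_{t+1}\sigma^{2}}{b_{t}}$, summed carefully and coupled with the smallness of $\|\epsilon_{t}\|$ when $\sum_{i}\|d_{i}\|^{2}$ stays small, produces the $\log(1+\sigma^{2}T)$ factor. Collecting everything gives $\sum_{t=1}^{T}\E[\frac{\|\nabla F(x_{t})\|^{2}}{b_{t}}]=O(W_{1})$ (up to the $\log(1+\sigma^{2}T)$ that reappears below). Since $b_{t}$ is non-decreasing, $\|\nabla F(x_{t})\|^{2/3}=(\frac{\|\nabla F(x_{t})\|^{2}}{b_{t}})^{1/3}b_{t}^{1/3}\le(\frac{\|\nabla F(x_{t})\|^{2}}{b_{t}})^{1/3}b_{T}^{1/3}$, so by H\"older over $t$ and then over the randomness (exponents $\tfrac32$ and $3$),
\[
\E[\|\nabla F(\xout)\|^{2/3}]=\frac1T\sum_{t=1}^{T}\E\|\nabla F(x_{t})\|^{2/3}\le\frac{(\E[b_{T}^{1/2}])^{2/3}}{T^{1/3}}\Big(\sum_{t=1}^{T}\E\Big[\frac{\|\nabla F(x_{t})\|^{2}}{b_{t}}\Big]\Big)^{1/3},
\]
and $\E[b_{T}^{1/2}]$ is bounded by a self-bounding argument: $b_{T}=(1+\sum_{i}\|d_{i}\|^{2})^{1/2}a_{T+1}^{-1/4}$ and $\sum_{i}\|d_{i}\|^{2}\le2b_{T}\sum_{i}\frac{\|\nabla F(x_{i})\|^{2}}{b_{i}}+2\sum_{i}\|\epsilon_{i}\|^{2}$, which I solve together with the bounds above and the crude a priori estimates from Assumption 4. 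Splitting on whether $(\sigma^{2}T)^{1/3}\le W_{1}$ produces the indicator term and the $\frac{1}{T^{1/3}}+\frac{\sigma^{2/9}}{T^{2/9}}$ dichotomy.

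I expect the main obstacle to be the joint control of $\E[\sum_{t}(c_{t}-c_{t-1})_{+}\|\epsilon_{t}\|^{2}]$ together with the noise and $\|d_{t}\|^{2}$ remainders: because $b_{t}$ and $a_{t+1}$ are driven by different quantities ($\sum_{i}\|d_{i}\|^{2}$ versus $\sum_{i}\|\nabla f(x_{i},\xi_{i})\|^{2}$), the weights $c_{t}$ are not monotone and one must trade their increments against the contraction of $\|\epsilon_{t}\|^{2}$ in a coupled way — precisely the place where the bounded-function-value assumption used by STORM+ gets replaced by this adaptive-potential argument.
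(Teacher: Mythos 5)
Your architecture differs from the paper's: instead of decomposing $\bH_{T}\le2\bD_{T}+2\bE_{T}$ and bounding $\E[a_{T+1}^{q}\bD_{T}^{1-p}]$ and weighted sums $\E[\bE_{T,1/2}]$, $\E[\bE_{T,1-2\ell}]$ separately, you keep $\|\nabla F(x_{t})\|^{2}/b_{t}$ in the descent inequality via polarization and try to cancel the $\frac{\eta}{2b_{t}}\|\epsilon_{t}\|^{2}$ term exactly by multiplying the STORM error recursion by $c_{t}=\frac{\eta}{2a_{t+1}b_{t}}$. The measurability observation ($a_{t+1},b_{t},x_{t+1}\in\F_{t}$ for $\algnameold$, so the martingale cross terms vanish) and the computation of $(c_{t}-c_{t-1})_{+}$ are both correct. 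However, the exact-cancellation weight is precisely the choice the paper's analysis is built to avoid: dividing the error recursion by the full $a_{t+1}$ (rather than $a_{t+1}^{1/2}$) makes the noise remainder $\sum_{t}2c_{t}a_{t+1}^{2}\sigma^{2}=\sum_{t}\frac{\eta a_{t+1}\sigma^{2}}{b_{t}}=\eta\sigma^{2}\sum_{t}N_{t}^{-5/6}D_{t}^{-1/2}$ (with $N_{t}=1+\sum_{i\le t}\|\nabla f(x_{i},\xi_{i})\|^{2}$, $D_{t}=1+\sum_{i\le t}\|d_{i}\|^{2}$) grow polynomially in $T$: it is $\Theta(\sigma^{2}T)$ if both sums stay bounded, and even after replacing $\sigma^{2}$ by the adaptive proxy $\|\nabla f(x_{t+1},\xi_{t+1})\|^{2}$ it is still $\Omega((1+\hG^{2}T)^{1/6})$. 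Carried through your final H\"older step this degrades the rate from $T^{-2/9}$ to roughly $T^{-1/6}$, so the claimed theorem does not follow. The paper instead trades $\langle\epsilon_{t},d_{t}\rangle$ for $\lambda a_{t+1}^{1/2}\|\epsilon_{t}\|^{2}$ via Cauchy--Schwarz (Lemma \ref{lem:f-value-analysis}) and only ever weights the error recursion by $a_{t+1}^{-1/2}$, exactly so that the noise term telescopes to $O(\hG^{2}+\log(1+\hG^{2}T))$ via Lemma \ref{lem:inequality-1}.

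The second gap is the leftover term $\sum_{t}(c_{t}-c_{t-1})_{+}\|\epsilon_{t}\|^{2}$. Your plan — bound $\|\epsilon_{t}\|^{2}\le2\|d_{t}\|^{2}+2\hG^{2}$ and apply log-sum estimates — yields contributions of the form $\hG^{2}\sum_{t}\|\nabla f(x_{t},\xi_{t})\|^{2}/\sqrt{D_{t}N_{t}}\le2\hG^{2}\sqrt{N_{T}}=\Theta(\hG^{3}\sqrt{T})$ and $\hG^{2}\sqrt{D_{T}}$, both of order $\sqrt{T}$ in the worst case, which makes the bound vacuous. There is no way around proving a genuine variance-reduction estimate on a weighted sum $\sum_{t}a_{t+1}^{c}\|\epsilon_{t}\|^{2}$ with $c\le1/2$ — in the paper this is Corollary \ref{cor:SG-E-1/2}, whose proof requires the stopping time $\tau$ (to handle the regime where $a_{t+1}^{-1}-a_{t}^{-1}$ is not small), the optional sampling theorem, and the series lemmas \ref{lem:inequality-2}--\ref{lem:inequality-5}. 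Your proposal compresses all of this into ``telescoping / log-sum estimates,'' but that is where essentially all of the work lies; without it, neither the $(c_{t}-c_{t-1})_{+}$ term nor the self-bounding control of $\E[b_{T}^{1/2}]$ (which needs a bound on $\E[\bE_{T}]$, not just on gradient sums) can be closed.
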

We note that when $\sigma^{2}>0$ and $T$ is large enough, the effect
of $W_{1}$ can be eliminated. Combining Theorem \ref{thm:Main-SG-convergence-rate}
and Markov's inequality, we immediately have the following corollary.
\begin{cor}
\label{cor:Main-SG-prob-rate}Under the same setting in Theorem \ref{thm:Main-SG-convergence-rate},
additionally we assume $\sigma^{2}>0$ and $T$ is large enough, then
for any $0<\delta<1$, with probability $1-\delta$
\[
\|\nabla F(\xout)\|\leq O\Big(\frac{\kappa_{1}+\kappa_{2}\log\left(1+\sigma^{2}T\right)}{\delta^{3/2}}\Big(\frac{1}{T^{1/2}}+\frac{\sigma^{1/3}}{T^{1/3}}\Big)\Big)
\]
where $\kappa_{1}=O\big(F(x_{1})-F^{*}+\sigma^{2}+\hG^{2}+\kappa_{2}\log\kappa_{2}\big)$
and $\kappa_{2}=O\big(\big(1+\hG^{2}\big)\beta\big)$.
\end{cor}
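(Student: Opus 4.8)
The plan is to obtain the high-probability guarantee directly from the in-expectation bound of Theorem \ref{thm:Main-SG-convergence-rate} by Markov's inequality, and then to simplify under the additional hypotheses $\sigma^{2}>0$ and $T$ large. First, since $\|\nabla F(\xout)\|^{2/3}\ge 0$, Markov's inequality gives that with probability at least $1-\delta$, $\|\nabla F(\xout)\|^{2/3}\le \delta^{-1}\E\left[\|\nabla F(\xout)\|^{2/3}\right]$, equivalently $\|\nabla F(\xout)\|\le \delta^{-3/2}\left(\E\left[\|\nabla F(\xout)\|^{2/3}\right]\right)^{3/2}$. Thus the whole task reduces to raising the bound of Theorem \ref{thm:Main-SG-convergence-rate} to the power $3/2$ and collecting the resulting terms.

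Second, I would argue the $W_{1}$ contribution vanishes for large $T$: the indicator $\mathds{1}\left[(\sigma^{2}T)^{1/3}\le W_{1}\right]$ equals $0$ once $T> W_{1}^{3}/\sigma^{2}$, a finite threshold precisely because $\sigma^{2}>0$ -- this is exactly where the two extra hypotheses of the corollary are used. After dropping that term, Theorem \ref{thm:Main-SG-convergence-rate} reads $\E\left[\|\nabla F(\xout)\|^{2/3}\right]=O\left(\left(W_{2}+W_{3}\log^{2/3}(1+\sigma^{2}T)\right)\left(T^{-1/3}+\sigma^{2/9}T^{-2/9}\right)\right)$.

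Third, I would push the $3/2$ power through using the elementary inequality $(u+v)^{3/2}\le \sqrt{2}\,(u^{3/2}+v^{3/2})$ for $u,v\ge 0$, applied twice. On the rate factor this yields $\left(T^{-1/3}+\sigma^{2/9}T^{-2/9}\right)^{3/2}=O\left(T^{-1/2}+\sigma^{1/3}T^{-1/3}\right)$, matching the claimed $T$- and $\sigma$-dependence. On the constant factor it yields $\left(W_{2}+W_{3}\log^{2/3}(1+\sigma^{2}T)\right)^{3/2}=O\left(W_{2}^{3/2}+W_{3}^{3/2}\log(1+\sigma^{2}T)\right)$. It then remains to check $W_{2}^{3/2}=O(\kappa_{1})$ and $W_{3}^{3/2}=O(\kappa_{2})$ from the definitions in Theorem \ref{thm:Main-SG-convergence-rate}: for $W_{3}$, $\left((1+\hG^{4/3})\beta^{2/3}\right)^{3/2}=(1+\hG^{4/3})^{3/2}\beta=O((1+\hG^{2})\beta)=\kappa_{2}$; for $W_{2}$, after another split one gets $(F(x_{1})-F^{*})+\sigma^{2}+\hG^{2}+(1+\hG^{2})\beta\log(\beta+\hG^{2}\beta)$, and since $\log(\beta+\hG^{2}\beta)=O(\log\kappa_{2})$ this is $O(\kappa_{1})$. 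Multiplying the three displays together gives exactly the stated bound.

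The only mildly delicate part is this last bookkeeping step: tracking how the $\log^{2/3}(\cdot)$ buried inside $W_{2}$ becomes $\log\kappa_{2}$ after exponentiation, and absorbing the $\sqrt{2}$-type constants from the sum-splitting into the $O(\cdot)$. I do not expect a real obstacle -- granted Theorem \ref{thm:Main-SG-convergence-rate}, the corollary is a short computation, and the single conceptual ingredient is the use of $\sigma^{2}>0$ together with $T$ large to eliminate the $W_{1}$ term, which is what turns a $3/2$-power of $(\kappa_{1}+\kappa_{2}\log(1+\sigma^{2}T))$ into the clean linear form.
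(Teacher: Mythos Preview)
Your proposal is correct and follows exactly the approach the paper indicates: the paper's proof consists of the single sentence ``Combining Theorem \ref{thm:Main-SG-convergence-rate} and Markov's inequality, we immediately have the following corollary,'' together with the preceding remark that the $W_{1}$ term disappears once $\sigma^{2}>0$ and $T$ is large enough. Your write-up simply makes explicit the bookkeeping (the $3/2$-power, the sum-splitting, and the identifications $W_{3}^{3/2}=O(\kappa_{2})$, $W_{2}^{3/2}=O(\kappa_{1})$) that the paper leaves implicit.
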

\begin{thm}
\label{thm:Main-MS-convergence-rate}Under the assumptions 1--3 and
5 in Section \ref{sec:preliminaries}, with the choice $p=\frac{1}{2}$
and setting $a_{0}=b_{0}=\eta=1$ to simplify the final bound, $\text{\ensuremath{\algnamenew}}$
ensures that
\begin{align*}
\E\left[\|\nabla F(\xout)\|^{\frac{6}{7}}\right] & =O\Big(\left(Q_{1}+Q_{2}\log^{\frac{6}{7}}\left(1+\sigma^{2}T\right)\right)\Big(\frac{1}{T^{3/7}}+\frac{\sigma^{2/7}}{T^{2/7}}\Big)\Big)
\end{align*}
where $Q_{1}=O\big(\big(F(x_{1})-F^{*}\big)^{6/7}+\big(\rd\sigma\big)^{6/7}+\sigma^{12/7}+\rd^{18/7}+\big(1+\rd^{18/7}\big)\beta^{6/7}\log^{6/7}\big(\beta+\rd^{3}\beta\big)$
and $Q_{2}=O\big(\big(1+\rd^{18/7}\big)\beta^{6/7}\big)$.
\end{thm}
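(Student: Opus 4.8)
The plan is to follow the standard descent-lemma-plus-error-control strategy for STORM-type algorithms, but with the twist that we cannot use the bounded-function-value assumption to telescope, so instead we track a potential that combines $F(x_t)$ with the accumulated step-size/momentum quantities. Let me write $\epsilon_t = d_t - \nabla F(x_t)$ for the gradient estimation error. The starting point is the $\beta$-smoothness descent inequality applied to the update $x_{t+1} = x_t - \tfrac{\eta}{b_t} d_t$, which after rearranging and summing over $t \in [T]$ gives a bound of the form $\sum_t \tfrac{\eta}{b_t}\|\nabla F(x_t)\|^2 \lesssim F(x_1) - F^* + \sum_t \tfrac{\eta}{b_t}\|\epsilon_t\|^2 + \beta^2 \sum_t \tfrac{\eta^2}{b_t^2}\|d_t\|^2$. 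The third term is controlled because $b_t = (b_0^{1/p} + \sum_{i\le t}\|d_i\|^2)^p / a_t^q$ grows with $\sum \|d_i\|^2$, so $\sum_t \|d_t\|^2 / b_t^{2}$ is a telescoping-type sum that is at most logarithmic in $\sum_t \|d_t\|^2$ (this is where $p = \tfrac12$ and the choice $p + 2q = 1$ enter). The genuinely new part is controlling $\sum_t \tfrac{1}{b_t}\|\epsilon_t\|^2$.

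For the error recursion, unrolling $\epsilon_{t+1} = (1-a_{t+1})\epsilon_t + a_{t+1}(\nabla f(x_{t+1},\xi_{t+1}) - \nabla F(x_{t+1})) + (1-a_{t+1})\big[(\nabla f(x_{t+1},\xi_{t+1}) - \nabla f(x_t,\xi_{t+1})) - (\nabla F(x_{t+1}) - \nabla F(x_t))\big]$ and squaring, the cross terms need care because in $\algnamenew$ the coefficient $1-a_{t+1}$ now depends on $\xi_{t+1}$, so conditioning on the past does not make the martingale-difference term mean zero in the naive way. I would handle this by splitting $a_{t+1}$ into its "previous-step" value plus an increment, bounding the increment deterministically using assumption 5 (the gradient-difference bound $2\hS$), and absorbing the resulting extra terms. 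The conditional-expectation trick then yields $\E[\|\epsilon_{t+1}\|^2 \mid \F_t] \lesssim (1-a_{t+1})^2\|\epsilon_t\|^2 + a_{t+1}^2 \sigma^2 + \beta^2 \tfrac{\eta^2}{b_t^2}\|d_t\|^2 + (\text{correction from }\xi_{t+1}\text{-dependence})$. Dividing by $b_{t+1}$ (or $b_t$, adjusting with the ratio $b_{t+1}/b_t$ which is again controlled by the same telescoping logic), summing, and using that $\sum_t a_{t+1}^2 / b_t$ and the $\sum a_t^2 \sigma^2$-type sums are bounded by the definition $a_{t+1} = (1 + \sum_{i\le t}\|\nabla f(x_i,\xi_i) - \nabla f(x_i,\xi_{i+1})\|^2/a_0^2)^{-2/3}$, the error sum is bounded by something polylogarithmic plus lower-order terms. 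The appearance of $\log(1+\sigma^2 T)$ in $Q_1, Q_2$ comes precisely from bounding $\sum_t \|d_t\|^2/b_t^2$ and $\sum_t$ of the variance contributions: with high probability $\sum_i \|\nabla f(x_i,\xi_i)-\nabla f(x_i,\xi_{i+1})\|^2 = O(\sigma^2 T + \hS^2\log(1/\delta))$ by a concentration argument (Freedman/Bernstein for martingales, using the $2\hS$ bound), and the logarithm of this drives the rate.

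The final step converts $\sum_t \tfrac{1}{b_t}\|\nabla F(x_t)\|^2 \gtrsim (\text{small})$ into a bound on $\E[\|\nabla F(\xout)\|^{6/7}]$. Since $b_t$ itself depends on the trajectory (it is roughly $(\sum_i\|d_i\|^2)^{1/2}/a_t^q \lesssim (\hS^2 + \sigma^2 T + \sum_i\|\nabla F(x_i)\|^2)^{1/2}$ up to polylog and $a_t$ factors), one uses Hölder's inequality: $\sum_t \|\nabla F(x_t)\|^{6/7} = \sum_t \big(\tfrac{\|\nabla F(x_t)\|^2}{b_t}\big)^{3/7} b_t^{3/7} \le \big(\sum_t \tfrac{\|\nabla F(x_t)\|^2}{b_t}\big)^{3/7}\big(\sum_t b_t^{3/4}\big)^{4/7}$, and then bootstraps: $\sum_t b_t^{3/4}$ is controlled in terms of $\sum_t \|\nabla F(x_t)\|^2$ again (plus noise terms), so one gets a self-bounding inequality in $S := \sum_t \|\nabla F(x_t)\|^2$ that can be solved to yield $S^{3/7} \lesssim T \cdot (\text{polylog})$, hence $\E[\|\nabla F(\xout)\|^{6/7}] = \tfrac{1}{T}\sum_t\|\nabla F(x_t)\|^{6/7} = O((Q_1 + Q_2\log^{6/7}(1+\sigma^2 T))(T^{-3/7} + \sigma^{2/7}T^{-2/7}))$. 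The exponents $6/7$ and the constraint $p \in [\tfrac{3-\sqrt7}{2}, \tfrac12]$ come out of balancing these Hölder exponents against the error-recursion constraints; the restriction $p \ge \tfrac{3-\sqrt7}{2}$ is exactly what makes the self-bounding inequality solvable with a positive power of $T$.

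I expect the main obstacle to be the $\xi_{t+1}$-dependence of $1-a_{t+1}$ in the error recursion: making the martingale/conditional-expectation argument rigorous when the momentum coefficient and the gradient sample are correlated is what forces the gradient-difference construction and the more delicate decomposition, and getting the correction terms to be genuinely lower-order (rather than contaminating the leading rate) is the technically demanding piece.
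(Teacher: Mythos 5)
Your treatment of the error recursion matches the paper's: replacing the $\xi_{t+1}$-dependent coefficient $1-a_{t+1}$ by its previous-step value to manufacture a martingale difference, and bounding the discrepancy deterministically via assumption 5, is exactly what is done around Lemmas \ref{lem:vr-inequality}--\ref{lem:MS-E-T}. The gap is in your descent-lemma step and what it forces downstream. You split $-\langle\nabla F(x_t),d_t\rangle=-\|\nabla F(x_t)\|^2-\langle\nabla F(x_t),\epsilon_t\rangle$ and keep $\sum_t\|\nabla F(x_t)\|^2/b_t$ on the left, so your error term is $\sum_t\|\epsilon_t\|^2/b_t$. Since $1/b_t=a_t^{1/4}(b_0^2+\bD_t)^{-1/2}$ for $p=\tfrac12$, this is an error sum with momentum weight $a_t^{1/4}$, and that weight is sub-critical: the recursion analysis shows $\E[\sum_t a_{t+1}^{c}\|\epsilon_t\|^2]$ is of constant order only for $c\ge\tfrac12$, while the weight $c=\tfrac14$ (take $\ell=3/8$ in Lemma \ref{lem:MS-E-T}) produces a bound growing like $(\sigma^2T)^{1/6}$ plus step-size sums carrying an extra $a_{T+1}^{-1/4}$. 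Your claim that the error sum is ``bounded by something polylogarithmic plus lower-order terms'' is therefore unjustified, and this is precisely where the paper departs from your route: it splits $-\langle\nabla F(x_t),d_t\rangle=-\|d_t\|^2+\langle\epsilon_t,d_t\rangle$ and applies Cauchy--Schwarz with the tuned parameter $\lambda_t=\lambda a_{t+1}^{1/2}b_t/(\eta\beta_{\max})$, which manufactures exactly the critical weight $a_{t+1}^{1/2}\|\epsilon_t\|^2$ and leaves $a_{T+1}^{q}\bD_T^{1-p}$, not $\sum_t\|\nabla F(x_t)\|^2/b_t$, as the quantity bounded by the descent lemma; the gradient sum is recovered only at the end via $\bH_T\le2\bD_T+2\bE_T$ and a H\"older step against $\E[a_{T+1}^{-3/2}]\le1+2\sigma^2T$. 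Your endgame (H\"older on $\sum_t\|\nabla F(x_t)\|^{6/7}$ against $\sum_t b_t^{3/4}$ followed by a self-bounding inequality in $\sum_t\|\nabla F(x_t)\|^2$) is not obviously wrong on exponents, but decoupling the expectations of the coupled random quantities $\sum_t\|\nabla F(x_t)\|^2/b_t$, $\bD_T$ and $a_{T+1}^{-1}$ requires the same $a_{T+1}$-weighted moment bounds the paper proves, so the bootstrap would end up rebuilding that machinery rather than replacing it.

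Two smaller points. The theorem is an in-expectation statement, so Freedman/Bernstein concentration for $\tH_T$ is the wrong tool (you would still have to integrate over the failure event); since $\tH_T$ enters only through logarithms and small powers, the paper simply uses $\E[\tH_T]\le2\sigma^2T$ with Jensen and H\"older. Also, the smoothness remainder in your first display should carry $\beta$, not $\beta^2$.
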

Combining Theorem \ref{thm:Main-MS-convergence-rate} and Markov's
inequality, we also have the following corollary.
\begin{cor}
\label{cor:Main-MS-prob-rate}Under the same setting in Theorem \ref{thm:Main-MS-convergence-rate},
then, for any $0<\delta<1$, with probability $1-\delta$
\[
\|\nabla F(\xout)\|\leq O\Big(\frac{\kappa_{1}+\kappa_{2}\log\left(1+\sigma^{2}T\right)}{\delta^{7/6}}\Big(\frac{1}{T^{1/2}}+\frac{\sigma^{1/3}}{T^{1/3}}\Big)\Big)
\]
where $\kappa_{1}=O\big(F(x_{1})-F^{*}+\rd\sigma+\sigma^{2}+\rd^{3}+\kappa_{2}\log\kappa_{2}\big)$
and $\kappa_{2}=O\big(\big(1+\rd^{3}\big)\beta\big)$.
\end{cor}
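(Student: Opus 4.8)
The plan is to obtain Corollary~\ref{cor:Main-MS-prob-rate} directly from Theorem~\ref{thm:Main-MS-convergence-rate} by a single application of Markov's inequality to the nonnegative random variable $\norm{\nabla F(\xout)}^{6/7}$, followed by raising the resulting bound to the power $7/6$ and simplifying the prefactors. Concretely, since $\norm{\nabla F(\xout)}^{6/7}\ge 0$, Markov's inequality gives, for every $0<\delta<1$,
\[
\Pr\!\left[\norm{\nabla F(\xout)}^{6/7}>\frac{\E\!\left[\norm{\nabla F(\xout)}^{6/7}\right]}{\delta}\right]<\delta ,
\]
so with probability at least $1-\delta$ we have $\norm{\nabla F(\xout)}\le\big(\E[\norm{\nabla F(\xout)}^{6/7}]/\delta\big)^{7/6}=\delta^{-7/6}\big(\E[\norm{\nabla F(\xout)}^{6/7}]\big)^{7/6}$. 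It then remains to substitute the bound from Theorem~\ref{thm:Main-MS-convergence-rate} and take the $7/6$-th power of both sides.

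For the second step I would use the elementary convexity inequality $(a+b)^{c}\le 2^{c-1}(a^{c}+b^{c})$ for $c\ge 1$ (applied with $c=7/6$) to separate the ``rate'' factor from the ``constant'' factor. The rate factor transforms as $\big(T^{-3/7}+\sigma^{2/7}T^{-2/7}\big)^{7/6}=O\big(T^{-1/2}+\sigma^{1/3}T^{-1/3}\big)$, since $(T^{-3/7})^{7/6}=T^{-1/2}$ and $(\sigma^{2/7}T^{-2/7})^{7/6}=\sigma^{1/3}T^{-1/3}$; the constant factor transforms as $\big(Q_1+Q_2\log^{6/7}(1+\sigma^2T)\big)^{7/6}=O\big(Q_1^{7/6}+Q_2^{7/6}\log(1+\sigma^2T)\big)$. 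Finally I would verify $Q_2^{7/6}=O(\kappa_2)$ and $Q_1^{7/6}=O(\kappa_1)$: raising $Q_2=O\big((1+\rd^{18/7})\beta^{6/7}\big)$ to the $7/6$ power and using $(1+\rd^{18/7})^{7/6}=O(1+\rd^3)$ gives $Q_2^{7/6}=O\big((1+\rd^3)\beta\big)=\kappa_2$, and applying the same convexity inequality termwise to $Q_1$ gives $Q_1^{7/6}=O\big(F(x_1)-F^*+\rd\sigma+\sigma^2+\rd^3+(1+\rd^3)\beta\log(\beta+\rd^3\beta)\big)$, which matches $\kappa_1$ once we note that $(1+\rd^3)\beta\log(\beta+\rd^3\beta)=O(\kappa_2\log\kappa_2)$ because $\kappa_2=\Theta\big((1+\rd^3)\beta\big)$. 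Combining the three pieces yields the stated high-probability bound with the claimed $\delta^{-7/6}$ dependence.

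I do not anticipate a genuine obstacle: the whole argument is Markov's inequality plus bookkeeping. The only points requiring a little care are (i) tracking how the polynomial prefactors $Q_1,Q_2$ in $(F(x_1)-F^*,\sigma,\rd,\beta)$ behave under the $7/6$ power — in particular that the cross term $(\rd\sigma)^{6/7}$ becomes exactly $\rd\sigma$ and that $\log^{6/7}(\beta+\rd^3\beta)$ becomes $\log(\beta+\rd^3\beta)$, which is then absorbed into $\kappa_2\log\kappa_2$ after harmlessly adjusting the constant inside the logarithm (legitimate once $\kappa_2=\Omega(1)$) — and (ii) confirming that the exponent on $1/\delta$ is exactly $7/6$, which is immediate from the $7/6$-th power step.
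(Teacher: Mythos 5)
Your proof is correct and is exactly the argument the paper intends: apply Markov's inequality to $\|\nabla F(\xout)\|^{6/7}$, raise to the $7/6$ power to get the $\delta^{-7/6}$ dependence, and check that $Q_1^{7/6}=O(\kappa_1)$ and $Q_2^{7/6}=O(\kappa_2)$ term by term. The bookkeeping (the exponent arithmetic on the rate terms and the absorption of $(1+\rd^3)\beta\log(\beta+\rd^3\beta)$ into $\kappa_2\log\kappa_2$) matches what the paper leaves implicit.
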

We emphasize that the aim of our analysis is to provide a convergence
in expectation or with constant probability. In particular, we state
Corollaries \ref{cor:Main-SG-prob-rate} and \ref{cor:Main-MS-prob-rate}
only to give a more intuitive way to see the dependency on the problem
parameters. To boost the success probability and achieve a $\log\frac{1}{\delta}$
dependency on the probability margin, a common approach is to perform
$\log\frac{1}{\delta}$ independent repetitions of the algorithms.

We briefly discuss the difference between the convergence rate of
the two algorithms. We note that these two rates cannot be compared
directly since assumption 4 is stronger than assumption 5. Additionally,
as pointed out in Section \ref{sec:preliminaries}, we have $\hS=O(\hG)$
and thus the term $O(\hS^{3})$ in Corollary \ref{cor:Main-MS-prob-rate}
is $O(\widehat{G}^{3})$, whereas Corollary \ref{cor:Main-SG-prob-rate}
has a $O(\hG^{2})$ term. To give an intuition why an extra higher
order term $W_{1}$ appears in Theorem \ref{thm:Main-SG-convergence-rate}
when $\sigma=0$ compared with Theorem \ref{thm:Main-MS-convergence-rate},
we note that when $\sigma=0$, $d_{t}$ in both algorithms degenerates
to $\nabla F(x_{t})$. However, the coefficient $a_{t+1}$ becomes
$1$ in $\algnamenew$ but does not in $\algnameold$. This discrepancy
leads to $b_{t}$ being larger in $\algnameold$ than in $\algnamenew$,
and moreover the META-STORM $b_{t}$ becomes exactly the same as the
stepsize used in AdaGrad. Due to the larger $b_{t}$ when $\sigma=0$,
it is reasonable to expect a slower convergence rate for $\algnameold$.
The appearance of the term $W_{1}$ reflects that. 

\section{Overview of main ideas and analysis\label{sec:Analysis-outline}}

In this section, we an overview of our novel analysis framework. We
first give a basic non-adaptive algorithm and its analysis to motivate
the algorithmic choices made by our adaptive algorithms. We then discuss
how to turn the non-adaptive algorithm into an adaptive one. Section
\ref{sec:Proof-sketch} in the appendix gives a proof sketch for Theorem
\ref{thm:Main-MS-convergence-rate} for the special case $p=\frac{1}{2}$
that illustrates the main ideas used in the analyses of all of our
algorithms. We give the complete analyses in the appendix.

\subsection{Non-adaptive algorithm\label{subsec:Non-adaptive-algorithm}}

As a warm-up towards our fully adaptive algorithms and their analysis,
we start with a basic non-adaptive algorithm and analysis that will
guide our algorithmic choices and provide intuition for our analysis.
The algorithm instantiates the STORM template using fixed choices
$a_{t}=a$ and $b_{t}/\eta=b$ for the momentum and step size. In
the following, we outline an analysis for the algorithm and derive
appropriate choices for the values $a$ and $b$. 

\textbf{Algorithm}: As noted above, the algorithm performs the following
updates:
\begin{align*}
x_{t+1}=x_{t}-\frac{1}{b}d_{t};\qquad d_{t+1} & =\nabla f(x_{t+1},\xi_{t+1})+(1-a)(d_{t}-\nabla f(x_{t},\xi_{t+1})).
\end{align*}
To make it simpler, we assume $d_{1}=\nabla F(x_{1})$. Alternatively,
one can use a standard mini-batch setting to set $d_{1}=\frac{1}{m}\sum_{i=1}^{m}\nabla f(x_{1};\xi_{i})$
with a proper $m$ leading to small variance as in previous non-adaptive
analysis \cite{fang2018spider,zhou2018stochastic,tran2019hybrid}.

\textbf{Key idea:} We start by introducing some convenient notation.
Let $\epsilon_{t}=d_{t}-\nabla F(x_{t})$ be the stochastic error
(in particular, $\epsilon_{1}=0$) and
\begin{align*}
\bH_{t}\coloneqq\sum_{i=1}^{t}\|\nabla F(x_{i})\|^{2}\qquad\bD_{t}\coloneqq\sum_{i=1}^{t}\|d_{i}\|^{2}\qquad\bE_{t}\coloneqq\sum_{i=1}^{t}\|\epsilon_{i}\|^{2}.
\end{align*}
First, to bound $\E\left[\|\nabla F(\xout)\|\right]$ where $\xout$
is an iterate chosen uniformly at random, it suffices to upper bound
$\E[\bH_{T}]$. Then, we can translate this term to a convergence
guarantee for $\E\left[\|\nabla F(\xout)\|\right]$. An important
intuition from STORM/STORM+ is the incorporation of VR in (\ref{eq:update-1}),
leading to a decrease over time of the error term $\epsilon_{t}$.
Thus, we can view $d_{t}$ as a proxy for $\nabla F(x_{t})$. It is
then natural to decompose $\bH_{T}$ in terms of $\bD_{T}$ and $\bE_{T}$.
By the definition of $\epsilon_{t}$, we can write $\bH_{T}\leq2\bD_{T}+2\bE_{T}.$
Therefore, to upper bound $\E[\bH_{T}]$, it suffices to upper bound
$\E[\bD_{T}]$ and $\E[\bE_{T}]$, which will be the essential steps
in the analysis framework.  A key insight is that $\E[\bD_{T}]$
and $\E[\bE_{T}]$ can be upper bounded in terms of each other, as
we now show. 

\textbf{Bounding $\bD_{T}$:} Starting from the function value analysis,
using smoothness, the update rule $x_{t+1}=x_{t}-\frac{1}{b}d_{t}$,
the definition of $\epsilon_{t}=d_{t}-\nabla F(x_{t})$, and Cauchy-Schwarz,
we obtain
\begin{align*}
F(x_{t+1})-F(x_{t}) & \leq\langle\nabla F(x_{t}),x_{t+1}-x_{t}\rangle+\frac{\beta}{2}\|x_{t+1}-x_{t}\|^{2}=-\frac{1}{b}\langle\nabla F(x_{t}),d_{t}\rangle+\frac{\beta}{2b^{2}}\|d_{t}\|^{2}\\
 & =-\frac{1}{b}\|d_{t}\|^{2}+\frac{1}{b}\langle\epsilon_{t},d_{t}\rangle+\frac{\beta}{2b^{2}}\|d_{t}\|^{2}\leq-\frac{1}{2b}\left\Vert d_{t}\right\Vert ^{2}+\frac{1}{2b}\left\Vert \epsilon_{t}\right\Vert ^{2}+\frac{\beta}{2b^{2}}\|d_{t}\|^{2}.
\end{align*}
Suppose that we choose $b$ so that $b\geq2\beta$, which ensures
that $\frac{\beta}{2b^{2}}\leq\frac{1}{4b}$. By rearranging the previous
inequality, summing up over all iterations, and taking expectation,
we obtain
\begin{equation}
\E\left[\bD_{T}\right]\leq4b\E\left(F(x_{1})-F(x_{T+1})\right)+2\E\left[\bE_{T}\right]\leq4b\left(F(x_{1})-F^{*}\right)+2\E\left[\bE_{T}\right].\label{eq:non-adaptive-D}
\end{equation}

\textbf{Bounding $\bE_{T}$:} By the standard calculation for the
stochastic error $\epsilon_{t}$ used in STORM, we have 
\begin{align*}
\E\left[\left\Vert \epsilon_{t+1}\right\Vert ^{2}\right] & \leq(1-a)^{2}\E\left[\left\Vert \epsilon_{t}\right\Vert ^{2}\right]+2(1-a)^{2}\frac{\beta^{2}}{b^{2}}\E\left[\left\Vert d_{t}\right\Vert ^{2}\right]+2a^{2}\sigma^{2}.
\end{align*}
Summing up over all iterations, rearranging, and using that $a\in\left[0,1\right]$
and $\epsilon_{1}=0$, we obtain
\begin{equation}
\E\left[\bE_{T}\right]\leq\frac{1}{1-(1-a)^{2}}\big(2(1-a)^{2}\frac{\beta^{2}}{b^{2}}\E\left[\bD_{T}\right]+2a^{2}\sigma^{2}T\big)\leq\frac{2\beta^{2}}{ab^{2}}\E\left[\bD_{T}\right]+2a\sigma^{2}T.\label{eq:non-adaptive-E}
\end{equation}
By combining inequalities (\ref{eq:non-adaptive-D}) and (\ref{eq:non-adaptive-E}),
we obtain
\begin{align}
\E\left[\bD_{T}\right] & \leq4b\left(F(x_{1})-F^{*}\right)+\frac{4\beta^{2}}{ab^{2}}\E\left[\bD_{T}\right]+4a\sigma^{2}T;\label{eq:non-adaptive-D-combined}\\
\E\left[\bE_{T}\right] & \leq\frac{8\beta^{2}}{ab}\left(F(x_{1})-F^{*}\right)+\frac{4\beta^{2}}{ab^{2}}\E\left[\bE_{T}\right]+2a\sigma^{2}T.\label{eq:non-adaptive-E-combined}
\end{align}

\textbf{Ideal non-adaptive choices for $a,b$: }Here, we set $a$
and $b$ to optimize the overall bound, and obtain choices that depend
on the problem parameters. In the next section, we build upon these
choices to obtain adaptive algorithms that use only the stochastic
gradient information received by the algorithm.

We observe that (\ref{eq:non-adaptive-D-combined}) and (\ref{eq:non-adaptive-E-combined})
bound $\E[\bD_{T}]$ and $\E[\bE_{T}]$ in terms of themselves, and
the coefficient on the right-hand side is $\frac{4\beta^{2}}{ab^{2}}$.
Suppose that we set $a$ so that this coefficient is $\frac{1}{2}$,
i.e., we set $a=\frac{8\beta^{2}}{b^{2}}$, so that $\frac{4\beta^{2}}{ab^{2}}=\frac{1}{2}$
(note that this requires setting $b\geq2\sqrt{2}\beta,$ so that $a\leq1$).
By plugging this choice into (\ref{eq:non-adaptive-D-combined}) and
(\ref{eq:non-adaptive-E-combined}), we obtain
\begin{align*}
\E\left[\bD_{T}\right],\E\left[\bE_{T}\right] & \leq O\Big(b\left(F(x_{1})-F^{*}\right)+\frac{\beta^{2}\sigma^{2}T}{b^{2}}\Big).
\end{align*}
The best choice for $b$ is the one that balances the two terms above:
$b=\Theta\big(\frac{\beta^{2}\sigma^{2}T}{F(x_{1})-F^{*}}\big)^{1/3}$.
Since we also need $b\geq\Omega(\beta)$, we can set $b$ to the sum
of the two. Hence, we obtain
\begin{align}
a & =\Theta\Big(\frac{\beta^{2}}{b^{2}}\Big)=\Theta\Big(\frac{1}{1+\left(\beta\left(F(x_{1})-F^{*}\right)\right)^{-2/3}\left(\sigma^{2}T\right)^{2/3}}\Big);\label{eq:non-adaptive-a}\\
b & =\Theta\big(\beta+\beta^{2/3}\big(F(x_{1})-F^{*}\big)^{-1/3}\big(\sigma^{2}T\big)^{1/3}\big);\label{eq:non-adaptive-b}\\
\E\left[\bD_{T}\right],\E\left[\bE_{T}\right],\E\left[\bH_{T}\right] & \leq O\big(\beta\big(F(x_{1})-F^{*}\big)+\big(\beta\big(F(x_{1})-F^{*}\big)\big)^{2/3}\big(\sigma^{2}T\big)^{1/3}\big).\label{eq:non-adaptive-H}
\end{align}

\subsection{Adaptive algorithm\label{subsec:Fully-adaptive-algorithms}}

In this section, we build on the non-adaptive algorithm and its analysis
from the previous section. We first motivate the algorithmic choices
made by our algorithm via a thought experiment where we pretend that
$\bH_{T},\bD_{T},\bE_{T}$ are deterministic quantities.

\textbf{Towards adaptive algorithms: }To develop an adaptive algorithm,
we would like to pick $a,b$ without an explicit dependence on the
problem parameters by using quantities that the algorithm can track.
We break this down by first considering choices that do not depend
on $\beta$, but on $\sigma$, and then removing the dependency on
$\sigma$. As a thought experiment, let us pretend that $\bH_{T},\bD_{T},\bE_{T}$
are deterministic quantities. A natural choice for $a$ that mirrors
the non-adaptive choice (\ref{eq:non-adaptive-a}) is $a=(1+\sigma^{2}T)^{-2/3}$.
Since we are pretending that $\bD_{T}$ is a deterministic quantity,
we can set $b$ by inspecting (\ref{eq:non-adaptive-E}):
\[
\bE_{T}\overset{\eqref{eq:non-adaptive-E}}{\leq}\frac{2\beta^{2}}{ab^{2}}\bD_{T}+2a\sigma^{2}T
\]
If we set $b=\bD_{T}^{1/2}/a^{1/4}$, we ensure that $\bD_{T}$ cancels
and we obtain the desired upper bound on $\bE_{T}$. More precisely,
by plugging in $a=(1+\sigma^{2}T)^{-2/3}$ and $b=\bD_{T}^{1/2}/a^{1/4}$
into (\ref{eq:non-adaptive-E}), we obtain
\begin{align*}
\bE_{T} & \overset{\eqref{eq:non-adaptive-E}}{\leq}\frac{2\beta^{2}}{a^{1/2}\bD_{T}}\bD_{T}+2a\sigma^{2}T\le O\big(\beta^{2}\left(1+\sigma^{2}T\right)^{1/3}+\left(1+\sigma^{2}T\right)^{1/3}\big)
\end{align*}
We now consider two cases for $\bD_{T}$. If $\bD_{T}\le16\beta^{2}(1+\sigma^{2}T)^{1/3}$,
the above inequality together with $\bH_{T}\leq2\bD_{T}+2\bE_{T}$
imply that $\bH_{T}\leq O((1+\beta^{2})(1+\sigma^{2}T)^{1/3})$. Otherwise,
we have $\bD_{T}\ge16\beta^{2}(1+\sigma^{2}T)^{1/3}$ and thus $ab^{2}\geq16\beta^{2}$.
Plugging into (\ref{eq:non-adaptive-D-combined}), we obtain
\[
\bD_{T}\overset{\eqref{eq:non-adaptive-D-combined}}{\leq}O\big(\sqrt{\bD_{T}}\left(1+\sigma^{2}T\right)^{1/6}\left(F(x_{1})-F(x^{*})\right)+\left(1+\sigma^{2}T\right)^{1/3}\big)
\]
which solves to $\bD_{T}\leq O((1+\sigma^{2}T)^{1/3}(F(x_{1})-F^{*})^{2})$.
We can again bound $\bH_{T}$ using $\bH_{T}\le2\bD_{T}+2\bE_{T}$.
In both cases, we have the bound
\[
\bH_{T}\leq O\big(\big(1+\beta^{2}+\left(F(x_{1})-F^{*}\right)^{2}\big)\left(1+\sigma^{2}T\right)^{1/3}\big)
\]
We now turn to removing the dependency on $\sigma^{2}T$ in $a$.
The algorithm can also track $\widetilde{\bH}_{T}:=\sum_{t=1}^{T}\|\nabla f(x_{t};\xi_{t})-\nabla f(x_{t};\xi_{t+1})\|^{2}$,
which can be viewed as a proxy for $\sigma^{2}T$. Replacing $\sigma^{2}T$
by this proxy and making $a$ and $b$ be time dependent give the
update rules employed by our algorithm in the special case $p=\frac{1}{2}$.
Our update rule for general $p$ follows from a similar thought experiment.

\textbf{Analysis}: Using a similar approach as in the non-adaptive
analysis, we can turn the above argument into a rigorous analysis.
In the appendix, we give the complete analysis as well as a proof
sketch in Section \ref{sec:Proof-sketch} that gives an overview of
our main analysis techniques.

\section{Experiments\label{sec:Experiments}}

We examine the empirical performance of our methods against the previous
work STORM+ \cite{levy2021storm+} and popular algorithms (Adam,
AdamW, AdaGrad, and SGD) on three tasks: (1) \textbf{Image classification}
with the CIFAR10 dataset \cite{krizhevsky2009cifar10} using ResNet18
\cite{ren2016resnet} models; (2)\textbf{ Masked language modeling}
via the BERT pretraining loss \cite{devlin2018bert} with the IMDB
dataset \cite{maas2011imdbdataset} using distill-BERT models \cite{sanh2019distilbert},
where we employ the standard cross entropy loss for MLM fine tuning
(with whole word masking and fixed test masks) with maximum length
128; and (3)\textbf{ Sentiment analysis} with the SST2 dataset \cite{sst2dataset}
via finetuning BERT models \cite{devlin2018bert}. We use the standard
train/validation split and run all algorithms for 4 epochs. 

 We use the default implementation of AdaGrad, Adam, AdamW, and SGD
from Pytorch. For STORM+, we follow the authors' original implementation.\footnote{Link to the code of STORM+: \url{https://github.com/LIONS-EPFL/storm-plus-code}.}
We give the complete implementation details and tables of hyperparameters
for all algorithms in Section \ref{subsec:Implementation-details-and-hyperperparams}
of the Appendix.

\textbf{Heuristics.} For our algorithms, we further examine whether
heuristics like exponential moving average (EMA) of the gradient sums
(or often called online moment estimation) and per-coordinate update
would be beneficial. This version with heuristics is further denoted
(H) in our results below. This is discussed in full details in Section
\ref{subsec:Heuristics-versions} of the Appendix.

\textbf{Results.} We perform our experiments on the standard train/test
splits of each dataset. We tune for the best learning rate across
a fixed grid for all algorithms and perform each run 5 times. For
readability, we omit error bars in the plot. Full plots with error
bars and tabular results with standard deviation as well as further
discussions are presented in Section \ref{subsec:Full-results-exp}
of the Appendix.\footnote{The reader should keep in mind that variance-reduced algorithms like
META-STORM require twice the amount of gradient queries, so the improvement
in performance that our algorithms exhibit does not come without a
cost. Additional plots and further discussions are available in Section
\ref{sec:Additional-Experimental-Details}.}

1. \textbf{CIFAR10} (Figure \ref{fig:cifar10-experimental-result}).
Overall, META-STORM-SG achieves the lowest training loss with META-STORM
and STORM+ coming in close. META-STORM with heuristics attains the
best test accuracy, with Adam coming in close.

2. \textbf{IMDB} (Figure \ref{fig:imdb-train-test-loss}). AdamW attains
the best training loss. However, META-STORM with heuristics achieve
the best test loss (with AdamW coming in close). META-STORM-SG and
the heuristic algorithms outperform STORM+ in both minimizing training
loss and test loss. 

3. \textbf{SST2} (Figure \ref{fig:sst2-exp-results}). META-STORM
with heuristics attain the best training loss and accuracy, above
Adam and AdamW. It also achieves the best validation accuracy out
of all the algorithms. Furthermore, non-heuristic META-STORM and META-STORM-SG
outperform STORM+. We remark that STORM+ appears to be rather unstable
for this task as some of the random runs do not converge to good stationary
points.

\begin{figure}[t]
\centering{}\vspace{-5pt}
\subfloat{\includegraphics[width=0.5\textwidth]{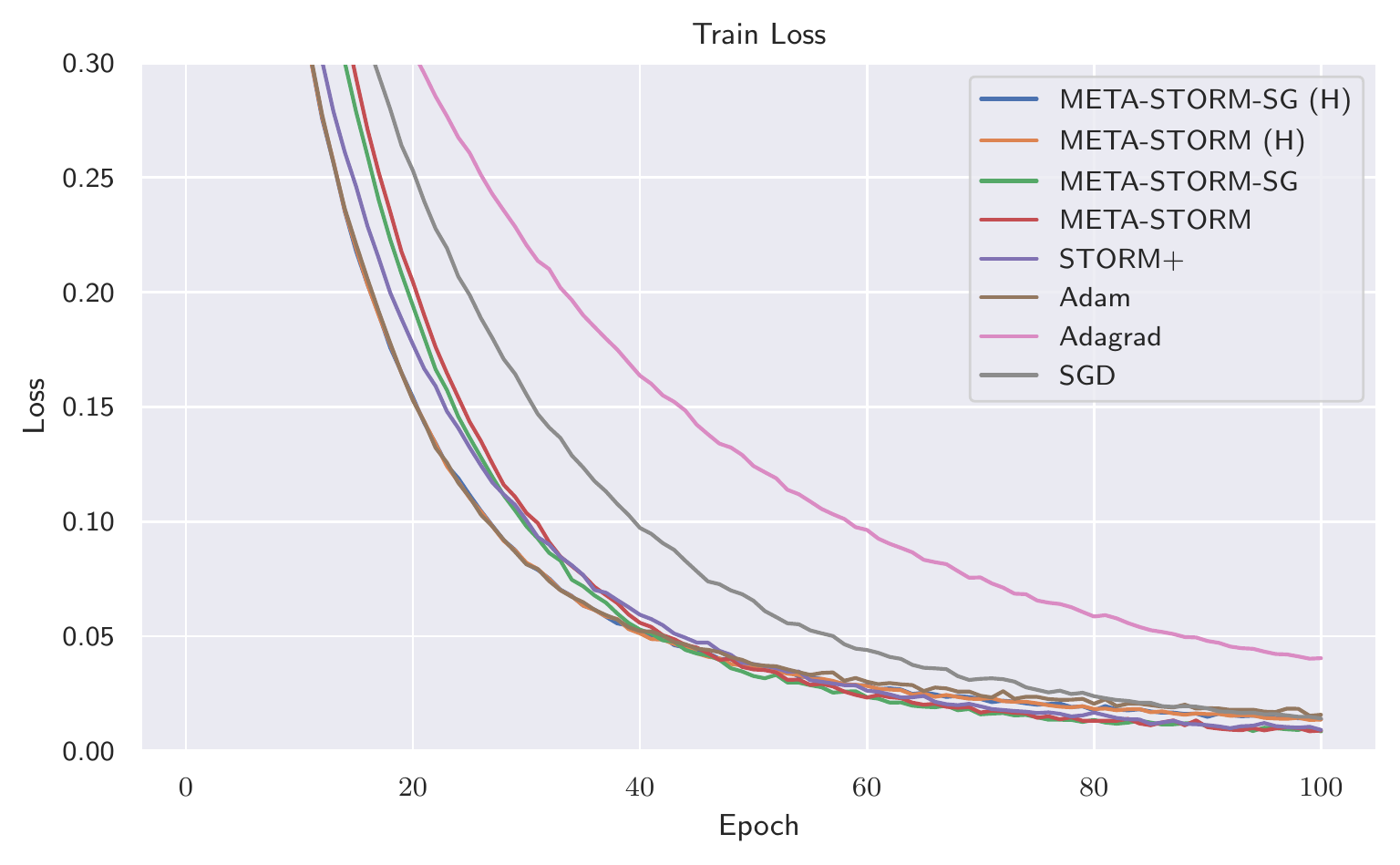}}\subfloat{\includegraphics[width=0.5\textwidth]{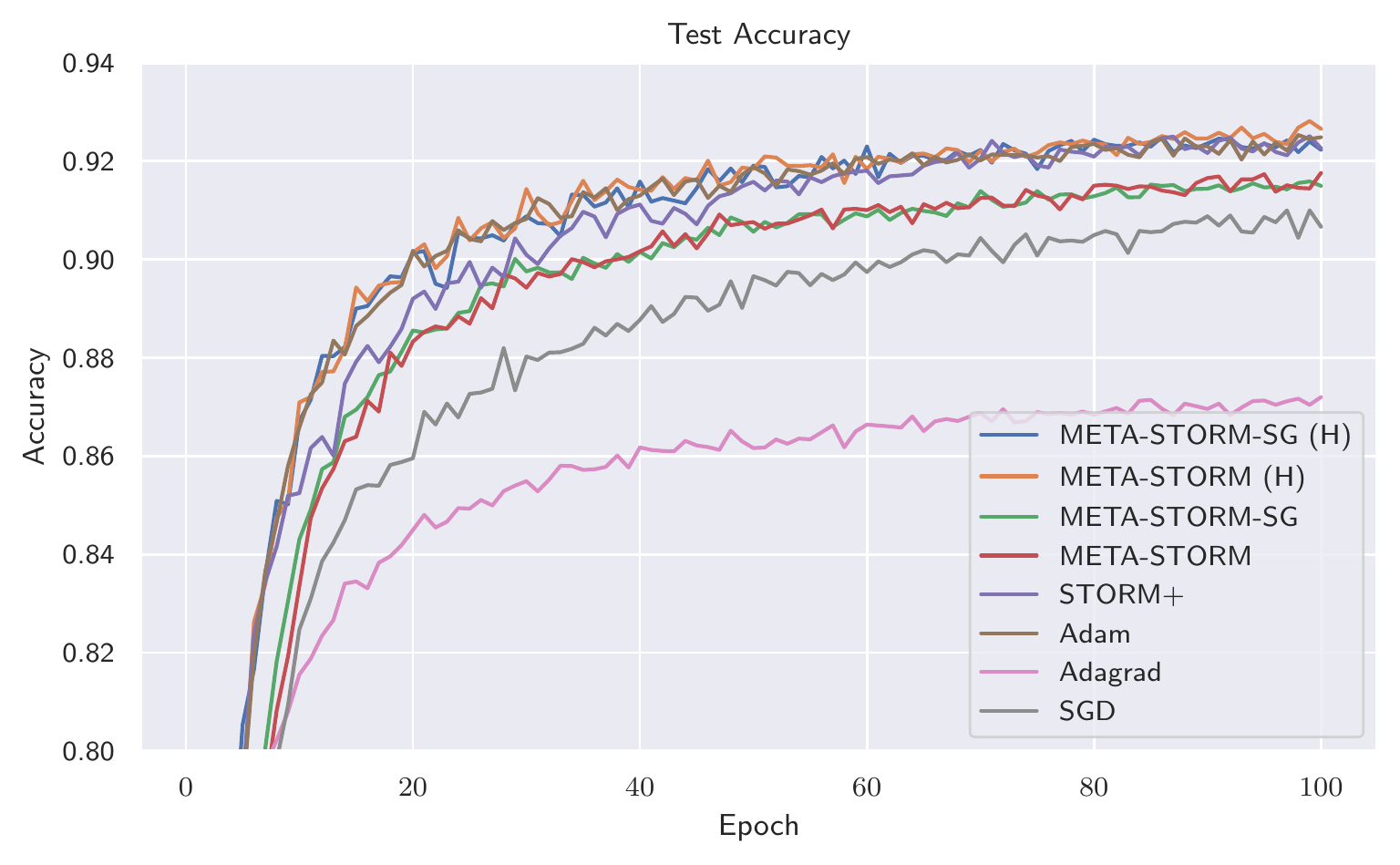}}\caption{\label{fig:cifar10-experimental-result}Training loss and test accuracy
on CIFAR10. (H) denotes the addition of heuristics.}
\vspace{-17.5pt}
\end{figure}
\begin{figure}[H]
\begin{centering}
\vspace{-5pt}
\par\end{centering}
\centering{}\subfloat{\includegraphics[width=0.5\textwidth]{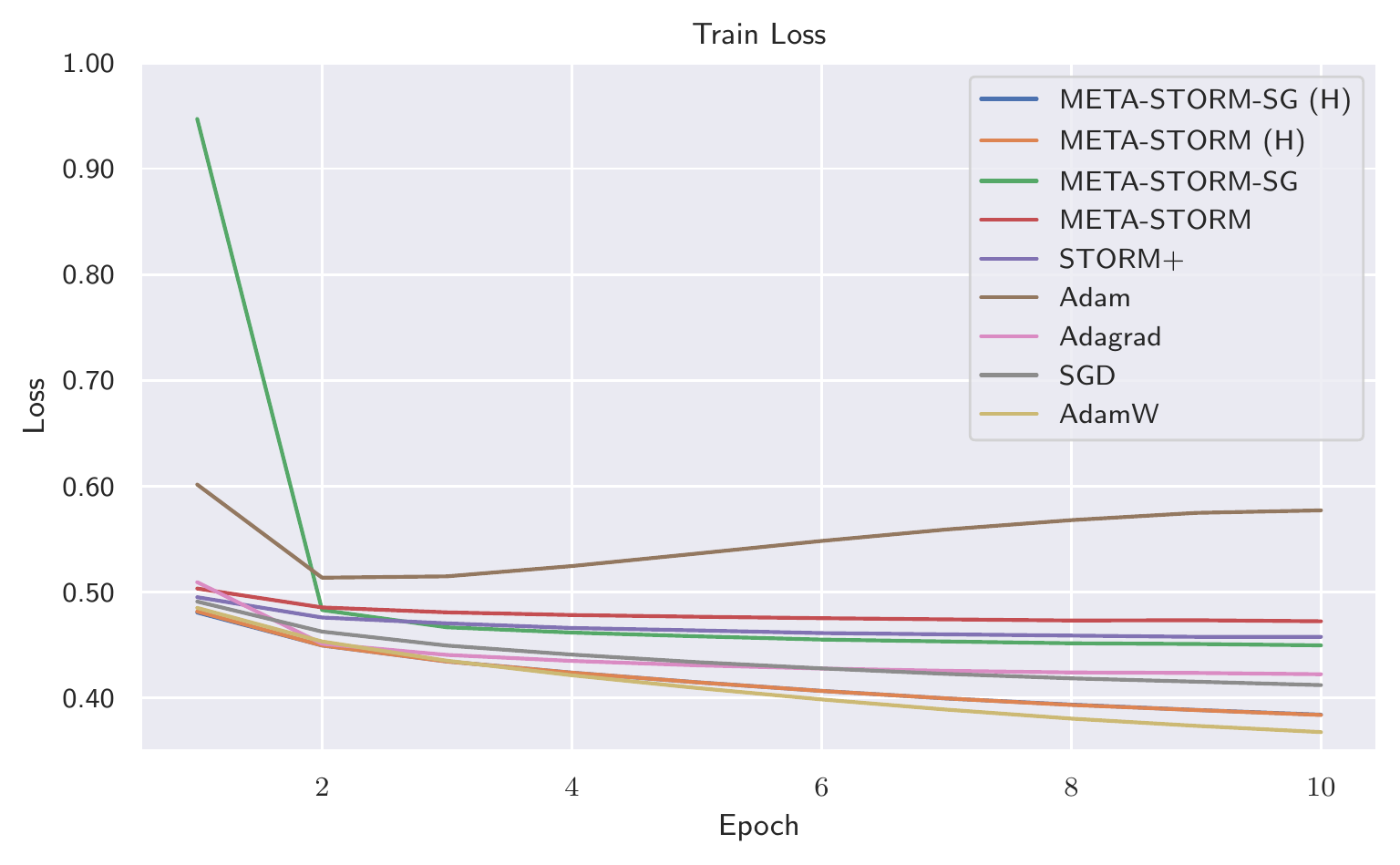}}\subfloat{\includegraphics[width=0.5\textwidth]{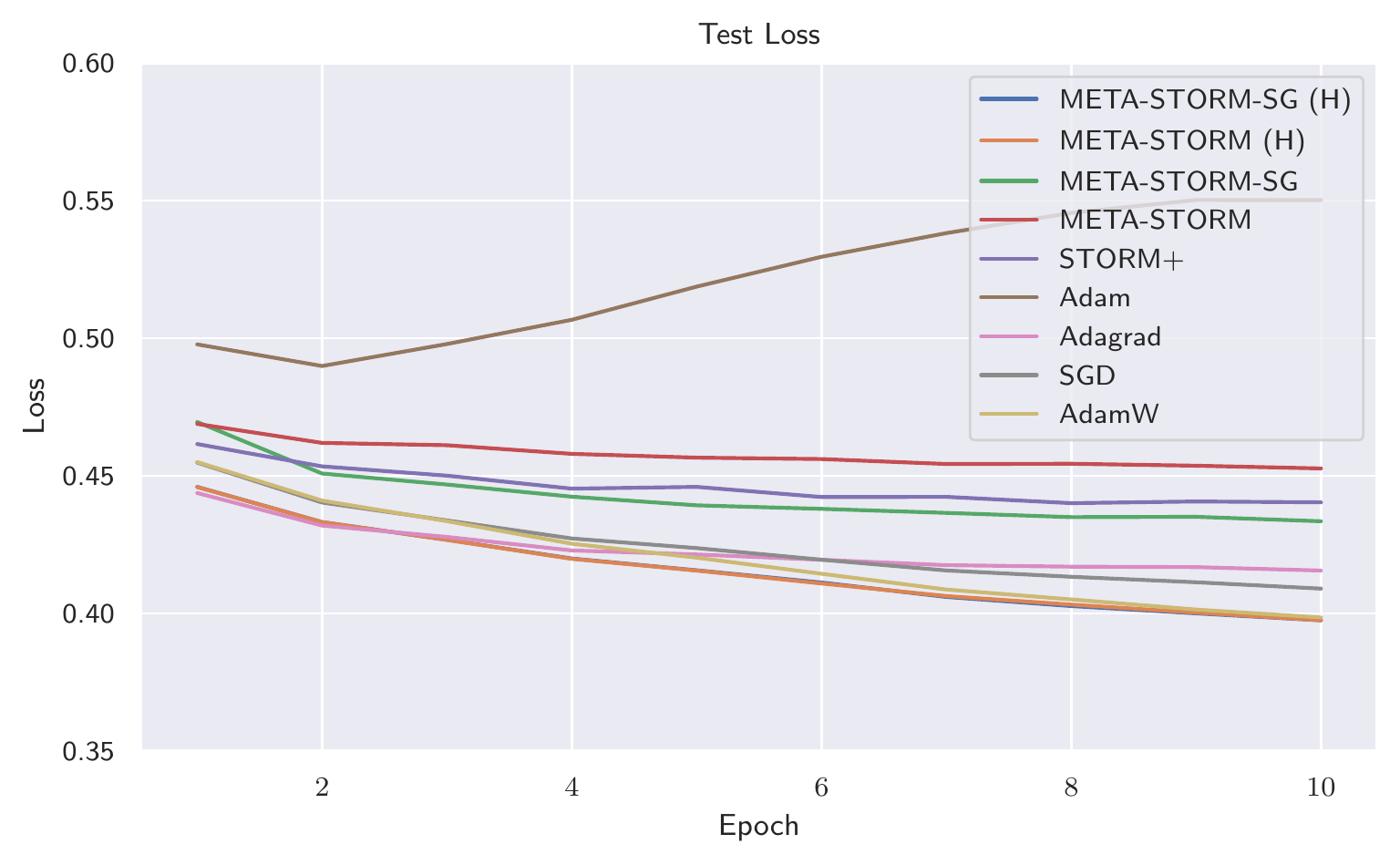}}\caption{\label{fig:imdb-train-test-loss}Training loss and test loss on IMDB.
(H) denotes the addition of heuristics.}
\vspace{-17.5pt}
\end{figure}
\begin{figure}[H]
\begin{centering}
\vspace{-5pt}
\par\end{centering}
\centering{}\subfloat{\includegraphics[width=0.5\textwidth]{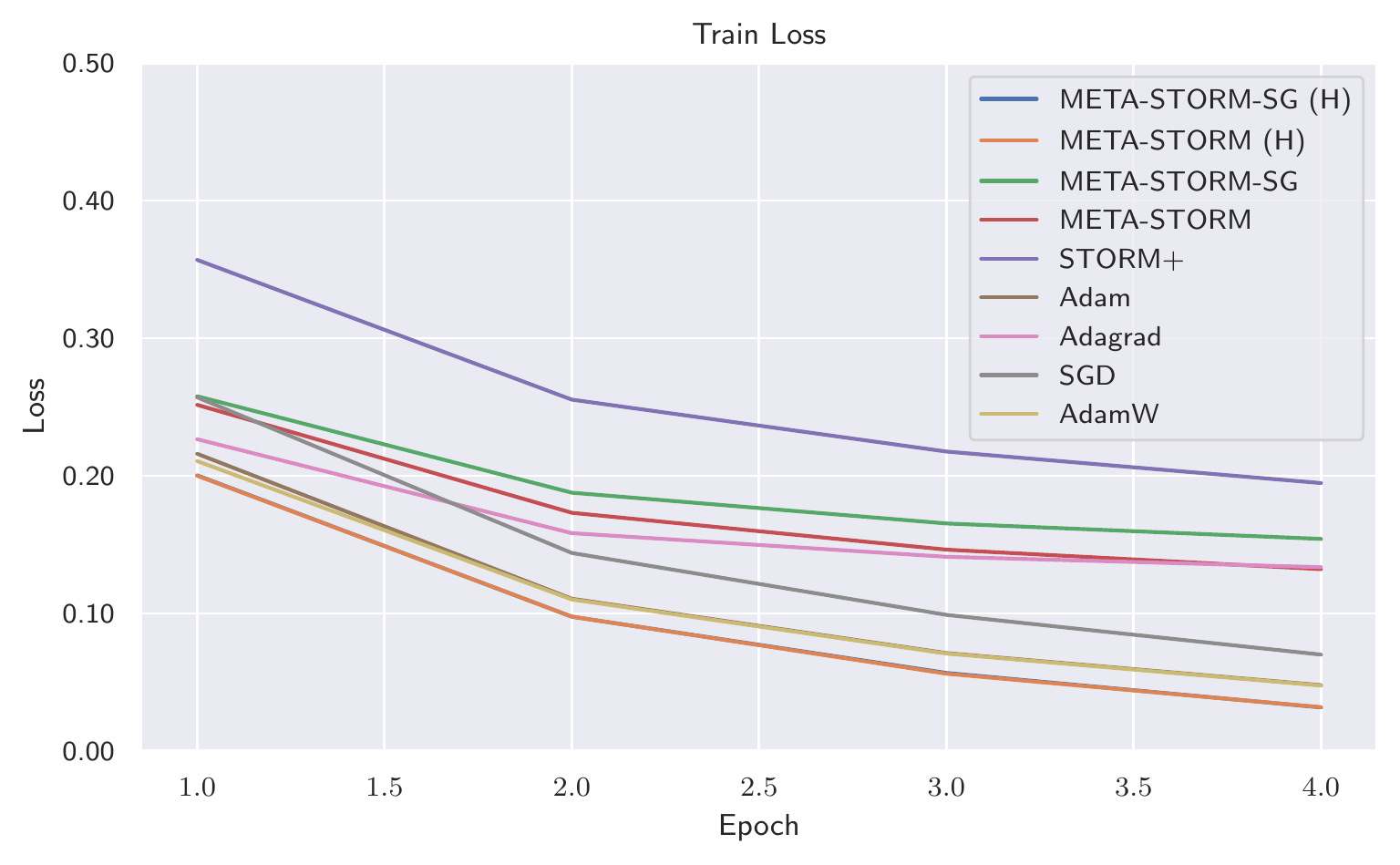}}\subfloat{\includegraphics[width=0.5\textwidth]{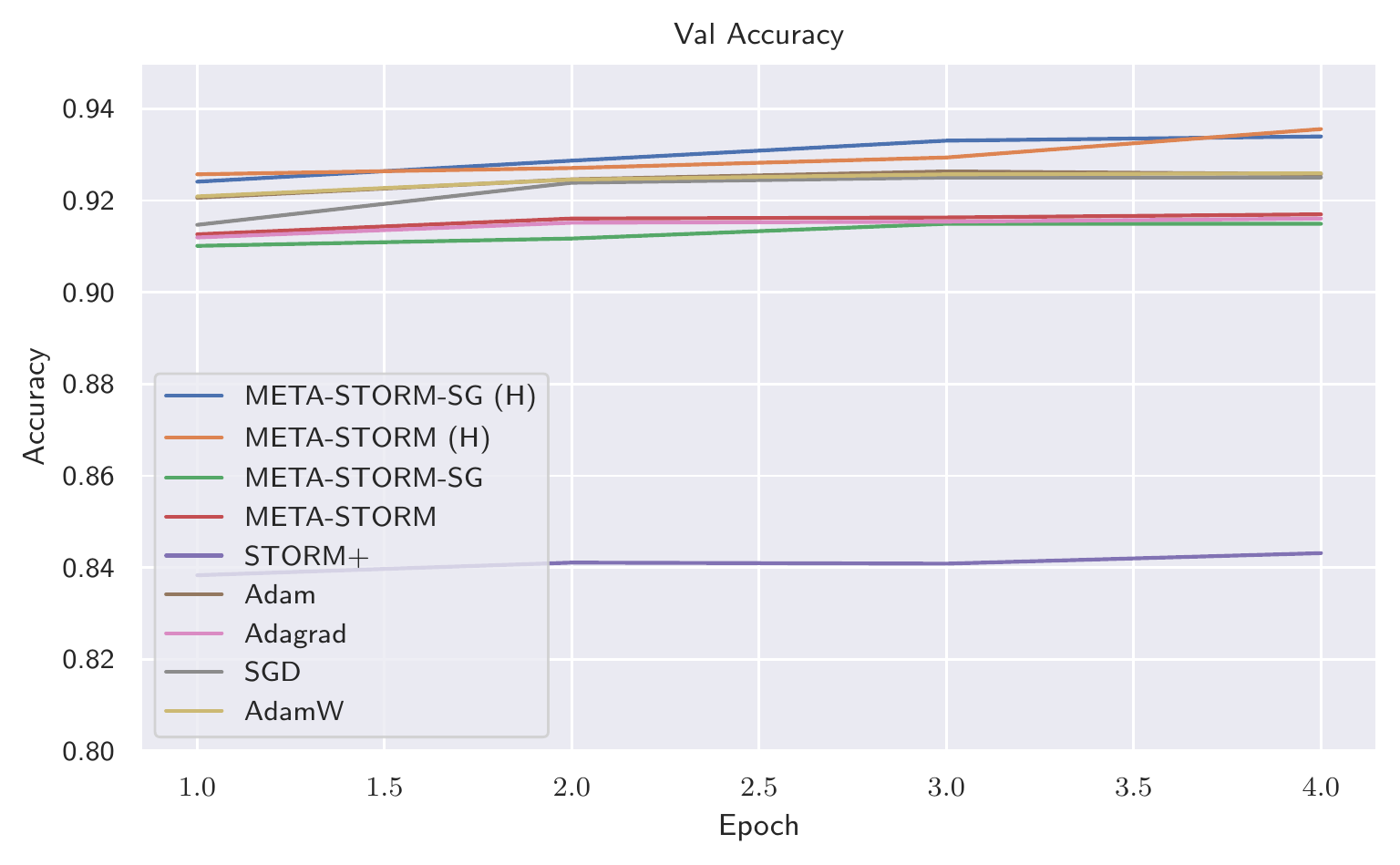}}\caption{\label{fig:sst2-exp-results}Training loss and validation accuracy
on SST2. (H) denotes the addition of heuristics.}
\vspace{-7.5pt}
\end{figure}

\section{Conclusion}

In this paper, we propose $\algnameold$ and $\algnamenew$, two fully-adaptive
momentum-based variance-reduced SGD frameworks that generalize upon
STORM+ and remove STORM+'s restrictive bounded function values assumption.
$\algnamenew$ and its sibling $\algnameold$ attain the optimal convergence
rate with better dependency on the problem parameters than previous
methods and allow for a wider range of configurations. Experiments
demonstrate our algorithms' effectiveness across common deep learning
tasks against the previous work STORM+, and when heuristics are further
added, achieve competitive performance against state-of-the-art algorithms.

\bibliographystyle{plain}
\bibliography{ref}

\paragraph{}

\newpage{}

\appendix

\section{Appendix outline}

The appendix is organized as follows. 
\begin{itemize}
\item Section \ref{sec:Additional-Experimental-Details} presents the full
implementation details for our algorithms and hyperparameters used.
This section also includes additional ablation studies and experiments.
\item Section \ref{sec:Notations-and-assumptions} introduces the notations
used in the analysis of our algorithms.
\item Section \ref{sec:Proof-sketch} presents the proof sketch of Theorem
\ref{thm:Main-MS-convergence-rate}.
\item Section \ref{sec:Basic-analysis} establishes some basic results that
are used in our full analysis.
\item Section \ref{sec:Algorithm-MS} gives the analysis of $\algnamenew$
for general $p$.
\item Section \ref{sec:Algorithm-SG} gives the analysis of $\algnameold$
for general $p$.
\item Section \ref{sec:Algorithm-NA} introduces $\algnamena$ and gives
the analysis for general $p$.
\item Section \ref{sec:Basic-inequalities} gives several basic inequalities
that are used in our analysis.
\end{itemize}

\section{Experimental details and additional experiments \label{sec:Additional-Experimental-Details}}

In this section, we present the complete implementation details along
with the full experimental setup. All of our experiments were conducted
on two NVIDIA RTX3090. 

\subsection{Implementation details and hyperparameter tuning\label{subsec:Implementation-details-and-hyperperparams}}

In this section, we present the full implementation details of the
heuristics version, parameter selection, and hyperparameter tuning
for all 3 datasets.

\subsubsection{Heuristics versions of $\protect\algnamenew$ and $\protect\algnameold$\label{subsec:Heuristics-versions}}

\begin{algorithm}[h]
\caption{Heuristic update of $\protect\algnamenew$ and $\protect\algnameold$.
\label{alg:Heuristic-metastorm}}
\begin{align*}
b_{t} & =\begin{cases}
\left(b_{0}^{1/p}+D_{t}\right)^{p}/a_{t}^{q} & \text{for }\algnamenew\text{ (H)}\\
\left(b_{0}^{1/p}+D_{t}\right)^{p}/a_{t+1}^{q} & \text{for }\algnameold\text{ (H)}
\end{cases}\\
a_{t+1} & =\left(1+G_{t}/a_{0}^{2}\right){}^{-2/3}\\
\text{ where }D_{t} & =\alpha D_{t-1}+(1-\alpha)d_{t}^{2}\\
\text{ }G_{t} & =\begin{cases}
\alpha G_{t-1}+(1-\alpha)\left(\nabla f(x_{t},\xi_{t})-\nabla f(x_{t},\xi_{t+1})\right)^{2} & \text{for }\algnamenew\text{ (H)}\\
\alpha G_{t-1}+(1-\alpha)\left(\nabla f(x_{t},\xi_{t})\right)^{2} & \text{for }\algnameold\text{ (H)}
\end{cases}
\end{align*}
\end{algorithm}

For our algorithms, we employ the common heuristic of using an exponential
moving average (EMA) scheme in the momentum and the step size. We
also perform a per-coordinate update instead of simply using the norm.
With this, our update rules for $x_{t+1}=x_{t}-\eta d_{t}/b_{t}$
becomes coordinate-wise division with the update rules as in Algorithm
\ref{alg:Heuristic-metastorm}, where all the operations between vectors
here are coordinate-wise multiplication, exponentiation, and division.
In our experiments, we set $\alpha=0.99$, $a_{0}=1$, $b_{0}=10^{-8}$
as selected by the criterion detailed next.

\subsubsection{Algorithm development and default parameters selection}

We develop our algorithm on MNIST and tune for $p,a_{0},$ and $b_{0}$.
For $a_{0}$, we tune on MNIST across a range of values from $1$
to $10^{8}$ and found that larger values of $a_{0}$ are helpful.
For $b_{0}$, we simply need a small number for numerical stability
so we pick $10^{-8}$. For the heuristic versions of our algorithms,
$a_{0}=1$ gives the best results. This might be due to the effects
of per-coordinate operations removing the need to scale down the gradient-accumulated
step-size. 

\paragraph{Effects of varying $p$.}

In Figures \ref{fig:fastrd-mnist-p-tuning} and \ref{fig:fastrn-mnist-p-tuning},
we show the training loss and test accuracy of different values of
$p$ of our algorithms on MNIST (with $a_{0}=10^{8}$ and $b_{0}=10^{-8}$).
For each configuration, we tune the base learning rate $\eta$ across
$\left\{ 10^{-3},10^{-2},10^{-1},1,10\right\} .$The results suggest
that the lower values of $p$ tend to perform better. While $p=1/3$
has comparable performance to the lowest setting of $p$, this choice
is somewhat analogous to STORM+. Hence, we select the lowest possible
value $p$ for our algorithms in the subsequent experiments (with
$p=0.20$ for $\algnamenew$ and $p=0.25$ for $\algnameold$).

\begin{figure}[h]
\begin{centering}
\subfloat{\includegraphics[width=0.5\textwidth]{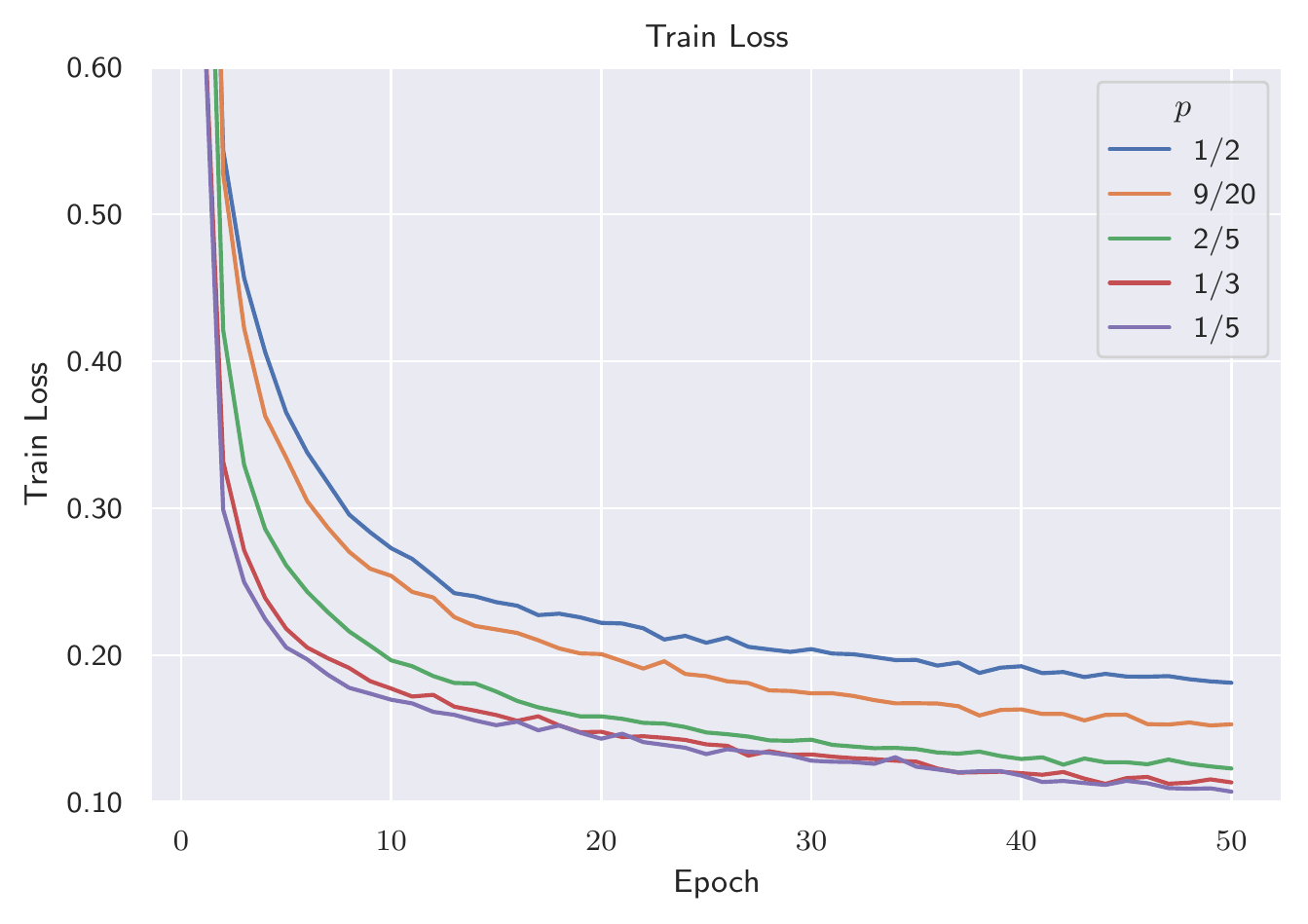}}\subfloat{\includegraphics[width=0.5\textwidth]{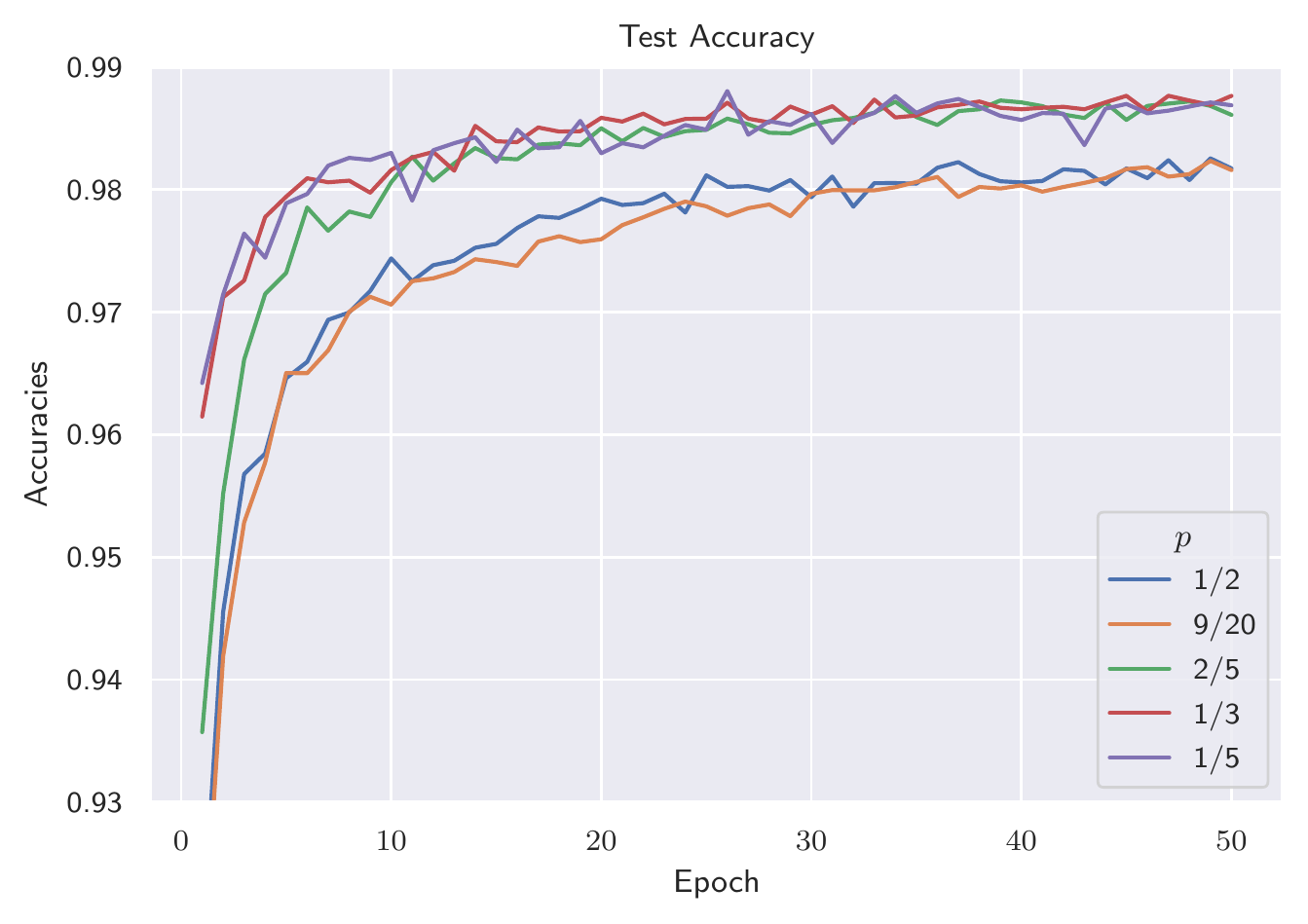}}\caption{\label{fig:fastrd-mnist-p-tuning}Training loss and test accuracy
for $\protect\algnamenew$ on MNIST for different $p$ values.}
\par\end{centering}
\vspace{-17.5pt}
\end{figure}
\begin{figure}[h]
\begin{centering}
\subfloat{\includegraphics[width=0.5\textwidth]{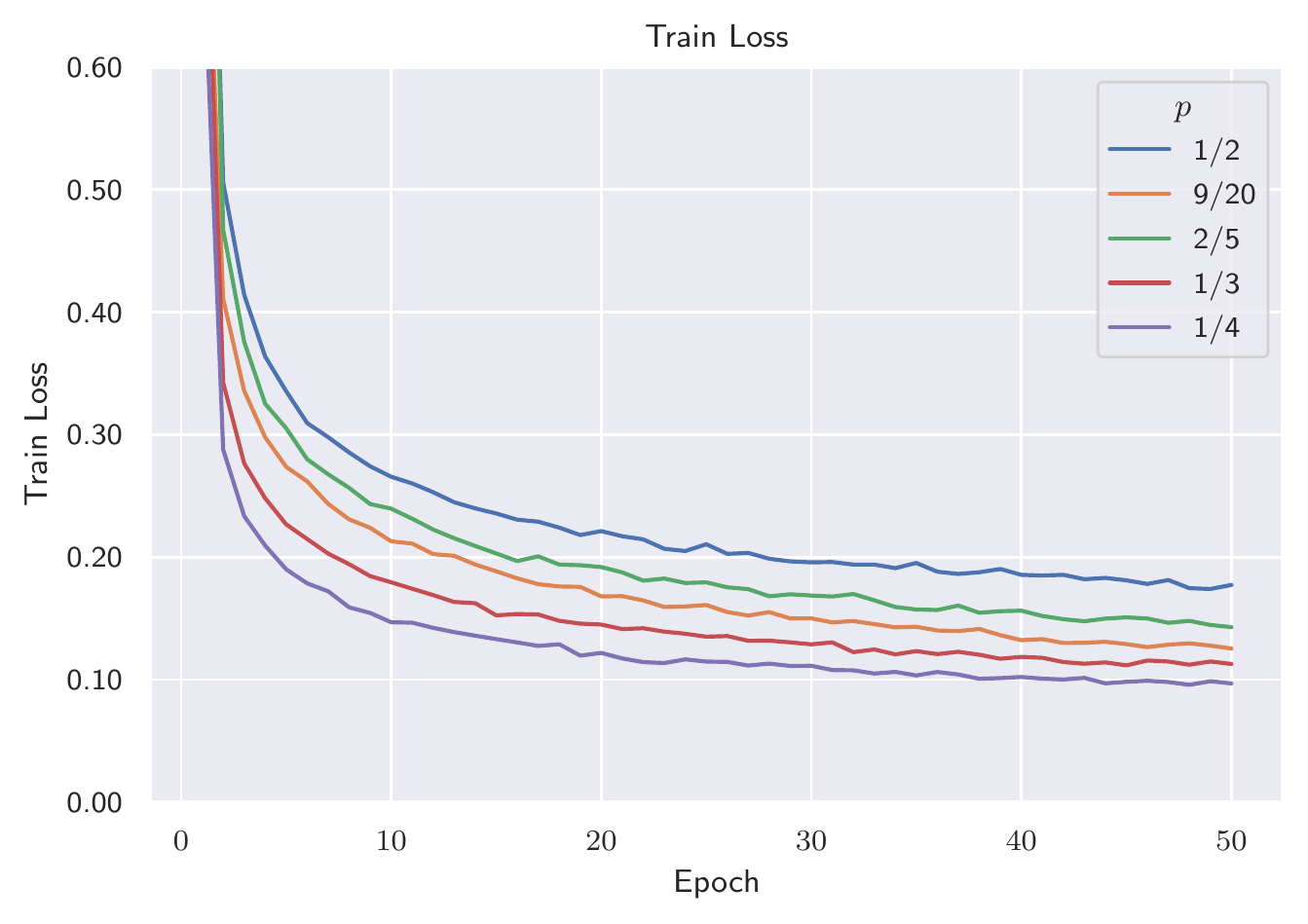}}\subfloat{\includegraphics[width=0.5\textwidth]{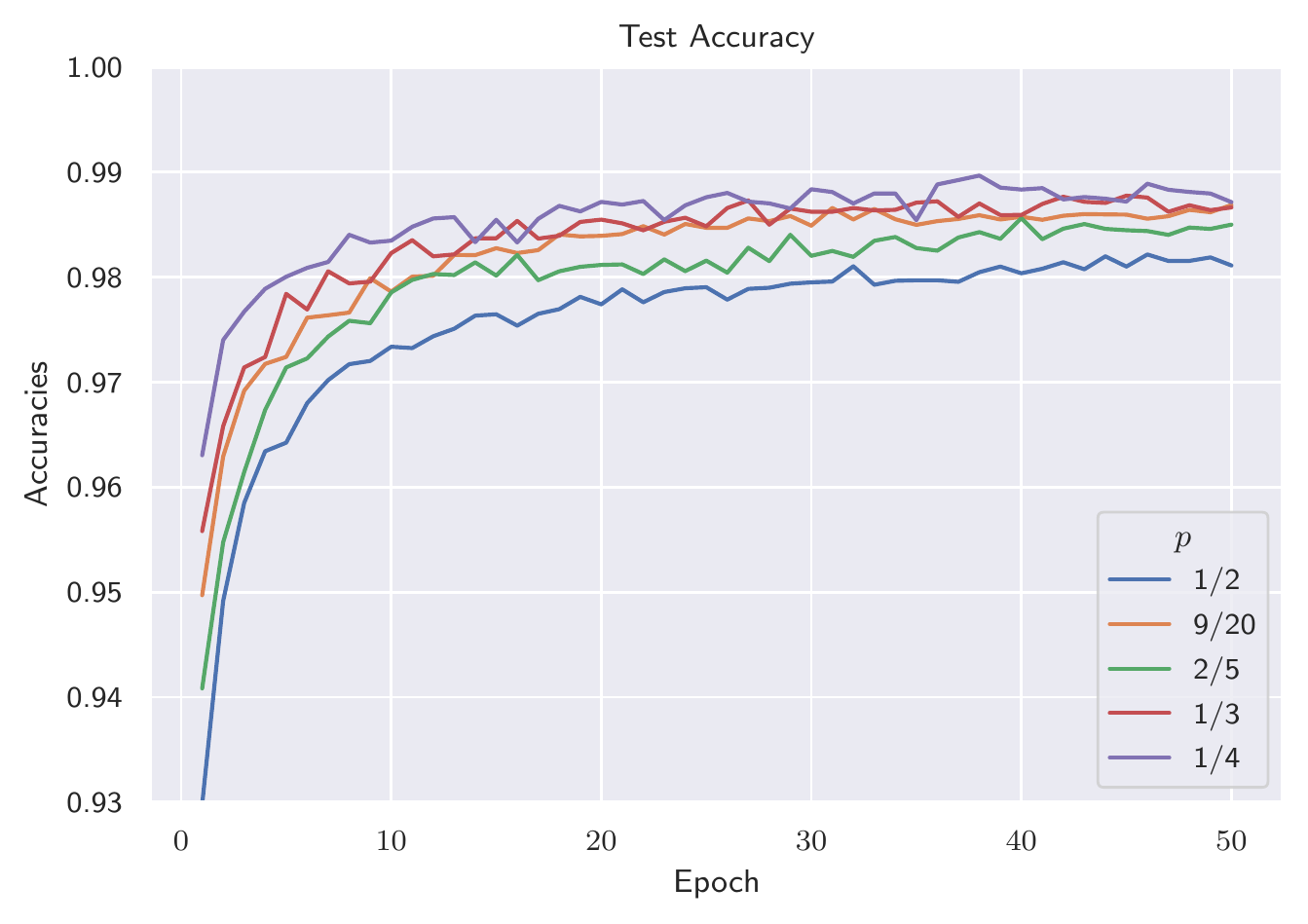}}
\par\end{centering}
\caption{\label{fig:fastrn-mnist-p-tuning}Training loss and test accuracy
for $\protect\algnameold$ on MNIST for different $p$ values.}
\end{figure}

For the heuristics versions of our algorithms, we perform the same
experiments and show the results in Figures \ref{fig:fastrd-h-mnist-p-tuning}
and \ref{fig:fastrn-h-mnist-p-tuning}. Since $p=0.50$ attains the
lowest training loss for both heuristics versions of our algorithms,
we select such value for all our experiments. 

\begin{figure}[t]
\centering{}\subfloat{\includegraphics[width=0.5\textwidth]{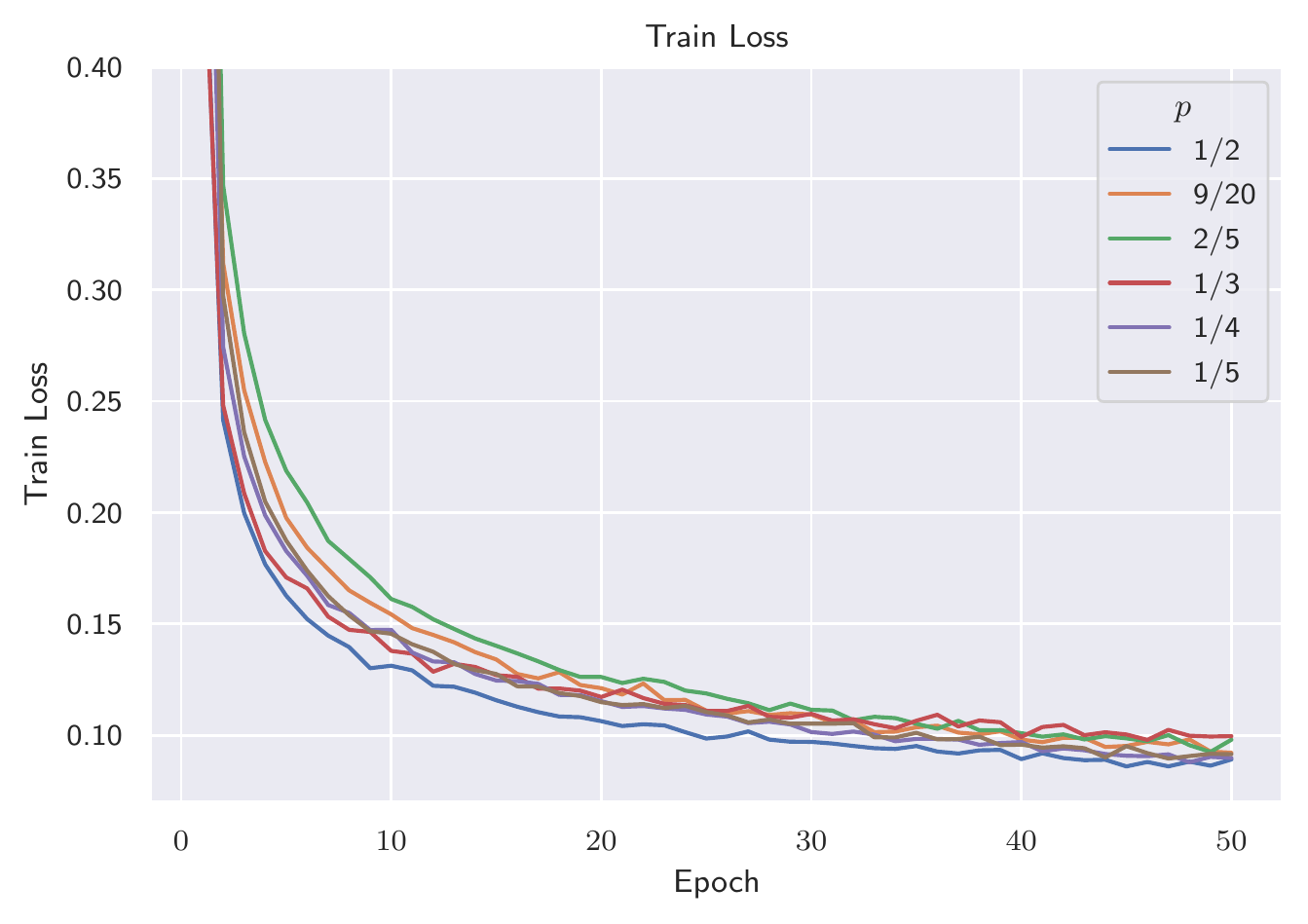}}\subfloat{\includegraphics[width=0.5\textwidth]{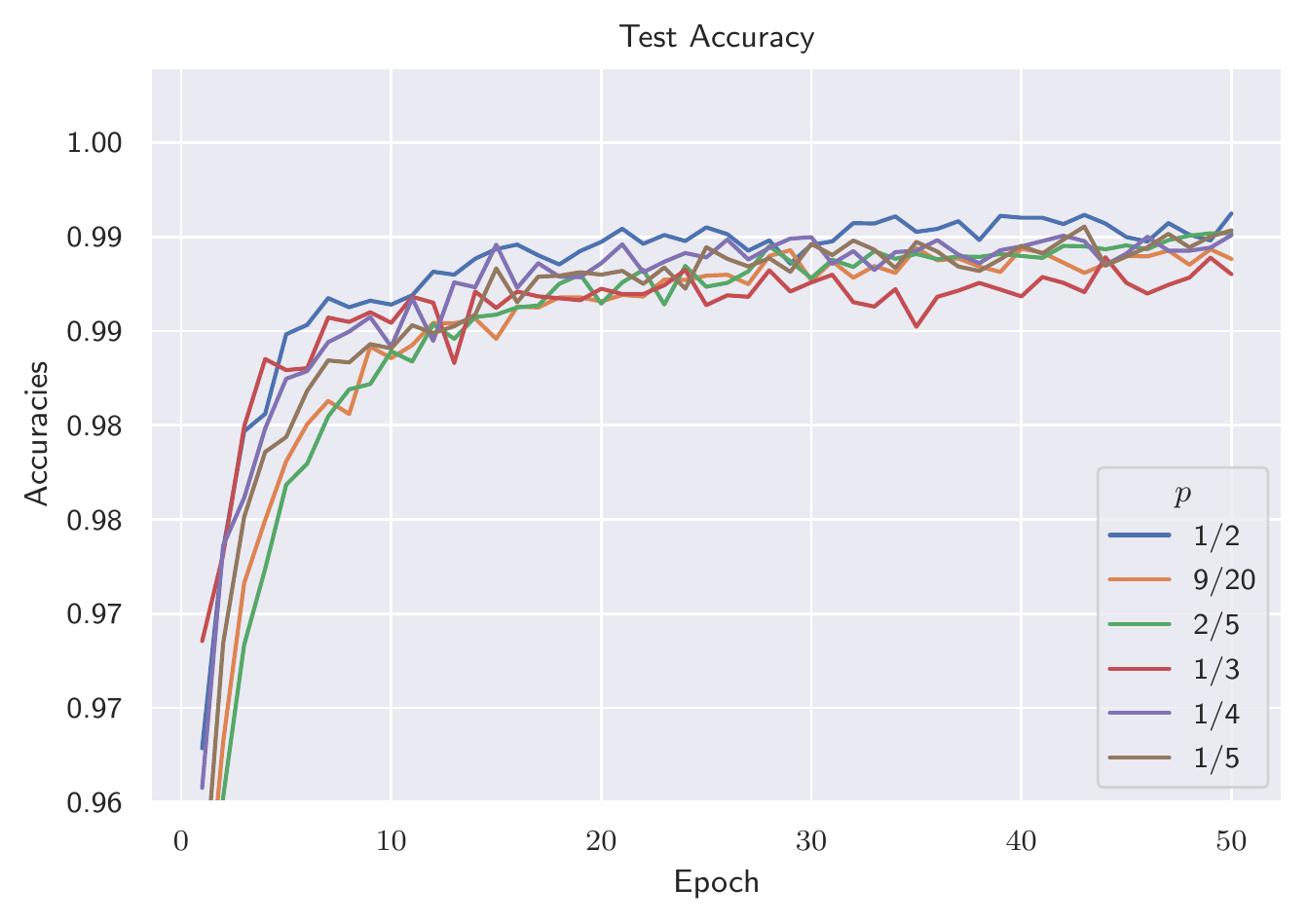}}\caption{\label{fig:fastrd-h-mnist-p-tuning}Training loss and test accuracy
for $\protect\algnamenew$ (H) on MNIST for different $p$ values.}
\vspace{-17.5pt}
\end{figure}
\begin{figure}[h]
\centering{}\subfloat{\includegraphics[width=0.5\textwidth]{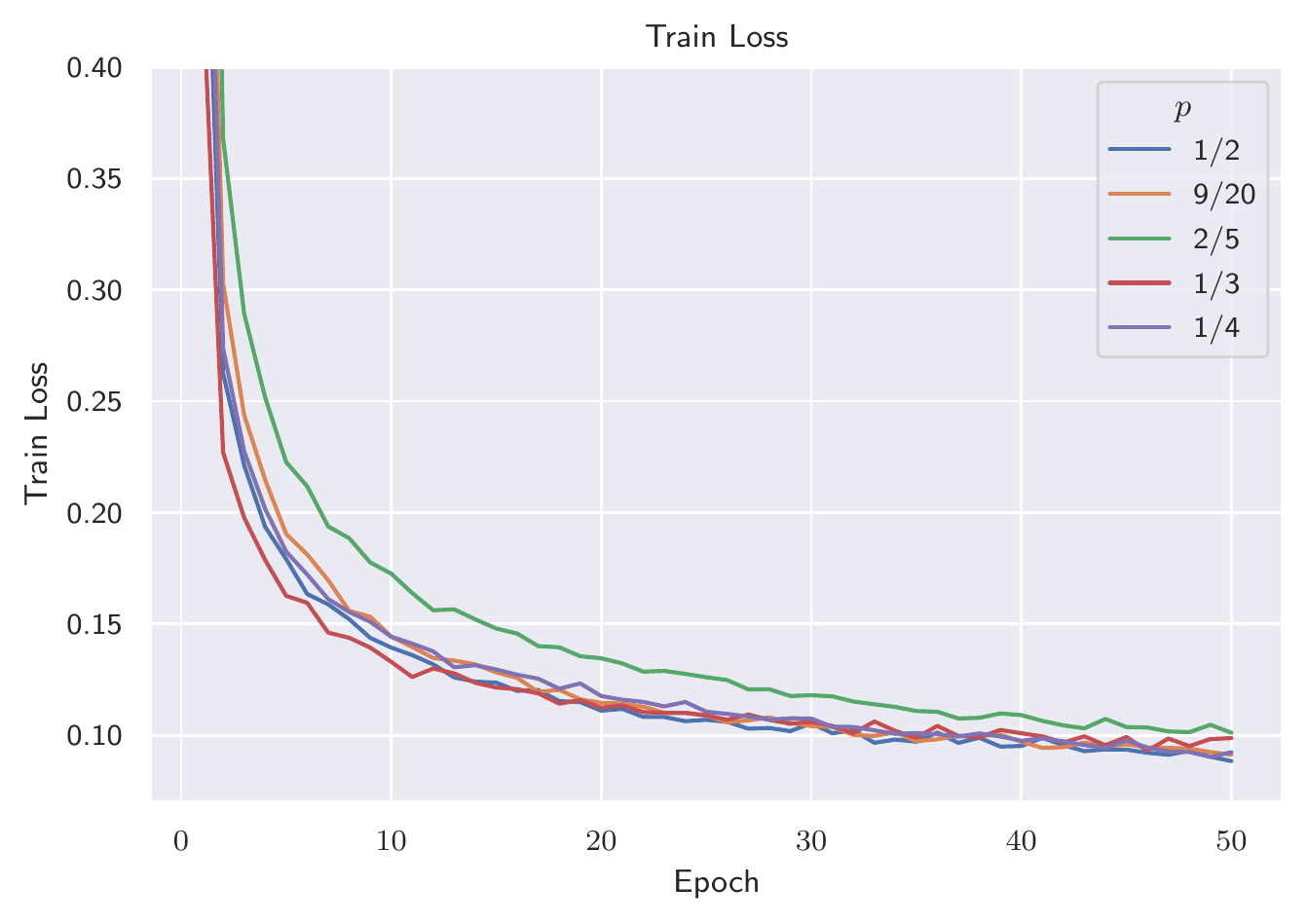}}\subfloat{\includegraphics[width=0.5\textwidth]{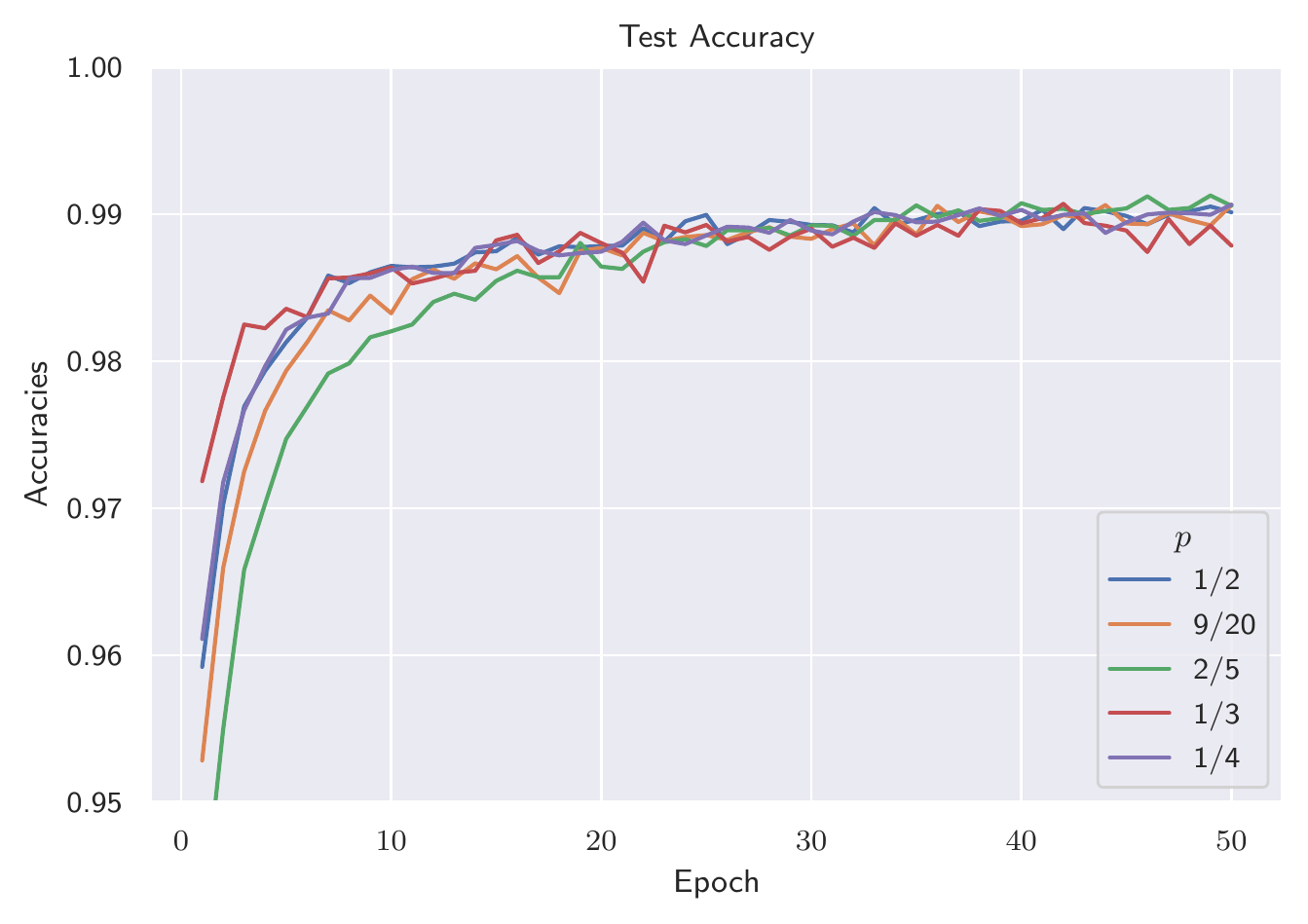}}\caption{\label{fig:fastrn-h-mnist-p-tuning}Training loss and test accuracy
for $\protect\algnameold$ (H) on MNIST for different $p$ values.}
\end{figure}

\paragraph{Default parameters.}
\begin{center}
\begin{table}[h]
\begin{centering}
\caption{Default parameters for META-STORM algorithms and STORM+. The version
with heuristics is denoted with an additional (H).}
\par\end{centering}
\centering{}%
\begin{tabular}{|c|c|c|c|}
\hline 
Algorithm & $p$ & $a_{0}$ & $b_{0}$\tabularnewline
\hline 
\hline 
$\algnamenew$ & $0.20$ & $10^{8}$ & $10^{-8}$\tabularnewline
\hline 
$\algnameold$ & $0.25$ & $10^{8}$ & $10^{-8}$\tabularnewline
\hline 
$\algnamenew$ (H) & $0.50$ & $1$ & $10^{-8}$\tabularnewline
\hline 
$\algnameold$ (H) & $0.50$ & $1$ & $10^{-8}$\tabularnewline
\hline 
STORM+ & N/A & \# of parameters & $1$\tabularnewline
\hline 
\end{tabular}
\end{table}
\par\end{center}

The discussion above leads to the choice of $a_{0}=10^{8}$ and $b_{0}=10^{-8}$
by default for our algorithms with $p=0.20$ for META-STORM and $p=0.25$
for META-STORM-SG on the benchmarks present in this section. For the
heuristic versions of META-STORM, we use $p=0.50,a_{0}=1,$ and $b_{0}=10^{-8}$
for our algorithm with heuristics. This version with heuristics is
further denoted (H) in our results below. For STORM+, we use the original
authors' implementation of setting $a_{0}$ to the number of parameters
of the model (which is roughly $10^{8}$ for ResNet18 for example).
For other baseline algorithms, we use the default parameters from
Pytorch implementation.

\paragraph{Hyperparameter tuning.}

For all algorithms, we tune only the learning rate while using the
default values for the other parameters for all algorithms. For STORM+,
the default $a_{0}$ is equal to the number of parameters of the model
and $b_{0}=1$. 

For learning rate tuning, we perform a grid search across values $\left\{ 10^{-5},10^{-4},10^{-3},10^{-2},10^{-1},1\right\} $
for CIFAR10 and IMDB and across values $\left\{ 10^{-5},2\times10^{-5},10^{-4},10^{-3},10^{-2},10^{-1},1\right\} $
for SST2 (due to $2\times10^{-5}$ being the default learning rate
for AdamW on SST2 and also more practical due to SST2 being a smaller
dataset). For Adam on IMDB, the learning rate in our grid search is
not small enough to converge, requiring additional tuning for decreasing
training loss. 

Table \ref{tab:Table-of-Hyperparameters} includes the selected learning
rate we used for each algorithm across the datasets. After obtaining
the best learning rate, we additionally run each algorithm across
5 different seeds to obtain error bars.
\begin{center}
\begin{table}[h]
\begin{centering}
\caption{Table of Hyperparameters. \label{tab:Table-of-Hyperparameters}}
\par\end{centering}
\centering{}%
\begin{tabular}{|c|c|c|c|}
\hline 
Algorithm & CIFAR10 & IMDB & SST2\tabularnewline
\hline 
\hline 
$\algnamenew$ & $1$ & $10^{-2}$ & $10^{-2}$\tabularnewline
\hline 
$\algnameold$ & $1$ & $10^{-1}$ & $10^{-2}$\tabularnewline
\hline 
$\algnamenew$ (H) & $10^{-3}$ & $10^{-4}$ & $2\cdot10^{-5}$\tabularnewline
\hline 
$\algnameold$ (H) & $10^{-3}$ & $10^{-4}$ & $2\cdot10^{-5}$\tabularnewline
\hline 
STORM+ & $0.1$ & $10^{-2}$ & $10^{-2}$\tabularnewline
\hline 
Adam & $10^{-3}$ & $10^{-6}$ & $10^{-5}$\tabularnewline
\hline 
AdamW & N/A & $10^{-4}$ & $10^{-5}$\tabularnewline
\hline 
Adagrad & $10^{-3}$ & $10^{-3}$ & $10^{-4}$\tabularnewline
\hline 
SGD & $10^{-3}$ & $10^{-2}$ & $10^{-3}$\tabularnewline
\hline 
\end{tabular}
\end{table}
\par\end{center}

\subsection{Full results for experiments in Section \ref{sec:Experiments} and
additional experiments \label{subsec:Full-results-exp}}

In this section, we show complete plots and tabular results along
with more detailed discussions for our experiments. The reader should
note that STORM-based methods require twice the amount of oracle access
over the baselines. The plots show average across 5 seeds along with
min/max bars. The tables show the average across 5 seeds across a
range of selected epochs and one standard deviation is included at
the last epoch. In the plots and tables below: (H) denotes the version
of the algorithm with the heuristics (EMA and per-coordinate update)
employed.

\subsubsection{CIFAR10: results and discussions\label{subsec:CIFAR10-full-results}}

Figure \ref{fig:cifar10-full-results} shows all 4 plots of the main
experiments in Section \ref{sec:Experiments} in Figure \ref{fig:cifar10-full-results}.
\begin{figure}[h]
\begin{centering}
\subfloat{\includegraphics[width=0.49\textwidth]{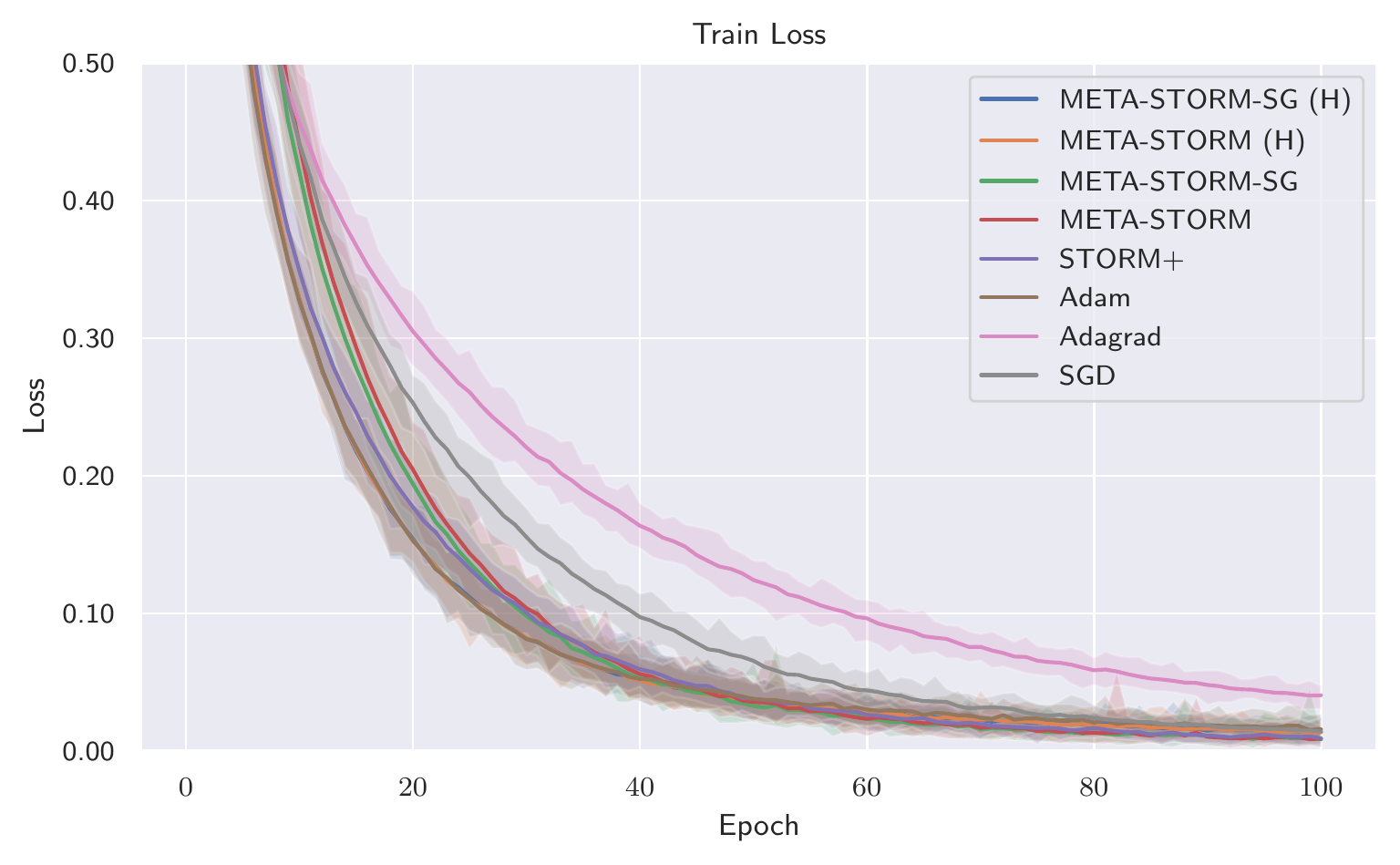}}\subfloat{\includegraphics[width=0.49\textwidth]{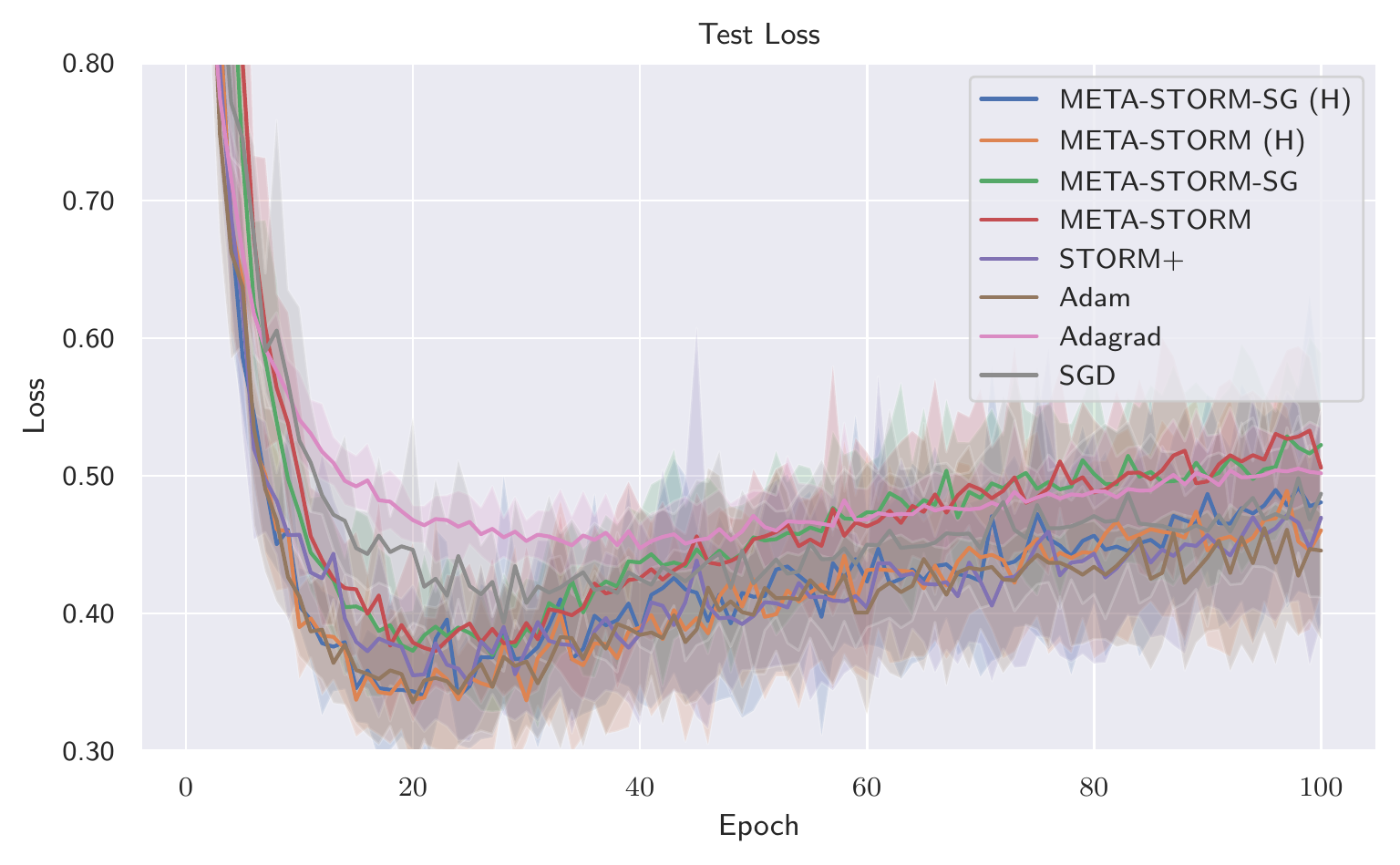}}\hfill{}
\par\end{centering}
\begin{centering}
\subfloat{\includegraphics[width=0.49\textwidth]{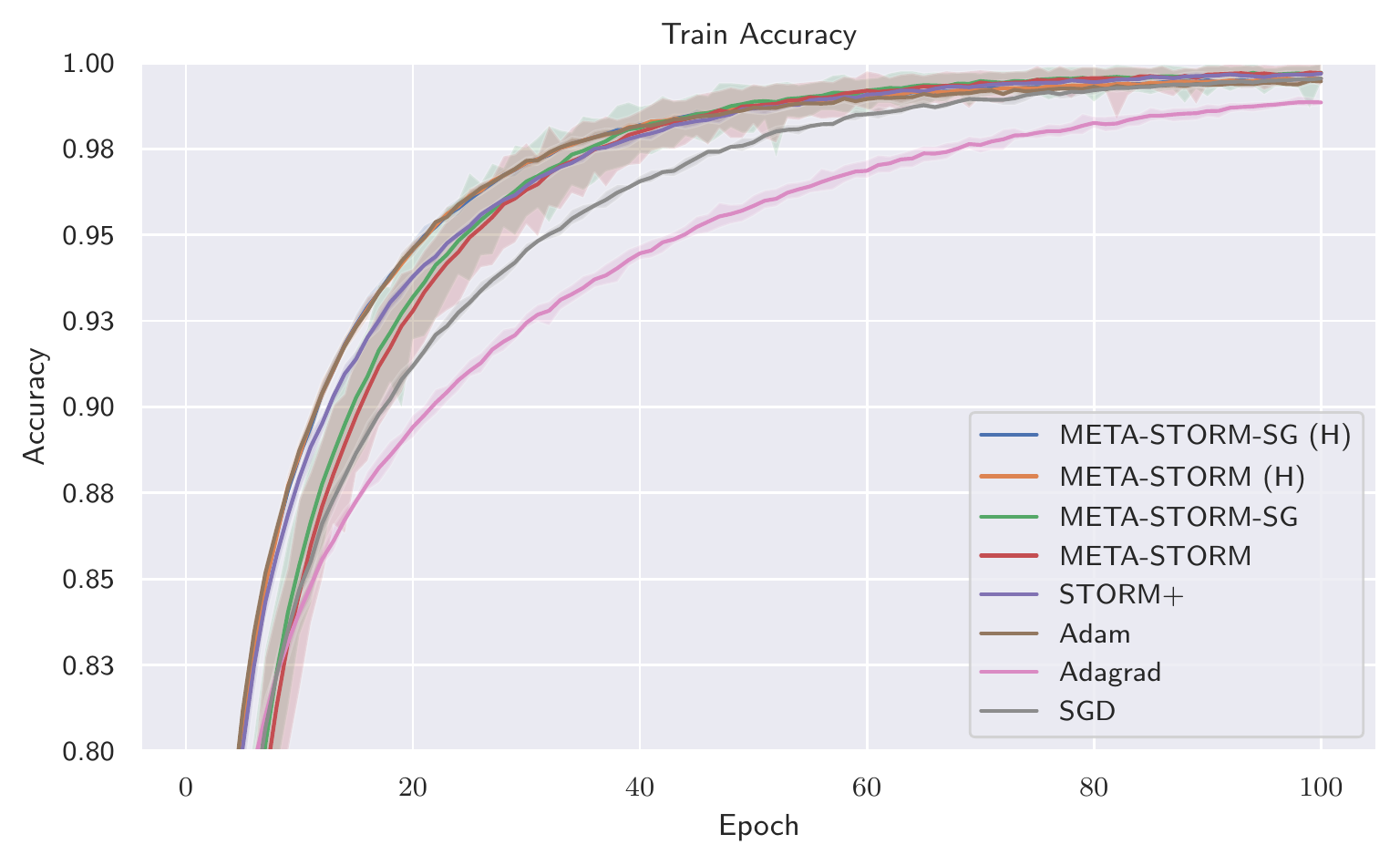}}\subfloat{\includegraphics[width=0.49\textwidth]{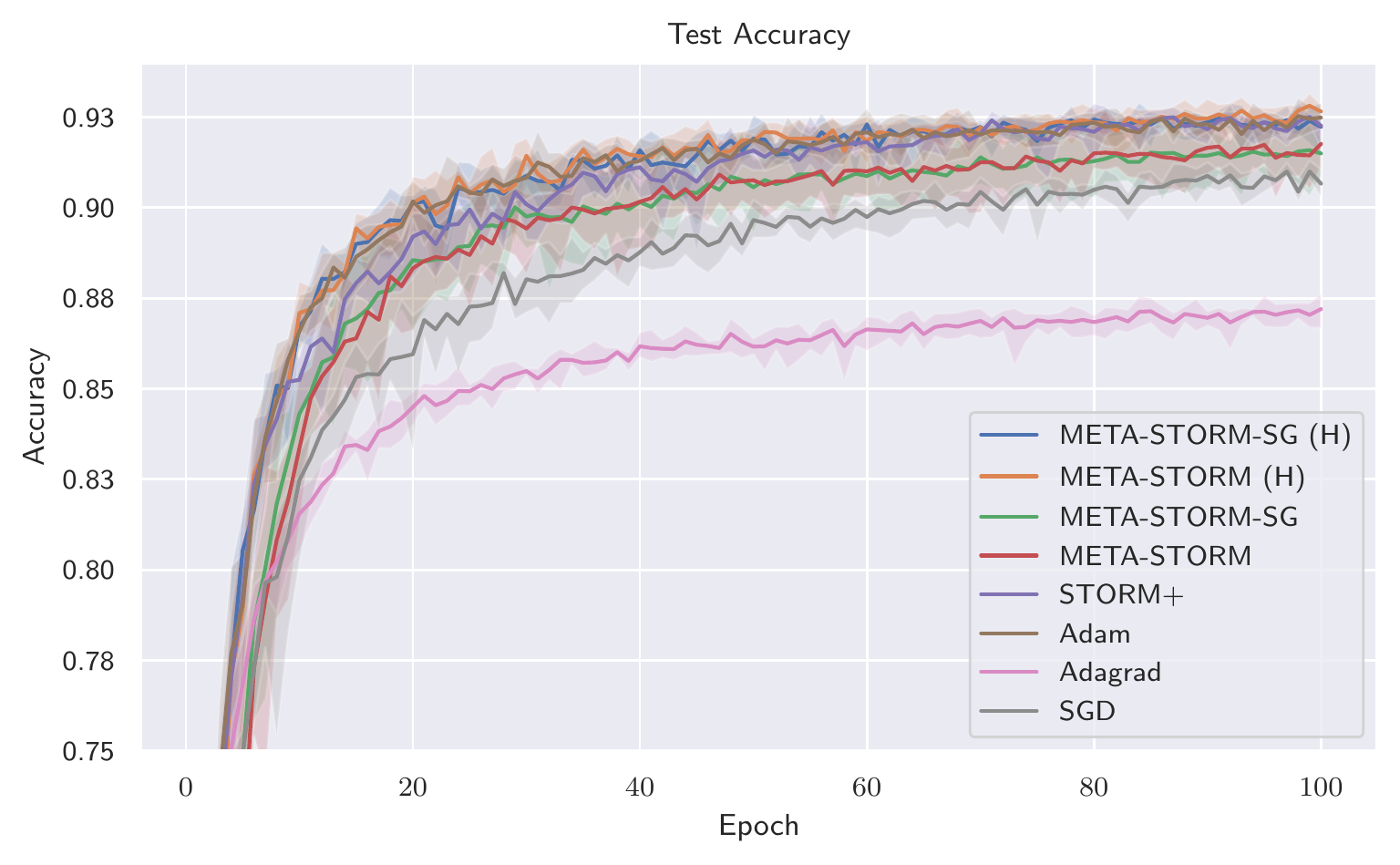}}\hfill{}
\par\end{centering}
\centering{}\caption{\label{fig:cifar10-full-results}Losses and accuracies on CIFAR10.}
\end{figure}

\paragraph{Tables. }

Tables \ref{tab:CIFAR10-train-loss} and \ref{tab:CIFAR10-train-acc}
show the training loss and accuracy for CIFAR10. Tables \ref{tab:CIFAR10-test-loss}
and \ref{tab:CIFAR10-test-acc} show the test loss and accuracy for
CIFAR10.
\begin{center}
{\small{}}
\begin{table}[H]
{\small{}\caption{CIFAR10 average training loss across 5 seeds for selected epochs.
Lowest loss is bolded per selected epoch. \label{tab:CIFAR10-train-loss}}
}{\small\par}

{\small{}}{\small\par}
\centering{}{\small{}}%
\begin{tabular}{lccccccccc}
\toprule 
{\small{}Algorithm} & {\small{}1} & {\small{}10} & {\small{}20} & {\small{}40} & {\small{}60} & {\small{}70} & {\small{}80} & {\small{}90} & {\small{}100}\tabularnewline
\midrule 
{\small{}MS-SG (H)} & {\small{}1.611} & {\small{}0.328} & {\small{}0.154} & {\small{}0.052} & {\small{}0.029} & {\small{}0.023} & {\small{}0.018} & {\small{}0.015} & {\small{}0.014$\pm$0.004}\tabularnewline
{\small{}MS (H)} & {\small{}1.618} & {\small{}0.329} & \textbf{\small{}0.153} & \textbf{\small{}0.051} & {\small{}0.028} & {\small{}0.023} & {\small{}0.018} & {\small{}0.016} & {\small{}0.014$\pm$0.003}\tabularnewline
{\small{}MS-SG} & {\small{}1.899} & {\small{}0.421} & {\small{}0.194} & {\small{}0.053} & \textbf{\small{}0.023} & \textbf{\small{}0.016} & {\small{}0.014} & {\small{}0.011} & \textbf{\small{}0.008}{\small{}$\pm$0.001}\tabularnewline
{\small{}MS} & {\small{}1.941} & {\small{}0.441} & {\small{}0.204} & {\small{}0.056} & \textbf{\small{}0.023} & {\small{}0.017} & \textbf{\small{}0.013} & \textbf{\small{}0.010} & {\small{}0.009$\pm$0.001}\tabularnewline
{\small{}STORM+} & {\small{}1.604} & {\small{}0.349} & {\small{}0.177} & {\small{}0.059} & {\small{}0.026} & {\small{}0.019} & {\small{}0.017} & {\small{}0.011} & {\small{}0.009$\pm$0.002}\tabularnewline
{\small{}Adam} & {\small{}1.452} & \textbf{\small{}0.327} & \textbf{\small{}0.153} & {\small{}0.052} & {\small{}0.030} & {\small{}0.024} & {\small{}0.021} & {\small{}0.019} & {\small{}0.016$\pm$0.003}\tabularnewline
{\small{}Adagrad} & \textbf{\small{}1.359} & {\small{}0.456} & {\small{}0.305} & {\small{}0.164} & {\small{}0.096} & {\small{}0.076} & {\small{}0.059} & {\small{}0.048} & {\small{}0.040$\pm$0.002}\tabularnewline
{\small{}SGD} & {\small{}1.561} & {\small{}0.441} & {\small{}0.253} & {\small{}0.097} & {\small{}0.044} & {\small{}0.031} & {\small{}0.024} & {\small{}0.019} & {\small{}0.014$\pm$0.001}\tabularnewline
\bottomrule
\end{tabular}{\small\par}
\end{table}
{\small\par}
\par\end{center}

\begin{center}
\begin{table}[H]
\caption{CIFAR10 average training accuracy across 5 seeds for selected epochs.
Highest accuracy is bolded per selected epoch. \label{tab:CIFAR10-train-acc}}

{\small{}}{\small\par}
\centering{}{\small{}}%
\begin{tabular}{lccccccccc}
\toprule 
{\small{}Algorithm} & {\small{}1} & {\small{}10} & {\small{}20} & {\small{}40} & {\small{}60} & {\small{}70} & {\small{}80} & {\small{}90} & {\small{}100}\tabularnewline
\midrule 
{\small{}MS-SG (H)} & {\small{}0.405} & {\small{}0.886} & \textbf{\small{}0.946} & \textbf{\small{}0.982} & {\small{}0.990} & {\small{}0.992} & {\small{}0.994} & {\small{}0.995} & {\small{}0.995$\pm$0.000}\tabularnewline
{\small{}MS (H)} & {\small{}0.403} & {\small{}0.886} & \textbf{\small{}0.946} & \textbf{\small{}0.982} & {\small{}0.990} & {\small{}0.992} & {\small{}0.994} & {\small{}0.995} & {\small{}0.995$\pm$0.000}\tabularnewline
{\small{}MS-SG} & {\small{}0.317} & {\small{}0.854} & {\small{}0.932} & {\small{}0.981} & \textbf{\small{}0.992} & \textbf{\small{}0.995} & {\small{}0.995} & {\small{}0.996} & \textbf{\small{}0.997}{\small{}$\pm$0.001}\tabularnewline
{\small{}MS} & {\small{}0.306} & {\small{}0.846} & {\small{}0.928} & {\small{}0.980} & \textbf{\small{}0.992} & {\small{}0.994} & \textbf{\small{}0.996} & \textbf{\small{}0.997} & \textbf{\small{}0.997}{\small{}$\pm$0.000}\tabularnewline
{\small{}STORM+} & {\small{}0.413} & {\small{}0.879} & {\small{}0.938} & {\small{}0.979} & {\small{}0.991} & {\small{}0.993} & {\small{}0.994} & {\small{}0.996} & \textbf{\small{}0.997}{\small{}$\pm$0.000}\tabularnewline
{\small{}Adam} & {\small{}0.468} & \textbf{\small{}0.887} & \textbf{\small{}0.946} & \textbf{\small{}0.982} & {\small{}0.989} & {\small{}0.992} & {\small{}0.993} & {\small{}0.994} & {\small{}0.995$\pm$0.000}\tabularnewline
{\small{}Adagrad} & \textbf{\small{}0.504} & {\small{}0.840} & {\small{}0.894} & {\small{}0.945} & {\small{}0.969} & {\small{}0.976} & {\small{}0.983} & {\small{}0.986} & {\small{}0.988$\pm$0.001}\tabularnewline
{\small{}SGD} & {\small{}0.423} & {\small{}0.847} & {\small{}0.912} & {\small{}0.966} & {\small{}0.985} & {\small{}0.989} & {\small{}0.992} & {\small{}0.994} & {\small{}0.995$\pm$0.000}\tabularnewline
\bottomrule
\end{tabular}{\small\par}
\end{table}
\par\end{center}

\begin{center}
\begin{table}[H]
\caption{CIFAR10 average test loss across 5 seeds for selected epochs. Lowest
loss is bolded per selected epoch. \label{tab:CIFAR10-test-loss}}

\centering{}{\small{}}%
\begin{tabular}{lccccccccc}
\toprule 
{\small{}Algorithm} & {\small{}1} & {\small{}10} & {\small{}20} & {\small{}40} & {\small{}60} & {\small{}70} & {\small{}80} & {\small{}90} & {\small{}100}\tabularnewline
\midrule 
{\small{}MS-SG (H)} & {\small{}1.272} & {\small{}0.405} & {\small{}0.343} & {\small{}0.386} & {\small{}0.423} & \textbf{\small{}0.423} & {\small{}0.456} & {\small{}0.487} & {\small{}0.481$\pm$0.039}\tabularnewline
{\small{}MS (H)} & {\small{}1.250} & \textbf{\small{}0.390} & {\small{}0.337} & {\small{}0.390} & {\small{}0.431} & {\small{}0.441} & {\small{}0.444} & {\small{}0.446} & {\small{}0.460$\pm$0.021}\tabularnewline
{\small{}MS-SG} & {\small{}1.553} & {\small{}0.472} & {\small{}0.373} & {\small{}0.437} & {\small{}0.473} & {\small{}0.484} & {\small{}0.501} & {\small{}0.498} & {\small{}0.522$\pm$0.034}\tabularnewline
{\small{}MS} & {\small{}1.577} & {\small{}0.498} & {\small{}0.379} & {\small{}0.425} & {\small{}0.463} & {\small{}0.490} & {\small{}0.488} & {\small{}0.496} & {\small{}0.506$\pm$0.016}\tabularnewline
{\small{}STORM+} & {\small{}1.321} & {\small{}0.457} & {\small{}0.355} & {\small{}0.385} & {\small{}0.404} & \textbf{\small{}0.423} & {\small{}0.443} & {\small{}0.457} & {\small{}0.470$\pm$0.025}\tabularnewline
{\small{}Adam} & {\small{}1.222} & {\small{}0.412} & \textbf{\small{}0.335} & \textbf{\small{}0.384} & \textbf{\small{}0.401} & {\small{}0.432} & \textbf{\small{}0.434} & \textbf{\small{}0.441} & \textbf{\small{}0.446}{\small{}$\pm$0.025}\tabularnewline
{\small{}Adagrad} & \textbf{\small{}1.104} & {\small{}0.541} & {\small{}0.468} & {\small{}0.447} & {\small{}0.468} & {\small{}0.476} & {\small{}0.488} & {\small{}0.499} & {\small{}0.502$\pm$0.013}\tabularnewline
{\small{}SGD} & {\small{}1.315} & {\small{}0.525} & {\small{}0.446} & {\small{}0.425} & {\small{}0.450} & {\small{}0.447} & {\small{}0.471} & {\small{}0.460} & {\small{}0.487$\pm$0.017}\tabularnewline
\bottomrule
\end{tabular}{\small\par}
\end{table}
\par\end{center}

\begin{center}
\begin{table}[H]
\caption{CIFAR10 average test accuracy across 5 seeds for selected epochs.
Highest accuracy is bolded per selected epoch.\label{tab:CIFAR10-test-acc}}

\centering{}{\small{}}%
\begin{tabular}{lccccccccc}
\toprule 
{\small{}Algorithm} & {\small{}1} & {\small{}10} & {\small{}20} & {\small{}40} & {\small{}60} & {\small{}70} & {\small{}80} & {\small{}90} & {\small{}100}\tabularnewline
\midrule 
{\small{}MS-SG (H)} & {\small{}0.539} & {\small{}0.867} & {\small{}0.901} & \textbf{\small{}0.916} & \textbf{\small{}0.923} & \textbf{\small{}0.922} & \textbf{\small{}0.924} & {\small{}0.924} & {\small{}0.922$\pm$0.005}\tabularnewline
{\small{}MS (H)} & {\small{}0.546} & \textbf{\small{}0.871} & {\small{}0.901} & {\small{}0.914} & {\small{}0.918} & \textbf{\small{}0.922} & \textbf{\small{}0.924} & \textbf{\small{}0.925} & \textbf{\small{}0.927}{\small{}$\pm$0.001}\tabularnewline
{\small{}MS-SG} & {\small{}0.427} & {\small{}0.843} & {\small{}0.886} & {\small{}0.902} & {\small{}0.909} & {\small{}0.914} & {\small{}0.913} & {\small{}0.914} & {\small{}0.915$\pm$0.004}\tabularnewline
{\small{}MS} & {\small{}0.418} & {\small{}0.834} & {\small{}0.883} & {\small{}0.902} & {\small{}0.910} & {\small{}0.913} & {\small{}0.915} & {\small{}0.917} & {\small{}0.918$\pm$0.004}\tabularnewline
{\small{}STORM+} & {\small{}0.529} & {\small{}0.852} & {\small{}0.892} & {\small{}0.911} & {\small{}0.918} & {\small{}0.920} & {\small{}0.921} & {\small{}0.922} & {\small{}0.923$\pm$0.003}\tabularnewline
{\small{}Adam} & {\small{}0.574} & {\small{}0.866} & \textbf{\small{}0.902} & {\small{}0.913} & {\small{}0.921} & {\small{}0.920} & {\small{}0.923} & {\small{}0.923} & {\small{}0.925$\pm$0.002}\tabularnewline
{\small{}Adagrad} & \textbf{\small{}0.601} & {\small{}0.816} & {\small{}0.845} & {\small{}0.862} & {\small{}0.866} & {\small{}0.869} & {\small{}0.868} & {\small{}0.870} & {\small{}0.872$\pm$0.003}\tabularnewline
{\small{}SGD} & {\small{}0.522} & {\small{}0.825} & {\small{}0.860} & {\small{}0.888} & {\small{}0.897} & {\small{}0.904} & {\small{}0.905} & {\small{}0.909} & {\small{}0.907$\pm$0.003}\tabularnewline
\bottomrule
\end{tabular}{\small\par}
\end{table}
\par\end{center}

\paragraph{Discussion.}

META-STORM-SG achieves the lowest training loss and best training
accuracy (with META-STORM and STORM+ coming in close). META-STORM-SG
maintains the best training loss and accuracy for longest before the
final epoch. For test loss and test accuracy, META-STORM (H) attains
the best test accuracy (with Adam coming in close) while Adam attains
the best test loss. While META-STORM-SG and META-STORM achieve low
training loss, their generalization performance seems worse than their
heuristic counterparts. 

To further study this generalization gap among different algorithms,
Table \ref{tab:CIFAR10-generalization-gap} shows the generalization
gap of different algorithms. META-STORM with heuristics and Adam achieve
the smallest gap among all the algorithms. For our algorithms, the
version with heuristics exhibit a smaller generalization gap than
the version without the heuristics while STORM+ lies in between. Interestingly,
Adagrad and SGD exhibit larger generalization gaps. 
\begin{center}
\begin{table}[th]
\caption{CIFAR10 accuracy generalization gap (train acc - test acc) of the
last epoch's accuracy.\label{tab:CIFAR10-generalization-gap}}

\centering{}%
\begin{tabular}{@{}lrrrrrrrr@{}}
\toprule 
Algorithm & \multicolumn{1}{l}{MS-SG (H)} & \multicolumn{1}{l}{MS (H)} & \multicolumn{1}{l}{MS-SG} & \multicolumn{1}{l}{MS} & \multicolumn{1}{l}{STORM+} & \multicolumn{1}{l}{Adam} & \multicolumn{1}{l}{Adagrad} & \multicolumn{1}{l}{SGD}\tabularnewline
\midrule 
Test acc & 92.2\% & 92.7\% & 91.7\% & 91.8\% & 92.3\% & 92.5\% & 87.2\% & 90.7\%\tabularnewline
Train acc & 99.5\% & 99.5\% & 99.8\% & 99.7\% & 99.7\% & 99.5\% & 98.9\% & 99.6\%\tabularnewline
\midrule
Gen gap & 7.3\% & \textbf{6.8\%} & 8.1\% & 7.9\% & 7.4\% & 7.0\% & 11.7\% & 8.9\%\tabularnewline
\bottomrule
\end{tabular}
\end{table}
\par\end{center}

\subsubsection{IMDB: results and discussions\label{subsec:IMDB-full-results}}

\begin{figure}[b]
\begin{centering}
\subfloat{\includegraphics[width=0.49\textwidth]{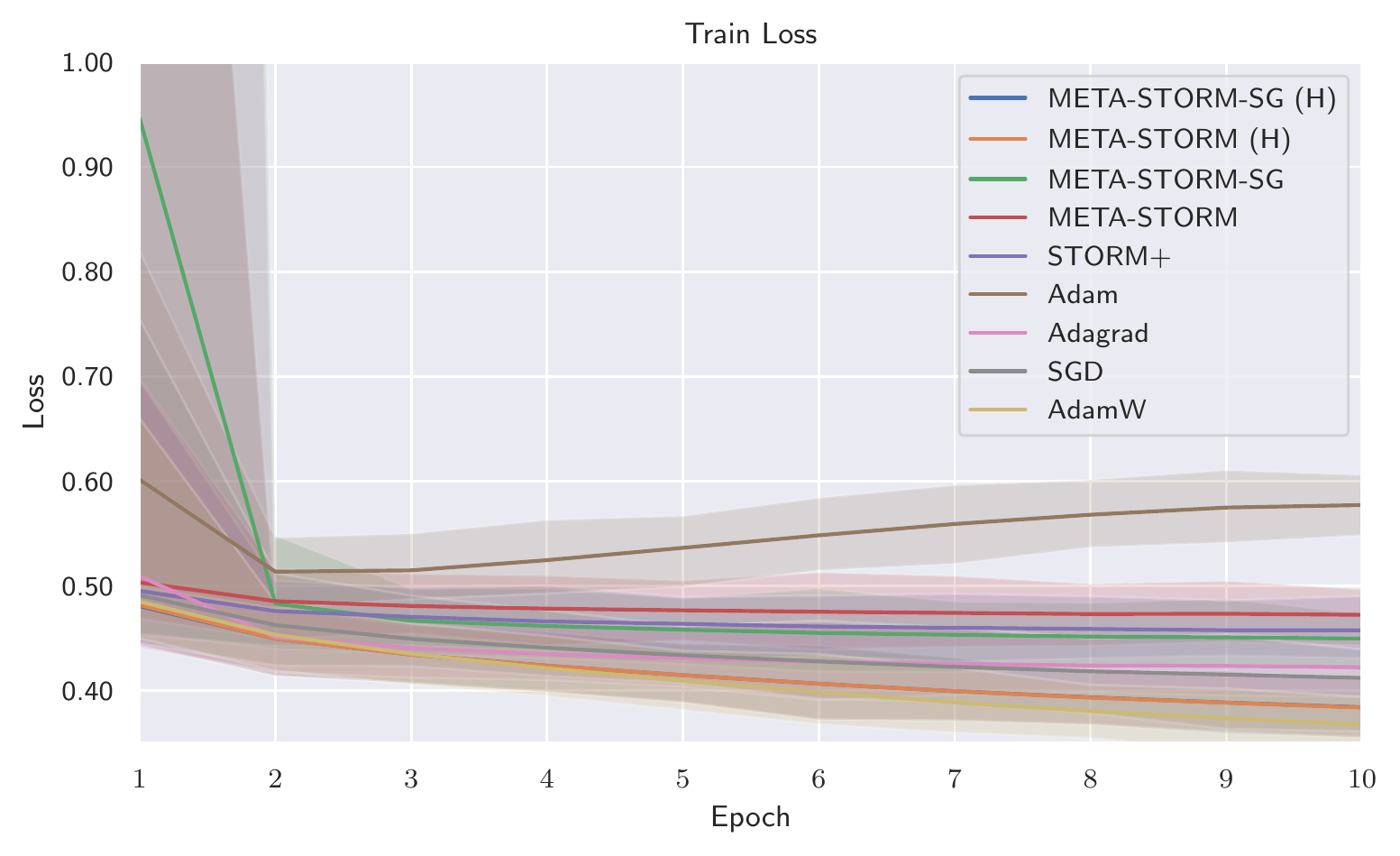}}\subfloat{\includegraphics[width=0.49\textwidth]{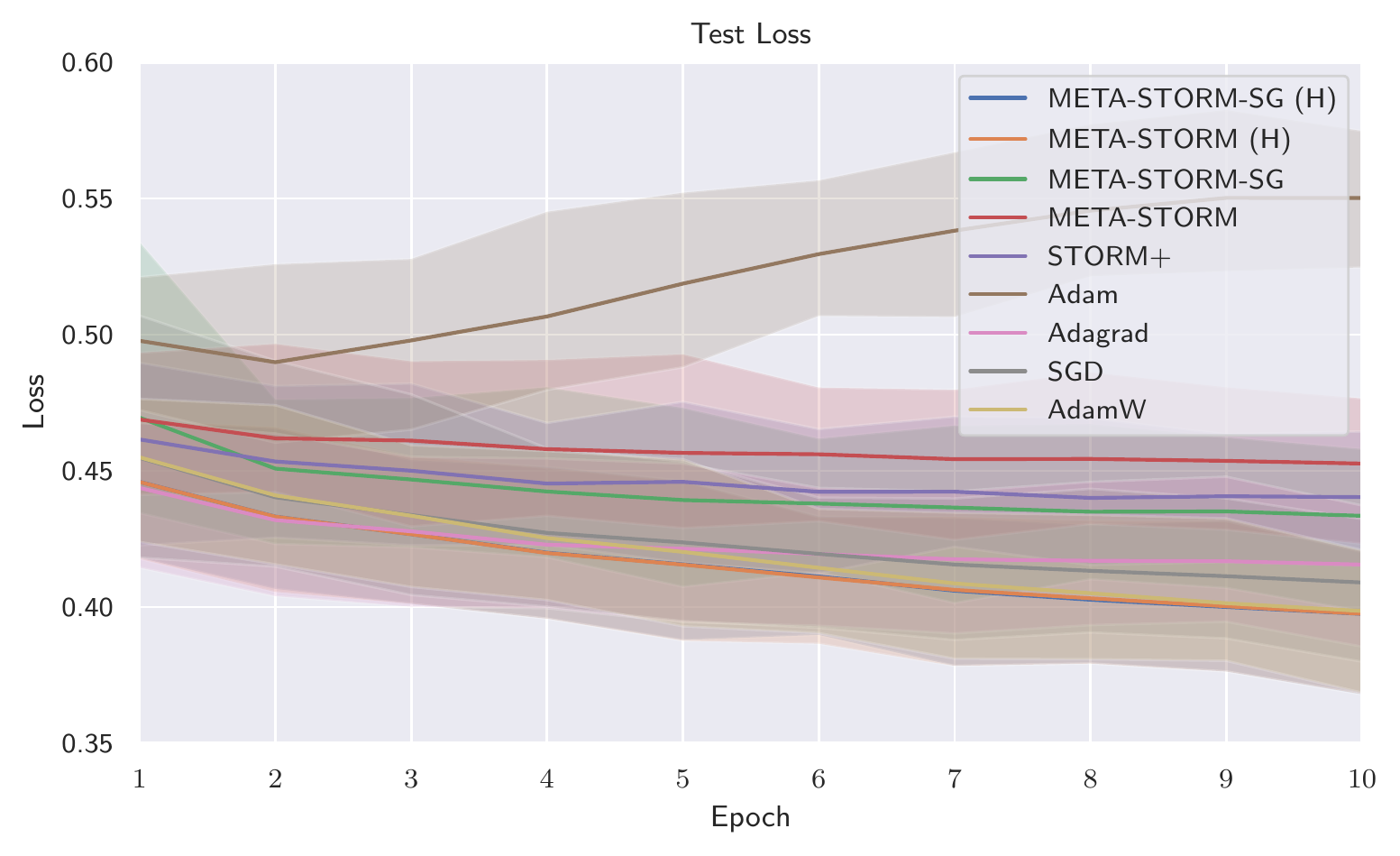}}\caption{Training loss and test loss on IMDB. (H) denotes the addition of heuristics.\label{fig:imdb-train-test-loss-error-bar}}
\par\end{centering}
\end{figure}

Figure \ref{fig:imdb-train-test-loss} from Section \ref{sec:Experiments}
shows the train and test loss of the algorithms used. We include Figure
\ref{fig:imdb-train-test-loss-error-bar} here that includes the error
bars across 5 random seeds. 

\paragraph{Tables.}

Tables \ref{tab:IMDB-train-loss} and \ref{tab:IMDB-test-loss} show
the train and test loss for our experiments. 
\begin{center}
\begin{table}[H]
\caption{IMDB average training loss across 5 seeds for selected epochs. Lowest
loss for each epoch is bolded below.\label{tab:IMDB-train-loss}}

\centering{}{\small{}}%
\begin{tabular}{lcccccccccc}
\toprule 
{\small{}Algorithm} & {\small{}1} & {\small{}2} & {\small{}3} & {\small{}4} & {\small{}5} & {\small{}6} & {\small{}7} & {\small{}8} & {\small{}9} & {\small{}10}\tabularnewline
\midrule 
{\small{}MS-SG (H)} & \textbf{\small{}0.481} & \textbf{\small{}0.450} & \textbf{\small{}0.435} & {\small{}0.424} & {\small{}0.415} & {\small{}0.407} & {\small{}0.400} & {\small{}0.394} & {\small{}0.389} & {\small{}0.384$\pm$0.011}\tabularnewline
{\small{}MS (H)} & {\small{}0.482} & \textbf{\small{}0.450} & \textbf{\small{}0.435} & {\small{}0.424} & {\small{}0.415} & {\small{}0.407} & {\small{}0.400} & {\small{}0.393} & {\small{}0.389} & {\small{}0.384$\pm$0.010}\tabularnewline
{\small{}MS-SG} & {\small{}0.947} & {\small{}0.483} & {\small{}0.467} & {\small{}0.462} & {\small{}0.458} & {\small{}0.455} & {\small{}0.453} & {\small{}0.452} & {\small{}0.451} & {\small{}0.450$\pm$0.009}\tabularnewline
{\small{}MS} & {\small{}0.503} & {\small{}0.486} & {\small{}0.481} & {\small{}0.478} & {\small{}0.477} & {\small{}0.475} & {\small{}0.474} & {\small{}0.473} & {\small{}0.473} & {\small{}0.472$\pm$0.011}\tabularnewline
{\small{}STORM+} & {\small{}0.495} & {\small{}0.476} & {\small{}0.471} & {\small{}0.466} & {\small{}0.464} & {\small{}0.461} & {\small{}0.460} & {\small{}0.459} & {\small{}0.458} & {\small{}0.458$\pm$0.007}\tabularnewline
{\small{}Adam} & {\small{}0.602} & {\small{}0.514} & {\small{}0.515} & {\small{}0.525} & {\small{}0.536} & {\small{}0.548} & {\small{}0.559} & {\small{}0.568} & {\small{}0.575} & {\small{}0.577$\pm$0.013}\tabularnewline
{\small{}Adagrad} & {\small{}0.509} & {\small{}0.451} & {\small{}0.441} & {\small{}0.435} & {\small{}0.431} & {\small{}0.428} & {\small{}0.426} & {\small{}0.424} & {\small{}0.424} & {\small{}0.422$\pm$0.009}\tabularnewline
{\small{}SGD} & {\small{}0.491} & {\small{}0.463} & {\small{}0.450} & {\small{}0.441} & {\small{}0.434} & {\small{}0.428} & {\small{}0.423} & {\small{}0.419} & {\small{}0.415} & {\small{}0.412$\pm$0.010}\tabularnewline
{\small{}AdamW} & {\small{}0.485} & {\small{}0.453} & \textbf{\small{}0.435} & \textbf{\small{}0.421} & \textbf{\small{}0.410} & \textbf{\small{}0.399} & \textbf{\small{}0.389} & \textbf{\small{}0.381} & \textbf{\small{}0.374} & \textbf{\small{}0.368}{\small{}$\pm$0.010}\tabularnewline
\bottomrule
\end{tabular}{\small\par}
\end{table}
\par\end{center}

\begin{center}
\begin{table}[H]
\caption{IMDB test loss. Lowest loss for each epoch is bolded below.\label{tab:IMDB-test-loss}}

\centering{}{\small{}}%
\begin{tabular}{lcccccccccc}
\toprule 
{\small{}Algorithm} & {\small{}1} & {\small{}2} & {\small{}3} & {\small{}4} & {\small{}5} & {\small{}6} & {\small{}7} & {\small{}8} & {\small{}9} & {\small{}10}\tabularnewline
\midrule 
{\small{}MS-SG (H)} & {\small{}0.446} & {\small{}0.433} & \textbf{\small{}0.427} & \textbf{\small{}0.420} & \textbf{\small{}0.416} & \textbf{\small{}0.411} & \textbf{\small{}0.406} & \textbf{\small{}0.403} & \textbf{\small{}0.400} & \textbf{\small{}0.397}{\small{}$\pm$0.010}\tabularnewline
{\small{}MS (H)} & {\small{}0.446} & {\small{}0.433} & \textbf{\small{}0.427} & \textbf{\small{}0.420} & \textbf{\small{}0.416} & \textbf{\small{}0.411} & \textbf{\small{}0.406} & \textbf{\small{}0.403} & \textbf{\small{}0.400} & \textbf{\small{}0.397}{\small{}$\pm$0.010}\tabularnewline
{\small{}MS-SG} & {\small{}0.470} & {\small{}0.451} & {\small{}0.447} & {\small{}0.442} & {\small{}0.439} & {\small{}0.438} & {\small{}0.436} & {\small{}0.435} & {\small{}0.435} & {\small{}0.433$\pm$0.012}\tabularnewline
{\small{}MS} & {\small{}0.469} & {\small{}0.462} & {\small{}0.461} & {\small{}0.458} & {\small{}0.457} & {\small{}0.456} & {\small{}0.454} & {\small{}0.454} & {\small{}0.454} & {\small{}0.453$\pm$0.010}\tabularnewline
{\small{}STORM+} & {\small{}0.462} & {\small{}0.453} & {\small{}0.450} & {\small{}0.445} & {\small{}0.446} & {\small{}0.442} & {\small{}0.442} & {\small{}0.440} & {\small{}0.441} & {\small{}0.440$\pm$0.012}\tabularnewline
{\small{}Adam} & {\small{}0.498} & {\small{}0.490} & {\small{}0.498} & {\small{}0.507} & {\small{}0.519} & {\small{}0.530} & {\small{}0.538} & {\small{}0.545} & {\small{}0.550} & {\small{}0.550$\pm$0.013}\tabularnewline
{\small{}Adagrad} & \textbf{\small{}0.444} & \textbf{\small{}0.432} & {\small{}0.428} & {\small{}0.423} & {\small{}0.421} & {\small{}0.419} & {\small{}0.417} & {\small{}0.417} & {\small{}0.417} & {\small{}0.416$\pm$0.010}\tabularnewline
{\small{}SGD} & {\small{}0.455} & {\small{}0.440} & {\small{}0.434} & {\small{}0.427} & {\small{}0.424} & {\small{}0.419} & {\small{}0.416} & {\small{}0.413} & {\small{}0.411} & {\small{}0.409$\pm$0.009}\tabularnewline
{\small{}AdamW} & {\small{}0.455} & {\small{}0.441} & {\small{}0.433} & {\small{}0.425} & {\small{}0.420} & {\small{}0.414} & {\small{}0.409} & {\small{}0.405} & {\small{}0.401} & {\small{}0.398$\pm$0.009}\tabularnewline
\bottomrule
\end{tabular}{\small\par}
\end{table}
\par\end{center}

\paragraph{Discussion.}

Here, AdamW achieves the best training loss with the heuristic algorithms
coming in close. For the test loss, these algorithms also have similar
performances. All META-STORM algorithms (with and without heuristics)
perform better than STORM+ in minimizing training loss. For test loss,
META-STORM-SG performs better than STORM+ but META-STORM does not.
Both the heuristic versions of META-STORM and META-STORM-SG outperform
STORM+. 

\subsubsection{SST2: full results and discussions\label{subsec:SST2:-full-results}}
\begin{center}
\begin{figure}

\begin{centering}
\subfloat{\includegraphics[width=0.49\textwidth]{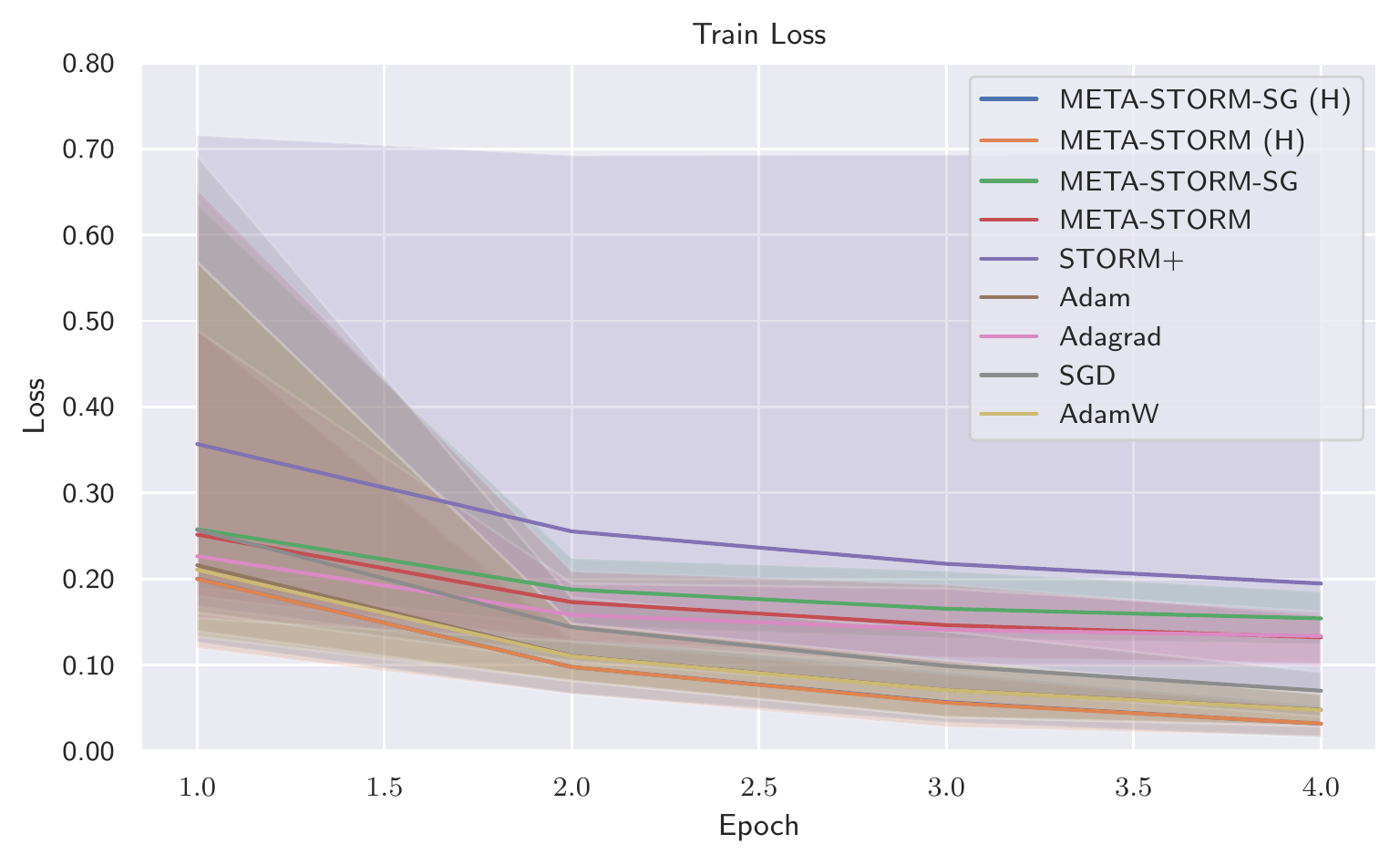}}\subfloat{\includegraphics[width=0.49\textwidth]{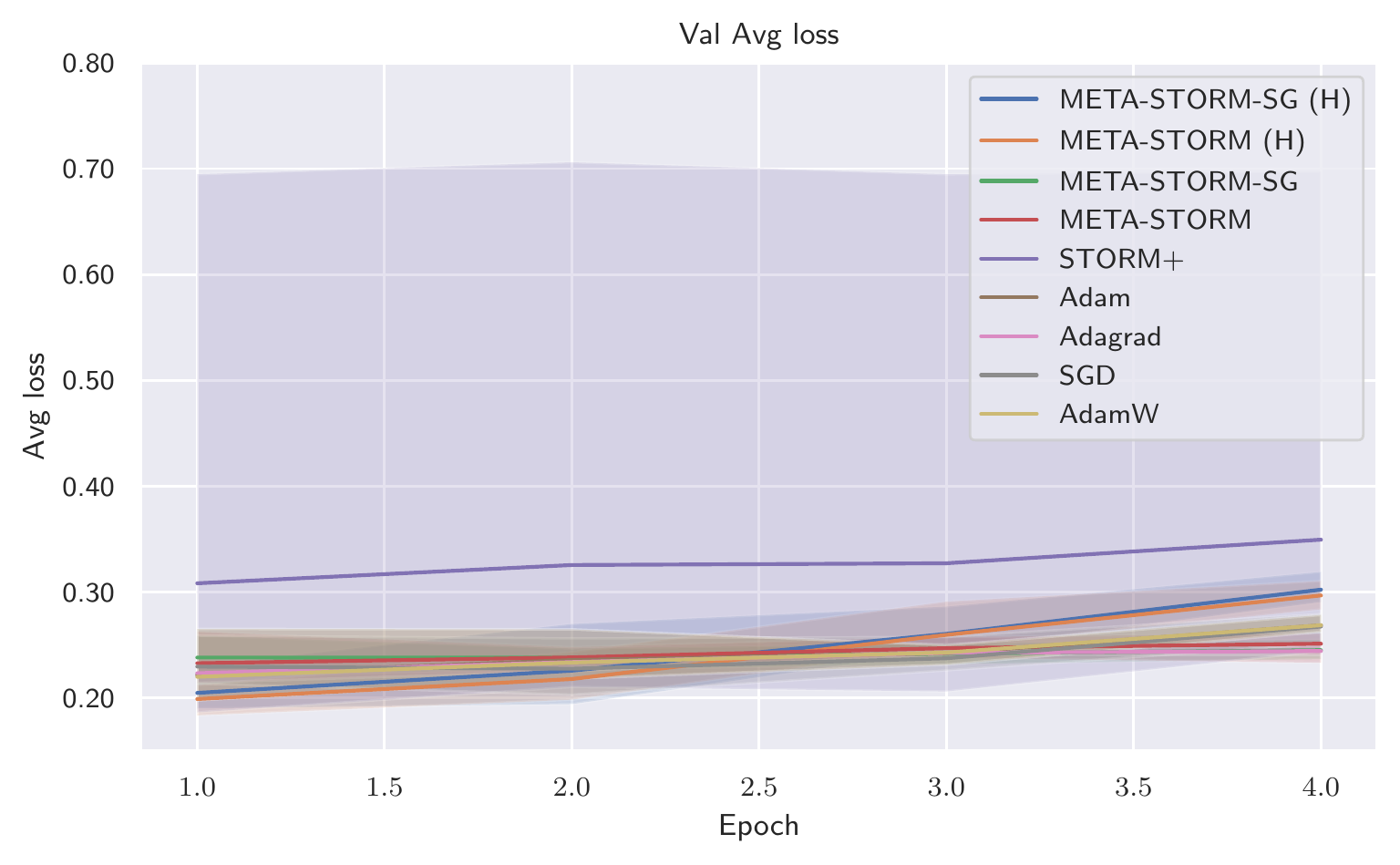}}\hfill{}
\par\end{centering}
\begin{centering}
\subfloat{\includegraphics[width=0.49\textwidth]{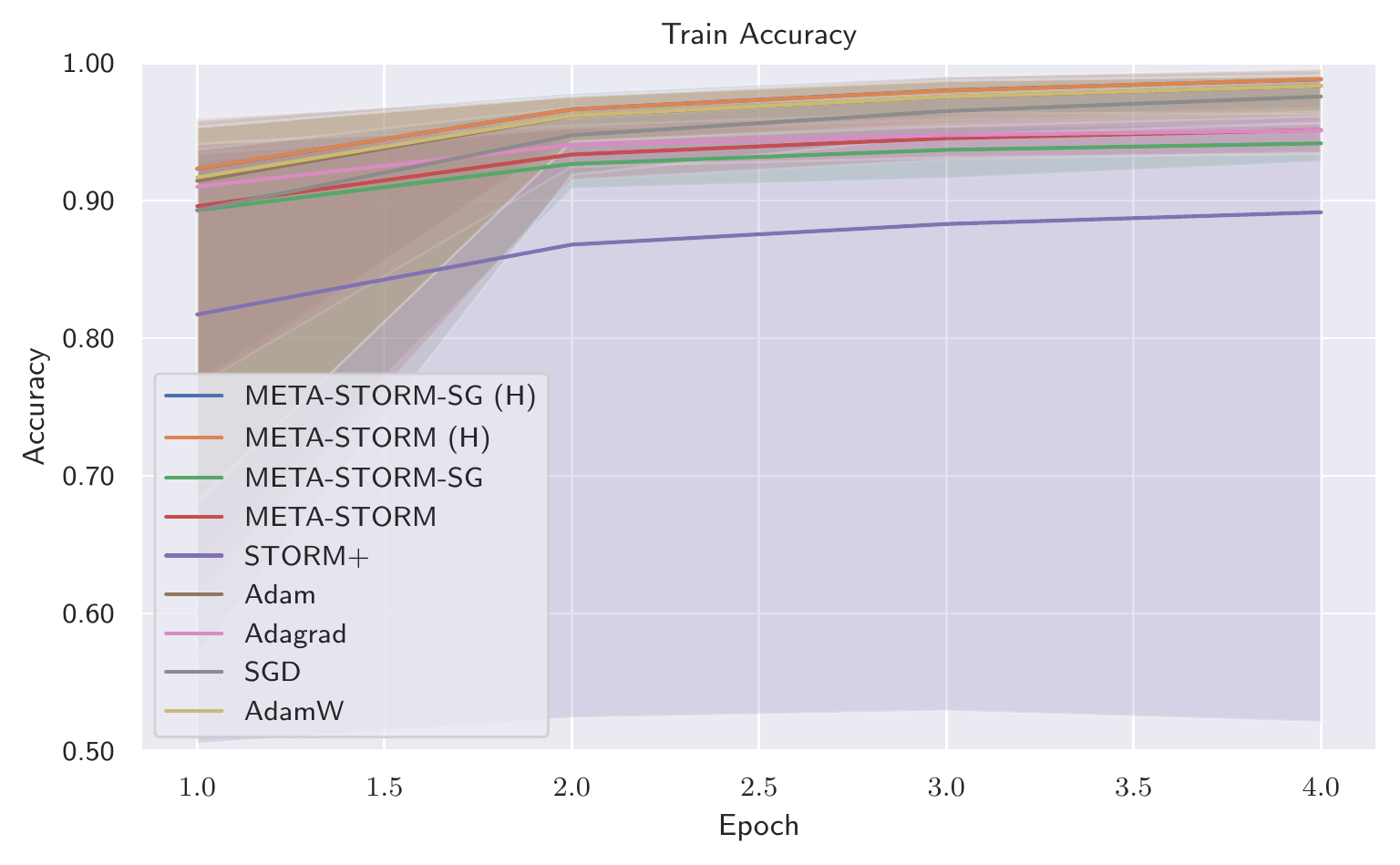}}\subfloat{\includegraphics[width=0.49\textwidth]{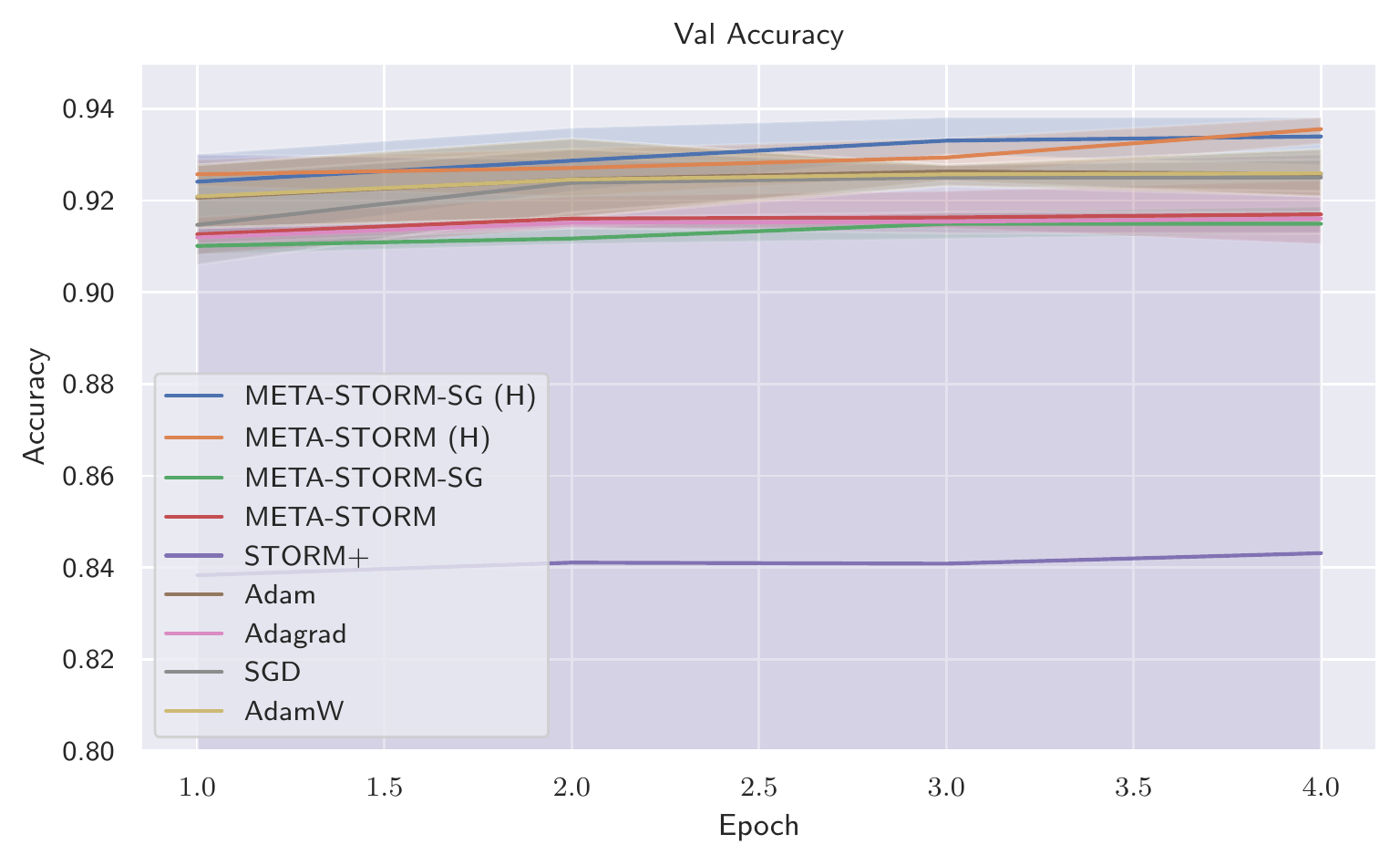}}\hfill{}\caption{Losses and accuracies on SST2.\label{fig:Accuracies-and-losses-sst2}}
\par\end{centering}
\end{figure}
\par\end{center}

Figure \ref{fig:Accuracies-and-losses-sst2} shows all 4 plots of
the main experiments for SST2. 

\paragraph{Tables.}

Tables \ref{tab:sst2-train-loss} and \ref{tab:SST2-train-accuracy.}
present the training loss and accuracy for the experiments for SST2.
Tables \ref{tab:SST2-validation-loss} and \ref{tab:SST2-validation-accuracy}
show the validation loss and accuracy for the experiments for SST2. 
\begin{center}
\begin{table}[H]
\caption{SST2 training loss. Lowest loss for each epoch is bolded below.\label{tab:sst2-train-loss}}

\centering{}{\small{}}%
\begin{tabular}{lrrrr}
\toprule 
{\small{}Algorithm} & {\small{}1} & {\small{}2} & {\small{}3} & {\small{}4}\tabularnewline
\midrule 
{\small{}META-STORM-SG (H)} & \textbf{\small{}0.200} & \textbf{\small{}0.098} & {\small{}0.057} & \textbf{\small{}0.032}{\small{}$\pm$0.005}\tabularnewline
{\small{}META-STORM (H)} & \textbf{\small{}0.200} & \textbf{\small{}0.098} & \textbf{\small{}0.056} & \textbf{\small{}0.032}{\small{}$\pm$0.005}\tabularnewline
{\small{}META-STORM-SG} & {\small{}0.258} & {\small{}0.188} & {\small{}0.165} & {\small{}0.154$\pm$0.008}\tabularnewline
{\small{}META-STORM} & {\small{}0.251} & {\small{}0.173} & {\small{}0.146} & {\small{}0.132$\pm$0.008}\tabularnewline
{\small{}STORM+} & {\small{}0.357} & {\small{}0.255} & {\small{}0.218} & {\small{}0.195$\pm$0.269}\tabularnewline
{\small{}Adam} & {\small{}0.216} & {\small{}0.111} & {\small{}0.071} & {\small{}0.048$\pm$0.006}\tabularnewline
{\small{}Adagrad} & {\small{}0.227} & {\small{}0.158} & {\small{}0.141} & {\small{}0.134$\pm$0.006}\tabularnewline
{\small{}SGD} & {\small{}0.257} & {\small{}0.144} & {\small{}0.099} & {\small{}0.070$\pm$0.005}\tabularnewline
{\small{}AdamW} & {\small{}0.211} & {\small{}0.110} & {\small{}0.071} & {\small{}0.048$\pm$0.006}\tabularnewline
\bottomrule
\end{tabular}{\small\par}
\end{table}
\par\end{center}

\begin{center}
\begin{table}[H]
\caption{SST2 training accuracy. Highest accuracy for each epoch is bolded
below. \label{tab:SST2-train-accuracy.}}

\centering{}{\small{}}%
\begin{tabular}{lrrrr}
\toprule 
{\small{}Algorithm} & {\small{}1} & {\small{}2} & {\small{}3} & {\small{}4}\tabularnewline
\midrule 
{\small{}META-STORM-SG (H)} & \textbf{\small{}0.923} & \textbf{\small{}0.966} & \textbf{\small{}0.980} & \textbf{\small{}0.988}{\small{}$\pm$0.002}\tabularnewline
{\small{}META-STORM (H)} & \textbf{\small{}0.923} & \textbf{\small{}0.966} & \textbf{\small{}0.980} & \textbf{\small{}0.988}{\small{}$\pm$0.003}\tabularnewline
{\small{}META-STORM-SG} & {\small{}0.893} & {\small{}0.927} & {\small{}0.937} & {\small{}0.941$\pm$0.003}\tabularnewline
{\small{}META-STORM} & {\small{}0.896} & {\small{}0.933} & {\small{}0.945} & {\small{}0.951$\pm$0.002}\tabularnewline
{\small{}STORM+} & {\small{}0.817} & {\small{}0.868} & {\small{}0.883} & {\small{}0.891$\pm$0.179}\tabularnewline
{\small{}Adam} & {\small{}0.914} & {\small{}0.961} & {\small{}0.975} & {\small{}0.983$\pm$0.001}\tabularnewline
{\small{}Adagrad} & {\small{}0.910} & {\small{}0.940} & {\small{}0.947} & {\small{}0.951$\pm$0.003}\tabularnewline
{\small{}SGD} & {\small{}0.893} & {\small{}0.947} & {\small{}0.965} & {\small{}0.976$\pm$0.002}\tabularnewline
{\small{}AdamW} & {\small{}0.917} & {\small{}0.961} & {\small{}0.975} & {\small{}0.983$\pm$0.002}\tabularnewline
\bottomrule
\end{tabular}{\small\par}
\end{table}
\par\end{center}

\begin{center}
\begin{table}[H]
\caption{SST2 validation loss. Lowest loss for each epoch is bolded below.
\label{tab:SST2-validation-loss}}

\centering{}{\small{}}%
\begin{tabular}{lrrrr}
\toprule 
{\small{}Algorithm} & {\small{}1} & {\small{}2} & {\small{}3} & {\small{}4}\tabularnewline
\midrule 
{\small{}META-STORM-SG (H)} & {\small{}0.205} & {\small{}0.226} & {\small{}0.261} & {\small{}0.302$\pm$0.012}\tabularnewline
{\small{}META-STORM (H)} & \textbf{\small{}0.199} & \textbf{\small{}0.218} & {\small{}0.260} & {\small{}0.297$\pm$0.010}\tabularnewline
{\small{}META-STORM-SG} & {\small{}0.238} & {\small{}0.238} & \textbf{\small{}0.242} & {\small{}0.245$\pm$0.007}\tabularnewline
{\small{}META-STORM} & {\small{}0.233} & {\small{}0.238} & {\small{}0.247} & {\small{}0.251$\pm$0.011}\tabularnewline
{\small{}STORM+} & {\small{}0.308} & {\small{}0.326} & {\small{}0.327} & {\small{}0.350$\pm$0.195}\tabularnewline
{\small{}Adam} & {\small{}0.222} & {\small{}0.236} & \textbf{\small{}0.242} & {\small{}0.269$\pm$0.007}\tabularnewline
{\small{}Adagrad} & {\small{}0.223} & {\small{}0.234} & {\small{}0.243} & \textbf{\small{}0.244}{\small{}$\pm$0.003}\tabularnewline
{\small{}SGD} & {\small{}0.230} & {\small{}0.228} & {\small{}0.238} & {\small{}0.268$\pm$0.011}\tabularnewline
{\small{}AdamW} & {\small{}0.220} & {\small{}0.234} & {\small{}0.243} & {\small{}0.269$\pm$0.006}\tabularnewline
\bottomrule
\end{tabular}{\small\par}
\end{table}
\par\end{center}

\begin{center}
\begin{table}[H]
\caption{SST2 validation accuracy. Highest accuracy for each epoch is bolded
below. \label{tab:SST2-validation-accuracy}}

\centering{}{\small{}}%
\begin{tabular}{lrrrr}
\toprule 
{\small{}Algorithm} & {\small{}1} & {\small{}2} & {\small{}3} & {\small{}4}\tabularnewline
\midrule 
{\small{}META-STORM-SG (H)} & {\small{}0.924} & \textbf{\small{}0.929} & \textbf{\small{}0.933} & {\small{}0.934$\pm$0.004}\tabularnewline
{\small{}META-STORM (H)} & \textbf{\small{}0.926} & {\small{}0.927} & {\small{}0.929} & \textbf{\small{}0.936}{\small{}$\pm$0.002}\tabularnewline
{\small{}META-STORM-SG} & {\small{}0.910} & {\small{}0.912} & {\small{}0.915} & {\small{}0.915$\pm$0.002}\tabularnewline
{\small{}META-STORM} & {\small{}0.913} & {\small{}0.916} & {\small{}0.916} & {\small{}0.917$\pm$0.005}\tabularnewline
{\small{}STORM+} & {\small{}0.838} & {\small{}0.841} & {\small{}0.841} & {\small{}0.843$\pm$0.187}\tabularnewline
{\small{}Adam} & {\small{}0.921} & {\small{}0.925} & {\small{}0.926} & {\small{}0.926$\pm$0.004}\tabularnewline
{\small{}Adagrad} & {\small{}0.912} & {\small{}0.915} & {\small{}0.915} & {\small{}0.916$\pm$0.002}\tabularnewline
{\small{}SGD} & {\small{}0.915} & {\small{}0.924} & {\small{}0.925} & {\small{}0.925$\pm$0.003}\tabularnewline
{\small{}AdamW} & {\small{}0.921} & {\small{}0.925} & {\small{}0.926} & {\small{}0.926$\pm$0.004}\tabularnewline
\bottomrule
\end{tabular}{\small\par}
\end{table}
\par\end{center}

\paragraph{Discussions.}

Similarly to CIFAR10, we examine the generalization gap of different
algorithms in Table \ref{tab:SST2-gen-gap}. Here, we see that MS-SG
attains the lowest generalization gap between training accuracy and
test accuracy while Adam suffers from the largest generalization gap
among the algorithms compared in our experiments. 
\begin{center}
\begin{table}[H]
\caption{SST2 accuracy generalization gap (train acc $-$ test acc) of the
last epoch's accuracy. \label{tab:SST2-gen-gap}}

\centering{}%
\begin{tabular}{lrrrrrrrr}
\hline 
Algorithm & \multicolumn{1}{l}{MS-SG (H)} & \multicolumn{1}{l}{MS (H)} & \multicolumn{1}{l}{MS-SG} & \multicolumn{1}{l}{MS} & \multicolumn{1}{l}{STORM+} & \multicolumn{1}{l}{Adam} & \multicolumn{1}{l}{Adagrad} & \multicolumn{1}{l}{SGD}\tabularnewline
\hline 
Train acc. & 98.8\% & 98.8\% & 94.1\% & 95.1\% & 89.1\% & 98.3\% & 95.1\% & 97.6\%\tabularnewline
Val acc. & 93.6\% & 93.6\% & 91.5\% & 91.7\% & 84.3\% & 92.6\% & 91.6\% & 92.5\%\tabularnewline
\hline 
Gen. gap & 5.2\% & 5.2\% & \textbf{2.6\%} & 3.4\% & 4.8\% & 5.7\% & 3.5\% & 5.1\%\tabularnewline
\hline 
\end{tabular}
\end{table}
\par\end{center}

\allowdisplaybreaks

\section{Assumptions and notations\label{sec:Notations-and-assumptions}}

\subsection{Assumptions}

We recall the assumptions in Section \ref{sec:preliminaries} we rely
on:

\textbf{1. Lower bounded function value}: $F^{*}\coloneqq\inf_{x\in\R^{d}}F(x)>-\infty$.

\textbf{2. Unbiased estimator with bounded variance}: We assume to
have access to $\nabla f(x,\xi)$ satisfying $\E_{\xi}\left[\nabla f(x,\xi)\right]=\nabla F(x)$,
$\E_{\xi}\left[\norm{\nabla f(x,\xi)-\nabla F(x)}^{2}\right]\le\sigma^{2}$
for some $\sigma\geq0$.

\textbf{3. Averaged $\beta$-smoothness}: $\E_{\xi}\left[\norm{\nabla f(x,\xi)-\nabla f(y,\xi)}^{2}\right]\le\beta^{2}\norm{x-y}^{2},\forall x,y\in\R^{d}$.

\textbf{4. Bounded stochastic gradients:} $\|\nabla f(x,\xi)\|\leq\widehat{G},\forall x\in\R^{d},\xi\in\mathbf{support}(\domxi)$
for some $\hG\geq0$.

\textbf{5. Bounded stochastic gradient differences}: $\norm{\nabla f(x,\xi)-\nabla f(x,\xi')}\le2\hS,\forall x\in\R^{d},\xi,\xi'\in\mathbf{support}(\domxi)$
for some $\hS\geq0$.

We remind the reader that $\sigma=O(\hS)$ and $\hS=O(\hG)$.

\subsection{Notations}

In the analysis below, we employ the following notations
\begin{align*}
\beta_{\max}\coloneqq\max\left\{ \beta,1\right\} ;\quad\bD_{t} & \coloneqq\sum_{i=1}^{t}\left\Vert d_{i}\right\Vert ^{2};\quad\bE_{t,s}\coloneqq\sum_{i=1}^{t}a_{i+1}^{s}\left\Vert \epsilon_{i}\right\Vert ^{2};\\
\bH_{t}\coloneqq\sum_{i=1}^{t}\left\Vert \nabla F(x_{i})\right\Vert ^{2};\quad\hH_{t} & \coloneqq\sum_{i=1}^{t}\left\Vert \nabla f(x_{i},\xi_{i})\right\Vert ^{2};\quad\tH_{t}\coloneqq\sum_{i=1}^{t}\left\Vert \nabla f(x_{i},\xi_{i})-\nabla f(x_{i},\xi_{i+1})\right\Vert ^{2}
\end{align*}
We will also write $\bE_{t}\coloneqq\bE_{t,0}=\sum_{i=1}^{t}\left\Vert \epsilon_{i}\right\Vert ^{2}$.
We denote $\F_{t}=\sigma\left(\xi_{i},1\leq i\leq t\right)$ as the
sigma algebra generated by the first $t$ samples. Besides, we define
$0^{0}\coloneqq1$. In Section \ref{sec:Basic-inequalities}, we will
list and prove all inequalities used in the subsequent proofs.

\section{Proof sketch for Theorem \ref{thm:Main-MS-convergence-rate}\label{sec:Proof-sketch}}

In this section, to give an overview of the proof techniques, we present
the proof sketch for Theorem \ref{thm:Main-MS-convergence-rate} for
the special case $p=\frac{1}{2}$. For simplicity, we assume $\beta\geq1$
to simplify the notation. The analysis of the fully adaptive algorithms
follows a similar approach to the non-adaptive analysis given in Section
\ref{sec:Analysis-outline}. As before, towards our final goal of
bounding $\|\nabla F(\xout)\|$, we will translate to $\bH_{T}$ and
upper bound it via $\bD_{T}$ and $\bE_{T}$.

\textbf{Bounding $\bE_{T}$: }As in existing VR algorithms, we need
to calculate how the stochastic error $\epsilon_{t}$ changes with
each iteration. By a standard calculation, we obtain
\begin{align}
a_{t+1}\|\epsilon_{t}\|^{2} & \leq\|\epsilon_{t}\|^{2}-\|\epsilon_{t+1}\|^{2}+2\|Z_{t+1}\|^{2}+2a_{t+1}^{2}\|\nabla f(x_{t+1},\xi_{t+1})-\nabla F(x_{t+1})\|^{2}+M_{t+1}\label{eq:PS-VR-inequality}
\end{align}
where
\begin{align*}
Z_{t+1} & =\nabla f(x_{t+1},\xi_{t+1})-\nabla f(x_{t},\xi_{t+1})-\nabla F(x_{t+1})+\nabla F(x_{t});\\
M_{t+1} & =2(1-a_{t+1})^{2}\langle\epsilon_{t},Z_{t+1}\rangle+2(1-a_{t+1})a_{t+1}\langle\epsilon_{t},\nabla f(x_{t+1},\xi_{t+1})-\nabla F(x_{t+1})\rangle.
\end{align*}
We note that, in $\text{\ensuremath{\algnamenew}}$, $a_{t+1}\in\F_{t+1}$,
which implies $\E[M_{t+1}\mid\F_{t}]\neq0$. This extra term $M_{t+1}$
makes our analysis more challenging compared with previous works.
Now, we highlight some challenges and point out how to solve them:

\subparagraph{Challenge 1.}

How to obtain a term as close to $\bE_{T}$ as possible with a proper
upper bound? In the L.H.S. of (\ref{eq:PS-VR-inequality}), we can
see an extra coefficient $a_{t+1}$ appear in front of $\|\epsilon_{t}\|^{2}$.
A straightforward option is to divide both sides by $a_{t+1}$ then
sum up to get $\bE_{T}.$ However, if we do so, the following problem
arises. Let us focus on the term $\|Z_{t+1}\|^{2}/a_{t+1}$ . The
averaged $\beta$-smoothness assumption gives 
\[
\E\left[\|Z_{t+1}\|^{2}\mid\F_{t}\right]\leq\eta^{2}\beta^{2}\frac{\|d_{t}\|^{2}}{b_{t}^{2}}.
\]
However, we cannot apply this result to $\|Z_{t+1}\|^{2}/a_{t+1}$
since $a_{t+1}\in\F_{t+1}$ as noted above. If we temporarily think
$a_{t+1}^{-1}\leq ca_{t}^{-1}$ for some constant $c$ (we can expect
this because the change from $a_{t}$ to $a_{t+1}$ is not too large
due to the bounded differences assumption), we will get $\E[a_{t+1}^{-1}|\|Z_{t+1}\|^{2}\mid\F_{t}]\leq\E[ca_{t}^{-1}\|Z_{t+1}\|^{2}\mid\F_{t}]\leq\eta^{2}\beta^{2}\frac{\|d_{t}\|^{2}}{a_{t}b_{t}^{2}}.$
If we plug in the update rule of $b_{t}=(b_{0}^{2}+\bD_{T})^{1/2}/a_{t}^{1/4}$,
then we obtain $\E[\|Z_{t+1}\|^{2}|/a_{t+1}\mid\F_{t}]\leq\eta^{2}\beta^{2}a_{t}^{-1/2}\frac{\|d_{t}\|^{2}}{b_{0}^{2}+\bD_{t}}$.
It can be shown that $\sum_{t=1}^{T}\frac{\|d_{t}\|^{2}}{b_{0}^{2}+\bD_{T}}$
can be upper bounded by $\log\frac{\bD_{T}}{b_{0}^{2}}$, but now
we still have the extra $a_{t}^{-1/2}$ coefficent. To remove it,
it is reasonable to divide both sides of (\ref{eq:PS-VR-inequality})
by $a_{t+1}^{1/2}$ rather than $a_{t+1}$. 

\subparagraph{Challenge 2. }

How to get rid of the term involving $M_{t+1}$? As discussed in Challenge
1, we want to divide both sides by $a_{t+1}^{1/2}$. Now we focus
on the term $a_{t+1}^{-1/2}M_{t+1}$. Again, due to $a_{t+1}\in\F_{t+1}$,
$\E[a_{t+1}^{-1/2}M_{t+1}\mid\F_{t}]\neq0$. An important observation
here is that, if we replace $a_{t+1}$ by $a_{t}$ in $M_{t+1}$,
we will have a martingale difference sequence. Formally, we define
\[
N_{t+1}=2(1-a_{t})^{2}\langle\epsilon_{t},Z_{t+1}\rangle+2(1-a_{t})a_{t}\langle\epsilon_{t},\nabla f(x_{t+1},\xi_{t+1})-\nabla F(x_{t+1})\rangle.
\]
Then $\E[N_{t+1}\mid\F_{t}]$ and $\E[a_{t}^{-1/2}N_{t+1}\mid\F_{t}]$
are both $0$. This observation tells us that, in order to bound $\E[\sum_{t=1}^{T}a_{t+1}^{-1/2}M_{t+1}]$,
it suffices to bound $\E[\sum_{t=1}^{T}a_{t+1}^{-1/2}M_{t+1}-a_{t}^{-1/2}N_{t+1}]$.
Using the Cauchy-Schwartz inequality, we show that the term $\sum_{t=1}^{T}a_{t+1}^{-1/2}M_{t+1}-a_{t}^{-1/2}N_{t+1}$
can be bounded by terms related to $\sum_{t=1}^{T}(a_{t+1}^{-1/2}-a_{t}^{-1/2})\|\epsilon_{t}\|^{2},$$\sum_{t=1}^{T}a_{t+1}^{-1/2}\|Z_{t+1}\|^{2}$
and $\sum_{t=1}^{T}a_{t}^{3/2}\|\nabla f(x_{t+1},\xi_{t+1})-\nabla F(x_{t+1})\|^{2}$.
We then bound these latter terms in turn, and eliminate the term involving
$M_{t+1}$.

After overcoming the two challenges above, we can finally show the
following inequality, where $K_{1},K_{2},K_{4}$ are constants that
depend only on $\sigma,\widehat{\sigma},\beta,a_{0},b_{0},\eta$ and
are independent of $T$.
\begin{align}
\E\left[a_{T+1}^{1/2}\bE_{T}\right]\leq\E\left[\bE_{T,1/2}\right] & \leq K_{1}+K_{2}\E\left[\log\left(1+\tH_{T}/a_{0}^{2}\right)\right]+K_{4}\E\left[\log\left(1+\bD_{T}/b_{0}^{2}\right)\right].\label{eq:PS-E-bound-main}
\end{align}

\textbf{Bounding $\bD_{T}$:} By following the standard non-adaptive
analysis via smoothness, we obtain
\begin{align}
F(x_{t+1}) & \leq F(x_{t})-\frac{\eta}{b_{t}}\langle\nabla F(x_{t}),d_{t}\rangle+\frac{\eta^{2}\beta}{2b_{t}^{2}}\|d_{t}\|^{2}.\label{eq:PS-function-value-eq}
\end{align}
Here we proceed similarly to the non-adaptive analysis from Section
\ref{subsec:Non-adaptive-algorithm}, but start to diverge from the
analysis approach used in STORM+. The STORM+ analysis proceeds by
splitting $-\langle\nabla F(x_{t}),d_{t}\rangle=-\|\nabla F(x_{t})\|^{2}-\langle\nabla F(x_{t}),\epsilon_{t}\rangle\le-\frac{1}{2}\|\nabla F(x_{t})\|^{2}+\frac{1}{2}\|\epsilon_{t}\|^{2}$,
multiplying both sides of (\ref{eq:PS-function-value-eq}) with $b_{t}/\eta$,
and summing up over all iterations. This gives the following upper
bound on $\bH_{T}$:
\begin{align*}
\bH_{T}=\sum_{t=1}^{T}\|\nabla F(x_{t})\|^{2} & \le\sum_{t=1}^{T}\frac{2}{\eta}\left(F(x_{t})-F(x_{t+1})\right)b_{t}+\sum_{t=1}^{T}\|\epsilon_{t}\|^{2}+\eta\beta\sum_{t=1}^{T}\frac{\|d_{t}\|^{2}}{b_{t}}.
\end{align*}
This analysis requires $F(x)$ to be bounded so that the sum $\sum_{t=1}^{T}\frac{2}{\eta}(F(x_{t})-F(x_{t+1}))b_{t}$
can telescope. To remove this assumption, we go back to (\ref{eq:PS-function-value-eq}),
split $-\langle\nabla F(x_{t}),d_{t}\rangle=-\|d_{t}\|^{2}+\langle\epsilon_{t},d_{t}\rangle$,
and upper bound the inner product via the Cauchy-Schwartz inequality
and the inequality $ab\leq\frac{\gamma}{2}a^{2}+\frac{1}{2\gamma}b^{2}$
which holds for any $\gamma>0$:
\begin{align*}
\langle\epsilon_{t},d_{t}\rangle & \le\|\epsilon_{t}\|\|d_{t}\|\leq\frac{\lambda a_{t+1}^{1/2}b_{t}}{2\eta\beta}\|\epsilon_{t}\|^{2}+\frac{\eta\beta}{2\lambda a_{t+1}^{1/2}b_{t}}\|d_{t}\|^{2}
\end{align*}
where $\lambda>0$ is a constant (setting $\lambda$ based on $\widehat{\sigma}$
yields the best dependence on $\widehat{\sigma}$). We note that this
choice will need a bound on $\E[\sum_{t=1}^{T}a_{t+1}^{1/2}\|\epsilon_{t}\|^{2}]$,
and $1/2$ turns out to be the smallest choice of $c$ which makes
$\E[\sum_{t=1}^{T}a_{t+1}^{c}\|\epsilon_{t}\|^{2}]$ have a constant
order. The intuition for setting $\gamma=\frac{\lambda a_{t+1}^{1/2}b_{t}}{\eta\beta}$
is that this coefficient ensures a constant split if $a_{t}$ and
$b_{t}$ correspond to the non-adaptive choices we derived in Section
\ref{subsec:Non-adaptive-algorithm}, which were set so that $a^{1/2}b=\Theta\left(\beta\right)$.
We obtain
\begin{align}
\E\left[\sum_{t=1}^{T}\frac{\|d_{t}\|^{2}}{b_{t}}\right] & \leq\frac{2}{\eta}\left(F(x_{1})-F^{*}\right)+\underbrace{\E\left[\sum_{t=1}^{T}\left(\eta\beta+\frac{\eta\beta}{a_{t+1}^{1/2}\lambda}-b_{t}\right)\frac{\|d_{t}\|^{2}}{b_{t}^{2}}\right]}_{(\star)}+\frac{\lambda}{\eta\beta}\underbrace{\E\left[\bE_{T,1/2}\right]}_{(\star\star)}.\label{eq:PS-D-bound-main}
\end{align}
The term $(\star)$ can be bounded using standard techniques used
in the analyses of adaptive algorithms. The term $(\star\star)$ has
already been bounded in the previous analysis. Now we only need to
simplify the term on the L.H.S. to $\bD_{T}$. But due to the randomness
of $b_{t}$, this is not achievable. However, the same as for the
first inequality in (\ref{eq:PS-E-bound-main}), we can bridge this
gap by aiming for a slightly weaker inequality that bounds $\bD_{T}^{1/2}$
instead of $\bD_{T}$. More precisely, we connect the left-hand side
of (\ref{eq:PS-D-bound-main}) to $\bD_{T}^{1/2}$ as follows:
\begin{align}
\sum_{t=1}^{T}\frac{\|d_{t}\|^{2}}{b_{t}} & \geq-b_{0}+\frac{b_{0}^{2}}{b_{0}}+\sum_{t=1}^{T}\frac{a_{T+1}^{1/4}\|d_{t}\|^{2}}{\left(b_{0}^{2}+\sum_{i=1}^{T}\|d_{i}\|^{2}\right)^{1/2}}\ge a_{T+1}^{1/4}\bD_{T}^{1/2}-b_{0}.\label{eq:PS-D-bound-LHS}
\end{align}
By plugging in (\ref{eq:PS-D-bound-LHS}) into (\ref{eq:PS-D-bound-main})
and setting $\lambda$ appropriately, we can finally obtain the following
upper bound:
\begin{align}
\E\left[a_{T+1}^{1/4}\bD_{T}^{1/2}\right] & \leq K_{5}+K_{6}\E\left[\log\left(1+\tH_{T}/a_{0}^{2}\right)\right]+K_{7}\E\left[\log\frac{K_{8}+K_{9}\left(1+\tH_{T}/a_{0}^{2}\right)^{1/3}}{b_{0}}\right].\label{eq:PS-D-bound-main-1}
\end{align}
where $K_{5},K_{6},K_{7},K_{8},K_{9}$ depend only on $\sigma,\widehat{\sigma},\beta,a_{0},b_{0},\eta$
and are independent of $T$.

\textbf{Combining the bounds:} The final part of the analysis is to
combine (\ref{eq:PS-E-bound-main}) and (\ref{eq:PS-D-bound-main-1}).
In contrast to the simpler non-adaptive analysis, these inequalities
bound $a_{T+1}^{1/2}\bE_{T}$ and $a_{T+1}^{1/4}\bD_{T}^{1/2}$ instead
of $\bD_{T}$ and $\bE_{T}$. In order to obtain an upper bound on
$\bH_{T}$ via the inequality $\bH_{T}\le2\bD_{T}+2\bE_{T}$, we need
to connect $a_{T+1}^{1/2}\bE_{T}$ and $a_{T+1}^{1/4}\bD_{T}^{1/2}$
and $\bD_{T}$ and $\bE_{T}$. The bounded variance assumption on
the stochastic gradients gives us a bound on $\E[a_{T+1}^{-3/2}]=\E[1+\tH_{T}/a_{0}^{2}]=O(1+\sigma^{2}T)$
(note that this $-3/2$ is the smallest $c$ to make sure we can upper
bound $\E[a_{t+1}^{c}]$). Combining this result and Holder's inequality
gives us the bound 
\begin{align*}
\E\left[\bD_{T}^{3/7}\right] & \leq\E^{6/7}\left[a_{T+1}^{1/4}\bD_{T}^{1/2}\right]\E^{1/7}\left[a_{T+1}^{-3/2}\right];\\
\E\left[\bE_{T}^{3/7}\right] & \leq\E^{3/7}\left[a_{T+1}^{1/2}\bE_{T}\right]\E^{4/7}\left[a_{T+1}^{-3/8}\right]\leq\E^{3/7}\left[a_{T+1}^{1/2}\bE_{T}\right]\E^{1/7}\left[a_{T+1}^{-3/2}\right];
\end{align*}
where $3/7$ is chosen to ensure that we finally can use the bound
on $\E[a_{T+1}^{-3/2}]$. Thus we obtain an upper bound on $\E[\bH_{T}^{3/7}]$.
Finally, applying the concavity of $x^{3/7}$ to $\E[\bH_{T}^{3/7}]$
gives Theorem \ref{thm:Main-MS-convergence-rate}.

\section{Basic analysis\label{sec:Basic-analysis}}

As discussed in Section \ref{sec:Analysis-outline}, we aim to use
$\bE_{T}$ and $\bD_{T}$ to bound $\bH_{T}$. Here, we apply this
framework to give some basic results which will be used frequently
for the full analysis of every algorithm. We first state the following
decomposition in our analysis framework. The reason we use $\hp\leq1$
here is that we can not always bound $\bH_{T}$ directly because of
the randomness of $a_{t}$ and $b_{t}$ in our algorithms.
\begin{lem}
\label{lem:decomposition}Given $\hp\leq1$, we have
\[
\E\left[\bH_{T}^{\hp}\right]\leq2^{\hp+1}\max\left\{ \E\left[\bE_{T}^{\hp}\right],\E\left[\bD_{T}^{\hp}\right]\right\} \leq4\max\left\{ \E\left[\bE_{T}^{\hp}\right],\E\left[\bD_{T}^{\hp}\right]\right\} .
\]
\end{lem}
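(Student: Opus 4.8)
The plan is to reduce the statement to a single deterministic (pathwise) inequality and then pass to the $\hp$-th power using that $t\mapsto t^{\hp}$ is monotone and subadditive on $[0,\infty)$ for $\hp\in(0,1]$. So the proof will have three moves: (i) an elementary decomposition of $\nabla F(x_i)$ into $d_i$ and $\epsilon_i$; (ii) summation to get $\bH_T\le 2\bD_T+2\bE_T$ on every sample path; (iii) taking the power and then the expectation, being slightly careful to keep the constant small.

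First I would recall that by the definition $\epsilon_i=d_i-\nabla F(x_i)$ we have $\nabla F(x_i)=d_i-\epsilon_i$, so the elementary bound $\|u-v\|^2\le 2\|u\|^2+2\|v\|^2$ gives $\|\nabla F(x_i)\|^2\le 2\|d_i\|^2+2\|\epsilon_i\|^2$ for each $i$. Summing over $i\in[T]$ yields the deterministic inequality $\bH_T\le 2\bD_T+2\bE_T$, valid on every realization of the randomness. Next, since $\hp\in(0,1]$, the map $t\mapsto t^{\hp}$ is nondecreasing and subadditive ($(x+y)^{\hp}\le x^{\hp}+y^{\hp}$) on $[0,\infty)$; applying monotonicity and then subadditivity with $x=2\bD_T$, $y=2\bE_T$ gives, again pathwise,
\[
\bH_T^{\hp}\le(2\bD_T+2\bE_T)^{\hp}\le(2\bD_T)^{\hp}+(2\bE_T)^{\hp}=2^{\hp}\big(\bD_T^{\hp}+\bE_T^{\hp}\big).
\]
Finally I would take expectations of both sides and bound $\E[\bD_T^{\hp}]+\E[\bE_T^{\hp}]\le 2\max\{\E[\bD_T^{\hp}],\E[\bE_T^{\hp}]\}$, obtaining $\E[\bH_T^{\hp}]\le 2^{\hp+1}\max\{\E[\bE_T^{\hp}],\E[\bD_T^{\hp}]\}$, and then use $2^{\hp+1}\le 4$ from $\hp\le1$ for the second inequality. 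The degenerate cases ($\hp=0$, or $\bH_T=0$ under the convention $0^0=1$) are immediate since then both sides reduce to bounded constants and the claimed inequality holds trivially.

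There is no substantial obstacle here; the statement is essentially a bookkeeping lemma. The only point that needs a little care is the constant: one should pass to the sum of expectations $\E[\bD_T^{\hp}]+\E[\bE_T^{\hp}]$ and only then bound by $2\max\{\cdot,\cdot\}$, rather than first writing $\bD_T^{\hp}+\bE_T^{\hp}\le 2\max\{\bD_T^{\hp},\bE_T^{\hp}\}$ pathwise and taking $\E$ of the max (which would also work up to constants but is needlessly lossy). Keeping the constant exactly $2^{\hp+1}$ is what lets the lemma be stated as $\le 2^{\hp+1}\max\{\cdot,\cdot\}\le 4\max\{\cdot,\cdot\}$.
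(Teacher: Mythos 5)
Your proof is correct and follows essentially the same route as the paper's: the pathwise bound $\bH_T\le 2\bD_T+2\bE_T$ from $\nabla F(x_i)=d_i-\epsilon_i$, then subadditivity of $t\mapsto t^{\hp}$ for $\hp\le 1$, then expectation and the sum-to-max bound with constant $2^{\hp+1}\le 4$. No gaps.
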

\begin{proof}
By the definition of $\bH_{T}$, $\bE_{T}$ and $\bD_{T}$, we have
$\bH_{T}\leq2\bE_{T}+2\bD_{T}$. Hence
\begin{align*}
\bH_{T}^{\hp} & \leq\left(2\bE_{T}+2\bD_{T}\right)^{\hp}\overset{(a)}{\leq}2^{\hp}\bE_{T}^{\hp}+2^{\hp}\bD_{T}^{\hp}\\
\Rightarrow\E\left[\bH_{T}^{\hp}\right] & \leq2^{\hp}\left(\E\left[\bE_{T}^{\hp}\right]+\E\left[\bD_{T}^{\hp}\right]\right)\\
 & \leq2^{\hp+1}\max\left\{ \E\left[\bE_{T}^{\hp}\right],\E\left[\bD_{T}^{\hp}\right]\right\} \\
 & \overset{(b)}{\leq}4\max\left\{ \E\left[\bE_{T}^{\hp}\right],\E\left[\bD_{T}^{\hp}\right]\right\} 
\end{align*}
where $(a)$ and $(b)$ are both by due to $\hp\leq1$.
\end{proof}

\subsection{Variance reduction analysis for $\protect\bE_{T}$}

The same as in all existing momentum-based VR methods, we need to
analyze how the error term $\epsilon_{t}$ changes in the algorithm.
Based on our notations, we give the following two standard lemmas.
\begin{lem}
\label{lem:vr-inequality}$\forall t\geq1$, we have
\begin{align*}
a_{t+1}\|\epsilon_{t}\|^{2} & \leq\|\epsilon_{t}\|^{2}-\|\epsilon_{t+1}\|^{2}+2\|Z_{t+1}\|^{2}\\
 & \qquad+2a_{t+1}^{2}\|\nabla f(x_{t+1},\xi_{t+1})-\nabla F(x_{t+1})\|^{2}+M_{t+1},
\end{align*}
where 
\begin{align*}
Z_{t+1} & \coloneqq\nabla f(x_{t+1},\xi_{t+1})-\nabla f(x_{t},\xi_{t+1})-\nabla F(x_{t+1})+\nabla F(x_{t}),\\
M_{t+1} & \coloneqq2(1-a_{t+1})^{2}\langle\epsilon_{t},Z_{t+1}\rangle+2(1-a_{t+1})a_{t+1}\langle\epsilon_{t},\nabla f(x_{t+1},\xi_{t+1})-\nabla F(x_{t+1})\rangle.
\end{align*}
\end{lem}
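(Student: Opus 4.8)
The plan is a direct algebraic manipulation of the momentum update, followed by one application of Young's inequality and the elementary bound $(1-a_{t+1})^{2}\le 1-a_{t+1}$. First I would substitute $d_{t}=\epsilon_{t}+\nabla F(x_{t})$ into the $\algnamenew$ update $d_{t+1}=\nabla f(x_{t+1},\xi_{t+1})+(1-a_{t+1})(d_{t}-\nabla f(x_{t},\xi_{t+1}))$ and subtract $\nabla F(x_{t+1})$, giving
\[
\epsilon_{t+1}=\big(\nabla f(x_{t+1},\xi_{t+1})-\nabla F(x_{t+1})\big)+(1-a_{t+1})\big(\epsilon_{t}+\nabla F(x_{t})-\nabla f(x_{t},\xi_{t+1})\big).
\]
The key regrouping step is the identity $\nabla F(x_{t})-\nabla f(x_{t},\xi_{t+1})=Z_{t+1}-\big(\nabla f(x_{t+1},\xi_{t+1})-\nabla F(x_{t+1})\big)$, which is immediate from the definition of $Z_{t+1}$. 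Substituting this and collecting the coefficients of $\nabla f(x_{t+1},\xi_{t+1})-\nabla F(x_{t+1})$ (namely $1-(1-a_{t+1})=a_{t+1}$) yields the clean decomposition
\[
\epsilon_{t+1}=(1-a_{t+1})\epsilon_{t}+(1-a_{t+1})Z_{t+1}+a_{t+1}\big(\nabla f(x_{t+1},\xi_{t+1})-\nabla F(x_{t+1})\big).
\]

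Next I would take squared norms. Writing $v_{t+1}\coloneqq(1-a_{t+1})Z_{t+1}+a_{t+1}\big(\nabla f(x_{t+1},\xi_{t+1})-\nabla F(x_{t+1})\big)$ and expanding $\|(1-a_{t+1})\epsilon_{t}+v_{t+1}\|^{2}$ produces exactly three pieces: the term $(1-a_{t+1})^{2}\|\epsilon_{t}\|^{2}$; the cross term $2(1-a_{t+1})\langle\epsilon_{t},v_{t+1}\rangle$, which, after distributing $v_{t+1}$, is precisely $M_{t+1}$; and $\|v_{t+1}\|^{2}$. For the last piece I would use $(a+b)^{2}\le 2a^{2}+2b^{2}$ together with $(1-a_{t+1})^{2}\le1$ to get $\|v_{t+1}\|^{2}\le 2\|Z_{t+1}\|^{2}+2a_{t+1}^{2}\|\nabla f(x_{t+1},\xi_{t+1})-\nabla F(x_{t+1})\|^{2}$. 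Finally, since $a_{t+1}\in[0,1]$ in the algorithm (it is defined as $(1+\cdot)^{-2/3}$), we have $(1-a_{t+1})^{2}\le 1-a_{t+1}$, so $(1-a_{t+1})^{2}\|\epsilon_{t}\|^{2}\le\|\epsilon_{t}\|^{2}-a_{t+1}\|\epsilon_{t}\|^{2}$; moving $a_{t+1}\|\epsilon_{t}\|^{2}$ to the left-hand side gives the claimed inequality.

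There is no substantive obstacle here: this is the standard STORM error recursion, and the computation is routine. The one point worth flagging, which drives the exact form of the statement, is that in $\algnamenew$ the coefficient $a_{t+1}$ depends on $\xi_{t+1}$ (i.e. $a_{t+1}\in\F_{t+1}$), so the cross term $M_{t+1}$ is \emph{not} a martingale difference and cannot be discarded at this stage; hence I would take care to keep $M_{t+1}$ written in exactly the stated form so that it matches the downstream treatment (Challenge 2 in the proof sketch), rather than simplifying it away.
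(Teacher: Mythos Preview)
Your proposal is correct and follows essentially the same route as the paper: the paper also derives the decomposition $\epsilon_{t+1}=(1-a_{t+1})\epsilon_{t}+(1-a_{t+1})Z_{t+1}+a_{t+1}(\nabla f(x_{t+1},\xi_{t+1})-\nabla F(x_{t+1}))$, expands the square so that the cross term is exactly $M_{t+1}$, applies $(x+y)^{2}\le 2x^{2}+2y^{2}$ to the remaining $\|v_{t+1}\|^{2}$ piece, and then uses $0\le 1-a_{t+1}\le 1$ to replace $(1-a_{t+1})^{2}\|\epsilon_{t}\|^{2}$ by $(1-a_{t+1})\|\epsilon_{t}\|^{2}$ before rearranging. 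Your remark about keeping $M_{t+1}$ intact because $a_{t+1}\in\F_{t+1}$ is also exactly the point the paper emphasizes downstream.
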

\begin{proof}
Starting from the definition of $\epsilon_{t+1}$, we have
\begin{align*}
\|\epsilon_{t+1}\|^{2} & =\|d_{t+1}-\nabla F(x_{t+1})\|^{2}\\
 & =\|\nabla f(x_{t+1},\xi_{t+1})+(1-a_{t+1})(d_{t}-\nabla f(x_{t},\xi_{t+1}))-\nabla F(x_{t+1})\|^{2}\\
 & =\|(1-a_{t+1})\epsilon_{t}+(1-a_{t+1})Z_{t+1}+a_{t+1}(\nabla f(x_{t+1},\xi_{t+1})-\nabla F(x_{t+1}))\|^{2}\\
 & =(1-a_{t+1})^{2}\|\epsilon_{t}\|^{2}\\
 & \quad+\|(1-a_{t+1})Z_{t+1}+a_{t+1}(\nabla f(x_{t+1},\xi_{t+1})-\nabla F(x_{t+1}))\|^{2}+M_{t+1}\\
 & \overset{(a)}{\leq}(1-a_{t+1})^{2}\|\epsilon_{t}\|^{2}\\
 & \quad+2(1-a_{t+1})^{2}\|Z_{t+1}\|^{2}+2a_{t+1}^{2}\|\nabla f(x_{t+1},\xi_{t+1})-\nabla F(x_{t+1})\|^{2}+M_{t+1}\\
 & \overset{(b)}{\leq}(1-a_{t+1})\|\epsilon_{t}\|^{2}\\
 & \quad+2\|Z_{t+1}\|^{2}+2a_{t+1}^{2}\|\nabla f(x_{t+1},\xi_{t+1})-\nabla F(x_{t+1})\|^{2}+M_{t+1}
\end{align*}
where $(a)$ is by $(x+y)^{2}\leq2x^{2}+2y^{2}$, $(b)$ is by $0\leq1-a_{t+1}\leq1$.
Adding $a_{t+1}\|\epsilon_{t}\|^{2}-\|\epsilon_{t+1}\|^{2}$ to both
sides, we get the desired result.
\end{proof}

\begin{lem}
\label{lem:smooth-Z}$\forall t\geq1$, we have
\[
\E\left[\|Z_{t+1}\|^{2}\mid\F_{t}\right]\leq\eta^{2}\beta^{2}\frac{\|d_{t}\|^{2}}{b_{t}^{2}}.
\]
\end{lem}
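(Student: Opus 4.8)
The plan is to recognize $Z_{t+1}$, conditioned on $\F_{t}$, as a \emph{centered} random vector whose second moment is controlled by the averaged smoothness assumption. The first step is a measurability check: $x_{t}$, $d_{t}$, and $b_{t}$ are all $\F_{t}$-measurable (in both $\algnamenew$ and $\algnameold$, $b_{t}$ is built only from $d_{1},\dots,d_{t}$ and from $a_{t}$, resp.\ $a_{t+1}$, all of which are determined by $\xi_{1},\dots,\xi_{t}$; note in particular that even though the momentum parameter $a_{t+1}$ of $\algnamenew$ is only $\F_{t+1}$-measurable, the iterate $x_{t+1}=x_{t}-\frac{\eta}{b_{t}}d_{t}$ depends only on $\F_{t}$). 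Hence $x_{t+1}$ is $\F_{t}$-measurable, and the only randomness left in $Z_{t+1}$ after conditioning on $\F_{t}$ comes from $\xi_{t+1}$.

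Next I would write $W_{t+1}\coloneqq\nabla f(x_{t+1},\xi_{t+1})-\nabla f(x_{t},\xi_{t+1})$, so that $Z_{t+1}=W_{t+1}-\big(\nabla F(x_{t+1})-\nabla F(x_{t})\big)$. Since $x_{t}$ and $x_{t+1}$ are $\F_{t}$-measurable, Assumption 2 ($\E_{\xi}[\nabla f(x,\xi)]=\nabla F(x)$) gives $\E[W_{t+1}\mid\F_{t}]=\nabla F(x_{t+1})-\nabla F(x_{t})$, i.e.\ $Z_{t+1}=W_{t+1}-\E[W_{t+1}\mid\F_{t}]$. Using the elementary inequality $\E[\,\|U-\E[U\mid\F_{t}]\|^{2}\mid\F_{t}]\le\E[\,\|U\|^{2}\mid\F_{t}]$ (variance is at most the second moment), we get
\[
\E\big[\|Z_{t+1}\|^{2}\mid\F_{t}\big]\le\E\big[\|W_{t+1}\|^{2}\mid\F_{t}\big].
\]
Finally, apply the averaged $\beta$-smoothness (Assumption 3) at the $\F_{t}$-measurable points $x_{t+1}$ and $x_{t}$: conditioning on $\F_{t}$ fixes the two evaluation points, so $\E[\|\nabla f(x_{t+1},\xi_{t+1})-\nabla f(x_{t},\xi_{t+1})\|^{2}\mid\F_{t}]\le\beta^{2}\|x_{t+1}-x_{t}\|^{2}$. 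Substituting $x_{t+1}-x_{t}=-\frac{\eta}{b_{t}}d_{t}$ yields $\E[\|W_{t+1}\|^{2}\mid\F_{t}]\le\eta^{2}\beta^{2}\|d_{t}\|^{2}/b_{t}^{2}$, which combined with the previous display gives the claim.

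I do not expect a genuine obstacle here — this is a "standard lemma" in the STORM lineage — but the one point that needs care is exactly the measurability bookkeeping in the first paragraph: one must be sure that $x_{t+1}$ (and $b_{t}$) are $\F_{t}$-measurable so that the conditional expectation over $\xi_{t+1}$ may treat $x_{t}$ and $x_{t+1}$ as fixed when invoking averaged smoothness, and that the only difference between the two algorithms ($\algnamenew$ using $a_{t+1}\in\F_{t+1}$ in the $d_{t+1}$ update) does not affect this step because $b_{t}$ and $x_{t+1}$ still do not depend on $\xi_{t+1}$.
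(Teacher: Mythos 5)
Your proposal is correct and follows exactly the paper's argument: center $Z_{t+1}$ as the conditional-mean-zero part of $\nabla f(x_{t+1},\xi_{t+1})-\nabla f(x_{t},\xi_{t+1})$, bound the conditional variance by the conditional second moment, invoke averaged $\beta$-smoothness, and substitute the update rule. The explicit measurability check for $x_{t+1}$ and $b_{t}$ is left implicit in the paper but is a worthwhile addition, particularly for $\algnamenew$ where $a_{t+1}\in\F_{t+1}$.
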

\begin{proof}
From the definition of $Z_{t+1}$, we have
\begin{align*}
\E\left[\|Z_{t+1}\|^{2}\mid\F_{t}\right] & =\E\left[\|\nabla f(x_{t+1},\xi_{t+1})-\nabla f(x_{t},\xi_{t+1})-\nabla F(x_{t+1})+\nabla F(x_{t})\|^{2}|\F_{t}\right]\\
 & \overset{(a)}{\leq}\E\left[\|\nabla f(x_{t+1},\xi_{t+1})-\nabla f(x_{t},\xi_{t+1})\|^{2}|\F_{t}\right]\\
 & \overset{(b)}{\leq}\beta^{2}\|x_{t+1}-x_{t}\|^{2}\overset{(c)}{=}\eta^{2}\beta^{2}\frac{\|d_{t}\|^{2}}{b_{t}^{2}}
\end{align*}
where $(a)$ is by $\E\left[\|X-\E\left[X\right]\|^{2}\right]\leq\E\left[\|X\|^{2}\right]$,
$(b)$ is by the averaged $\beta$-smooth assumption, $(c)$ is by
the fact $x_{t+1}-x_{t}=-\frac{\eta}{b_{t}}d_{t}$.
\end{proof}

\subsection{On the way to bound $\protect\bD_{T}$}

We choose to bound the terms $\bD_{T}$ instead of starting from $\bH_{T}$
as done in AdaGradNorm or STORM+. The latter also requires the bounded
function value assumption in the analysis. 
\begin{lem}
\label{lem:f-value-analysis}For any of $\algnameold$, $\algnamenew$
or $\algnamena$, we have, for any $\lambda>0$
\begin{align*}
\E\left[a_{T+1}^{q}\bD_{T}^{1-p}\right] & \leq b_{0}^{\frac{1}{p}-1}+\frac{2}{\eta}\left(F(x_{1})-F^{*}\right)\\
 & \quad+\E\left[\sum_{t=1}^{T}\left(\eta\beta_{\max}+\frac{\eta\beta_{\max}}{a_{t+1}^{1/2}\lambda}-b_{t}\right)\frac{\|d_{t}\|^{2}}{b_{t}^{2}}\right]+\frac{\lambda\E\left[\bE_{T,1/2}\right]}{\eta\beta_{\max}}.
\end{align*}
\end{lem}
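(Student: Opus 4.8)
The plan is to run a descent-lemma argument on $F$ but, departing from STORM+, to split the cross term as $-\langle\nabla F(x_t),d_t\rangle=-\norm{d_t}^2+\langle\epsilon_t,d_t\rangle$ using $\nabla F(x_t)=d_t-\epsilon_t$; this way we never have to telescope a sum of the form $\sum_t b_t(F(x_t)-F(x_{t+1}))$ and hence need no bound on $F$. Since Assumption 3 gives that $F$ is $\beta$-smooth (by Jensen) and $x_{t+1}-x_t=-\tfrac{\eta}{b_t}d_t$, I first obtain
\[
F(x_{t+1})\le F(x_t)-\frac{\eta}{b_t}\norm{d_t}^2+\frac{\eta}{b_t}\langle\epsilon_t,d_t\rangle+\frac{\eta^2\beta}{2b_t^2}\norm{d_t}^2 .
\]
Then I bound $\langle\epsilon_t,d_t\rangle\le\norm{\epsilon_t}\norm{d_t}$ by Young's inequality $ab\le\frac{\gamma}{2}a^2+\frac{1}{2\gamma}b^2$ with the deliberate choice $\gamma=\frac{\lambda\,a_{t+1}^{1/2}b_t}{\eta\beta_{\max}}$: this makes the $\norm{\epsilon_t}^2$ coefficient exactly $\frac{\lambda a_{t+1}^{1/2}}{2\beta_{\max}}$, so that summing it assembles into $\frac{\lambda}{2\beta_{\max}}\bE_{T,1/2}$, while the leftover is $\frac{\eta^2\beta_{\max}}{2\lambda a_{t+1}^{1/2}}\frac{\norm{d_t}^2}{b_t^2}$, which merges with the smoothness term $\frac{\eta^2\beta}{2}\frac{\norm{d_t}^2}{b_t^2}\le\frac{\eta^2\beta_{\max}}{2}\frac{\norm{d_t}^2}{b_t^2}$.

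Rearranging to isolate $\frac{\eta}{b_t}\norm{d_t}^2$, summing over $t\in[T]$, telescoping $\sum_t(F(x_t)-F(x_{t+1}))=F(x_1)-F(x_{T+1})\le F(x_1)-F^*$ via Assumption 1, and dividing by $\eta$ yields
\[
\sum_{t=1}^{T}\frac{\norm{d_t}^2}{b_t}\le\frac{F(x_1)-F^*}{\eta}+\frac{\lambda}{2\eta\beta_{\max}}\bE_{T,1/2}+\sum_{t=1}^{T}\Bigl(\frac{\eta\beta_{\max}}{2\lambda a_{t+1}^{1/2}}+\frac{\eta\beta_{\max}}{2}\Bigr)\frac{\norm{d_t}^2}{b_t^2}.
\]
Multiplying through by $2$ and then moving one copy of $\sum_t\frac{\norm{d_t}^2}{b_t}=\sum_t b_t\frac{\norm{d_t}^2}{b_t^2}$ to the right-hand side produces precisely the coefficient $\eta\beta_{\max}+\frac{\eta\beta_{\max}}{\lambda a_{t+1}^{1/2}}-b_t$ that appears in the statement, together with the factor $\frac{2}{\eta}(F(x_1)-F^*)$ and $\frac{\lambda}{\eta\beta_{\max}}\bE_{T,1/2}$.

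It remains to lower bound $\sum_t\norm{d_t}^2/b_t$ by $a_{T+1}^{q}\bD_T^{1-p}-b_0^{1/p-1}$. For each of $\algnameold$, $\algnamenew$, $\algnamena$ the momentum parameters form a nonincreasing sequence (each is a fixed negative power of a growing running sum), so whichever index of $a$ is used inside $b_t$ is at least $a_{T+1}$; combined with $\bD_t\le\bD_T$ this gives $b_t\le(b_0^{1/p}+\bD_T)^p/a_{T+1}^{q}$, hence $\sum_t\norm{d_t}^2/b_t\ge a_{T+1}^{q}\bD_T/(b_0^{1/p}+\bD_T)^p$. Writing $\frac{\bD_T}{(b_0^{1/p}+\bD_T)^p}=(b_0^{1/p}+\bD_T)^{1-p}-\frac{b_0^{1/p}}{(b_0^{1/p}+\bD_T)^p}$ and using $0<p\le 1$ (so $(b_0^{1/p}+\bD_T)^{1-p}\ge\bD_T^{1-p}$ and $(b_0^{1/p}+\bD_T)^{p}\ge(b_0^{1/p})^{p}=b_0$) gives $\frac{\bD_T}{(b_0^{1/p}+\bD_T)^p}\ge\bD_T^{1-p}-b_0^{1/p-1}$; since $a_{T+1}\le 1$ and $q=(1-p)/2>0$ we conclude $\sum_t\norm{d_t}^2/b_t\ge a_{T+1}^{q}\bD_T^{1-p}-b_0^{1/p-1}$. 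Chaining this with the previous display, rearranging, and taking expectations gives the claim.

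The step requiring the most care is the choice of the Young's parameter $\gamma$: it must be scaled by $a_{t+1}^{1/2}b_t$ exactly so that the error contribution collects into the $a_{t+1}^{1/2}$-weighted sum $\bE_{T,1/2}$ — the object later shown to be only polylogarithmic — rather than the plain $\bE_T$, while the residual $\norm{d_t}^2/b_t^2$ pieces stay controllable by the AdaGrad-style telescoping handled in the subsequent lemmas. Keeping the $\beta_{\max}$ factors and the factor-$2$ bookkeeping consistent so that the coefficients line up with those in the statement is the only delicate point; the remaining manipulations are routine.
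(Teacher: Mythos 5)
Your proposal is correct and follows essentially the same route as the paper's proof: the same descent-lemma start, the same split $-\langle\nabla F(x_t),d_t\rangle=-\|d_t\|^2+\langle\epsilon_t,d_t\rangle$, the same Young's parameter $\gamma=\lambda a_{t+1}^{1/2}b_t/(\eta\beta_{\max})$ producing the $\bE_{T,1/2}$ term, and the same final lower bound $\sum_t\|d_t\|^2/b_t\ge a_{T+1}^{q}\bD_T^{1-p}-b_0^{1/p-1}$. The coefficient bookkeeping checks out, so nothing further is needed.
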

\begin{proof}
Using smoothness, the update rule $x_{t+1}=x_{t}-\frac{\eta}{b_{t}}d_{t}$
and the definition of $\epsilon_{t}=d_{t}-\nabla F(x_{t})$, we obtain
\begin{align*}
F(x_{t+1}) & \leq F(x_{t})+\langle\nabla F(x_{t}),x_{t+1}-x_{t}\rangle+\frac{\beta}{2}\|x_{t+1}-x_{t}\|^{2}\\
 & =F(x_{t})-\frac{\eta\langle\nabla F(x_{t}),d_{t}\rangle}{b_{t}}+\frac{\eta^{2}\beta}{2b_{t}^{2}}\|d_{t}\|^{2}\\
 & =F(x_{t})-\frac{\eta\|d_{t}\|^{2}}{b_{t}}+\frac{\eta\langle\epsilon_{t},d_{t}\rangle}{b_{t}}+\frac{\eta^{2}\beta}{2b_{t}^{2}}\|d_{t}\|^{2}.
\end{align*}
First we use Cauchy-Schwarz to separate the stochastic gradient and
the stochastic error terms
\begin{align*}
F(x_{t+1}) & \le F(x_{t})-\frac{\eta\|d_{t}\|^{2}}{b_{t}}+\frac{\lambda_{t}\eta\|\epsilon_{t}\|^{2}}{2b_{t}}+\frac{\eta\|d_{t}\|^{2}}{2\lambda_{t}b_{t}}+\frac{\eta^{2}\beta}{2b_{t}^{2}}\|d_{t}\|^{2}.
\end{align*}
Taking
\[
\lambda_{t}=\frac{\lambda a_{t+1}^{1/2}b_{t}}{\eta\beta_{\max}}
\]
for some $\lambda>0$. We have
\begin{align*}
\frac{\eta\|d_{t}\|^{2}}{2b_{t}} & \leq F(x_{t})-F(x_{t+1})+\left(\frac{\eta^{2}\beta}{2b_{t}^{2}}+\frac{\eta}{2\lambda_{t}b_{t}}-\frac{\eta}{2b_{t}}\right)\|d_{t}\|^{2}+\frac{\lambda_{t}\eta\|\epsilon_{t}\|^{2}}{2b_{t}}\\
 & =F(x_{t})-F(x_{t+1})+\left(\frac{\eta^{2}\beta}{2b_{t}^{2}}+\frac{\eta^{2}\beta_{\max}}{2b_{t}^{2}a_{t+1}^{1/2}\lambda}-\frac{\eta}{2b_{t}}\right)\|d_{t}\|^{2}+\frac{\lambda a_{t+1}^{1/2}\|\epsilon_{t}\|^{2}}{2\beta_{\max}}\\
 & =F(x_{t})-F(x_{t+1})+\left(\frac{\eta^{2}\beta}{2}+\frac{\eta^{2}\beta_{\max}}{2a_{t+1}^{1/2}\lambda}-\frac{\eta b_{t}}{2}\right)\frac{\|d_{t}\|^{2}}{b_{t}^{2}}+\frac{\lambda a_{t+1}^{1/2}\|\epsilon_{t}\|^{2}}{2\beta_{\max}}\\
 & \leq F(x_{t})-F(x_{t+1})+\left(\frac{\eta^{2}\beta_{\max}}{2}+\frac{\eta^{2}\beta_{\max}}{2a_{t+1}^{1/2}\lambda}-\frac{\eta b_{t}}{2}\right)\frac{\|d_{t}\|^{2}}{b_{t}^{2}}+\frac{\lambda a_{t+1}^{1/2}\|\epsilon_{t}\|^{2}}{2\beta_{\max}}\\
\Rightarrow\E\left[\sum_{t=1}^{T}\frac{\|d_{t}\|^{2}}{b_{t}}\right] & \leq\frac{2}{\eta}\left(F(x_{1})-F^{*}\right)+\E\left[\sum_{t=1}^{T}\left(\eta\beta_{\max}+\frac{\eta\beta_{\max}}{a_{t+1}^{1/2}\lambda}-b_{t}\right)\frac{\|d_{t}\|^{2}}{b_{t}^{2}}\right]+\frac{\lambda\E\left[\bE_{T,1/2}\right]}{\eta\beta_{\max}}.
\end{align*}
The final step is to relate the L.H.S. to $\bD_{T}$. Recall for $\algnameold$
and $\algnamena$, we have
\[
b_{t}=(b_{0}^{1/p}+\sum_{i=1}^{t}\|d_{i}\|^{2})^{p}/a_{t+1}^{q}.
\]
Hence
\begin{align*}
\sum_{t=1}^{T}\frac{\|d_{t}\|^{2}}{b_{t}} & =\sum_{t=1}^{T}\frac{a_{t+1}^{q}\|d_{t}\|^{2}}{(b_{0}^{1/p}+\sum_{i=1}^{t}\|d_{i}\|^{2})^{p}}\geq\sum_{t=1}^{T}\frac{a_{T+1}^{q}\|d_{t}\|^{2}}{(b_{0}^{1/p}+\sum_{i=1}^{T}\|d_{i}\|^{2})^{p}}\\
 & =a_{T+1}^{q}(b_{0}^{1/p}+\sum_{i=1}^{T}\|d_{i}\|^{2})^{1-p}-a_{T+1}^{q}\frac{b_{0}^{1/p}}{(b_{0}^{1/p}+\sum_{i=1}^{T}\|d_{i}\|^{2})^{p}}\\
 & \geq a_{T+1}^{q}(b_{0}^{1/p}+\sum_{i=1}^{T}\|d_{i}\|^{2})^{1-p}-b_{0}^{1/p-1}\\
 & \geq a_{T+1}^{q}\bD_{T}^{1-p}-b_{0}^{1/p-1}.
\end{align*}
The same result holds for $\algnamenew$ by a similar proof. By using
this bound, the proof is finished.
\end{proof}

To finish section, we prove a technical result, Lemma \ref{lem:residual-bound},
which will be very useful in the proof of every algorithm. The motivation
to prove it is because we want to bound the term inside the expectation
part in Lemma \ref{lem:f-value-analysis}.
\begin{lem}
\label{lem:residual-bound} Given $A,B\geq0$. We have
\begin{itemize}
\item for $\algnameold$ and $\algnamena$
\[
\sum_{t=1}^{T}\left(A+\frac{B}{a_{t+1}^{1/2}}-b_{t}\right)\frac{\|d_{t}\|^{2}}{b_{t}^{2}}\leq\frac{\left(A+B\right)^{\frac{1}{p}-1}}{1-p}\log\frac{A+a_{T+1}^{-1/2}B}{b_{0}}.
\]
\item for $\algnamenew$
\[
\sum_{t=1}^{T}\left(A+\frac{B}{a_{t}^{1/2}}-b_{t}\right)\frac{\|d_{t}\|^{2}}{b_{t}^{2}}\leq\frac{\left(A+B\right)^{\frac{1}{p}-1}}{1-p}\log\frac{A+a_{T+1}^{-1/2}B}{b_{0}}.
\]
\end{itemize}
\end{lem}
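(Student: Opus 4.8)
I plan to prove both bulleted inequalities together, writing $c_{t}$ for the coefficient of $\|d_{t}\|^{2}/b_{t}^{2}$ — so $c_{t}=A+a_{t+1}^{-1/2}B$ for $\algnameold$, $\algnamena$ and $c_{t}=A+a_{t}^{-1/2}B$ for $\algnamenew$ — and abbreviating $S_{t}:=b_{0}^{1/p}+\bD_{t}=b_{0}^{1/p}+\sum_{i\le t}\|d_{i}\|^{2}$, so that $b_{t}=S_{t}^{p}/a_{\ast}^{q}$ with $a_{\ast}=a_{t+1}$ (resp.\ $a_{t}$), and recalling $p+2q=1$ and that $a_{s}\le 1$ for every index $s$ in all three algorithms. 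If $A=B=0$ the left side is $\le 0$ and the right side is $0$, so I may assume $A+B>0$. The first step is to throw away the easy terms: whenever $b_{t}\ge c_{t}$ the summand $(c_{t}-b_{t})\|d_{t}\|^{2}/b_{t}^{2}$ is nonpositive, so it suffices to bound $\sum_{t\in\mathcal A}(c_{t}-b_{t})\|d_{t}\|^{2}/b_{t}^{2}$ over the ``active'' set $\mathcal A=\{t:b_{t}<c_{t}\}$.

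The crux — and the place where this analysis replaces the bounded–function–value device of STORM+ — is a pointwise cap on $S_{t}$ valid on $\mathcal A$. From $b_{t}<c_{t}$ and $b_{t}=S_{t}^{p}/a_{\ast}^{q}$ we get $S_{t}^{p}<c_{t}a_{\ast}^{q}\le c_{t}$ (using $a_{\ast}\le 1$), and pulling $a_{\ast}\le1$ out of the $B$-term once more, $S_{t}<(A+B)^{1/p}a_{\ast}^{-1/2}$, i.e.\ $S_{t}a_{\ast}^{1/2}<(A+B)^{1/p}$. I will also record the telescoping relation coming from monotonicity of $(a_{s})$: since $b_{t}^{1/p}=S_{t}a_{\ast}^{-q/p}$ and $a_{\mathrm{prev}}^{-q/p}\le a_{\ast}^{-q/p}$, subtracting gives $\|d_{t}\|^{2}=S_{t}-S_{t-1}\le a_{\ast}^{q/p}\big(b_{t}^{1/p}-b_{t-1}^{1/p}\big)$ (this is exactly where the choice $a_{t+1}$ vs.\ $a_{t}$ in the two algorithm families enters, but the argument is otherwise identical).

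Next I bound each active term. Using $c_{t}\le(A+B)a_{\ast}^{-1/2}$, the identity $b_{t}^{2}=S_{t}^{2p}/a_{\ast}^{2q}$, and the cap $S_{t}a_{\ast}^{1/2}<(A+B)^{1/p}$ raised to the power $1-2p\ge0$, one obtains $c_{t}a_{\ast}^{2q}\le(A+B)^{1/p-1}/S_{t}^{1-p}\cdot S_{t}^{p}\,/\,S_{t}^{p}$, more precisely $c_{t}a_{\ast}^{2q}\le(A+B)^{1/p-1}S_{t}^{2p-1}$, so that together with the concavity inequality $\tfrac{S_{t}-S_{t-1}}{S_{t}^{p}}\le\tfrac{S_{t}^{1-p}-S_{t-1}^{1-p}}{1-p}$ (this is where the $\tfrac1{1-p}$ comes from) and, crucially, the sharpened use of the $-b_{t}$ term rather than simply discarding it, one arrives at a per-term estimate of the form
\[
(c_{t}-b_{t})\frac{\|d_{t}\|^{2}}{b_{t}^{2}}\ \le\ \frac{(A+B)^{1/p-1}}{1-p}\Big(\log X_{t}-\log X_{t-1}\Big),
\]
for a suitable nondecreasing quantity $X_{t}$ with $X_{0}=b_{0}$ and $X_{T}\le A+a_{T+1}^{-1/2}B$ (one may take $X_{t}$ built from $\min(b_{t},c_{t})$, floored at $b_{0}$, so that the active-set cap $S_{t}a_{\ast}^{1/2}<(A+B)^{1/p}$ together with $a_{\ast}\ge a_{T+1}$ forces $X_{T}\le c_{T}\le A+a_{T+1}^{-1/2}B$). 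Summing over $t$ and telescoping then gives $\le\frac{(A+B)^{1/p-1}}{1-p}\log\frac{A+a_{T+1}^{-1/2}B}{b_{0}}$, which is the claim.

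The main obstacle is precisely the per-term inequality with the stated constant $\frac{(A+B)^{1/p-1}}{1-p}$: the naive route (discard $-b_{t}$, bound $c_{t}a_{\ast}^{2q}\le A+B$, and telescope $\sum_{t\in\mathcal A}\tfrac{\|d_{t}\|^{2}}{S_{t}}\le\log\tfrac{S_{T^{\ast}}}{S_{0}}$) only produces a constant of order $\tfrac1p$ rather than $\tfrac1{1-p}$ and only yields $(A+B)^{1/p}a_{T+1}^{-1/2}$ in place of $(A+a_{T+1}^{-1/2}B)^{1/p}$ inside the logarithm. Getting the sharp statement requires retaining $c_{t}-b_{t}$ (e.g.\ via the comparison $(c_{t}-b_{t})\tfrac{\|d_{t}\|^{2}}{b_{t}^{2}}\le\tfrac1p\int_{b_{t-1}}^{b_{t}}(c_{t}-s)\,s^{1/p-3}\,ds$, using $\|d_{t}\|^{2}\le a_{\ast}^{q/p}(b_{t}^{1/p}-b_{t-1}^{1/p})$, $a_{\ast}^{q/p}\le1$, $c_{t}-b_{t}\le c_{t}-s$ and $b_{t}^{-2}\le s^{-2}$ for $s\le b_{t}$), whose antiderivative $\tfrac{c_{t}}{1-2p}s^{1/p-2}-\tfrac1{1-p}s^{1/p-1}$ must then be combined with the active-set cap on $b_{t}$; carefully carrying the $p$-dependent constants through this integral comparison and through the final change of variables from $b_{t}$-scale to the $A+a_{T+1}^{-1/2}B$ scale is the delicate part of the proof, and the case $p=\tfrac12$ (where $1-2p=0$) must be treated as the limiting logarithmic case.
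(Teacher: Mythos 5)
Your skeleton is essentially the paper's: drop the terms with $b_{t}\geq c_{t}$, telescope $\|d_{t}\|^{2}=S_{t}-S_{t-1}$ through $b_{t}^{1/p}=S_{t}^{1/p}a_{\ast}^{-q/p}$ using monotonicity of $a_{s}$, bound the resulting coefficient pointwise by a $t$-independent constant, and telescope a logarithm up to the last active index $s$, where $b_{s}\leq A+a_{s+1}^{-1/2}B\leq A+a_{T+1}^{-1/2}B$ closes the argument. Your cap $S_{t}a_{\ast}^{1/2}<(A+B)^{1/p}$, equivalently $a_{\ast}^{1/2}b_{t}\leq A+B$, is also exactly the constraint the paper exploits.

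The gap is in the one step you flag as delicate, and the route you sketch for it would fail. You discard the factor $a_{\ast}^{q/p}$ at the outset via $a_{\ast}^{q/p}\leq1$ (both in your ``naive'' computation and inside the integral comparison $\frac1p\int_{b_{t-1}}^{b_{t}}(c_{t}-s)s^{1/p-3}\,ds$). After that the sharp constant is unrecoverable: on the active set one only knows $s\leq b_{t}\leq(A+B)a_{\ast}^{-1/2}$ and $c_{t}\leq(A+B)a_{\ast}^{-1/2}$, so $(c_{t}-s)s^{1/p-2}$ can be as large as order $(A+B)^{1/p-1}a_{\ast}^{-(1-p)/(2p)}$, which blows up as $a_{\ast}\to0$; the subtracted piece $-\frac{1}{1-p}s^{1/p-1}$ of your antiderivative does not cancel this (the ratio of the two derivative terms is $c_{t}/s\geq c_{t}/b_{t}\geq1$ on the active set, so the positive piece dominates), and $\sum_{t}\bigl(F_{t}(b_{t})-F_{t}(b_{t-1})\bigr)$ does not telescope because $c_{t}$ depends on $t$. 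The paper's resolution is precisely to keep $a_{\ast}^{q/p}$: since $p+2q=1$ one has $\frac{q}{p}-\frac12=\frac12\bigl(\frac1p-2\bigr)$, hence $a_{\ast}^{q/p-1/2}b_{t}^{1/p-2}=\bigl(a_{\ast}^{1/2}b_{t}\bigr)^{1/p-2}$, and after the rewrite $c_{t}-b_{t}\leq a_{\ast}^{-1/2}\bigl(A+B-a_{\ast}^{1/2}b_{t}\bigr)$ the whole coefficient is $(m-x)x^{n}$ with $m=A+B$, $x=a_{\ast}^{1/2}b_{t}\in[0,m]$ and $n=\frac1p-2$, whose maximum is at most $\frac{p}{1-p}(A+B)^{1/p-1}$ by Lemma \ref{lem:inequality-poly-g}; combined with $1-(b_{t-1}/b_{t})^{1/p}\leq\frac1p\log(b_{t}/b_{t-1})$ this yields the stated constant uniformly in $p\in(0,\frac12]$, with no singular case at $p=\frac12$. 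Your fallback bound (constant $\frac1p$ and $(A+B)^{1/p}a_{T+1}^{-1/2}$ inside the logarithm) is correct but strictly weaker than the lemma as stated.
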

\begin{proof}
In $\algnameold$ and $\algnamena$, we have 
\[
b_{t}=(b_{0}^{1/p}+\sum_{i=1}^{t}\|d_{i}\|^{2})^{p}/a_{t+1}^{q}
\]
where $p+2q=1$. Define the set 
\[
S=\left\{ t\in\left[T\right]:b_{t}\leq A+\frac{B}{a_{t+1}^{1/2}}\right\} 
\]
and let $s=\max S$. We know
\begin{align*}
\sum_{t=1}^{T}\left(A+\frac{B}{a_{t+1}^{1/2}}-b_{t}\right)\frac{\|d_{t}\|^{2}}{b_{t}^{2}} & \leq\sum_{t\in S}\left(A+\frac{B}{a_{t+1}^{1/2}}-b_{t}\right)\frac{\|d_{t}\|^{2}}{b_{t}^{2}}\\
 & =\sum_{t\in S}\left(A+\frac{B}{a_{t+1}^{1/2}}-b_{t}\right)\frac{a_{t+1}^{q/p}b_{t}^{1/p}-a_{t}^{q/p}b_{t-1}^{1/p}}{b_{t}^{2}}\\
 & \overset{(a)}{\leq}\sum_{t\in S}\left(A+\frac{B}{a_{t+1}^{1/2}}-b_{t}\right)a_{t+1}^{q/p}\frac{b_{t}^{1/p}-b_{t-1}^{1/p}}{b_{t}^{2}}\\
 & =\sum_{t\in S}\left(a_{t+1}^{1/2}A+B-a_{t+1}^{1/2}b_{t}\right)a_{t+1}^{\frac{q}{p}-\frac{1}{2}}b_{t}^{\frac{1}{p}-2}\frac{b_{t}^{1/p}-b_{t-1}^{1/p}}{b_{t}^{1/p}}
\end{align*}
where $(a)$ is by $a_{t}\geq a_{t+1}$. Note that
\begin{align*}
\left(a_{t+1}^{1/2}A+B-a_{t+1}^{1/2}b_{t}\right)a_{t+1}^{\frac{q}{p}-\frac{1}{2}}b_{t}^{\frac{1}{p}-2} & \overset{(b)}{\leq}\left(A+B-a_{t+1}^{1/2}b_{t}\right)a_{t+1}^{\frac{q}{p}-\frac{1}{2}}b_{t}^{\frac{1}{p}-2}\\
 & \overset{(c)}{=}\left(A+B-a_{t+1}^{1/2}b_{t}\right)a_{t+1}^{\frac{1}{2p}-1}b_{t}^{\frac{1}{p}-2}\\
 & =\left(A+B-a_{t+1}^{1/2}b_{t}\right)\left(a_{t+1}^{1/2}b_{t}\right)^{\frac{1}{p}-2}\\
 & \overset{(d)}{\leq}\left(\frac{A+B}{\frac{1}{p}-1}\right)^{\frac{1}{p}-1}\left(\frac{1}{p}-2\right)^{\frac{1}{p}-2}\\
 & \leq\frac{p}{1-p}\left(A+B\right)^{\frac{1}{p}-1}
\end{align*}
where $(b)$ holds by $a_{t+1}\leq1$, $(c)$ is due to $\frac{q}{p}-\frac{1}{2}=\frac{2q-p}{2p}=\frac{1-2p}{2p}=\frac{1}{2p}-1$
by $p+2q=1$ and $(d)$ is by applying Lemma \ref{lem:inequality-poly-g}.
Thus we know
\begin{align*}
\sum_{t=1}^{T}\left(A+\frac{B}{a_{t+1}^{1/2}}-b_{t}\right)\frac{\|d_{t}\|^{2}}{b_{t}^{2}} & \leq\frac{p}{1-p}\left(A+B\right)^{\frac{1}{p}-1}\sum_{t\in S}\frac{b_{t}^{1/p}-b_{t-1}^{1/p}}{b_{t}^{1/p}}\\
 & \overset{(e)}{\leq}\frac{\left(A+B\right)^{\frac{1}{p}-1}}{1-p}\sum_{t\in S}\log\frac{b_{t}}{b_{t-1}}\\
 & \overset{(f)}{\leq}\frac{\left(A+B\right)^{\frac{1}{p}-1}}{1-p}\sum_{t=1}^{s}\log\frac{b_{t}}{b_{t-1}}\\
 & =\frac{\left(A+B\right)^{\frac{1}{p}-1}}{1-p}\log\frac{b_{s}}{b_{0}}\\
 & \overset{(g)}{\leq}\frac{\left(A+B\right)^{\frac{1}{p}-1}}{1-p}\log\frac{A+a_{T+1}^{-1/2}B}{b_{0}}
\end{align*}
where $(e)$ is by taking $x=\left(b_{t}/b_{t-1}\right)^{1/p}$ in
$1-\frac{1}{x}\leq\log x$, $(f)$ is because $b_{t}$ is increasing.
The reason $(g)$ is true is that $b_{s}\leq A+a_{s+1}^{-1/2}B\leq A+a_{T+1}^{-1/2}B$
where the first inequality is due to $s\in S$ and the second one
holds by that $a_{t}^{-1/2}$ is increasing. Now we finish the proof
for $\algnameold$ and $\algnamena$. The proof for $\algnamenew$
is essentially the same hence omitted here.
\end{proof}

\section{Analysis of $\protect\algnamenew$ for general $p$\label{sec:Algorithm-MS}}

In this section, we give a general analysis for our Algorithm $\algnamenew$.
We will see that $p=\frac{1}{2}$ is a special corner case. First
we recall the choices of $a_{t}$ and $b_{t}$
\begin{align*}
a_{t+1} & =(1+\sum_{i=1}^{t}\left\Vert \nabla f(x_{i},\xi_{i})-\nabla f(x_{i},\xi_{i+1})\right\Vert ^{2}/a_{0}^{2})^{-2/3},\\
b_{t} & =(b_{0}^{1/p}+\sum_{i=1}^{t}\|d_{i}\|^{2})^{p}/a_{t}^{q}
\end{align*}
where $p,q$ satisfy $p+2q=1,p\in\left[\frac{3-\sqrt{7}}{2},\frac{1}{2}\right].$
$a_{0}>0$ and $b_{0}>0$ are absolute constants. Naturally, we have
$a_{1}=1$. We will finally prove the following theorem.
\begin{thm}
\label{thm:MS-rate}Under the assumptions 1-3 and 5, by defining $\hp=\frac{3(1-p)}{4-p}\in\left[\frac{3}{7},\sqrt{7}-2\right]$,
we have
\begin{align*}
 & \E\left[\bH_{T}^{\hp}\right]\\
\leq & 4\begin{cases}
\left(\frac{2K_{1}}{K_{4}}\right)^{\frac{\hp}{1-2p}}+\left(\left(\frac{2K_{2}}{K_{4}}\right)^{\frac{\hp}{1-2p}}+\left(2K_{4}\right)^{\frac{\hp}{2p}}\right)\left(1+\frac{2\sigma^{2}T}{a_{0}^{2}}\right)^{\frac{\hp}{3}} & p\neq\frac{1}{2}\\
\left(2K_{1}+2\left(K_{2}+\frac{K_{4}}{3}\right)\log\left(1+\frac{2\sigma^{2}T}{a_{0}^{2}}\right)+\frac{2K_{4}}{\hp}\log\frac{4K_{4}}{b_{0}^{2\hp}}\right)^{\hp}\left(1+\frac{2\sigma^{2}T}{a_{0}^{2}}\right)^{\frac{\hp}{3}}+b_{0}^{2\hp} & p=\frac{1}{2}
\end{cases}\\
 & +4\left(K_{5}+\left(K_{6}+\frac{K_{7}}{3}\right)\log\left(1+\frac{2\sigma^{2}T}{a_{0}^{2}}\right)+K_{7}\log\frac{K_{8}+K_{9}}{b_{0}}\right)^{\frac{\hp}{1-p}}\left(1+\frac{2\sigma^{2}T}{a_{0}^{2}}\right)^{\frac{\hp}{3}},
\end{align*}
where $K_{i},i\in\left[9\right]$ are some constants only depending
on $a_{0},b_{0},\eta,\sigma,\hS,\beta,p,q,F(x_{1})-F^{*}$. To simplify
our final bound, we only indicate the dependency on $\beta$ and $F(x_{1})-F^{*}$.
\[
\E\left[\bH_{T}^{\hp}\right]=O\left(\left((F(x_{1})-F^{*})^{\frac{\hp}{1-p}}+\beta^{\frac{\hp}{p}}\log^{\frac{\hp}{1-p}}\beta+\beta^{\frac{\hp}{p}}\log^{\frac{\hp}{1-p}}\left(1+\sigma^{2}T\right)\right)(1+\sigma^{2}T)^{\frac{\hp}{3}}\right).
\]
\end{thm}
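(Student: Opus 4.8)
The plan is to run the three-stage argument of the $p=\tfrac12$ proof sketch of Section~\ref{sec:Proof-sketch} with the exponent $p$ kept symbolic. By Lemma~\ref{lem:decomposition} it suffices to bound $\E[\bE_T^{\hp}]$ and $\E[\bD_T^{\hp}]$, and the core difficulty is that neither $\bE_T$ nor $\bD_T$ can be controlled on its own: only the $a$-reweighted quantities $\bE_{T,1/2}$ and $a_{T+1}^{q}\bD_T^{1-p}$ admit clean bounds that are $T$-independent up to $\log(1+\sigma^2T)$ factors and a $(1+\sigma^2T)^{\hp/3}$ prefactor. So the route is: (i) a variance-reduction bound on $\E[\bE_{T,1/2}]$ in terms of $\bD_T$; (ii) a function-value bound on $\E[a_{T+1}^{q}\bD_T^{1-p}]$ in terms of $\bE_{T,1/2}$; (iii) decouple the two coupled inequalities; (iv) trade the $a$-reweighting for true moments of $\bH_T$ via H\"older, fed by the variance input $\E[a_{T+1}^{-3/2}]=\E[1+\tH_T/a_0^2]=O(1+\sigma^2T)$.

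\emph{Steps (i)--(ii).} For (i), divide the recursion of Lemma~\ref{lem:vr-inequality} by $a_{t+1}^{1/2}$ and sum. The telescoping part $\sum_t(\|\epsilon_t\|^2-\|\epsilon_{t+1}\|^2)/a_{t+1}^{1/2}$ is handled by Abel summation, using that the increments of $a_{t+1}^{-1/2}=(1+\tH_t/a_0^2)^{1/3}$ are $O(\hS^2/a_0^2)$ by Assumption~5; the term $\sum_t\|Z_{t+1}\|^2/a_{t+1}^{1/2}$ is treated by replacing $a_{t+1}^{-1/2}$ by a constant multiple of $a_t^{-1/2}$ (again legitimate since $\tH$ has bounded increments), then applying Lemma~\ref{lem:smooth-Z} and substituting $b_t=(b_0^{1/p}+\bD_t)^p/a_t^q$, which, using $2q=1-p$ and $a_t\le1$, reduces it to $\sum_t\|d_t\|^2/(b_0^{1/p}+\bD_t)^{2p}\le\tfrac{1}{1-2p}(b_0^{1/p}+\bD_T)^{1-2p}$ (a $\log(1+\bD_T/b_0^2)$-type bound when $p=\tfrac12$); finally $\sum_t a_{t+1}^{-1/2}M_{t+1}$ fails to be a martingale since $a_{t+1}\in\F_{t+1}$, so one subtracts the surrogate $a_t^{-1/2}N_{t+1}$ (a genuine martingale difference, as in the sketch) and absorbs the residual by Cauchy--Schwarz. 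This gives $\E[\bE_{T,1/2}]\le K_1+K_2\,\E[\log(1+\tH_T/a_0^2)]+K_4\,\E[\bD_T^{1-2p}]$ (last term a log when $p=\tfrac12$). For (ii), apply Lemma~\ref{lem:f-value-analysis} with a free $\lambda$, bound its residual sum by Lemma~\ref{lem:residual-bound} with $A=\eta\beta_{\max}$, $B=\eta\beta_{\max}/\lambda$, and substitute the bound from (i); tuning $\lambda$ (its order chosen against $\hS$) produces $\E[a_{T+1}^{q}\bD_T^{1-p}]\le K_5+K_6\,\E[\log(1+\tH_T/a_0^2)]+K_7\,\E[\log\tfrac{K_8+K_9(1+\tH_T/a_0^2)^{1/3}}{b_0}]+c\,\E[\bD_T^{1-2p}]$.

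\emph{Steps (iii)--(iv).} Since $1-2p<1-p$ the self-reference is sublinear: write $\bD_T^{1-2p}=a_{T+1}^{-q(1-2p)/(1-p)}\,(a_{T+1}^{q}\bD_T^{1-p})^{(1-2p)/(1-p)}$ and apply H\"older with exponents $(\tfrac{1-p}{p},\tfrac{1-p}{1-2p})$; the power of $a_{T+1}^{-1}$ that appears is $q(1-2p)/p=\tfrac{(1-p)(1-2p)}{2p}$, which is $\le\tfrac32$ \emph{exactly} when $p\ge\tfrac{3-\sqrt7}{2}$ (i.e.\ $2p^2-6p+1\le0$), and $\E[a_{T+1}^{-c}]\le(1+\E[\tH_T]/a_0^2)^{2c/3}=O((1+\sigma^2T)^{2c/3})$ for any $c\le\tfrac32$ by concavity. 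Substituting into (ii) yields $Y\le a+bY^{\theta}$ with $Y=\E[a_{T+1}^{q}\bD_T^{1-p}]$ and $\theta=\tfrac{1-2p}{1-p}<1$, so $Y\le\max\{2a,(2b)^{1/(1-\theta)}\}$ with $\tfrac1{1-\theta}=\tfrac{1-p}{p}$; feeding the resulting bound on $\E[\bD_T^{1-2p}]$ back into (i) bounds $\E[\bE_{T,1/2}]$ similarly (all $\E[\log(1+\tH_T/a_0^2)]\le\log(1+\E[\tH_T]/a_0^2)=O(\log(1+\sigma^2T))$ by Jensen, $\E[\tH_T]=O(\sigma^2T)$). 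For (iv), write $\bD_T^{\hp}=a_{T+1}^{-q\hp/(1-p)}(a_{T+1}^{q}\bD_T^{1-p})^{\hp/(1-p)}$ and apply H\"older with exponents $(\tfrac{4-p}{1-p},\tfrac{1-p}{\hp})$: the whole point of $\hp=\tfrac{3(1-p)}{4-p}$ is that $q\cdot\tfrac{\hp}{1-p}\cdot\tfrac{4-p}{1-p}=\tfrac32$, so the factor appearing is precisely $\E[a_{T+1}^{-3/2}]=O(1+\sigma^2T)$, giving $\E[\bD_T^{\hp}]=O((1+\sigma^2T)^{\hp/3})$ times a polynomial in the constants and $\log(1+\sigma^2T)$. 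Likewise $\bE_T\le a_{T+1}^{-1/2}\bE_{T,1/2}$ (monotonicity of $a$), so $\bE_T^{\hp}\le a_{T+1}^{-\hp/2}\bE_{T,1/2}^{\hp}$ and H\"older with exponents $(\tfrac1{1-\hp},\tfrac1\hp)$ needs only $\tfrac{\hp}{2(1-\hp)}\le\tfrac32$, true since $\hp\le\sqrt7-2<\tfrac34$. Combining via Lemma~\ref{lem:decomposition} and re-collecting constants gives the two displayed cases; the simplified $O(\cdot)$ form then follows because $F(x_1)-F^{*}$ enters (via Lemma~\ref{lem:f-value-analysis}) with exponent $\tfrac{\hp}{1-p}$ and the residual exponent $\tfrac1p-1$ of Lemma~\ref{lem:residual-bound}, raised to $\tfrac{\hp}{1-p}$, becomes $\tfrac{\hp}{p}$ on $\beta$.

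\emph{Main obstacle.} The crux is Step~(i): since $a_{t+1}$ uses the fresh sample $\xi_{t+1}$ and hence lies in $\F_{t+1}$ rather than $\F_t$, neither $a_{t+1}^{-1/2}\|Z_{t+1}\|^2$ nor $a_{t+1}^{-1/2}M_{t+1}$ can be conditioned on cleanly; one must both (a) swap $a_{t+1}$ for $a_t$ up to a controlled multiplicative slack --- exactly where Assumption~5 (bounded gradient differences, hence bounded increments of $\tH$) is used --- and (b) introduce the martingale surrogate $N_{t+1}$ and show its correction is absorbable. Keeping the exponents in (a), the self-reference exponent $\theta=\tfrac{1-2p}{1-p}$ of Step~(iii), and the two H\"older exponents of Step~(iv) all at or below $\tfrac32$ simultaneously is precisely what pins down the admissible range $p\in[\tfrac{3-\sqrt7}{2},\tfrac12]$ and the value of $\hp$.
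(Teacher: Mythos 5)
Your high-level architecture (bound $\E[\bE_{T,1/2}]$ and $\E[a_{T+1}^{q}\bD_{T}^{1-p}]$ against each other, then trade the $a$-weights for true moments via H\"older using $\E[a_{T+1}^{-3/2}]=O(1+\sigma^{2}T)$) matches the paper's, and your exponent arithmetic for the H\"older steps and for the admissible range of $p$ is right. But two of your steps contain genuine gaps.

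First, Step (i). After Abel summation the correction term is $\sum_{t}(a_{t+1}^{-1/2}-a_{t}^{-1/2})\|\epsilon_{t}\|^{2}$, and knowing only that each increment is $O(\hS^{2}/a_{0}^{2})$ gives $O(\hS^{2}/a_{0}^{2})\,\bE_{T}$ --- an \emph{unweighted} error sum, of order $\sigma^{2}T$, which cannot be absorbed into the left-hand side $\bE_{T,1/2}$. The sharper bound $a_{t+1}^{-1/2}-a_{t}^{-1/2}\le\frac{4\hS^{2}}{a_{0}^{2}}a_{t+1}$ yields $\frac{\hS^{2}}{a_{0}^{2}}\bE_{T,1}$, absorbable only when $\hS^{2}/a_{0}^{2}$ is a small constant, which is not assumed. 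The paper's device is the random time $\tau$ (the last $t$ with $a_{t}\ge K_{-1}$): for $t>\tau$ one has $a_{t+1}^{-1}-a_{t}^{-1}\le 2/9$, so the late increments are at most $\frac{2}{9}a_{t+1}^{1-2\ell}$ and absorb into the left-hand side with coefficient $2/3$; for $t\le\tau$ the contribution is $\frac{12\hS^{2}}{a_{0}^{2}}\bE_{\tau,3/2-2\ell}$, which is bounded separately (Lemma \ref{lem:MS-E-tau}) using that $\tau+1$ is a stopping time and the optional sampling theorem for the martingale surrogate. This construction is essential for arbitrary $a_{0},\hS$, and even more so for $\ell>\frac14$ (i.e.\ $p<\frac12$), where the early-time weight $a_{t+1}^{3/2-2\ell}$ exceeds $a_{t+1}$ and only the lower bound $a_{t+1}\ge K_{0}$ on $t\le\tau$ lets one compare $\bE_{\tau,3/2-2\ell}$ with $\bE_{\tau,1}$. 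It is entirely absent from your plan.

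Second, Steps (ii)--(iii). You place a self-reference $c\,\E[\bD_{T}^{1-2p}]$ inside the bound on $Y=\E[a_{T+1}^{q}\bD_{T}^{1-p}]$ and resolve $Y\le a+bY^{\theta}(1+\sigma^{2}T)^{(1-2p)/3}$ \emph{before} the final H\"older step. The fixed point is $Y\lesssim(1+\sigma^{2}T)^{\frac{(1-2p)(1-p)}{3p}}$, and after $\E[\bD_{T}^{\hp}]\le Y^{\hp/(1-p)}(1+\sigma^{2}T)^{\hp/3}$ the total exponent becomes $\frac{\hp}{3}\cdot\frac{1-p}{p}$, strictly larger than the claimed $\frac{\hp}{3}$ for every $p<\frac12$ (at $p=\frac13$: $(1+\sigma^{2}T)^{4/11}$ versus the theorem's $(1+\sigma^{2}T)^{2/11}$). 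The paper avoids this entirely: its $\bD$ bound (Lemma \ref{lem:MS-D-final-bound}) has \emph{no} self-reference, because the $\bD$-dependent part of the $\E[\bE_{T,1/2}]$ bound is kept in the form of the weighted sum $\sum_{t}(\cdot)\,\|d_{t}\|^{2}/(a_{t}^{1/2}b_{t}^{2})$ and cancelled against the $-b_{t}$ term of Lemma \ref{lem:f-value-analysis} via Lemma \ref{lem:residual-bound}, leaving only logarithms. The one surviving self-reference lives in the $\bE$ bound (Lemma \ref{lem:MS-E-final-bound}) and is resolved \emph{after} H\"older, inside the case split $\E[\bD_{T}^{\hp}]\lessgtr\E[\bE_{T}^{\hp}]$, where solving $\E[\bE_{T}^{\hp}]\le(2K_{4})^{\hp}\E^{1-2p}[\bE_{T}^{\hp}](1+\sigma^{2}T)^{2p\hp/3}$ divides the exponent by $2p$ and lands exactly on $\frac{\hp}{3}$. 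Without these two fixes your argument does not deliver the stated rate.
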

\begin{rem}
For all $i\in\left[9\right]$, the constant $K_{i}$ will be defined
in the proof that follows.
\end{rem}
By using the concavity of $x^{\hp}$, we state the following convergence
theorem without proof.
\begin{thm}
Under the assumptions 1-3 and 5, by defining $\hp=\frac{3(1-p)}{4-p}\in\left[\frac{3}{7},\sqrt{7}-2\right]$,
we have
\begin{align*}
 & \E\left[\|\nabla F(\xout)\|^{2\hp}\right]\\
 & \quad=O\left((F(x_{1})-F^{*})^{\frac{\hp}{1-p}}+\beta^{\frac{\hp}{p}}\log^{\frac{\hp}{1-p}}\beta+\beta^{\frac{\hp}{p}}\log^{\frac{\hp}{1-p}}\left(1+\sigma^{2}T\right)\right)\left(\frac{1}{T^{\hp}}+\frac{\sigma^{2\hp/3}}{T^{2\hp/3}}\right).
\end{align*}
\end{thm}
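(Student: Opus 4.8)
This statement follows directly from Theorem~\ref{thm:MS-rate}, which I take as given; the remaining reduction is short. The plan is: first I would rewrite $\E[\|\nabla F(\xout)\|^{2\hp}]$ as the expectation of a uniform average of powers of the squared gradient norms along the trajectory; then use the concavity of $y\mapsto y^{\hp}$ together with Jensen's inequality to pass from this average to $T^{-\hp}\bH_T^{\hp}$; then substitute the simplified bound on $\E[\bH_T^{\hp}]$ from Theorem~\ref{thm:MS-rate}; and finally split the factor $T^{-\hp}(1+\sigma^2 T)^{\hp/3}$ into the two announced terms.

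In more detail, since the output is $\xout=x_t$ with $t\sim\mathrm{Uniform}([T])$ drawn independently of the iterates, conditioning on the trajectory gives
\begin{align*}
\E\left[\|\nabla F(\xout)\|^{2\hp}\right]=\E\left[\frac{1}{T}\sum_{t=1}^{T}\left(\|\nabla F(x_t)\|^{2}\right)^{\hp}\right].
\end{align*}
Because $\hp=\frac{3(1-p)}{4-p}\in[\frac{3}{7},\sqrt{7}-2]\subset(0,1)$, the map $y\mapsto y^{\hp}$ is concave on $[0,\infty)$, so Jensen's inequality applied to the uniform average over $t\in[T]$ gives, for every realization of the trajectory,
\begin{align*}
\frac{1}{T}\sum_{t=1}^{T}\left(\|\nabla F(x_t)\|^{2}\right)^{\hp}\le\left(\frac{1}{T}\sum_{t=1}^{T}\|\nabla F(x_t)\|^{2}\right)^{\hp}=T^{-\hp}\bH_T^{\hp}.
\end{align*}
Taking expectations, $\E[\|\nabla F(\xout)\|^{2\hp}]\le T^{-\hp}\E[\bH_T^{\hp}]$. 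I would then plug in the simplified bound of Theorem~\ref{thm:MS-rate},
\begin{align*}
\E[\bH_T^{\hp}]=O\!\left(\left((F(x_1)-F^{*})^{\frac{\hp}{1-p}}+\beta^{\frac{\hp}{p}}\log^{\frac{\hp}{1-p}}\beta+\beta^{\frac{\hp}{p}}\log^{\frac{\hp}{1-p}}(1+\sigma^2 T)\right)(1+\sigma^2 T)^{\frac{\hp}{3}}\right),
\end{align*}
and use that, since $\hp/3<1$, $(1+\sigma^2 T)^{\hp/3}\le 1+(\sigma^2 T)^{\hp/3}$, hence $T^{-\hp}(1+\sigma^2 T)^{\hp/3}\le T^{-\hp}+\sigma^{2\hp/3}T^{-2\hp/3}=\frac{1}{T^{\hp}}+\frac{\sigma^{2\hp/3}}{T^{2\hp/3}}$. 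Combining these yields exactly the claimed bound.

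Since this reduction is routine, I expect no real obstacle in it; all the work sits in Theorem~\ref{thm:MS-rate} itself, which I assumed. Were one to prove Theorem~\ref{thm:MS-rate} from scratch (along the lines of Section~\ref{sec:Proof-sketch}), the main difficulty would be the variance-reduction bound on $\bE_T$: because $a_{t+1}$ is built from the gradient differences and is therefore $\F_{t+1}$-measurable, the cross term $M_{t+1}$ in Lemma~\ref{lem:vr-inequality} is not a martingale-difference, so one must subtract the surrogate $N_{t+1}$ (the same expression with $a_{t+1}$ replaced by $a_t$) and bound $\sum_t(a_{t+1}^{-1/2}M_{t+1}-a_t^{-1/2}N_{t+1})$ by Cauchy--Schwarz in terms of $\sum_t(a_{t+1}^{-1/2}-a_t^{-1/2})\|\epsilon_t\|^2$, $\sum_t a_{t+1}^{-1/2}\|Z_{t+1}\|^2$ and $\sum_t a_t^{3/2}\|\nabla f(x_{t+1},\xi_{t+1})-\nabla F(x_{t+1})\|^2$, each controlled via Lemma~\ref{lem:smooth-Z} and the telescoping inequalities of Section~\ref{sec:Basic-inequalities}. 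This produces a bound on $\E[\bE_{T,1/2}]$; combining it with the function-value bound on $\E[a_{T+1}^{q}\bD_T^{1-p}]$ from Lemma~\ref{lem:f-value-analysis} (with $\lambda$ tuned) and Lemma~\ref{lem:residual-bound}, one merges the two via Holder's inequality, the exponent $\hp=\frac{3(1-p)}{4-p}$ being chosen precisely so the resulting moments close against $\E[a_{T+1}^{-3/2}]=\E[1+\tH_T/a_0^2]\le 1+2\sigma^2 T/a_0^2$. For $p\neq\frac12$ the AdaGrad-type telescoping $\sum_t\|d_t\|^2/(b_0^{1/p}+\bD_t)^{2p}$ is polynomial rather than logarithmic in $\bD_T$, so one must additionally solve a self-referential inequality for $\bD_T$ (and for $\bE_T$), which is the source of the exponents $\frac{\hp}{1-2p}$, $\frac{\hp}{2p}$ and $\frac{\hp}{1-p}$ appearing in Theorem~\ref{thm:MS-rate}; a final application of the concavity of $x\mapsto x^{\hp}$ recovers the stated $O$-form, after which the computation above delivers the convergence theorem.
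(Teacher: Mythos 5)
Your reduction is correct and is exactly the argument the paper intends: the paper states this theorem "without proof" as a consequence of Theorem \ref{thm:MS-rate} via the concavity of $x^{\hp}$, which is precisely your Jensen step yielding $\E[\|\nabla F(\xout)\|^{2\hp}]\le T^{-\hp}\E[\bH_T^{\hp}]$, followed by the split $(1+\sigma^{2}T)^{\hp/3}\le 1+(\sigma^{2}T)^{\hp/3}$. No gaps.
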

Here, we give a more explicit convergence dependency for $p=\frac{1}{2}$
used in Theorem \ref{thm:Main-MS-convergence-rate}
\begin{thm}
Under the assumptions 1-3 and 5, when $p=\frac{1}{2}$, by setting
$\lambda=\min\left\{ 1,(a_{0}/\hS)^{7/3}\right\} $(which is used
in $K_{5}$to $K_{9}$) we get the best dependency on $\hS$. For
simplicity, under the setting $a_{0}=b_{0}=\eta=1$, we have
\begin{align*}
\E\left[\|\nabla F(\xout)\|^{6/7}\right] & =O\left(\left(Q_{1}+Q_{2}\log^{6/7}\left(1+\sigma^{2}T\right)\right)\left(\frac{1}{T^{3/7}}+\frac{\sigma^{2/7}}{T^{2/7}}\right)\right)
\end{align*}
where $Q_{1}=O\big(\big(F(x_{1})-F^{*}\big)^{6/7}+\sigma^{12/7}+\big(\rd\sigma\big)^{6/7}+\rd^{18/7}+\big(1+\rd^{18/7}\big)\beta^{6/7}\log^{6/7}\big(\beta+\rd^{3}\beta\big)\big)$
and $Q_{2}=O\big(\big(1+\rd^{18/7}\big)\beta^{6/7}\big)$.
\end{thm}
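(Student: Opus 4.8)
The plan is to read off this statement from Theorem~\ref{thm:MS-rate} applied with $p=\tfrac12$, the only real work being to expose the $\sigma$- and $\hS$-dependence of the constants $K_1,\dots,K_9$ that the simplified form of Theorem~\ref{thm:MS-rate} hides, to fix the free parameter $\lambda$ from Lemma~\ref{lem:f-value-analysis} so as to optimize the $\hS$-dependence, and finally to pass from $\E[\bH_T^{\hp}]$ to $\E[\|\nabla F(\xout)\|^{2\hp}]$ by concavity. With $p=\tfrac12$ we have $q=\tfrac14$, $\hp=\tfrac{3(1-p)}{4-p}=\tfrac37$, $2\hp=\tfrac67$, $\tfrac{\hp}{1-p}=\tfrac{\hp}{p}=\tfrac67$, and $\tfrac1p-1=1$, so the $p=\tfrac12$ branch of Theorem~\ref{thm:MS-rate} applies and, after inserting $a_0=b_0=\eta=1$, yields a bound of the shape
\begin{align*}
\E[\bH_T^{3/7}]=O\Big(&\big(K_1+K_2\log(1+\sigma^2 T)+K_4\log K_4\big)^{3/7}(1+\sigma^2 T)^{1/7}\\
&+\big(K_5+(K_6+K_7)\log(1+\sigma^2 T)+K_7\log(K_8+K_9)\big)^{6/7}(1+\sigma^2 T)^{1/7}+1\Big).
\end{align*}
So everything reduces to sizing the $K_i$ in terms of $\sigma,\hS,\beta,F(x_1)-F^*$ and collecting powers.

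The constants $K_1,K_2,K_4$ are those in the bound~\eqref{eq:PS-E-bound-main} on $\E[\bE_{T,1/2}]$; they arise from dividing the variance-reduction recursion (Lemmas~\ref{lem:vr-inequality} and~\ref{lem:smooth-Z}) by $a_{t+1}^{1/2}$, telescoping, and carrying out the martingale correction of Challenge~2 in Section~\ref{sec:Proof-sketch}. The $\hS$-dependence enters here in exactly two places: the bound $\|\nabla f(x,\xi)-\nabla F(x)\|\le 2\hS$ from Assumption~5 (noted in Section~\ref{sec:preliminaries}), which together with $\E[\|\nabla f(x_{t+1},\xi_{t+1})-\nabla F(x_{t+1})\|\mid\F_t]\le\sigma$ controls the correction terms $a_t^{3/2}\|\nabla f(x_{t+1},\xi_{t+1})-\nabla F(x_{t+1})\|^2$ and produces $\hS\sigma$- and $\sigma^2$-type pieces; and the ratio estimate $a_{t+1}^{-1}\le(1+4\hS^2/a_0^2)^{2/3}a_t^{-1}$ from Challenge~1, which multiplies the $\beta^2\log(1+\bD_T/b_0^2)$ term by a factor $(1+\hS^2/a_0^2)^{1/3}$. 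The constants $K_5,\dots,K_9$ are those in the bound~\eqref{eq:PS-D-bound-main-1} on $\E[a_{T+1}^{1/4}\bD_T^{1/2}]$, obtained from the function-value analysis of Lemma~\ref{lem:f-value-analysis} together with the residual estimate of Lemma~\ref{lem:residual-bound} (here with $A\asymp\eta\beta_{\max}$ and $B\asymp\eta\beta_{\max}/\lambda$) and the self-bounding step that turns the $\bD_T$ on the right-hand side of~\eqref{eq:PS-D-bound-main} into the logarithm $\log(K_8+K_9(1+\tH_T/a_0^2)^{1/3})$; they carry $F(x_1)-F^*$, $\beta_{\max}$, the same $\hS$-factors, and the parameter $\lambda$ through the two opposing contributions $\tfrac{\lambda}{\eta\beta_{\max}}\E[\bE_{T,1/2}]$ (growing in $\lambda$) and $(A+B)^{1/p-1}\log(\cdots)\asymp(\beta_{\max}/\lambda)\log(\cdots)$ (decaying in $\lambda$).

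Balancing these $\lambda$-linear and $\lambda^{-1}$ contributions to $K_5+K_7(\text{logs})$, keeping track of the extra $(1+\hS^2/a_0^2)^{1/3}$ factor and of the fact that the whole bracket is eventually raised to the power $\tfrac{\hp}{1-p}=\tfrac67$, leads to the choice $\lambda=\min\{1,(a_0/\hS)^{7/3}\}$ and to underlying constants $\kappa_1=O\big(F(x_1)-F^*+\hS\sigma+\sigma^2+\hS^3+\kappa_2\log\kappa_2\big)$ and $\kappa_2=O\big((1+\hS^3)\beta\big)$ — the same $\kappa_1,\kappa_2$ as in Corollary~\ref{cor:Main-MS-prob-rate} — so that $K_1,K_5=O(\kappa_1)$ and $K_2,K_4,K_6,K_7,K_8+K_9=O(\kappa_2)$ up to universal factors. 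Substituting, using $(u+v)^{6/7}\le u^{6/7}+v^{6/7}$ and $(uv)^{6/7}=u^{6/7}v^{6/7}$ for $u,v\ge0$, and absorbing both the additive $O(1)$ and the subdominant $\tfrac37$-power term (its bracket is $O(\kappa_2\log)=O(\text{the other bracket})$ and $x^{3/7}\le x^{6/7}+1$), we obtain
\[
\E[\bH_T^{3/7}]=O\Big(\big(\kappa_1^{6/7}+\kappa_2^{6/7}\log^{6/7}(1+\sigma^2 T)\big)(1+\sigma^2 T)^{1/7}\Big),
\]
and expanding $\kappa_1^{6/7}$ and using $((1+\hS^3)\beta)^{6/7}=O((1+\hS^{18/7})\beta^{6/7})$ (for both $\kappa_1^{6/7}$ and $\kappa_2^{6/7}$) reproduces precisely the claimed $Q_1$ and $Q_2$.

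It remains to convert. Since $\xout=x_t$ with $t\sim\mathrm{Uniform}([T])$ drawn independently of the run, $\E[\|\nabla F(\xout)\|^{6/7}]=\E\big[\tfrac1T\sum_{t=1}^T(\|\nabla F(x_t)\|^2)^{3/7}\big]$, and concavity of $x\mapsto x^{3/7}$ gives $\tfrac1T\sum_t(\|\nabla F(x_t)\|^2)^{3/7}\le(\tfrac1T\bH_T)^{3/7}$, hence $\E[\|\nabla F(\xout)\|^{6/7}]\le T^{-3/7}\E[\bH_T^{3/7}]$; combined with $(1+\sigma^2 T)^{1/7}=O(1+\sigma^{2/7}T^{1/7})$ this gives the stated rate $O\big((Q_1+Q_2\log^{6/7}(1+\sigma^2 T))(T^{-3/7}+\sigma^{2/7}T^{-2/7})\big)$. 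The main obstacle is the constant bookkeeping in the middle two paragraphs — pinning down the exact $\sigma$- and $\hS$-exponents of the $K_i$ and verifying that $\lambda=\min\{1,(a_0/\hS)^{7/3}\}$ is the balancing choice, in particular that the $\hS^{2/3}$ factor coming from the $a_{t+1}/a_t$ ratio compounds with the $\hS^{7/3}$ from $\lambda^{-1}$ to yield the $\hS^3$ inside $\kappa_2$ (and hence the $\hS^{18/7}$ in $Q_1,Q_2$); the concavity step and the power collection are routine.
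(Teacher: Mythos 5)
Your proposal matches the paper's route exactly: the paper states this theorem without a separate proof, deriving it from Theorem \ref{thm:MS-rate} at $p=\tfrac12$ (so $q=\tfrac14$, $\hp=\tfrac37$) by substituting the explicit constants $K_1,\dots,K_9$ from Lemmas \ref{lem:MS-E-final-bound} and \ref{lem:MS-D-final-bound}, choosing $\lambda=\min\{1,(a_0/\hS)^{7/3}\}$ to balance the $\lambda K_1(1/4)$ contribution to $K_5$ against the $\hS^{2/3}/\lambda$ term in $K_9$, and converting via concavity of $x^{3/7}$ together with $(1+\sigma^2T)^{1/7}=O(1+\sigma^{2/7}T^{1/7})$. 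The only imprecision is the intermediate claim that $K_1(1/4)=O(\kappa_1)$ — it actually contains $\hS^{16/3}$ and $\hS^{2}\sigma^{2}$ pieces exceeding $\kappa_1$ — but this is harmless because $K_1$ enters the final bound only through $K_1^{3/7}$ (yielding $\hS^{16/7}\le 1+\hS^{18/7}$ and $(\hS\sigma)^{6/7}$) and through $\lambda K_1$ inside $K_5$, exactly as your closing power-collection step handles it.
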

To start with, we first state the following useful bound for $a_{t}$:
\begin{lem}
\label{lem:MS-a_t-bound}$\forall\alpha\in(0,3/2]$ and $\forall t\geq1$,
there is
\begin{align*}
\left(\frac{a_{t}}{a_{t+1}}\right)^{\alpha} & \leq1+\left(\frac{4\hS^{2}}{a_{0}^{2}}\right)^{\frac{2\alpha}{3}}a_{t}^{\alpha}.
\end{align*}
Especially, taking $\alpha\in\left\{ 1/2,1,3/2\right\} $, we have
\begin{align*}
\left(\frac{a_{t}}{a_{t+1}}\right)^{1/2} & \le1+\frac{4^{1/3}\hS^{2/3}}{a_{0}^{2/3}}a_{t}^{1/2};\\
\frac{a_{t}}{a_{t+1}} & \leq1+\frac{4^{2/3}\hS^{4/3}}{a_{0}^{2/3}}a_{t};\\
\left(\frac{a_{t}}{a_{t+1}}\right)^{3/2} & \le1+\frac{4\hS^{2}}{a_{0}^{2}}a_{t}^{3/2}.
\end{align*}
\end{lem}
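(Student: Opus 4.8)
The plan is to exploit the clean recursive structure hidden in the definition of $a_t$. Write $\delta_t \coloneqq \|\nabla f(x_t,\xi_t)-\nabla f(x_t,\xi_{t+1})\|^2/a_0^2$. Since $a_1=1$, the definition of $a_{t+1}$ says precisely that $a_{t+1}^{-3/2}=1+\sum_{i=1}^{t}\delta_i$, and subtracting the analogous identity for $a_t^{-3/2}$ gives the one-step recursion $a_{t+1}^{-3/2}=a_t^{-3/2}+\delta_t$. Assumption 5 forces $\|\nabla f(x_t,\xi_t)-\nabla f(x_t,\xi_{t+1})\|\le 2\hS$, so $\delta_t\le 4\hS^2/a_0^2$ uniformly in $t$. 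These two observations are the whole engine of the proof.

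First I would settle the case $\alpha=3/2$. Multiplying the recursion $a_{t+1}^{-3/2}=a_t^{-3/2}+\delta_t$ by $a_t^{3/2}$ and rearranging yields
\[
\left(\frac{a_t}{a_{t+1}}\right)^{3/2}=1+\delta_t a_t^{3/2}\le 1+\frac{4\hS^2}{a_0^2}\,a_t^{3/2},
\]
which is exactly the third displayed special case. Next, for a general $\alpha\in(0,3/2]$, set $\theta=\tfrac{2\alpha}{3}\in(0,1]$ and raise both sides of the identity $\left(a_t/a_{t+1}\right)^{3/2}=1+\delta_t a_t^{3/2}$ to the power $\theta$. Using the elementary concavity inequality $(1+x)^\theta\le 1+x^\theta$, valid for all $x\ge 0$ and $\theta\in[0,1]$ (one of the basic inequalities collected in Section~\ref{sec:Basic-inequalities}), together with $\bigl(\delta_t a_t^{3/2}\bigr)^{\theta}=\delta_t^{\theta}a_t^{\alpha}$ and $\delta_t\le 4\hS^2/a_0^2$, I obtain
\[
\left(\frac{a_t}{a_{t+1}}\right)^{\alpha}=\left(1+\delta_t a_t^{3/2}\right)^{\theta}\le 1+\delta_t^{\theta}a_t^{\alpha}\le 1+\left(\frac{4\hS^2}{a_0^2}\right)^{\frac{2\alpha}{3}}a_t^{\alpha}.
\]
The three explicit special cases then follow by plugging in $\alpha\in\{1/2,1,3/2\}$ and simplifying the constant.

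As for the main obstacle: there genuinely is not a deep one here, since the lemma is a two-line computation once set up correctly. The only points needing a little care are (i) recognizing that it is $a_t^{-3/2}$, not $a_t$ itself, that satisfies a clean additive recursion, and (ii) noting that the hypothesis $\alpha\le 3/2$ is exactly what guarantees $\theta=2\alpha/3\le 1$, so that the concavity bound $(1+x)^\theta\le 1+x^\theta$ is applicable. Carrying the powers of $a_0$ consistently through the final simplification is the most error-prone piece of bookkeeping, but it is otherwise mechanical.
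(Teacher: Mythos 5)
Your proof is correct and is essentially identical to the paper's: both rewrite $\left(a_{t}/a_{t+1}\right)^{3/2}$ as $1+\delta_{t}a_{t}^{3/2}$ via the additive recursion for $a_{t}^{-3/2}$, bound $\delta_{t}\le4\hS^{2}/a_{0}^{2}$ by Assumption 5, and apply $(1+x)^{\theta}\le1+x^{\theta}$ for $\theta=2\alpha/3\le1$ (the paper applies the two bounds in the opposite order, which is immaterial). The only nit is that plugging $\alpha=1$ into the general bound gives $a_{0}^{4/3}$ in the denominator rather than the $a_{0}^{2/3}$ printed in the lemma's second special case, which appears to be a typo in the statement rather than an issue with your argument.
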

\begin{proof}
Note that
\begin{align*}
\left(\frac{a_{t}}{a_{t+1}}\right)^{\alpha} & =a_{t}^{\alpha}\left(\frac{1}{a_{t}^{3/2}}+\frac{\|\nabla f(x_{t},\xi_{t})-\nabla f(x_{t},\xi_{t+1})\|^{2}}{a_{0}^{2}}\right)^{2\alpha/3}\\
 & =\left(1+\frac{\|\nabla f(x_{t},\xi_{t})-\nabla f(x_{t},\xi_{t+1})\|^{2}}{a_{0}^{2}}a_{t}^{3/2}\right)^{2\alpha/3}\\
 & \leq\left(1+\frac{4\hS^{2}}{a_{0}^{2}}a_{t}^{3/2}\right)^{2\alpha/3}\leq1+\left(\frac{4\hS^{2}}{a_{0}^{2}}\right)^{2\alpha/3}a_{t}^{\alpha}
\end{align*}
where the last inequality is because $2\alpha/3\leq1$.
\end{proof}

Lemma \ref{lem:MS-a_t-bound} allows us to obtain some other properties
of $a_{t}$.
\begin{lem}
\label{lem:MS-a_t-prop}For $t\ge1$
\begin{align*}
\frac{\left((1-a_{t+1})^{2}-(1-a_{t})^{2}\right)^{2}}{a_{t+1}} & \le\frac{4^{2/3}\hS^{4/3}}{a_{0}^{4/3}}\\
\frac{\left((1-a_{t+1})a_{t+1}-(1-a_{t})a_{t}\right)^{2}}{a_{t+1}} & \le\frac{4^{2/3}\hS^{4/3}}{a_{0}^{4/3}}a_{t}^{2}.
\end{align*}
\end{lem}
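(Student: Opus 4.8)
The plan is to derive both inequalities from a single estimate on $\dfrac{(a_{t}-a_{t+1})^{2}}{a_{t+1}}$ supplied by Lemma~\ref{lem:MS-a_t-bound}, and then dispose of the remaining algebraic factors using only $0<a_{t+1}\le a_{t}\le1$ together with the elementary identity $a_{t}(2-a_{t})=1-(1-a_{t})^{2}\le1$. The starting point is the two exact factorizations
\[
(1-a_{t+1})^{2}-(1-a_{t})^{2}=(a_{t}-a_{t+1})(2-a_{t}-a_{t+1}),\qquad (1-a_{t+1})a_{t+1}-(1-a_{t})a_{t}=(a_{t+1}-a_{t})(1-a_{t}-a_{t+1}),
\]
so that after squaring and dividing by $a_{t+1}$ everything is expressed as $\dfrac{(a_{t}-a_{t+1})^{2}}{a_{t+1}}$ multiplied by a bounded quantity.

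The core estimate I would prove is
\[
\frac{(a_{t}-a_{t+1})^{2}}{a_{t+1}}=a_{t+1}\Bigl(\frac{a_{t}}{a_{t+1}}-1\Bigr)^{2}\le\frac{4^{2/3}\hS^{4/3}}{a_{0}^{4/3}}\,a_{t}\cdot a_{t+1}\Bigl(\frac{a_{t}}{a_{t+1}}-1\Bigr)=\frac{4^{2/3}\hS^{4/3}}{a_{0}^{4/3}}\,a_{t}(a_{t}-a_{t+1})\le\frac{4^{2/3}\hS^{4/3}}{a_{0}^{4/3}}\,a_{t}^{2},
\]
where the first inequality applies Lemma~\ref{lem:MS-a_t-bound} with $\alpha=1$ to bound one factor $\tfrac{a_{t}}{a_{t+1}}-1$ while leaving the other as $\tfrac{a_{t}-a_{t+1}}{a_{t+1}}$, and the last step uses $a_{t}-a_{t+1}\le a_{t}$. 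Since $|1-a_{t}-a_{t+1}|\le1$ (as $a_{t},a_{t+1}\in(0,1]$), the factorization of $(1-a_{t+1})a_{t+1}-(1-a_{t})a_{t}$ immediately gives $\dfrac{\bigl((1-a_{t+1})a_{t+1}-(1-a_{t})a_{t}\bigr)^{2}}{a_{t+1}}\le\dfrac{(a_{t}-a_{t+1})^{2}}{a_{t+1}}\le\dfrac{4^{2/3}\hS^{4/3}}{a_{0}^{4/3}}\,a_{t}^{2}$, which is the second claim.

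For the first inequality I would write
\[
\frac{\bigl((1-a_{t+1})^{2}-(1-a_{t})^{2}\bigr)^{2}}{a_{t+1}}=(2-a_{t}-a_{t+1})^{2}\,\frac{(a_{t}-a_{t+1})^{2}}{a_{t+1}}\le\frac{4^{2/3}\hS^{4/3}}{a_{0}^{4/3}}\,(2-a_{t})^{2}a_{t}^{2}=\frac{4^{2/3}\hS^{4/3}}{a_{0}^{4/3}}\bigl(a_{t}(2-a_{t})\bigr)^{2},
\]
using the core estimate in the form $\tfrac{(a_{t}-a_{t+1})^{2}}{a_{t+1}}\le\tfrac{4^{2/3}\hS^{4/3}}{a_{0}^{4/3}}a_{t}(a_{t}-a_{t+1})$ together with $2-a_{t}-a_{t+1}\le2-a_{t}$ (both sides nonnegative) and $a_{t}-a_{t+1}\le a_{t}$; the bound then closes because $a_{t}(2-a_{t})=1-(1-a_{t})^{2}\le1$.

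The step I expect to be the only real pitfall is precisely this last bookkeeping: the crude bound $2-a_{t}-a_{t+1}\le2$ would lose a factor of $4$ and violate the stated constant, so one must keep the extra $a_{t}$ factor produced by the core estimate and pair it with $(2-a_{t})$ through $a_{t}(2-a_{t})\le1$. Everything else is a direct consequence of Lemma~\ref{lem:MS-a_t-bound} and the range $a_{t},a_{t+1}\in(0,1]$.
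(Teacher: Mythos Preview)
Your proof is correct and follows essentially the same route as the paper: both use the factorizations $(1-a_{t+1})^{2}-(1-a_{t})^{2}=(a_{t}-a_{t+1})(2-a_{t}-a_{t+1})$ and $(1-a_{t+1})a_{t+1}-(1-a_{t})a_{t}=(a_{t+1}-a_{t})(1-a_{t}-a_{t+1})$, invoke Lemma~\ref{lem:MS-a_t-bound} with $\alpha=1$ to bound $\tfrac{a_{t}}{a_{t+1}}-1$, and close with $a_{t}(2-a_{t})\le1$. The only cosmetic difference is in the first inequality: the paper first observes that $Q:=(1-a_{t+1})^{2}-(1-a_{t})^{2}\in[0,1]$ and replaces $Q^{2}$ by $Q$, then only needs one factor of $a_{t}$ to pair with $2-a_{t}$; you instead keep the square and use $(a_{t}(2-a_{t}))^{2}\le1$. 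Your version has the mild advantage of being more uniform (one core estimate drives both bounds), while the paper's $Q^{2}\le Q$ shortcut is a touch quicker for that case; mathematically there is no substantive difference.
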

\begin{proof}
Let $a_{t+1}=x,a_{t}=y$ and note that $x\leq y\leq1$. For the first
inequality,
\begin{align*}
\frac{\left((1-a_{t+1})^{2}-(1-a_{t})^{2}\right)^{2}}{a_{t+1}} & \le\frac{(1-x)^{2}-(1-y)^{2}}{x}\\
 & =\frac{(y-x)(2-x-y)}{x}\leq(\frac{y}{x}-1)(2-y)\\
 & \le\frac{4^{2/3}\hS^{4/3}}{a_{0}^{2/3}}a_{t}\times\left(2-a_{t}\right)\qquad(\text{Lemma \ref{lem:MS-a_t-bound}})\\
 & \le\frac{4^{2/3}\hS^{4/3}}{a_{0}^{4/3}}.
\end{align*}
For the second inequality, we have
\begin{align*}
\frac{\left((1-a_{t+1})a_{t+1}-(1-a_{t})a_{t}\right)^{2}}{a_{t+1}} & =\frac{\left((1-x)x-(1-y)y\right)^{2}}{x}=\frac{(y-x)^{2}(1-x-y)^{2}}{x}\\
 & \le\frac{(y-x)^{2}}{x}\leq\left(\frac{y}{x}-1\right)y\\
 & \leq\frac{4^{2/3}\hS^{4/3}}{a_{0}^{2/3}}a_{t}\times a_{t}\qquad(\text{Lemma \ref{lem:MS-a_t-bound}})\\
 & =\frac{4^{2/3}\hS^{4/3}}{a_{0}^{4/3}}a_{t}^{2}.
\end{align*}
\end{proof}

\subsection{Analysis of $\protect\bE_{T}$}

Following a similar approach, we first define a random time $\tau$
satisfying
\[
\tau=\max\left\{ \left[T\right],a_{t}\geq K_{-1}\right\} ,
\]
where
\[
K_{-1}\coloneqq\min\left\{ 1,a_{0}^{4}/(144\hS^{4})\right\} .
\]
One thing we need to emphasize here is that, in our current choice,
$a_{t}\in\F_{t}$, which implies $\left\{ \tau+1=t\right\} =\left\{ \tau=t-1\right\} =\left\{ a_{t-1}\geq K_{-1},a_{t}<K_{-1}\right\} \in\F_{t}$.
This means $\tau+1$ is\textbf{ }a stopping time instead of $\tau$
itself. We now prove a useful proposition for $\tau$:
\begin{lem}
\label{lem:stopping-time-bound-g-d} We have
\begin{align*}
a_{t+1} & \geq K_{0},\forall t\leq\tau,\\
a_{t+1}^{-1}-a_{t}^{-1} & \leq2/9,\forall t\geq\tau+1.
\end{align*}
where
\begin{align*}
K_{0} & \coloneqq(K_{-1}^{-3/2}+4\hS^{2}/a_{0}^{2})^{-2/3}=(\max\left\{ 1,1728\hS^{6}/a_{0}^{6}\right\} +4\hS^{2}/a_{0}^{2})^{-2/3}.
\end{align*}
\end{lem}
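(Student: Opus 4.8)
The plan is to rely on two elementary facts about the sequence $(a_t)$ produced by $\algnamenew$: it is non-increasing (since $\tH_t$ is non-decreasing in $t$ and $a_{t+1}=(1+\tH_t/a_0^2)^{-2/3}$), and the quantity $u_t\coloneqq 1+\tH_t/a_0^2=a_{t+1}^{-3/2}$ increases by at most $4\hS^2/a_0^2$ per step, because $u_t-u_{t-1}=\|\nabla f(x_t,\xi_t)-\nabla f(x_t,\xi_{t+1})\|^2/a_0^2\le 4\hS^2/a_0^2$ by Assumption~5 (with $u_0=1$, $a_1=1$). Both claimed inequalities will then follow by direct manipulation of $u_t$.

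For the first inequality, fix $t\le\tau$. Since $(a_t)$ is non-increasing and $a_\tau\ge K_{-1}$ by definition of $\tau$, we get $a_t\ge K_{-1}$, equivalently $u_{t-1}=a_t^{-3/2}\le K_{-1}^{-3/2}$. Adding one increment, $u_t\le K_{-1}^{-3/2}+4\hS^2/a_0^2$, and hence $a_{t+1}=u_t^{-2/3}\ge\big(K_{-1}^{-3/2}+4\hS^2/a_0^2\big)^{-2/3}=K_0$. The closed form of $K_0$ then comes from $K_{-1}^{-3/2}=\big(\min\{1,a_0^4/(144\hS^4)\}\big)^{-3/2}=\max\{1,(144\hS^4/a_0^4)^{3/2}\}=\max\{1,1728\hS^6/a_0^6\}$, using $144^{3/2}=12^3=1728$.

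For the second inequality, fix $t$ with $\tau+1\le t\le T$. By maximality of $\tau$, $a_t<K_{-1}$, i.e.\ $u_{t-1}=a_t^{-3/2}>K_{-1}^{-3/2}$, so $u_{t-1}^{-1/3}<K_{-1}^{1/2}$. Since $x\mapsto x^{2/3}$ is concave on $(0,\infty)$ and $u_t\ge u_{t-1}$, bounding the increment by the tangent slope gives
\[
a_{t+1}^{-1}-a_t^{-1}=u_t^{2/3}-u_{t-1}^{2/3}\le\tfrac23 u_{t-1}^{-1/3}(u_t-u_{t-1})\le\tfrac23 u_{t-1}^{-1/3}\cdot\tfrac{4\hS^2}{a_0^2}<\tfrac{8\hS^2}{3a_0^2}K_{-1}^{1/2}.
\]
Finally $K_{-1}\le a_0^4/(144\hS^4)$ yields $K_{-1}^{1/2}\le a_0^2/(12\hS^2)$, so the right-hand side is at most $\tfrac{8\hS^2}{3a_0^2}\cdot\tfrac{a_0^2}{12\hS^2}=\tfrac29$. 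The degenerate case $\hS=0$ is handled separately: every increment vanishes, so $a_t\equiv1$, $K_{-1}=1$, $\tau=T$, and both claims are immediate.

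I do not expect a genuine obstacle; this is a short lemma. The only points needing care are the index bookkeeping — $a_{t+1}$ is governed by $u_t$, which is $u_{t-1}=a_t^{-3/2}$ plus a single bounded increment — and reading ``$t\ge\tau+1$'' as $\tau+1\le t\le T$, since $\tau$ is defined only over $[T]$. The concavity estimate and the final arithmetic simplification are routine.
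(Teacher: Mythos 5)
Your proof is correct and follows essentially the same route as the paper's: the first claim via monotonicity of $a_t$ plus a single bounded increment of $a_{t+1}^{-3/2}$, and the second via the tangent-line (concavity) bound on $y\mapsto y^{2/3}$ combined with $a_t<K_{-1}\le a_0^4/(144\hS^4)$ for $t\ge\tau+1$. The only cosmetic differences are that you apply the increment argument uniformly for all $t\le\tau$ where the paper treats $t=\tau$ separately, and you explicitly dispose of the degenerate case $\hS=0$, which the paper leaves implicit.
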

\begin{proof}
First, by the definition of $\tau$, we know $a_{t}\geq K_{-1}\geq K_{0}$,$\forall t\le\tau$.
For time $\tau$, we have
\begin{align*}
a_{\tau+1}^{-3/2}-a_{\tau}^{-3/2} & =\|\nabla f(x_{\tau},\xi_{\tau})-\nabla f(x_{\tau},\xi_{\tau+1})\|^{2}/a_{0}^{2}\leq4\hS^{2}/a_{0}^{2}\\
\Rightarrow a_{\tau+1}^{-1} & \leq(a_{\tau}^{-3/2}+4\hS^{2}/a_{0}^{2})^{2/3}\leq(K_{-1}^{-3/2}+4\hS^{2}/a_{0}^{2})^{2/3}=K_{0}^{-1},
\end{align*}
which implies $a_{\tau+1}\geq K_{0}.$

For the second proposition, let $h(y)=y^{2/3}$. Due to the concavity
of $h$, we know $h(y_{1})-h(y_{2})\leq h'(y_{2})(y_{1}-y_{2})=\frac{2(y_{1}-y_{2})}{3y_{2}^{1/3}}$.
Now we have
\begin{align*}
a_{t+1}^{-1}-a_{t}^{-1} & =(a_{t}^{-3/2}+\|\nabla f(x_{t},\xi_{t})-\nabla f(x_{t},\xi_{t+1})\|^{2}/a_{0}^{2})^{2/3}-(a_{t}^{-3/2})^{2/3}\\
 & \leq\frac{2a_{t}^{1/2}\|\nabla f(x_{t},\xi_{t})-\nabla f(x_{t},\xi_{t+1})\|^{2}}{3a_{0}^{2}}\leq\frac{8a_{t}^{1/2}\hS^{2}}{3a_{0}^{2}}\leq\frac{2}{9}
\end{align*}
where the last step is by $a_{t}\leq a_{\tau+1}<K_{-1}\leq a_{0}^{4}/(144\hS^{4})$.
\end{proof}

\subsubsection{Bound on $\protect\E\left[\protect\bE_{\tau,3/2-2\ell}\right]$ for
$\ell\in\left[\frac{1}{4},\frac{1}{2}\right]$}

Unlike STORM+ in which they bound $\E\left[\bE_{\tau}\right]$, we
choose to bound $\E\left[\bE_{\tau,3/2-2\ell}\right]$. We first prove
the following bound on $\E\left[\bE_{\tau,3/2-2\ell}\right]$:
\begin{lem}
\label{lem:MS-E-tau}For any $\ell\in\left[\frac{1}{4},\frac{1}{2}\right]$,
we have
\begin{align*}
\E\left[\bE_{\tau,3/2-2\ell}\right] & \leq\frac{2\sigma^{2}+16\left(1+\frac{6\hS^{4/3}}{a_{0}^{4/3}}\right)\left(3a_{0}^{2}+5\hS^{2}\right)}{K_{0}^{2\ell-1/2}}+\frac{4\left(1+\frac{6\hS^{4/3}}{a_{0}^{4/3}}\right)\eta^{2}\beta^{2}}{K_{0}^{2\ell-1/2}}\E\left[\sum_{t=1}^{T}\frac{\|d_{t}\|^{2}}{b_{t}^{2}}\right].
\end{align*}
\end{lem}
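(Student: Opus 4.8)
The plan is to split the proof into a deterministic reduction and a probabilistic estimate. For the reduction, set $s=3/2-2\ell$, so $s\in[\tfrac12,1]$ when $\ell\in[\tfrac14,\tfrac12]$, and write $a_{i+1}^{\,s}\norm{\epsilon_i}^2=a_{i+1}^{\,s-1}\cdot a_{i+1}\norm{\epsilon_i}^2$. By Lemma~\ref{lem:stopping-time-bound-g-d}, $a_{i+1}\ge K_0$ for all $i\le\tau$, and since $K_0\le1$ and $s-1\le0$ this gives $a_{i+1}^{\,s-1}\le K_0^{\,s-1}=K_0^{-(2\ell-1/2)}$. Summing, $\bE_{\tau,3/2-2\ell}\le K_0^{-(2\ell-1/2)}\sum_{i=1}^{\tau}a_{i+1}\norm{\epsilon_i}^2$, which produces exactly the denominator $K_0^{2\ell-1/2}$ in the statement, so it remains to bound $\E[\sum_{i=1}^{\tau}a_{i+1}\norm{\epsilon_i}^2]$ by the bracketed quantity.

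To do that, I would sum the recursion of Lemma~\ref{lem:vr-inequality} over $i=1,\dots,\tau$ and take expectations, using throughout that $\{i\le\tau\}\in\F_i$ (because $\tau+1$ is a stopping time, as noted before Lemma~\ref{lem:stopping-time-bound-g-d}). The telescoped part contributes $\E[\norm{\epsilon_1}^2]-\E[\norm{\epsilon_{\tau+1}}^2]\le\sigma^2$ by Assumption~2 and $\epsilon_1=\nabla f(x_1,\xi_1)-\nabla F(x_1)$. For $\sum_i\norm{Z_{i+1}}^2$, conditioning on $\F_i$ and applying Lemma~\ref{lem:smooth-Z} yields $\E[\sum_{i=1}^{\tau}\norm{Z_{i+1}}^2]\le\eta^2\beta^2\,\E[\sum_{t=1}^{T}\norm{d_t}^2/b_t^2]$. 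The delicate term is $\sum_i a_{i+1}^2\norm{\nabla f(x_{i+1},\xi_{i+1})-\nabla F(x_{i+1})}^2$: bounding the norm by $4\hS^2$ leaves an unsummable $\sum a_{i+1}^2$, so instead I use the pointwise Jensen bound $\norm{\nabla f(x_{i+1},\xi_{i+1})-\nabla F(x_{i+1})}^2\le\E[\norm{\nabla f(x_{i+1},\xi_{i+1})-\nabla f(x_{i+1},\xi_{i+2})}^2\mid\F_{i+1}]$ (drawing $\xi_{i+2}$ as a fresh sample, auxiliary when $i=T$), whose right side equals $a_0^2(a_{i+2}^{-3/2}-a_{i+1}^{-3/2})$ by the update rule for $a_t$. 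Writing $u_i:=a_{i+1}^{-3/2}$ (nondecreasing, $u_0=1$, $a_{i+1}^2=u_i^{-4/3}$, and $u_{i+1}\le(1+4\hS^2/a_0^2)u_i$ by Assumption~5 and $u_i\ge1$), the AdaGrad-type comparison $u_i^{-4/3}(u_{i+1}-u_i)\le(1+4\hS^2/a_0^2)^{4/3}\int_{u_i}^{u_{i+1}}x^{-4/3}\,dx$ telescopes to a $T$-free constant of order $a_0^2(1+\hS^2/a_0^2)^{4/3}$, which absorbs into $(1+6\hS^{4/3}/a_0^{4/3})(3a_0^2+5\hS^2)$.

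The main obstacle is $\sum_i M_{i+1}$: it is not a martingale sum because $a_{i+1}\in\F_{i+1}$ forces $\E[M_{i+1}\mid\F_i]\neq0$. Following the device from Section~\ref{sec:Proof-sketch}, I replace $a_{i+1}$ by $a_i\in\F_i$ to form $N_{i+1}:=2(1-a_i)^2\langle\epsilon_i,Z_{i+1}\rangle+2(1-a_i)a_i\langle\epsilon_i,\nabla f(x_{i+1},\xi_{i+1})-\nabla F(x_{i+1})\rangle$; since $\E[Z_{i+1}\mid\F_i]=0$ and $\E[\nabla f(x_{i+1},\xi_{i+1})-\nabla F(x_{i+1})\mid\F_i]=0$ (both because $x_{i+1}\in\F_i$), $N_{i+1}$ is a martingale difference, so $\E[\sum_{i=1}^{\tau}N_{i+1}]=0$ and $\E[\sum_{i=1}^{\tau}M_{i+1}]=\E[\sum_{i=1}^{\tau}(M_{i+1}-N_{i+1})]$. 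The coefficients $(1-a_{i+1})^2-(1-a_i)^2$ and $(1-a_{i+1})a_{i+1}-(1-a_i)a_i$ appearing in $M_{i+1}-N_{i+1}$ are small, and after Cauchy--Schwarz and $2xy\le\tfrac{a_{i+1}}{4}x^2+\tfrac{4}{a_{i+1}}y^2$ each inner product produces a $\tfrac{a_{i+1}}{4}\norm{\epsilon_i}^2$ term plus a multiple of $\tfrac{((1-a_{i+1})^2-(1-a_i)^2)^2}{a_{i+1}}\norm{Z_{i+1}}^2$ (resp.\ $\tfrac{((1-a_{i+1})a_{i+1}-(1-a_i)a_i)^2}{a_{i+1}}\norm{\nabla f(x_{i+1},\xi_{i+1})-\nabla F(x_{i+1})}^2$); by Lemma~\ref{lem:MS-a_t-prop} these coefficients are $O(\hS^{4/3}/a_0^{4/3})$ and $O(\hS^{4/3}a_i^2/a_0^{4/3})$, so the $\norm{Z_{i+1}}^2$ pieces fold into the $\eta^2\beta^2\,\E[\sum\norm{d_t}^2/b_t^2]$ term (giving the factor $1+6\hS^{4/3}/a_0^{4/3}$) and the $\norm{\nabla f-\nabla F}^2$ pieces into a second copy of the telescoping bound above.

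To finish, I collect terms: the $\norm{\epsilon_i}^2$ contributions from $M_{i+1}-N_{i+1}$ total at most $\tfrac12\sum_{i=1}^{\tau}a_{i+1}\norm{\epsilon_i}^2=\tfrac12\bE_{\tau,1}$, which I move to the left-hand side (legitimate since $\E[\bE_{\tau,1}]<\infty$: $T$ is finite and each $\E[\norm{\epsilon_i}^2]$ is finite by an easy induction), and this halving is the source of the overall factor $2$ in front of each remaining constant ($2\sigma^2$, $16(\cdots)$, $4(\cdots)$). Tracking the numeric constants ($4^{2/3}\le3$, the $(1+x)^{4/3}$ estimates, the integral telescoping) then yields precisely the claimed inequality. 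The only genuinely subtle point is the correlation $a_{i+1}\in\F_{i+1}$ handled via $N_{i+1}$; everything else is bookkeeping around the stopping time $\tau$ and the AdaGrad-style sums.
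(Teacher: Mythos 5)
Your proposal is correct and follows essentially the same route as the paper's proof: the same $N_{t+1}$ martingale-difference surrogate for $M_{t+1}$ (justified via the stopping-time property of $\tau+1$), the same Cauchy--Schwarz splits controlled by Lemma~\ref{lem:MS-a_t-prop}, the same Jensen/fresh-sample trick followed by AdaGrad-type telescoping for the variance term, and the same $a_{t+1}\ge K_{0}$ reduction (which you apply at the start rather than at the end, and you replace the paper's Lemmas~\ref{lem:inequality-4}--\ref{lem:inequality-5} with an equivalent integral comparison). These are only cosmetic differences.
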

\begin{proof}
We start from Lemma \ref{lem:vr-inequality}
\begin{align*}
a_{t+1}\|\epsilon_{t}\|^{2} & \leq\|\epsilon_{t}\|^{2}-\|\epsilon_{t+1}\|^{2}+2\|Z_{t+1}\|^{2}+2a_{t+1}^{2}\|\nabla f(x_{t+1},\xi_{t+1})-\nabla F(x_{t+1})\|^{2}+M_{t+1}.
\end{align*}
Summing up from $1$ to $\tau$ and taking expectations on both sides,
we will have
\begin{align}
 & \E\left[\bE_{\tau,1}\right]\nonumber \\
\le & \E\left[\sum_{t=1}^{\tau}\|\epsilon_{t}\|^{2}-\|\epsilon_{t+1}\|^{2}+2\|Z_{t+1}\|^{2}+2a_{t+1}^{2}\|\nabla f(x_{t+1},\xi_{t+1})-\nabla F(x_{t+1})\|^{2}+M_{t+1}\right]\nonumber \\
\le & \sigma^{2}+\E\left[\sum_{t=1}^{\tau}2\|Z_{t+1}\|^{2}+2a_{t+1}^{2}\|\nabla f(x_{t+1},\xi_{t+1})-\nabla F(x_{t+1})\|^{2}+M_{t+1}\right]\nonumber \\
\le & \sigma^{2}+\E\left[\sum_{t=1}^{T}2\|Z_{t+1}\|^{2}+2a_{t+1}^{2}\|\nabla f(x_{t+1},\xi_{t+1})-\nabla F(x_{t+1})\|^{2}+\sum_{t=1}^{\tau}M_{t+1}\right].\label{eq:MS-E-bound}
\end{align}

First we bound $\E\left[\sum_{t=1}^{\tau}M_{t+1}\right]$. From the
definition of $M_{t+1}$, we have
\begin{align*}
\E\left[M_{t+1}\right] & =\E\left[2(1-a_{t+1})^{2}\langle\epsilon_{t},Z_{t+1}\rangle+2(1-a_{t+1})a_{t+1}\langle\epsilon_{t},\nabla f(x_{t+1},\xi_{t+1})-\nabla F(x_{t+1})\rangle\right].
\end{align*}
Now for $t\geq1$, we define
\[
N_{t+1}\coloneqq2(1-a_{t})^{2}\langle\epsilon_{t},Z_{t+1}\rangle+2(1-a_{t})a_{t}\langle\epsilon_{t},\nabla f(x_{t+1},\xi_{t+1})-\nabla F(x_{t+1})\rangle\in\F_{t+1}
\]
with $N_{1}\coloneqq0.$ A key observation is that
\[
\E\left[\sum_{t=1}^{\tau}N_{t+1}\right]=0.
\]
This is because $\mathcal{N}_{t}\coloneqq\sum_{i=1}^{t}N_{t}$ is
a martingale and $\tau+1$ is a bounded stopping time. Then by optional
sampling theorem, we have
\begin{align*}
\E\left[\sum_{t=1}^{\tau}N_{t+1}\right] & =\E\left[\sum_{t=1}^{\tau+1}N_{t}\right]=\E\left[\mathcal{N}_{\tau+1}\right]=0.
\end{align*}
By subtracting $\E\left[\sum_{t=1}^{\tau}M_{t+1}\right]$ by $\E\left[\sum_{t=1}^{\tau}N_{t+1}\right]$,
we obtain
\begin{align}
\E\left[\sum_{t=1}^{\tau}M_{t+1}\right] & =\E\left[\sum_{t=1}^{\tau}2\left((1-a_{t+1})^{2}-(1-a_{t})^{2}\right)\langle\epsilon_{t},Z_{t+1}\rangle\right.\nonumber \\
 & \quad+\left.2\left((1-a_{t+1})a_{t+1}-(1-a_{t})a_{t}\right)\langle\epsilon_{t},\nabla f(x_{t+1},\xi_{t+1})-\nabla F(x_{t+1})\rangle\right].\label{eq:MS-M-bound}
\end{align}
Using Cauchy-Schwarz inequality for each term, we have
\begin{align*}
 & 2\left((1-a_{t+1})^{2}-(1-a_{t})^{2}\right)\langle\epsilon_{t},Z_{t+1}\rangle\\
\le & 2\left|(1-a_{t+1})^{2}-(1-a_{t})^{2}\right|\|\epsilon_{t}\|\|Z_{t+1}\|\\
\le & \frac{a_{t+1}}{4}\|\epsilon_{t}\|^{2}+\frac{4\left((1-a_{t+1})^{2}-(1-a_{t})^{2}\right)^{2}}{a_{t+1}}\|Z_{t+1}\|^{2},
\end{align*}
\begin{align*}
 & 2\left((1-a_{t+1})a_{t+1}-(1-a_{t})a_{t}\right)\langle\epsilon_{t},\nabla f(x_{t+1},\xi_{t+1})-\nabla F(x_{t+1})\rangle\\
\le & 2\left|(1-a_{t+1})a_{t+1}-(1-a_{t})a_{t}\right|\|\epsilon_{t}\|\|\nabla f(x_{t+1},\xi_{t+1})-\nabla F(x_{t+1})\|\\
\le & \frac{a_{t+1}}{4}\|\epsilon_{t}\|^{2}+\frac{4\left((1-a_{t+1})a_{t+1}-(1-a_{t})a_{t}\right)^{2}}{a_{t+1}}\|\nabla f(x_{t+1},\xi_{t+1})-\nabla F(x_{t+1})\|^{2}.
\end{align*}
Plugging the above bounds into (\ref{eq:MS-M-bound}), we obtain
\begin{align}
 & \E\left[\sum_{t=1}^{\tau}M_{t+1}\right]\nonumber \\
\le & \E\left[\sum_{t=1}^{\tau}\frac{a_{t+1}}{2}\|\epsilon_{t}\|^{2}+\underbrace{\frac{\left((1-a_{t+1})^{2}-(1-a_{t})^{2}\right)^{2}}{a_{t+1}}}_{(i)}4\|Z_{t+1}\|^{2}\right.\nonumber \\
 & \quad\left.+\sum_{t=1}^{\tau}\underbrace{\frac{\left((1-a_{t+1})a_{t+1}-(1-a_{t})a_{t}\right)^{2}}{a_{t+1}}}_{(ii)}4\|\nabla f(x_{t+1},\xi_{t+1})-\nabla F(x_{t+1})\|^{2}\right].\label{eq:MS-M-bound-2}
\end{align}
Plugging the bounds for $(i)$ and $(ii)$ from Lemma \ref{lem:MS-a_t-prop}
into (\ref{eq:MS-M-bound-2}), the following bound on $\E\left[\sum_{t=1}^{\tau}M_{t+1}\right]$
comes up
\begin{align*}
 & \E\left[\sum_{t=1}^{\tau}M_{t+1}\right]\\
\leq & \E\left[\sum_{t=1}^{\tau}\frac{a_{t+1}}{2}\|\epsilon_{t}\|^{2}+\frac{4^{5/3}\hS^{4/3}}{a_{0}^{4/3}}\|Z_{t+1}\|^{2}+\frac{4^{5/3}\hS^{4/3}}{a_{0}^{4/3}}a_{t}^{2}\|\nabla f(x_{t+1},\xi_{t+1})-\nabla F(x_{t+1})\|^{2}\right]\\
\leq & \E\left[\frac{1}{2}\bE_{\tau,1}+\frac{12\hS^{4/3}}{a_{0}^{4/3}}\|Z_{t+1}\|^{2}+\frac{12\hS^{4/3}}{a_{0}^{4/3}}a_{t}^{2}\|\nabla f(x_{t+1},\xi_{t+1})-\nabla F(x_{t+1})\|^{2}\right].
\end{align*}
Then from (\ref{eq:MS-E-bound}), we have
\begin{align*}
\E\left[\bE_{\tau,1}\right] & \leq\sigma^{2}+\E\left[\frac{1}{2}\bE_{\tau,1}\right]+\E\left[\sum_{t=1}^{T}\left(2+\frac{12\hS^{4/3}}{a_{0}^{4/3}}\right)\|Z_{t+1}\|^{2}\right]\\
 & \quad+\E\left[\sum_{t=1}^{T}\left(2a_{t+1}^{2}+\frac{12\hS^{4/3}}{a_{0}^{4/3}}a_{t}^{2}\right)\|\nabla f(x_{t+1},\xi_{t+1})-\nabla F(x_{t+1})\|^{2}\right],
\end{align*}
which will give us
\begin{align}
\E\left[\bE_{\tau,1}\right] & \le2\sigma^{2}+4\left(1+\frac{6\hS^{4/3}}{a_{0}^{4/3}}\right)\E\left[\sum_{t=1}^{T}\underbrace{\|Z_{t+1}\|^{2}}_{(iii)}\right]\nonumber \\
 & \quad+\E\left[\sum_{t=1}^{T}\underbrace{4\left(a_{t+1}^{2}+\frac{6\hS^{4/3}}{a_{0}^{4/3}}a_{t}^{2}\right)\|\nabla f(x_{t+1},\xi_{t+1})-\nabla F(x_{t+1})\|^{2}}_{(iv)}\right].\label{eq:MS-E-bound-2}
\end{align}

For term $(iii)$, Lemma \ref{lem:smooth-Z} tells us
\begin{align}
\E\left[\|Z_{t+1}\|^{2}\mid\F_{t}\right]\leq & \eta^{2}\beta^{2}\frac{\|d_{t}\|^{2}}{b_{t}^{2}}.\label{eq:MS-Z-term}
\end{align}

For term $(iv)$, we know
\begin{align}
\E\left[(iv)\right] & =\E\left[\sum_{t=1}^{T}4\left(a_{t+1}^{2}+\frac{6\hS^{4/3}}{a_{0}^{4/3}}a_{t}^{2}\right)\|\nabla f(x_{t+1},\xi_{t+1})-\E\left[\nabla f(x_{t+1},\xi_{t+2})\vert\F_{t+1}\right]\|^{2}\right]\nonumber \\
 & \leq\E\left[\sum_{t=1}^{T}4\left(a_{t+1}^{2}+\frac{6\hS^{4/3}}{a_{0}^{4/3}}a_{t}^{2}\right)\E\left[\|\nabla f(x_{t+1},\xi_{t+1})-\nabla f(x_{t+1},\xi_{t+2})\|^{2}\vert\F_{t+1}\right]\right]\nonumber \\
 & =\E\left[\sum_{t=1}^{T}4\left(a_{t+1}^{2}+\frac{6\hS^{4/3}}{a_{0}^{4/3}}a_{t}^{2}\right)\|\nabla f(x_{t+1},\xi_{t+1})-\nabla f(x_{t+1},\xi_{t+2})\|^{2}\right].\label{eq:MS-condition-expectations}
\end{align}
Note that
\[
a_{t}^{2}=\left(1+\sum_{i=1}^{t-1}\|\nabla f(x_{i},\xi_{i})-\nabla f(x_{i},\xi_{i+1})\|^{2}/a_{0}^{2}\right)^{4/3},
\]
then we have
\begin{align}
 & \sum_{t=1}^{T}4\left(a_{t+1}^{2}+\frac{6\hS^{4/3}}{a_{0}^{4/3}}a_{t}^{2}\right)\|\nabla f(x_{t+1},\xi_{t+1})-\nabla f(x_{t+1},\xi_{t+2})\|^{2}\nonumber \\
= & 4a_{0}^{2}\sum_{t=1}^{T}\frac{\|\nabla f(x_{t+1},\xi_{t+1})-\nabla f(x_{t+1},\xi_{t+2})\|^{2}/a_{0}^{2}}{\left(1+\sum_{i=1}^{t}\|\nabla f(x_{i},\xi_{i})-\nabla f(x_{i},\xi_{i+1})\|^{2}/a_{0}^{2}\right)^{4/3}}\nonumber \\
 & +24\hS^{4/3}a_{0}^{2/3}\sum_{t=1}^{T}\frac{\|\nabla f(x_{t+1},\xi_{t+1})-\nabla f(x_{t+1},\xi_{t+2})\|^{2}/a_{0}^{2}}{\left(1+\sum_{i=1}^{t-1}\|\nabla f(x_{i},\xi_{i})-\nabla f(x_{i},\xi_{i+1})\|^{2}/a_{0}^{2}\right)^{4/3}}\nonumber \\
\leq & 4a_{0}^{2}\left(12+\frac{8\hS^{2}}{a_{0}^{2}}\right)+24\hS^{4/3}a_{0}^{2/3}\left(12+\frac{20\hS^{2}}{a_{0}^{2}}\right)\nonumber \\
= & 16\left(3a_{0}^{2}+2\hS^{2}\right)+96\frac{\hS^{4/3}}{a_{0}^{4/3}}\left(3a_{0}^{2}+5\hS^{2}\right)\nonumber \\
\leq & 16\left(1+\frac{6\hS^{4/3}}{a_{0}^{4/3}}\right)\left(3a_{0}^{2}+5\hS^{2}\right),\label{eq:MS-gradient-term}
\end{align}
where, for the first inequality, we use Lemma \ref{lem:inequality-4}
and Lemma \ref{lem:inequality-5}. Plugging (\ref{eq:MS-Z-term})
and (\ref{eq:MS-gradient-term}) into (\ref{eq:MS-E-bound-2}), we
obtain
\[
\E\left[\bE_{\tau,1}\right]\leq2\sigma^{2}+16\left(1+\frac{6\hS^{4/3}}{a_{0}^{4/3}}\right)\left(3a_{0}^{2}+5\hS^{2}\right)+4\left(1+\frac{6\hS^{4/3}}{a_{0}^{4/3}}\right)\eta^{2}\beta^{2}\E\left[\sum_{t=1}^{T}\frac{\|d_{t}\|^{2}}{b_{t}^{2}}\right].
\]

Note that by Lemma \ref{lem:stopping-time-bound-g-d}, we have for
$t\leq\tau$,$a_{t+1}\geq K_{0}.$ By using this property and noticing
$2\ell-1/2\geq0$ , we can obtain
\begin{align*}
 & \E\left[K_{0}^{2\ell-1/2}\bE_{\tau,3/2-2\ell}\right]\\
= & \E\left[K_{0}^{2\ell-1/2}\sum_{t=1}^{\tau}a_{t+1}^{3/2-2\ell}\|\epsilon_{t}\|^{2}\right]\leq\E\left[\sum_{t=1}^{\tau}a_{t+1}\|\epsilon_{t}\|^{2}\right]\\
\le & 2\sigma^{2}+16\left(1+\frac{6\hS^{4/3}}{a_{0}^{4/3}}\right)\left(3a_{0}^{2}+5\hS^{2}\right)+4\left(1+\frac{6\hS^{4/3}}{a_{0}^{4/3}}\right)\eta^{2}\beta^{2}\E\left[\sum_{t=1}^{T}\frac{\|d_{t}\|^{2}}{b_{t}^{2}}\right],
\end{align*}
which will give the desired bound immediately.
\end{proof}

\subsubsection{Bound on $\protect\E\left[\protect\bE_{T,1-2\ell}\right]$ for $\ell\in\left[\frac{1}{4},\frac{1}{2}\right]$}

With the previous result on $\E\left[\bE_{\tau,3/2-2\ell}\right]$,
we can bound $\E\left[\bE_{T,1-2\ell}\right]$.
\begin{lem}
\label{lem:MS-E-T}For any $\ell\in\left[\frac{1}{4},\frac{1}{2}\right]$,
we have
\begin{align*}
\E\left[\bE_{T,1-2\ell}\right] & \leq K_{1}(\ell)+K_{2}(\ell)\begin{cases}
\E\left[\left(\tH_{T}/a_{0}^{2}\right)^{\frac{4\ell-1}{3}}\right] & \ell>\frac{1}{4}\\
\E\left[\log\left(1+\tH_{T}/a_{0}^{2}\right)\right] & \ell=\frac{1}{4}
\end{cases}\\
 & \quad+\E\left[\sum_{t=1}^{T}\left(K_{3}(\ell)a_{t}^{2\ell}+\frac{3\left(1+2\ell^{2}\right)}{\ell^{2}}\right)\eta^{2}\beta^{2}\frac{\|d_{t}\|^{2}}{a_{t}^{2\ell}b_{t}^{2}}\right],
\end{align*}
where
\begin{align*}
K_{1}(\ell) & \coloneqq3\left(\sigma^{2}+\frac{24\left(1+\ell^{2}\right)\hS^{2}}{\ell^{2}}\right)+\frac{72\hS^{2}\left(\sigma^{2}+8\left(1+\frac{6\hS^{4/3}}{a_{0}^{4/3}}\right)\left(3a_{0}^{2}+5\hS^{2}\right)\right)}{a_{0}^{2}K_{0}^{2\ell-1/2}}\\
K_{2}(\ell) & \coloneqq\begin{cases}
\frac{9\left(1+2\ell^{2}\right)a_{0}^{2}}{\ell^{2}(4\ell-1)} & \ell\neq\frac{1}{4}\\
\frac{3\left(1+2\ell^{2}\right)a_{0}^{2}}{\ell^{2}} & \ell=\frac{1}{4}
\end{cases}\\
K_{3}(\ell) & \coloneqq\frac{144\hS^{2}}{K_{0}^{2\ell-1/2}a_{0}^{2}}\left(1+\frac{6\hS^{4/3}}{a_{0}^{4/3}}\right)+\frac{3\left(1+2\ell^{2}\right)}{\ell^{2}}\left(\frac{4\hS^{2}}{a_{0}^{2}}\right)^{\frac{4\ell}{3}}
\end{align*}
\end{lem}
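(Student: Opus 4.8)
The plan is to derive the bound from the variance-reduction recursion of Lemma~\ref{lem:vr-inequality}. Divide both sides of that recursion by $a_{t+1}^{2\ell}$ and sum over $t\in[T]$; the left-hand side is then exactly $\bE_{T,1-2\ell}$, and the right-hand side splits into four blocks: (i) the telescoping block $\sum_{t=1}^{T}a_{t+1}^{-2\ell}(\|\epsilon_t\|^2-\|\epsilon_{t+1}\|^2)$, (ii) $2\sum_t a_{t+1}^{-2\ell}\|Z_{t+1}\|^2$, (iii) $2\sum_t a_{t+1}^{2-2\ell}\|\nabla f(x_{t+1},\xi_{t+1})-\nabla F(x_{t+1})\|^2$, and (iv) the cross-term block $\sum_t a_{t+1}^{-2\ell}M_{t+1}$. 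The stopping time $\tau$ and Lemma~\ref{lem:stopping-time-bound-g-d} are used to separate the transient regime $t\le\tau$ (where $a_{t+1}\ge K_0$ and Lemma~\ref{lem:MS-E-tau} applies) from the stable regime $t>\tau$ (where $a_t<K_{-1}$, which by the choice of $K_{-1}$ forces $\frac{\hS^2 a_t^{1/2}}{a_0^2}<\frac{1}{12}$, and $a_{t+1}^{-1}-a_t^{-1}\le\frac{2}{9}$).

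For block (i), Abel summation together with $a_1=1$ rewrites it as $a_2^{-2\ell}\|\epsilon_1\|^2+\sum_{t=2}^{T}(a_{t+1}^{-2\ell}-a_t^{-2\ell})\|\epsilon_t\|^2-a_{T+1}^{-2\ell}\|\epsilon_{T+1}\|^2$; the last term is dropped and $\E[a_2^{-2\ell}\|\epsilon_1\|^2]$ is a constant by Assumption~5 (linearize the power $4\ell/3\le\frac{2}{3}$). Concavity of $z\mapsto z^{2\ell}$ and of $z\mapsto z^{2/3}$, combined with Assumption~5, gives the increment bound $a_{t+1}^{-2\ell}-a_t^{-2\ell}\le\frac{16\ell\hS^2}{3a_0^2}a_t^{3/2-2\ell}$. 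I then split the sum of increments at $\tau$: for $t\le\tau$, using $a_{t+1}\ge K_0$ I turn $a_t^{3/2-2\ell}$ into a multiple of $a_{t+1}^{3/2-2\ell}$, so this portion is a constant times $\bE_{\tau,3/2-2\ell}$ and is controlled by Lemma~\ref{lem:MS-E-tau} (its $\E[\sum_t\|d_t\|^2/b_t^2]$ part feeding, via $a_t\le1$, into the $\|d_t\|^2/(a_t^{2\ell}b_t^2)$ term of the final bound); for $t>\tau$, I combine $a_{t+1}^{-1}-a_t^{-1}\le\frac{2}{9}$ with $a_t^{1-2\ell}\le 2a_{t+1}^{1-2\ell}$ — valid precisely because $a_t<K_{-1}$ makes $\frac{4\hS^2}{a_0^2}a_t^{3/2}<\frac{1}{3}$ in Lemma~\ref{lem:MS-a_t-bound} — to get a bound $\le\frac{8\ell}{9}\bE_{T,1-2\ell}\le\frac{4}{9}\bE_{T,1-2\ell}$, which is absorbed into the left-hand side (this is where $\ell\le\frac{1}{2}$ is essential). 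Block (ii) is handled by $a_{t+1}^{-2\ell}\le a_t^{-2\ell}+(\frac{4\hS^2}{a_0^2})^{4\ell/3}$ (Lemma~\ref{lem:MS-a_t-bound}), making the coefficient $\F_t$-measurable, after which Lemma~\ref{lem:smooth-Z} yields a contribution to the $\eta^2\beta^2\|d_t\|^2/(a_t^{2\ell}b_t^2)$ term. Block (iii): replacing $\nabla f(x_{t+1},\xi_{t+1})-\nabla F(x_{t+1})$ by $\nabla f(x_{t+1},\xi_{t+1})-\E[\nabla f(x_{t+1},\xi_{t+2})\mid\F_{t+1}]$ and using conditional Jensen turns it into $\E\big[\sum_t a_{t+1}^{2-2\ell}\|\nabla f(x_{t+1},\xi_{t+1})-\nabla f(x_{t+1},\xi_{t+2})\|^2\big]$, which equals $a_0^{2}\sum_t\frac{\|\nabla f(x_{t+1},\xi_{t+1})-\nabla f(x_{t+1},\xi_{t+2})\|^2/a_0^2}{(1+\tH_t/a_0^2)^{(4-4\ell)/3}}$; the delayed telescoping-sum inequalities of Section~\ref{sec:Basic-inequalities} (general-exponent analogues of Lemmas~\ref{lem:inequality-4}--\ref{lem:inequality-5}) bound this by a constant plus $K_2(\ell)(\tH_T/a_0^2)^{(4\ell-1)/3}$, which degenerates to $K_2(\ell)\log(1+\tH_T/a_0^2)$ when $\ell=\frac{1}{4}$ because the exponent $(4-4\ell)/3$ then equals $1$.

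The hard part is block (iv): since $a_{t+1}\in\F_{t+1}$, one has $\E[a_{t+1}^{-2\ell}M_{t+1}\mid\F_t]\neq0$. I use the martingale-correction device already seen in Lemma~\ref{lem:MS-E-tau}: replacing every $a_{t+1}$ in $M_{t+1}$ by $a_t$ produces $N_{t+1}$ with $\E[a_t^{-2\ell}N_{t+1}\mid\F_t]=0$; since the horizon $T$ is fixed, no optional stopping is needed and $\E[\sum_t a_t^{-2\ell}N_{t+1}]=0$, so it suffices to bound $\E[\sum_t(a_{t+1}^{-2\ell}M_{t+1}-a_t^{-2\ell}N_{t+1})]$. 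Each cross term now carries a coefficient $a_{t+1}^{-2\ell}(1-a_{t+1})^{j}-a_t^{-2\ell}(1-a_t)^{j}$, which I decompose as $a_{t+1}^{-2\ell}\big((1-a_{t+1})^{j}-(1-a_t)^{j}\big)+(a_{t+1}^{-2\ell}-a_t^{-2\ell})(1-a_t)^{j}$ and bound using Lemma~\ref{lem:MS-a_t-prop} and the increment bound from (i); a Cauchy--Schwarz step with split parameter proportional to $\ell$ (the origin of the $1/\ell^2$ and $1+2\ell^2$ factors in $K_2,K_3$) then turns each cross term into a multiple of $a_{t+1}^{1-2\ell}\|\epsilon_t\|^2$ (absorbed into $\bE_{T,1-2\ell}$, again using $\ell\le\frac{1}{2}$ so that the total absorbed coefficient stays below $1$), a multiple of $a_{t+1}^{-2\ell}\|Z_{t+1}\|^2$ (handled as in (ii)), and a multiple of $a_t^{2-2\ell}\|\nabla f(x_{t+1},\xi_{t+1})-\nabla F(x_{t+1})\|^2$ (handled as in (iii)). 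Finally, I collect all contributions, move every absorbed $\bE_{T,1-2\ell}$ term to the left-hand side (legitimate since $a_{t+1}^{1-2\ell}\le1$ and $\E[\bE_T]<\infty$, which follows by induction from the $\epsilon_t$-recursion together with $b_t\ge b_0$), and group the resulting constants to obtain the stated bound with the displayed $K_1(\ell),K_2(\ell),K_3(\ell)$.
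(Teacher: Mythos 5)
Your proposal is correct and follows essentially the same route as the paper's proof of Lemma~\ref{lem:MS-E-T}: divide the recursion of Lemma~\ref{lem:vr-inequality} by $a_{t+1}^{2\ell}$, repair the bias of $a_{t+1}^{-2\ell}M_{t+1}$ by subtracting the $a_t$-substituted martingale-difference counterpart, split the resulting increment sums at $\tau$ so that the transient part is controlled by Lemma~\ref{lem:MS-E-tau} (feeding its $\sum_t\|d_t\|^2/b_t^2$ term into the $\|d_t\|^2/(a_t^{2\ell}b_t^2)$ block via $a_t\le 1$) and the stable part is absorbed into $\E[\bE_{T,1-2\ell}]$, and treat the $\|Z_{t+1}\|^2$ and gradient-difference blocks exactly as the paper does via Lemmas~\ref{lem:smooth-Z}, \ref{lem:MS-a_t-bound} and the delayed-sum inequalities of Section~\ref{sec:Basic-inequalities}. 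The one caveat is quantitative rather than structural: your bounds on the coefficient differences (a product-rule decomposition plus Lemma~\ref{lem:MS-a_t-prop} in place of the paper's Lemmas~\ref{lem:cauchy-bound-g-1}--\ref{lem:cauchy-bound-g-2}, and a concavity-based increment bound of $\tfrac{4}{9}a_{t+1}^{1-2\ell}$ per sum for $t>\tau$ versus the paper's $\tfrac{2}{9}a_{t+1}^{1-2\ell}$) yield numerically different constants than the displayed $K_1(\ell),K_2(\ell),K_3(\ell)$, and you genuinely must shrink the Cauchy--Schwarz split weights as you indicate, since three copies of $\tfrac{4}{9}$ would exceed $1$ and break the absorption; with that adjustment the argument closes and gives a bound of the same form and order.
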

\begin{proof}
We use a similar strategy as in the previous proof in which we bound
$\E\left[\bE_{\tau,3/2-2\ell}\right]$. Starting from Lemma \ref{lem:vr-inequality}
\begin{align*}
a_{t+1}\|\epsilon_{t}\|^{2} & \leq\|\epsilon_{t}\|^{2}-\|\epsilon_{t+1}\|^{2}+2\|Z_{t+1}\|^{2}+2a_{t+1}^{2}\|\nabla f(x_{t+1},\xi_{t+1})-\nabla F(x_{t+1})\|^{2}+M_{t+1}.
\end{align*}
Dividing both sides by $a_{t+1}^{2\ell}$, taking the expectations
on both sides and summing up from $1$ to $T$ to get
\begin{align}
\E\left[\bE_{T,1-2\ell}\right] & \leq\E\bigg[\sum_{t=1}^{T}\frac{\|\epsilon_{t}\|^{2}}{a_{t+1}^{2\ell}}-\frac{\|\epsilon_{t+1}\|^{2}}{a_{t+1}^{2\ell}}\nonumber \\
 & \quad+\frac{2}{a_{t+1}^{2\ell}}\|Z_{t+1}\|^{2}+2a_{t+1}^{2-2\ell}\|\nabla f(x_{t+1},\xi_{t+1})-\nabla F(x_{t+1})\|^{2}+\frac{M_{t+1}}{a_{t+1}^{2\ell}}\bigg]\nonumber \\
 & \leq\sigma^{2}+\E\left[\sum_{t=1}^{T}\left(a_{t+1}^{-2\ell}-a_{t}^{-2\ell}\right)\|\epsilon_{t}\|^{2}\right.\nonumber \\
 & \quad\left.+\sum_{t=1}^{T}\frac{2}{a_{t+1}^{2\ell}}\|Z_{t+1}\|^{2}+2a_{t+1}^{2-2\ell}\|\nabla f(x_{t+1},\xi_{t+1})-\nabla F(x_{t+1})\|^{2}+\frac{M_{t+1}}{a_{t+1}^{2\ell}}\right].\label{eq:MS-E-bound-4}
\end{align}
As before, we bound $\E\left[\frac{M_{t+1}}{a_{t+1}^{2\ell}}\right]$
first. From the definition of $M_{t+1}$, we have
\begin{align*}
\E\left[\frac{M_{t+1}}{a_{t+1}^{2\ell}}\right] & =\E\left[\frac{2(1-a_{t+1})^{2}}{a_{t+1}^{2\ell}}\langle\epsilon_{t},Z_{t+1}\rangle+2(1-a_{t+1})a_{t+1}^{1-2\ell}\langle\epsilon_{t},\nabla f(x_{t+1},\xi_{t+1})-\nabla F(x_{t+1})\rangle\right].
\end{align*}
A similar key observation is that, if we replace $a_{t+1}$ by $a_{t}$,
we can find
\[
\E\left[\frac{2(1-a_{t})^{2}}{a_{t}^{2\ell}}\langle\epsilon_{t},Z_{t+1}\rangle+2(1-a_{t})a_{t}^{1-2\ell}\langle\epsilon_{t},\nabla f(x_{t+1},\xi_{t+1})-\nabla F(x_{t+1})\rangle\right]=0.
\]
By subtracting $\E\left[\frac{M_{t+1}}{a_{t+1}^{2\ell}}\right]$ by
$0$, we know
\begin{align}
\E\left[\frac{M_{t+1}}{a_{t+1}^{2\ell}}\right] & =\E\left[2\left(\frac{(1-a_{t+1})^{2}}{a_{t+1}^{2\ell}}-\frac{(1-a_{t})^{2}}{a_{t}^{2\ell}}\right)\langle\epsilon_{t},Z_{t+1}\rangle\right.\nonumber \\
 & \left.+2\left((1-a_{t+1})a_{t+1}^{1-2\ell}-(1-a_{t})a_{t}^{1-2\ell}\right)\langle\epsilon_{t},\nabla f(x_{t+1},\xi_{t+1})-\nabla F(x_{t+1})\rangle\right].\label{eq:MS-M-bound-3}
\end{align}
Using Cauchy-Schwarz for each term
\begin{align*}
 & 2\left(\frac{(1-a_{t+1})^{2}}{a_{t+1}^{2\ell}}-\frac{(1-a_{t})^{2}}{a_{t}^{2\ell}}\right)\langle\epsilon_{t},Z_{t+1}\rangle\\
\le & 2\left|\frac{(1-a_{t+1})^{2}}{a_{t+1}^{2\ell}}-\frac{(1-a_{t})^{2}}{a_{t}^{2\ell}}\right|\|\epsilon_{t}\|\|Z_{t+1}\|\\
\le & \left(a_{t+1}^{-2\ell}-a_{t}^{-2\ell}\right)\|\epsilon_{t}\|^{2}+\frac{\left(\frac{(1-a_{t+1})^{2}}{a_{t+1}^{2\ell}}-\frac{(1-a_{t})^{2}}{a_{t}^{2\ell}}\right)^{2}}{a_{t+1}^{-2\ell}-a_{t}^{-2\ell}}\|Z_{t+1}\|^{2},
\end{align*}
\begin{align*}
 & 2\left((1-a_{t+1})a_{t+1}^{1-2\ell}-(1-a_{t})a_{t}^{1-2\ell}\right)\langle\epsilon_{t},\nabla f(x_{t+1},\xi_{t+1})-\nabla F(x_{t+1})\rangle\\
\le & 2\left|(1-a_{t+1})a_{t+1}^{1-2\ell}-(1-a_{t})a_{t}^{1-2\ell}\right|\|\epsilon_{t}\|\|\nabla f(x_{t+1},\xi_{t+1})-\nabla F(x_{t+1})\|\\
\le & \left(a_{t+1}^{-2\ell}-a_{t}^{-2\ell}\right)\|\epsilon_{t}\|^{2}+\frac{\left((1-a_{t+1})a_{t+1}^{1-2\ell}-(1-a_{t})a_{t}^{1-2\ell}\right)^{2}}{a_{t+1}^{-2\ell}-a_{t}^{-2\ell}}\|\nabla f(x_{t+1},\xi_{t+1})-\nabla F(x_{t+1})\|^{2},
\end{align*}
Plugging these two bounds into (\ref{eq:MS-M-bound-3}), we obtain
\begin{align}
\E\left[\frac{M_{t+1}}{a_{t+1}^{2\ell}}\right] & \le\E\left[2\left(a_{t+1}^{-2\ell}-a_{t}^{-2\ell}\right)\|\epsilon_{t}\|^{2}\right]+\E\left[\underbrace{\frac{\left(\frac{(1-a_{t+1})^{2}}{a_{t+1}^{2\ell}}-\frac{(1-a_{t})^{2}}{a_{t}^{2\ell}}\right)^{2}}{a_{t+1}^{-2\ell}-a_{t}^{-2\ell}}}_{(i)}\|Z_{t+1}\|^{2}\right]\nonumber \\
 & +\E\left[\underbrace{\frac{\left((1-a_{t+1})a_{t+1}^{1-2\ell}-(1-a_{t})a_{t}^{1-2\ell}\right)^{2}}{a_{t+1}^{-2\ell}-a_{t}^{-2\ell}}}_{(ii)}\|\nabla f(x_{t+1},\xi_{t+1})-\nabla F(x_{t+1})\|^{2}\right].\label{eq:MS-M-bound-4}
\end{align}

To bound $(i)$ and $(ii)$, let $a_{t+1}^{\ell}=x,a_{t}^{\ell}=y$
and note that $0\leq x\leq y\leq1$. By Lemma \ref{lem:cauchy-bound-g-1},
we have for $(i)$
\begin{align}
\frac{\left(\frac{(1-a_{t+1})^{2}}{a_{t+1}^{2\ell}}-\frac{(1-a_{t})^{2}}{a_{t}^{2\ell}}\right)^{2}}{a_{t+1}^{-2\ell}-a_{t}^{-2\ell}} & =\frac{\left(\frac{(1-x^{1/\ell})^{2}}{x^{2}}-\frac{(1-y^{1/\ell})^{2}}{y^{2}}\right)^{2}x^{2}y^{2}}{y^{2}-x^{2}}\nonumber \\
 & \leq\frac{1}{\ell^{2}x^{2}}=\frac{1}{\ell^{2}a_{t+1}^{2\ell}}.\label{eq:MS-cauchy-bound-1}
\end{align}
For $(ii)$, by Lemma \ref{lem:cauchy-bound-g-2},
\begin{align}
\frac{\left((1-a_{t+1})a_{t+1}^{1-2\ell}-(1-a_{t})a_{t}^{1-2\ell}\right)^{2}}{a_{t+1}^{-2\ell}-a_{t}^{-2\ell}} & =\frac{\left((1-x^{1/\ell})x^{1/\ell-2}-(1-y^{1/\ell})y^{1/\ell-2}\right)^{2}x^{2}y^{2}}{y^{2}-x^{2}}\nonumber \\
 & \leq\frac{y^{2/\ell-2}}{\ell^{2}}=\frac{a_{t}^{2-2\ell}}{\ell^{2}}.\label{eq:MS-cauchy-bound-2}
\end{align}
Plugging (\ref{eq:MS-cauchy-bound-1}) and (\ref{eq:MS-cauchy-bound-2})
into (\ref{eq:MS-M-bound-4}), we will have
\begin{align*}
\E\left[\frac{M_{t+1}}{a_{t+1}^{2\ell}}\right] & \leq\E\bigg[2\left(a_{t+1}^{-2\ell}-a_{t}^{-2\ell}\right)\|\epsilon_{t}\|^{2}\\
 & \quad\quad+\frac{1}{\ell^{2}a_{t+1}^{2\ell}}\|Z_{t+1}\|^{2}+\frac{a_{t}^{2-2\ell}}{\ell^{2}}\|\nabla f(x_{t+1},\xi_{t+1})-\nabla F(x_{t+1})\|^{2}\bigg].
\end{align*}
Now combining this with (\ref{eq:MS-E-bound-4}), we obtain
\begin{align}
\E\left[\bE_{T,1-2\ell}\right] & \leq\sigma^{2}+\E\left[\underbrace{\sum_{t=1}^{T}3\left(a_{t+1}^{-2\ell}-a_{t}^{-2\ell}\right)\|\epsilon_{t}\|^{2}}_{(iii)}+\sum_{t=1}^{T}\underbrace{\frac{1+2\ell^{2}}{\ell^{2}a_{t+1}^{2\ell}}\|Z_{t+1}\|^{2}}_{(iv)}\right.\nonumber \\
 & \quad\left.+\underbrace{\sum_{t=1}^{T}\left(\frac{a_{t}^{2-2\ell}}{\ell^{2}}+2a_{t+1}^{2-2\ell}\right)\|\nabla f(x_{t+1},\xi_{t+1})-\nabla F(x_{t+1})\|^{2}}_{(v)}\right].\label{eq:MS-E-bound-5}
\end{align}

For $(iii)$, we split the sum according to $\tau$ then use Lemma
\ref{lem:stopping-time-bound-g-d} and Lemma \ref{lem:MS-a_t-bound},
\begin{align*}
\sum_{t=1}^{T}3\left(a_{t+1}^{-2\ell}-a_{t}^{-2\ell}\right)\|\epsilon_{t}\|^{2} & =\sum_{t=1}^{\tau}3\left(a_{t+1}^{-2\ell}-a_{t}^{-2\ell}\right)\|\epsilon_{t}\|^{2}+\sum_{t=\tau+1}^{T}3\left(a_{t+1}^{-2\ell}-a_{t}^{-2\ell}\right)\|\epsilon_{t}\|^{2}
\end{align*}
Note that $3/2-2\ell\in\left[\frac{1}{2},1\right]$, we have
\begin{align*}
a_{t+1}^{-2\ell}-a_{t}^{-2\ell} & =\left(\frac{1}{a_{t+1}^{3/2}}-\frac{1}{a_{t}^{2\ell}a_{t+1}^{3/2-2\ell}}\right)a_{t+1}^{3/2-2\ell}\le\left(a_{t+1}^{-3/2}-a_{t}^{-3/2}\right)a_{t+1}^{3/2-2\ell}\\
 & \le\frac{4\hS^{2}}{a_{0}^{2}}a_{t+1}^{3/2-2\ell},\qquad(\text{Lemma \ref{lem:MS-a_t-bound}})
\end{align*}
and we can use Lemma \ref{lem:stopping-time-bound-g-d} to bound for
$t\ge\tau+1$
\begin{align*}
a_{t+1}^{-2\ell}-a_{t}^{-2\ell} & =\left(\frac{1}{a_{t+1}}-\frac{1}{a_{t}^{2\ell}a_{t+1}^{1-2\ell}}\right)a_{t+1}^{1-2\ell}\le\left(a_{t+1}^{-1}-a_{t}^{-1}\right)a_{t+1}^{1-2\ell}\\
 & \le\frac{2}{9}a_{t+1}^{1-2\ell}.
\end{align*}
Thus
\begin{align*}
\sum_{t=1}^{T}3\left(a_{t+1}^{-2\ell}-a_{t}^{-2\ell}\right)\|\epsilon_{t}\|^{2} & \le\sum_{t=1}^{\tau}\frac{12\hS^{2}}{a_{0}^{2}}a_{t+1}^{3/2-2\ell}\|\epsilon_{t}\|^{2}+\sum_{t=\tau+1}^{T}\frac{2}{3}a_{t+1}^{1-2\ell}\|\epsilon_{t}\|^{2}\\
 & \leq\frac{12\hS^{2}}{a_{0}^{2}}\sum_{t=1}^{\tau}a_{t+1}^{3/2-2\ell}\|\epsilon_{t}\|^{2}+\sum_{t=1}^{T}\frac{2}{3}a_{t+1}^{1-2\ell}\|\epsilon_{t}\|^{2}\\
 & =\frac{12\hS^{2}}{a_{0}^{2}}\bE_{\tau,3/2-2\ell}+\frac{2}{3}\bE_{T,1-2\ell}.
\end{align*}

For $(iv)$, note that
\begin{align*}
\E\left[\frac{1+2\ell^{2}}{\ell^{2}a_{t+1}^{2\ell}}\|Z_{t+1}\|^{2}\right] & =\frac{1+2\ell^{2}}{\ell^{2}}\E\left[\frac{a_{t}^{2\ell}}{a_{t+1}^{2\ell}}\frac{\|Z_{t+1}\|^{2}}{a_{t}^{2\ell}}\right]\\
 & \leq\frac{1+2\ell^{2}}{\ell^{2}}\E\left[\left(1+\left(\frac{4\hS^{2}}{a_{0}^{2}}\right)^{\frac{4\ell}{3}}a_{t}^{2\ell}\right)\frac{\|Z_{t+1}\|^{2}}{a_{t}^{2\ell}}\right]\qquad\text{(\text{Lemma \ref{lem:MS-a_t-bound}})}\\
 & \leq\frac{1+2\ell^{2}}{\ell^{2}}\E\left[\left(1+\left(\frac{4\hS^{2}}{a_{0}^{2}}\right)^{\frac{4\ell}{3}}a_{t}^{2\ell}\right)\frac{\E\left[\|Z_{t+1}\|^{2}\mid\F_{t}\right]}{a_{t}^{2\ell}}\right]\\
 & \leq\frac{1+2\ell^{2}}{\ell^{2}}\E\left[\left(1+\left(\frac{4\hS^{2}}{a_{0}^{2}}\right)^{\frac{4\ell}{3}}a_{t}^{2\ell}\right)\eta^{2}\beta^{2}\frac{\|d_{t}\|^{2}}{a_{t}^{2\ell}b_{t}^{2}}\right],
\end{align*}
where the last step is by Lemma \ref{lem:smooth-Z}. Hence we obtain
\begin{align*}
\E\left[\sum_{t=1}^{T}\frac{1+2\ell^{2}}{\ell^{2}a_{t+1}^{2\ell}}\|Z_{t+1}\|^{2}\right] & \leq\E\left[\sum_{t=1}^{T}\frac{1+2\ell^{2}}{\ell^{2}}\left(1+\left(\frac{4\hS^{2}}{a_{0}^{2}}\right)^{\frac{4\ell}{3}}a_{t}^{2\ell}\right)\eta^{2}\beta^{2}\frac{\|d_{t}\|^{2}}{a_{t}^{2\ell}b_{t}^{2}}\right].
\end{align*}

For $(v)$, by the same argument when bounding (\ref{eq:MS-condition-expectations}),
we know
\[
\E\left[(v)\right]\leq\E\left[\sum_{t=1}^{T}\left(\frac{a_{t}^{2-2\ell}}{\ell^{2}}+2a_{t+1}^{2-2\ell}\right)\|\nabla f(x_{t+1},\xi_{t+1})-\nabla f(x_{t+1},\xi_{t+2})|\|^{2}\right].
\]
Now we use Lemma \ref{lem:inequality-2} and Lemma \ref{lem:inequality-3}
to get
\begin{align*}
 & \sum_{t=1}^{T}\left(\frac{a_{t}^{2-2\ell}}{\ell^{2}}+2a_{t+1}^{2-2\ell}\right)\|\nabla f(x_{t+1},\xi_{t+1})-\nabla f(x_{t+1},\xi_{t+2})\|^{2}\\
= & \frac{a_{0}^{2}}{\ell^{2}}\sum_{t=1}^{T}\frac{\|\nabla f(x_{t+1},\xi_{t+1})-\nabla f(x_{t+1},\xi_{t+2})\|^{2}/a_{0}^{2}}{\left(1+\sum_{i=1}^{t-1}\|\nabla f(x_{i},\xi_{i})-\nabla f(x_{i},\xi_{i+1})\|^{2}/a_{0}^{2}\right)^{4(1-\ell)/3}}\\
 & +2a_{0}^{2}\sum_{t=1}^{T}\frac{\|\nabla f(x_{t+1},\xi_{t+1})-\nabla f(x_{t+1},\xi_{t+2})\|^{2}/a_{0}^{2}}{\left(1+\sum_{i=1}^{t}\|\nabla f(x_{i},\xi_{i})-\nabla f(x_{i},\xi_{i+1})\|^{2}/a_{0}^{2}\right)^{4(1-\ell)/3}}\\
\leq & \frac{a_{0}^{2}}{\ell^{2}}\times\frac{24\hS^{2}}{a_{0}^{2}}+2a_{0}^{2}\times\frac{12\hS^{2}}{a_{0}^{2}}\\
 & \quad+\frac{\left(1+2\ell^{2}\right)a_{0}^{2}}{\ell^{2}}\begin{cases}
\frac{3}{4\ell-1}\left(\tH_{T}/a_{0}^{2}\right)^{\frac{4\ell-1}{3}} & \ell\neq\frac{1}{4}\\
\log\left(1+\tH_{T}/a_{0}^{2}\right) & \ell=\frac{1}{4}
\end{cases}\\
 & =\frac{24\left(1+\ell^{2}\right)\hS^{2}}{\ell^{2}}+\frac{\left(1+2\ell^{2}\right)a_{0}^{2}}{\ell^{2}}\begin{cases}
\frac{3}{4\ell-1}\left(\tH_{T}/a_{0}^{2}\right)^{\frac{4\ell-1}{3}} & \ell\neq\frac{1}{4}\\
\log\left(1+\tH_{T}/a_{0}^{2}\right) & \ell=\frac{1}{4}
\end{cases}.
\end{align*}

Plugging the bounds on $(iii),(iv)$ and $(v)$ into (\ref{eq:MS-E-bound-5}),
we get
\begin{align*}
\E\left[\bE_{T,1-2\ell}\right] & \leq\sigma^{2}+\frac{24\left(1+\ell^{2}\right)\hS^{2}}{\ell^{2}}+\E\left[\frac{12\hS^{2}}{a_{0}^{2}}\bE_{\tau,3/2-2\ell}+\frac{2}{3}\bE_{T,1-2\ell}\right]\\
 & \quad+\E\left[\sum_{t=1}^{T}\frac{1+2\ell^{2}}{\ell^{2}}\left(1+\left(\frac{4\hS^{2}}{a_{0}^{2}}\right)^{\frac{4\ell}{3}}a_{t}^{2\ell}\right)\eta^{2}\beta^{2}\frac{\|d_{t}\|^{2}}{a_{t}^{2\ell}b_{t}^{2}}\right]\\
 & \quad+\frac{\left(1+2\ell^{2}\right)a_{0}^{2}}{\ell^{2}}\begin{cases}
\frac{3}{4\ell-1}\E\left[\left(\tH_{T}/a_{0}^{2}\right)^{\frac{4\ell-1}{3}}\right] & \ell\neq\frac{1}{4}\\
\E\left[\log\left(1+\tH_{T}/a_{0}^{2}\right)\right] & \ell=\frac{1}{4}
\end{cases},
\end{align*}
which gives us
\begin{align*}
\E\left[\bE_{T,1-2\ell}\right] & \leq3\left(\sigma^{2}+\frac{24\left(1+\ell^{2}\right)\hS^{2}}{\ell^{2}}\right)+\frac{36\hS^{2}}{a_{0}^{2}}\E\left[\bE_{\tau,3/2-2\ell}\right]\\
 & \quad+\E\left[\sum_{t=1}^{T}\frac{3\left(1+2\ell^{2}\right)}{\ell^{2}}\left(1+\left(\frac{4\hS^{2}}{a_{0}^{2}}\right)^{\frac{4\ell}{3}}a_{t}^{2\ell}\right)\eta^{2}\beta^{2}\frac{\|d_{t}\|^{2}}{a_{t}^{2\ell}b_{t}^{2}}\right]\\
 & \quad+\frac{3\left(1+2\ell^{2}\right)a_{0}^{2}}{\ell^{2}}\begin{cases}
\frac{3}{4\ell-1}\E\left[\left(\tH_{T}/a_{0}^{2}\right)^{\frac{4\ell-1}{3}}\right] & \ell\neq\frac{1}{4}\\
\E\left[\log\left(1+\tH_{T}/a_{0}^{2}\right)\right] & \ell=\frac{1}{4}
\end{cases}.
\end{align*}

Now we plug in the bound on $\E\left[\bE_{\tau,3/2-2\ell}\right]$
in Lemma \ref{lem:MS-E-tau} to get the final result
\begin{align*}
 & \E\left[\bE_{T,1-2\ell}\right]\\
\leq & \underbrace{3\left(\sigma^{2}+\frac{24\left(1+\ell^{2}\right)\hS^{2}}{\ell^{2}}\right)+\frac{72\hS^{2}\left(\sigma^{2}+8\left(1+\frac{6\hS^{4/3}}{a_{0}^{4/3}}\right)\left(3a_{0}^{2}+5\hS^{2}\right)\right)}{a_{0}^{2}K_{0}^{2\ell-1/2}}}_{K_{1}(\ell)}\\
 & +K_{2}(\ell)\begin{cases}
\E\left[\left(\tH_{T}/a_{0}^{2}\right)^{\frac{4\ell-1}{3}}\right] & \ell\neq\frac{1}{4}\\
\E\left[\log\left(1+\tH_{T}/a_{0}^{2}\right)\right] & \ell=\frac{1}{4}
\end{cases}\\
 & +\E\left[\sum_{t=1}^{T}\left(\underbrace{\left(\frac{144\hS^{2}}{K_{0}^{2\ell-1/2}a_{0}^{2}}\left(1+\frac{6\hS^{4/3}}{a_{0}^{4/3}}\right)+\frac{3\left(1+2\ell^{2}\right)}{\ell^{2}}\left(\frac{4\hS^{2}}{a_{0}^{2}}\right)^{\frac{4\ell}{3}}\right)}_{K_{3}(\ell)}a_{t}^{2\ell}+\frac{3\left(1+2\ell^{2}\right)}{\ell^{2}}\right)\eta^{2}\beta^{2}\frac{\|d_{t}\|^{2}}{a_{t}^{2\ell}b_{t}^{2}}\right],
\end{align*}
where
\[
K_{2}(\ell)\coloneqq\begin{cases}
\frac{9\left(1+2\ell^{2}\right)a_{0}^{2}}{(4\ell-1)\ell^{2}} & \ell\neq\frac{1}{4}\\
\frac{3\left(1+2\ell^{2}\right)a_{0}^{2}}{\ell^{2}} & \ell=\frac{1}{4}
\end{cases}.
\]
\end{proof}

\subsubsection{Bound on $\protect\E\left[\protect\bE_{T,1/2}\right]$}

The following bound on $\E\left[\bE_{T,1/2}\right]$ will be useful
when we bound $\bD_{T}$.
\begin{cor}
\label{cor:MS-E-1/2}We have
\[
\E\left[\bE_{T,1/2}\right]\leq K_{1}(1/4)+K_{2}(1/4)\E\left[\log\left(1+\tH_{T}/a_{0}^{2}\right)\right]+\E\left[\sum_{t=1}^{T}\left(K_{3}(1/4)a_{t}^{1/2}+54\right)\eta^{2}\beta^{2}\frac{\|d_{t}\|^{2}}{a_{t}^{1/2}b_{t}^{2}}\right].
\]
\end{cor}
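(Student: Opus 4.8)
The plan is to obtain this corollary as an immediate specialization of Lemma~\ref{lem:MS-E-T} to the single parameter value $\ell = \frac{1}{4}$. First I would check the bookkeeping of exponents: with $\ell = \frac{1}{4}$ we have $1 - 2\ell = \frac{1}{2}$, so the left-hand side $\E\left[\bE_{T,1-2\ell}\right]$ of Lemma~\ref{lem:MS-E-T} becomes exactly $\E\left[\bE_{T,1/2}\right]$, and the exponent $2\ell$ occurring in the weights $a_t^{2\ell}$ becomes $\frac{1}{2}$, matching the $a_t^{1/2}$ factors in the target bound. Next, since the case split in Lemma~\ref{lem:MS-E-T} is triggered by whether $\ell = \frac{1}{4}$ or $\ell > \frac{1}{4}$, the value $\ell = \frac{1}{4}$ selects the logarithmic branch, giving the term $K_{2}(1/4)\,\E\left[\log\left(1 + \tH_T/a_0^2\right)\right]$ rather than the power-of-$\tH_T$ branch.

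The only computation needed is the numerical value of the coefficient $\frac{3\left(1+2\ell^2\right)}{\ell^2}$ appearing inside the last sum of Lemma~\ref{lem:MS-E-T}. At $\ell = \frac{1}{4}$ one has $\ell^2 = \frac{1}{16}$ and $1 + 2\ell^2 = 1 + \frac{1}{8} = \frac{9}{8}$, hence $\frac{3\left(1+2\ell^2\right)}{\ell^2} = \frac{3 \cdot \frac{9}{8}}{\frac{1}{16}} = \frac{27}{8}\cdot 16 = 54$, which is exactly the additive constant multiplying $a_t^{2\ell}$'s coefficient in the stated conclusion. Substituting $\ell = \frac{1}{4}$ everywhere — including into the definitions of $K_{1}(\ell), K_{2}(\ell), K_{3}(\ell)$ provided by Lemma~\ref{lem:MS-E-T}, which we simply carry over verbatim as $K_{1}(1/4), K_{2}(1/4), K_{3}(1/4)$ — reproduces the displayed inequality.

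There is no genuine obstacle here: the argument is a one-line invocation of the preceding lemma. The only points that warrant a moment of care are confirming that $\ell = \frac{1}{4}$ lies in the admissible range $\left[\frac{1}{4},\frac{1}{2}\right]$ assumed in Lemma~\ref{lem:MS-E-T} (it is the left endpoint, hence covered), and the elementary arithmetic producing the constant $54$ from $\frac{3\left(1+2\ell^2\right)}{\ell^2}$ at $\ell=\frac14$.
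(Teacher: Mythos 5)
Your proposal is correct and is exactly the paper's own proof: the corollary is obtained by setting $\ell=\frac{1}{4}$ in Lemma~\ref{lem:MS-E-T}, which selects the logarithmic branch and yields the constant $\frac{3\left(1+2\ell^{2}\right)}{\ell^{2}}=54$ as you computed. Nothing further is needed.
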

\begin{proof}
Take $\ell=\frac{1}{4}$ in Lemma \ref{lem:MS-E-T}.
\end{proof}

\subsubsection{Bound on $\protect\E\left[a_{T+1}^{1-2q}\protect\bE_{T}\right]$}

With the previous result on $\E\left[\bE_{T,1-2\ell}\right]$, we
can bound $\E\left[a_{T+1}^{1-2q}\bE_{T}\right]$ immediately.
\begin{lem}
\label{lem:MS-E-final-bound}Given $p+2q=1$,$p\in\left[\frac{3-\sqrt{7}}{2},\frac{1}{2}\right]$,
we have
\begin{align*}
\E\left[a_{T+1}^{1-2q}\bE_{T}\right] & \leq\begin{cases}
K_{1}+K_{2}\E\left[\left(\tH_{T}/a_{0}^{2}\right)^{\frac{4q-1}{3}}\right]+K_{4}\E\left[\bD_{T}^{1-2p}\right] & q>\frac{1}{4}\\
K_{1}+K_{2}\E\left[\log\left(1+\tH_{T}/a_{0}^{2}\right)\right]+K_{4}\E\left[\log\left(1+\frac{\bD_{T}}{b_{0}^{2}}\right)\right] & q=\frac{1}{4}
\end{cases}
\end{align*}
where
\begin{align*}
K_{1} & \coloneqq K_{1}(q)\\
K_{2} & \coloneqq K_{2}(q)\\
K_{4} & \coloneqq\begin{cases}
\left(K_{3}(q)+\frac{3\left(1+2q^{2}\right)}{q^{2}}\right)\frac{\eta^{2}\beta^{2}}{4q-1} & q>\frac{1}{4}\\
\left(K_{3}(q)+\frac{3\left(1+2q^{2}\right)}{q^{2}}\right)\eta^{2}\beta^{2} & q=\frac{1}{4}
\end{cases}.
\end{align*}
\end{lem}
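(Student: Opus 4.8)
The plan is to obtain this estimate as a direct corollary of Lemma~\ref{lem:MS-E-T} applied with $\ell=q$, after a monotonicity reduction and a simplification of the residual sum using the explicit step-size rule. First I would remove the $a_{T+1}^{1-2q}$ prefactor: since $p+2q=1$ we have $1-2q=p>0$, and $a_{t+1}=(1+\tH_t/a_0^2)^{-2/3}$ is non-increasing in $t$ because $\tH_t$ is non-decreasing, so $a_{i+1}^{1-2q}\ge a_{T+1}^{1-2q}$ for every $i\le T$, which gives
\[
a_{T+1}^{1-2q}\bE_T=a_{T+1}^{1-2q}\sum_{i=1}^T\|\epsilon_i\|^2\le\sum_{i=1}^T a_{i+1}^{1-2q}\|\epsilon_i\|^2=\bE_{T,1-2q},
\]
hence $\E[a_{T+1}^{1-2q}\bE_T]\le\E[\bE_{T,1-2q}]$. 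Then I would verify admissibility of $\ell=q$: for $p\in[\tfrac{3-\sqrt7}{2},\tfrac12]$ one has $q=\tfrac{1-p}{2}\in[\tfrac14,\tfrac{\sqrt7-1}{4}]\subseteq[\tfrac14,\tfrac12]$, so Lemma~\ref{lem:MS-E-T} bounds $\E[\bE_{T,1-2q}]$ by $K_1(q)$, plus $K_2(q)$ times $\E[(\tH_T/a_0^2)^{(4q-1)/3}]$ when $q>\tfrac14$ (or $\E[\log(1+\tH_T/a_0^2)]$ when $q=\tfrac14$), plus $\E[\sum_{t=1}^T(K_3(q)a_t^{2q}+\tfrac{3(1+2q^2)}{q^2})\eta^2\beta^2\tfrac{\|d_t\|^2}{a_t^{2q}b_t^2}]$.

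The second step is to clean up that last sum. The update rule $b_t=(b_0^{1/p}+\bD_t)^p/a_t^q$ gives $a_t^{2q}b_t^2=(b_0^{1/p}+\bD_t)^{2p}$, so $\tfrac{\|d_t\|^2}{a_t^{2q}b_t^2}=\tfrac{\|d_t\|^2}{(b_0^{1/p}+\bD_t)^{2p}}$, and also $a_t^{2q}\cdot\tfrac{\|d_t\|^2}{a_t^{2q}b_t^2}=\tfrac{\|d_t\|^2}{b_t^2}=\tfrac{a_t^{2q}\|d_t\|^2}{(b_0^{1/p}+\bD_t)^{2p}}\le\tfrac{\|d_t\|^2}{(b_0^{1/p}+\bD_t)^{2p}}$ since $a_t\le1$. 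Thus the residual expectation is at most $(K_3(q)+\tfrac{3(1+2q^2)}{q^2})\eta^2\beta^2\,\E[\sum_{t=1}^T\tfrac{\|d_t\|^2}{(b_0^{1/p}+\bD_t)^{2p}}]$. The third step applies the self-bounding inequalities of Section~\ref{sec:Basic-inequalities} (the same ones used inside the proof of Lemma~\ref{lem:MS-E-T}): when $q>\tfrac14$, i.e.\ $2p<1$, bound $\tfrac{1}{(b_0^{1/p}+\bD_t)^{2p}}\le\bD_t^{-2p}$ and then $\sum_t\|d_t\|^2\bD_t^{-2p}\le\tfrac{\bD_T^{1-2p}}{1-2p}$; since $1-2p=4q-1$ this produces exactly $K_4\coloneqq(K_3(q)+\tfrac{3(1+2q^2)}{q^2})\tfrac{\eta^2\beta^2}{4q-1}$ times $\bD_T^{1-2p}$, and we set $K_1\coloneqq K_1(q)$, $K_2\coloneqq K_2(q)$. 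When $q=\tfrac14$, i.e.\ $p=\tfrac12$, the sum is exactly $\sum_t\tfrac{\|d_t\|^2}{b_0^2+\bD_t}\le\log(1+\bD_T/b_0^2)$, giving the logarithmic case with $K_4=(K_3(\tfrac14)+54)\eta^2\beta^2$ (consistent with Corollary~\ref{cor:MS-E-1/2}, whose constant $54$ equals $\tfrac{3(1+2(1/4)^2)}{(1/4)^2}$).

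I expect Steps~2 and~3 to be routine; the points needing attention are the exponent identities $1-2q=p$ and $4q-1=1-2p$, confirming $q\in[\tfrac14,\tfrac12]$ so that Lemma~\ref{lem:MS-E-T} is applicable, and carrying the $q>\tfrac14$ versus $q=\tfrac14$ dichotomy consistently through both the $\tH_T$-dependent term and the $\bD_T$-dependent term. The genuinely delicate part of the whole argument — controlling the non-martingale correction $M_{t+1}$ by comparing it with the martingale surrogate $N_{t+1}$ — has already been absorbed into Lemma~\ref{lem:MS-E-T}, so this lemma is essentially a repackaging of that bound together with a self-bounding estimate, and nothing harder than bookkeeping remains.
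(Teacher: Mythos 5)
Your proposal is correct and follows essentially the same route as the paper's proof: reduce $a_{T+1}^{1-2q}\bE_T$ to $\bE_{T,1-2q}$ by monotonicity of $a_{t+1}$, invoke Lemma \ref{lem:MS-E-T} with $\ell=q$, use $a_t^{2q}b_t^2=(b_0^{1/p}+\bD_t)^{2p}$ together with $a_t\le1$ to absorb the $K_3(q)a_t^{2q}$ factor, and finish with the self-bounding sum (Lemma \ref{lem:inequality-1}) and the identity $1-2p=4q-1$, splitting the $q>\tfrac14$ and $q=\tfrac14$ cases exactly as the paper does. The only cosmetic difference is that you perform the monotonicity reduction first rather than last; the constants and case analysis match.
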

\begin{proof}
When $q>\frac{1}{4}\Leftrightarrow p<\frac{1}{2}$, by Lemma \ref{lem:MS-E-T},
taking $\ell=q$, we know
\begin{align*}
\E\left[\bE_{T,1-2q}\right] & \leq K_{1}(q)+K_{2}(q)\E\left[\left(\tH_{T}/a_{0}^{2}\right)^{\frac{4q-1}{3}}\right]+\E\left[\sum_{t=1}^{T}\left(K_{3}(q)a_{t}^{2q}+\frac{3\left(1+2q^{2}\right)}{q^{2}}\right)\eta^{2}\beta^{2}\frac{\|d_{t}\|^{2}}{a_{t}^{2q}b_{t}^{2}}\right]\\
 & \leq K_{1}(q)+K_{2}(q)\E\left[\left(\tH_{T}/a_{0}^{2}\right)^{\frac{4q-1}{3}}\right]+\E\left[\sum_{t=1}^{T}\left(K_{3}(q)+\frac{3\left(1+2q^{2}\right)}{q^{2}}\right)\eta^{2}\beta^{2}\frac{\|d_{t}\|^{2}}{a_{t}^{2q}b_{t}^{2}}\right]\\
 & =K_{1}(q)+K_{2}(q)\E\left[\left(\tH_{T}/a_{0}^{2}\right)^{\frac{4q-1}{3}}\right]+\left(K_{3}(q)+\frac{3\left(1+2q^{2}\right)}{q^{2}}\right)\eta^{2}\beta^{2}\E\left[\sum_{t=1}^{T}\frac{\|d_{t}\|^{2}}{a_{t}^{2q}b_{t}^{2}}\right]\\
 & \overset{(a)}{=}K_{1}(q)+K_{2}(q)\E\left[\left(\tH_{T}/a_{0}^{2}\right)^{\frac{4q-1}{3}}\right]\\
 & \qquad\qquad\qquad+\left(K_{3}(q)+\frac{3\left(1+2q^{2}\right)}{q^{2}}\right)\eta^{2}\beta^{2}\E\left[\sum_{t=1}^{T}\frac{\|d_{t}\|^{2}}{\left(b_{0}^{1/p}+\sum_{i=1}^{t}\|d_{i}\|^{2}\right)^{2p}}\right]\\
 & \overset{(b)}{\leq}K_{1}(q)+K_{2}(q)\E\left[\left(\tH_{T}/a_{0}^{2}\right)^{\frac{4q-1}{3}}\right]+\left(K_{3}(q)+\frac{3\left(1+2q^{2}\right)}{q^{2}}\right)\eta^{2}\beta^{2}\E\left[\frac{\bD_{T}^{1-2p}}{1-2p}\right]\\
 & \overset{(c)}{=}K_{1}(q)+K_{2}(q)\E\left[\left(\tH_{T}/a_{0}^{2}\right)^{\frac{4q-1}{3}}\right]+\left(K_{3}(q)+\frac{3\left(1+2q^{2}\right)}{q^{2}}\right)\frac{\eta^{2}\beta^{2}}{4q-1}\E\left[\bD_{T}^{1-2p}\right],
\end{align*}
where $(a)$ is by
\begin{align*}
a_{t}^{2q}b_{t}^{2} & =a_{t}^{2q}\frac{\left(b_{0}^{1/p}+\sum_{i=1}^{t}\|d_{i}\|^{2}\right)^{2p}}{a_{t}^{2q}}=\left(b_{0}^{1/p}+\sum_{i=1}^{t}\|d_{i}\|^{2}\right)^{2p},
\end{align*}
$(b)$ is by Lemma \ref{lem:inequality-1}, $(c)$ is by $1-2p=4q-1$.

When $q=\frac{1}{4}$, by a similar argument, we have
\begin{align*}
\E\left[\bE_{T,1-2q}\right] & \leq K_{1}(q)+K_{2}(q)\E\left[\log\left(1+\tH_{T}/a_{0}^{2}\right)\right]\\
 & \qquad+\left(K_{3}(q)+\frac{3\left(1+2q^{2}\right)}{q^{2}}\right)\eta^{2}\beta^{2}\E\left[\log\left(1+\frac{\bD_{T}}{b_{0}^{2}}\right)\right].
\end{align*}

Now we can define
\[
K_{4}\coloneqq\begin{cases}
\left(K_{3}(q)+\frac{3\left(1+2q^{2}\right)}{q^{2}}\right)\frac{\eta^{2}\beta^{2}}{4q-1} & q>\frac{1}{4}\\
\left(K_{3}(q)+\frac{3\left(1+2q^{2}\right)}{q^{2}}\right)\eta^{2}\beta^{2} & q=\frac{1}{4}
\end{cases}.
\]
The final step is by noticing for $1-2q=p>0$,
\begin{align*}
\bE_{T,1-2q} & =\sum_{t=1}^{T}a_{t+1}^{1-2q}\|\epsilon_{t}\|^{2}\geq a_{T+1}^{1-2q}\sum_{t=1}^{T}\|\epsilon_{t}\|^{2}=a_{T+1}^{1-2q}\bE_{T}.
\end{align*}
\end{proof}

\subsection{Analysis of $\protect\bD_{T}$}

We will prove the following bound
\begin{lem}
\label{lem:MS-D-final-bound}Given $p+2q=1$,$p\in\left[\frac{3-\sqrt{7}}{2},\frac{1}{2}\right]$,
we have
\begin{align*}
\E\left[a_{T+1}^{q}\bD_{T}^{1-p}\right] & \leq K_{5}+K_{6}\E\left[\log\frac{a_{0}^{2}+\tH_{T}}{a_{0}^{2}}\right]+K_{7}\E\left[\log\frac{K_{8}+K_{9}\left(1+\tH_{T}/a_{0}^{2}\right)^{1/3}}{b_{0}}\right]
\end{align*}
where
\begin{align*}
K_{5} & \coloneqq b_{0}^{\frac{1}{p}-1}+\frac{2}{\eta}\left(F(x_{1})-F^{*}\right)+\frac{\lambda K_{1}(1/4)}{\eta\beta_{\max}},K_{6}\coloneqq\frac{\lambda K_{2}(1/4)}{\eta\beta_{\max}},\\
K_{7} & \coloneqq\frac{\left(K_{8}+K_{9}\right)^{\frac{1}{p}-1}}{1-p},K_{8}\coloneqq\left(1+\lambda K_{3}(1/4)\right)\eta\beta_{\max},K_{9}\coloneqq\left(\frac{1}{\lambda}+\frac{2\hS^{2/3}}{a_{0}^{2/3}\lambda}+54\lambda\right)\eta\beta_{\max},\\
\lambda & >0\text{ can be any number}.
\end{align*}
\end{lem}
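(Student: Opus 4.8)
The plan is to assemble three results already proved: the function-value inequality of Lemma~\ref{lem:f-value-analysis} (whose proof applies verbatim to $\algnamenew$), the error-sum bound of Corollary~\ref{cor:MS-E-1/2}, and the residual-sum estimate of Lemma~\ref{lem:residual-bound} for $\algnamenew$, with the ratio bounds for $a_t$ from Lemma~\ref{lem:MS-a_t-bound} serving as glue.

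First I would invoke Lemma~\ref{lem:f-value-analysis} with its free parameter $\lambda>0$: it bounds $\E[a_{T+1}^q\bD_T^{1-p}]$ by $b_0^{1/p-1}+\frac2\eta(F(x_1)-F^*)$, plus a residual sum $\E[\sum_t(\eta\beta_{\max}+\eta\beta_{\max}/(a_{t+1}^{1/2}\lambda)-b_t)\|d_t\|^2/b_t^2]$, plus $\frac{\lambda}{\eta\beta_{\max}}\E[\bE_{T,1/2}]$. Into the last term I would substitute Corollary~\ref{cor:MS-E-1/2}: its constant part contributes $\frac{\lambda K_1(1/4)}{\eta\beta_{\max}}$ to what becomes $K_5$, and its logarithmic part $K_2(1/4)\E[\log(1+\tH_T/a_0^2)]$ contributes $K_6\E[\log\frac{a_0^2+\tH_T}{a_0^2}]$ with $K_6=\frac{\lambda K_2(1/4)}{\eta\beta_{\max}}$, since $\log(1+\tH_T/a_0^2)=\log\frac{a_0^2+\tH_T}{a_0^2}$.

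The crux is to merge the two remaining $\|d_t\|^2/b_t^2$-sums into a single residual of the exact shape Lemma~\ref{lem:residual-bound} handles. Using $\eta^2\beta^2/(\eta\beta_{\max})=\eta\beta^2/\beta_{\max}\le\eta\beta_{\max}$, the $\|d_t\|^2$-sum coming from Corollary~\ref{cor:MS-E-1/2} is at most $\lambda\eta\beta_{\max}\sum_t(K_3(1/4)+54a_t^{-1/2})\|d_t\|^2/b_t^2$. Adding the residual from Lemma~\ref{lem:f-value-analysis}, converting $a_{t+1}^{-1/2}\le a_t^{-1/2}+2\hS^{2/3}a_0^{-2/3}$ via Lemma~\ref{lem:MS-a_t-bound} (the constant $4^{1/3}$ there is below $2$), and absorbing the leftover $t$-independent constant into the $a_t^{-1/2}$ term using $a_t\le1$, the coefficient of $\|d_t\|^2/b_t^2$ collapses to precisely $K_8+K_9a_t^{-1/2}-b_t$ with $K_8=(1+\lambda K_3(1/4))\eta\beta_{\max}$ and $K_9=(1/\lambda+2\hS^{2/3}/(a_0^{2/3}\lambda)+54\lambda)\eta\beta_{\max}$. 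Lemma~\ref{lem:residual-bound} for $\algnamenew$ then bounds $\sum_t(K_8+K_9a_t^{-1/2}-b_t)\|d_t\|^2/b_t^2$ by $\frac{(K_8+K_9)^{1/p-1}}{1-p}\log\frac{K_8+a_{T+1}^{-1/2}K_9}{b_0}$; since $a_{T+1}^{-1/2}=(1+\tH_T/a_0^2)^{1/3}$ this is the $K_7$-term. Collecting the pieces with $K_5=b_0^{1/p-1}+\frac2\eta(F(x_1)-F^*)+\frac{\lambda K_1(1/4)}{\eta\beta_{\max}}$ and $K_7=\frac{(K_8+K_9)^{1/p-1}}{1-p}$ yields the stated inequality.

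The step I expect to be the main obstacle is exactly this last fusion: Lemma~\ref{lem:f-value-analysis} naturally produces coefficients indexed by $a_{t+1}$ while Corollary~\ref{cor:MS-E-1/2} is phrased in $a_t$, and one must reconcile the index mismatch (and the $\beta$-versus-$\beta_{\max}$ normalization) without inflating the $\hS$-dependence; the specific constants $K_8,K_9$ in the statement are calibrated precisely so that this merge is tight, which is what later permits the tuning $\lambda=\min\{1,(a_0/\hS)^{7/3}\}$ at $p=\frac12$ to extract the best $\hS$-dependence.
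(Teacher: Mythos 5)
Your proposal is correct and follows essentially the same route as the paper's proof: invoke Lemma~\ref{lem:f-value-analysis}, substitute Corollary~\ref{cor:MS-E-1/2} into the $\bE_{T,1/2}$ term, merge the two residual sums by bounding $\eta^{2}\beta^{2}/(\eta\beta_{\max})\le\eta\beta_{\max}$ and converting $a_{t+1}^{-1/2}$ to $a_{t}^{-1/2}$ via Lemma~\ref{lem:MS-a_t-bound}, and finish with Lemma~\ref{lem:residual-bound} using $a_{T+1}^{-1/2}=(1+\tH_{T}/a_{0}^{2})^{1/3}$. The constants $K_{5}$ through $K_{9}$ you identify agree exactly with those in the paper.
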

\begin{proof}
We start from Lemma \ref{lem:f-value-analysis}
\begin{align*}
\E\left[a_{T+1}^{q}\bD_{T}^{1-p}\right] & \leq b_{0}^{\frac{1}{p}-1}+\frac{2}{\eta}\left(F(x_{1})-F^{*}\right)\\
 & \quad+\E\left[\sum_{t=1}^{T}\left(\eta\beta_{\max}+\frac{\eta\beta_{\max}}{a_{t+1}^{1/2}\lambda}-b_{t}\right)\frac{\|d_{t}\|^{2}}{b_{t}^{2}}\right]+\frac{\lambda\E\left[\bE_{T,1/2}\right]}{\eta\beta_{\max}}
\end{align*}
where $\lambda>0$ is used to reduce the order of $\hS$ in the final
bound. In the proof of the general case, we don't choose $\lambda$
explicitly anymore. Plugging in the bound on $\E\left[\bE_{T,1/2}\right]$
in Corollary \ref{cor:MS-E-1/2}, we have
\begin{align}
 & \E\left[a_{T+1}^{q}\bD_{T}^{1-p}\right]\nonumber \\
\le & b_{0}^{\frac{1}{p}-1}+\frac{2}{\eta}\left(F(x_{1})-F^{*}\right)+\frac{\lambda K_{1}(1/4)}{\eta\beta_{\max}}+\frac{\lambda K_{2}(1/4)}{\eta\beta_{\max}}\E\left[\log\frac{a_{0}^{2}+\tH_{T}}{a_{0}^{2}}\right]\nonumber \\
 & \quad+\E\left[\sum_{t=1}^{T}\left(\eta\beta_{\max}+\frac{\eta\beta_{\max}}{a_{t+1}^{1/2}\lambda}+\frac{K_{3}(1/4)\lambda\eta^{2}\beta^{2}}{\eta\beta_{\max}}+\frac{54\lambda\eta^{2}\beta^{2}}{a_{t}^{1/2}\eta\beta_{\max}}-b_{t}\right)\frac{\|d_{t}\|^{2}}{b_{t}^{2}}\right]\nonumber \\
\le & K_{5}+K_{6}\E\left[\log\frac{a_{0}^{2}+\tH_{T}}{a_{0}^{2}}\right]\nonumber \\
 & \quad+\E\left[\sum_{t=1}^{T}\left(\left(1+\lambda K_{3}(1/4)\right)\eta\beta_{\max}+\left(\frac{a_{t}^{1/2}}{\lambda a_{t+1}^{1/2}}+54\lambda\right)\frac{\eta\beta_{\max}}{a_{t}^{1/2}}-b_{t}\right)\frac{\|d_{t}\|^{2}}{b_{t}^{2}}\right]\nonumber \\
\le & K_{5}+K_{6}\E\left[\log\frac{a_{0}^{2}+\tH_{T}}{a_{0}^{2}}\right]\nonumber \\
 & \quad+\E\left[\underbrace{\sum_{t=1}^{T}\left(\left(1+\lambda K_{3}(1/4)\right)\eta\beta_{\max}+\left(\frac{1}{\lambda}+\frac{2\hS^{2/3}}{a_{0}^{2/3}\lambda}+54\lambda\right)\frac{\eta\beta_{\max}}{a_{t}^{1/2}}-b_{t}\right)\frac{\|d_{t}\|^{2}}{b_{t}^{2}}}_{(i)}\right]\label{eq:MS-bound-D}
\end{align}
where, in the last step, we use Lemma \ref{lem:MS-a_t-bound}. Next,
we apply Lemma \ref{lem:residual-bound} to $(i)$ to get
\begin{align*}
(i) & \leq\frac{\left(\left(1+\lambda K_{3}(1/4)+\frac{1}{\lambda}+\frac{2\hS^{2/3}}{a_{0}^{2/3}\lambda}+54\lambda\right)\eta\beta_{\max}\right)^{\frac{1}{p}-1}}{1-p}\\
 & \quad\times\log\frac{\left(1+\lambda K_{3}(1/4)\right)\eta\beta_{\max}+\left(\frac{1}{\lambda}+\frac{2\hS^{2/3}}{a_{0}^{2/3}\lambda}+54\lambda\right)\eta\beta_{\max}\left(1+\tH_{T}/a_{0}^{2}\right)^{1/3}}{b_{0}}\\
 & =K_{7}\log\frac{K_{8}+K_{9}\left(1+\tH_{T}/a_{0}^{2}\right)^{1/3}}{b_{0}}
\end{align*}
By plugging the above bound into (\ref{eq:MS-bound-D}), we get the
desired result.
\end{proof}

\subsection{Combine the bounds and the final Proof}

From Lemma \ref{lem:MS-E-final-bound}, we have
\begin{align*}
\E\left[a_{T+1}^{1-2q}\bE_{T}\right] & \leq\begin{cases}
K_{1}+K_{2}\E\left[\left(\tH_{T}/a_{0}^{2}\right)^{\frac{4q-1}{3}}\right]+K_{4}\E\left[\bD_{T}^{1-2p}\right] & q>\frac{1}{4}\\
K_{1}+K_{2}\E\left[\log\left(1+\tH_{T}/a_{0}^{2}\right)\right]+K_{4}\E\left[\log\left(1+\frac{\bD_{T}}{b_{0}^{2}}\right)\right] & q=\frac{1}{4}
\end{cases}
\end{align*}
From Lemma \ref{lem:MS-D-final-bound}, we have
\begin{align*}
\E\left[a_{T+1}^{q}\bD_{T}^{1-p}\right] & \leq K_{5}+K_{6}\E\left[\log\frac{a_{0}^{2}+\tH_{T}}{a_{0}^{2}}\right]+K_{7}\E\left[\log\frac{K_{8}+K_{9}\left(1+\tH_{T}/a_{0}^{2}\right)^{1/3}}{b_{0}}\right].
\end{align*}

Now let 
\[
\hp=\frac{3(1-p)}{4-p}\in\left[\frac{3}{7},\sqrt{7}-2\right].
\]
Apply Lemma \ref{lem:decomposition}, we can obtain
\begin{align}
\E\left[\bH_{T}^{\hp}\right] & \leq4\max\left\{ \E\left[\bE_{T}^{\hp}\right],\E\left[\bD_{T}^{\hp}\right]\right\} .\label{eq:MS-H-bound-via-E-D}
\end{align}
Now we can give the final proof of Theorem \ref{thm:MS-rate}.

\begin{proof}
First, we have
\begin{align*}
\E\left[\tH_{T}\right] & =\E\left[\sum_{i=1}^{T}\|\nabla f(x_{i},\xi_{i})-\nabla f(x_{i},\xi_{i+1})\|^{2}\right]\\
 & =2\sum_{i=1}^{T}\text{Var}\left[\nabla f(x_{i},\xi_{i})\right]\leq2\sigma^{2}T,
\end{align*}
where the second equation is by the independency of $\xi_{i}$ and
$\xi_{i+1}$. Now we consider following two cases:

\textbf{Case 1:} $\E\left[\bD_{T}^{\hp}\right]\leq\E\left[\bE_{T}^{\hp}\right]$.
In this case, we will finally prove
\begin{align*}
\E\left[\bE_{T}^{\hp}\right] & \leq\begin{cases}
\left(\frac{2K_{1}}{K_{4}}\right)^{\frac{\hp}{1-2p}}+\left(\left(\frac{2K_{2}}{K_{4}}\right)^{\frac{\hp}{1-2p}}+\left(2K_{4}\right)^{\frac{\hp}{2p}}\right)\left(1+\frac{2\sigma^{2}T}{a_{0}^{2}}\right)^{\frac{\hp}{3}} & q\neq\frac{1}{4}\\
\left(2K_{1}+2\left(K_{2}+\frac{K_{4}}{3}\right)\log\left(1+\frac{2\sigma^{2}T}{a_{0}^{2}}\right)+\frac{2K_{4}}{\hp}\log\frac{4K_{4}}{b_{0}^{2\hp}}\right)^{\hp}\left(1+\frac{2\sigma^{2}T}{a_{0}^{2}}\right)^{\frac{\hp}{3}}+b_{0}^{2\hp} & q=\frac{1}{4}
\end{cases}.
\end{align*}
Note that by Holder inequality
\begin{align*}
\E\left[\bE_{T}^{\hp}\right] & =\E\left[a_{T+1}^{(1-2q)\hp}\bE_{T}^{\hp}\times a_{T+1}^{-(1-2q)\hp}\right]\\
 & \leq\E^{\hp}\left[a_{T+1}^{1-2q}\bE_{T}\right]\E^{1-\hp}\left[a_{T+1}^{-\frac{(1-2q)\hp}{1-\hp}}\right]\\
 & =\E^{\hp}\left[a_{T+1}^{1-2q}\bE_{T}\right]\E^{1-\hp}\left[(1+\tH_{T}/a_{0}^{2})^{\frac{2(1-2q)\hp}{3\left(1-\hp\right)}}\right]\\
 & \overset{(a)}{=}\E^{\hp}\left[a_{T+1}^{1-2q}\bE_{T}\right]\E^{1-\hp}\left[(1+\tH_{T}/a_{0}^{2})^{\frac{2p\hp}{3\left(1-\hp\right)}}\right]\\
 & \overset{(b)}{\leq}\E^{\hp}\left[a_{T+1}^{1-2q}\bE_{T}\right]\E^{\frac{2p\hp}{3}}\left[1+\tH_{T}/a_{0}^{2}\right],
\end{align*}
where $(a)$ is by $1-2q=p$, $(b)$ is due to $\frac{2p\hp}{3\left(1-\hp\right)}=\frac{2p(1-p)}{2p+1}<1.$

First, if $q\neq\frac{1}{4}$, we have
\begin{align*}
\E\left[a_{T+1}^{1-2q}\bE_{T}\right] & \leq K_{1}+K_{2}\E\left[\left(\tH_{T}/a_{0}^{2}\right)^{\frac{4q-1}{3}}\right]+K_{4}\E\left[\bD_{T}^{1-2p}\right]\\
 & \overset{(c)}{\leq}K_{1}+K_{2}\E^{\frac{1-2p}{3}}\left[\tH_{T}/a_{0}^{2}\right]+K_{4}\E^{\frac{1-2p}{\hp}}\left[\bD_{T}^{\hp}\right]\\
 & \overset{(d)}{\leq}K_{1}+K_{2}\left(2\sigma^{2}T/a_{0}^{2}\right)^{\frac{1-2p}{3}}+K_{4}\E^{\frac{1-2p}{\hp}}\left[\bE_{T}^{\hp}\right]
\end{align*}
where $(c)$ is by $\frac{4q-1}{3}=\frac{1-2p}{3}\leq1$ and $p\geq\frac{3-\sqrt{7}}{2}\Rightarrow1-2p\leq\frac{3(1-p)}{4-p}=\hp$,
$(d)$ is by $\E\left[\tH_{T}\right]\leq2\sigma^{2}T$ and $\E\left[\bD_{T}^{\hp}\right]\leq\E\left[\bE_{T}^{\hp}\right]$.
Then we know
\begin{align*}
\E\left[\bE_{T}^{\hp}\right] & \leq\E^{\hp}\left[a_{T+1}^{1-2q}\bE_{T}\right]\E^{\frac{2p\hp}{3}}\left[1+\tH_{T}/a_{0}^{2}\right]\\
 & \le\left(K_{1}+K_{2}\left(2\sigma^{2}T/a_{0}^{2}\right)^{\frac{1-2p}{3}}+K_{4}\E^{\frac{1-2p}{\hp}}\left[\bE_{T}^{\hp}\right]\right)^{\hp}\left(1+2\sigma^{2}T/a_{0}^{2}\right)^{\frac{2p\hp}{3}}.
\end{align*}

If $K_{4}\E^{\frac{1-2p}{\hp}}\left[\bE_{T}^{\hp}\right]\leq K_{1}+K_{2}\left(2\sigma^{2}T/a_{0}^{2}\right)^{\frac{1-2p}{3}}$,
we know
\begin{align*}
\E^{\frac{1-2p}{\hp}}\left[\bE_{T}^{\hp}\right] & \leq\frac{K_{1}}{K_{4}}+\frac{K_{2}}{K_{4}}\left(\frac{2\sigma^{2}T}{a_{0}^{2}}\right)^{\frac{1-2p}{3}}\\
\Rightarrow\E\left[\bE_{T}^{\hp}\right] & \leq\left(\frac{K_{1}}{K_{4}}+\frac{K_{2}}{K_{4}}\left(\frac{2\sigma^{2}T}{a_{0}^{2}}\right)^{\frac{1-2p}{3}}\right)^{\frac{\hp}{1-2p}}\\
 & \le\left(\frac{2K_{1}}{K_{4}}\right)^{\frac{\hp}{1-2p}}+\left(\frac{2K_{2}}{K_{4}}\right)^{\frac{\hp}{1-2p}}\left(\frac{2\sigma^{2}T}{a_{0}^{2}}\right)^{\frac{\hp}{3}}.
\end{align*}
If $K_{4}\E^{\frac{1-2p}{\hp}}\left[\bE_{T}^{\hp}\right]\geq K_{1}+K_{2}\left(2\sigma^{2}T/a_{0}^{2}\right)^{\frac{1-2p}{3}}$,
then we know
\begin{align*}
\E\left[\bE_{T}^{\hp}\right] & \leq\left(2K_{4}\right)^{\hp}\E^{1-2p}\left[\bE_{T}^{\hp}\right]\left(1+\frac{2\sigma^{2}T}{a_{0}^{2}}\right)^{\frac{2p\hp}{3}}\\
\Rightarrow\E\left[\bE_{T}^{\hp}\right] & \leq\left(2K_{4}\right)^{\frac{\hp}{2p}}\left(1+\frac{2\sigma^{2}T}{a_{0}^{2}}\right)^{\frac{\hp}{3}}.
\end{align*}
Combining two results, we know when $q\neq\frac{1}{4}$
\begin{align*}
\E\left[\bE_{T}^{\hp}\right] & \leq\left(\frac{2K_{1}}{K_{4}}\right)^{\frac{\hp}{1-2p}}+\left(\frac{2K_{2}}{K_{4}}\right)^{\frac{\hp}{1-2p}}\left(\frac{2\sigma^{2}T}{a_{0}^{2}}\right)^{\frac{\hp}{3}}+\left(2K_{4}\right)^{\frac{\hp}{2p}}\left(1+\frac{2\sigma^{2}T}{a_{0}^{2}}\right)^{\frac{\hp}{3}}\\
 & \leq\left(\frac{2K_{1}}{K_{4}}\right)^{\frac{\hp}{1-2p}}+\left(\left(\frac{2K_{2}}{K_{4}}\right)^{\frac{\hp}{1-2p}}+\left(2K_{4}\right)^{\frac{\hp}{2p}}\right)\left(1+\frac{2\sigma^{2}T}{a_{0}^{2}}\right)^{\frac{\hp}{3}}.
\end{align*}

Following a similar approach, we can prove for $q=\frac{1}{4},$there
is
\begin{align*}
\E\left[\bE_{T}^{\hp}\right] & \leq\left(K_{1}+K_{2}\log\left(1+\frac{2\sigma^{2}T}{a_{0}^{2}}\right)+\frac{K_{4}}{\hp}\log\left(1+\frac{\E\left[\bE_{T}^{\hp}\right]}{b_{0}^{2\hp}}\right)\right)^{\hp}\left(1+\frac{2\sigma^{2}T}{a_{0}^{2}}\right)^{\frac{\hp}{3}}
\end{align*}
Now we use Lemma \ref{lem:inequality-log-bound-g} to get
\begin{align*}
\E\left[\bE_{T}^{\hp}\right] & \leq\left(2K_{1}+2K_{2}\log\left(1+\frac{2\sigma^{2}T}{a_{0}^{2}}\right)+\frac{2K_{4}}{\hp}\log\frac{4K_{4}\left(1+\frac{2\sigma^{2}T}{a_{0}^{2}}\right)^{\frac{\hp}{3}}}{b_{0}^{2\hp}}\right)^{\hp}\left(1+\frac{2\sigma^{2}T}{a_{0}^{2}}\right)^{\frac{\hp}{3}}+b_{0}^{2\hp}\\
 & =\left(2K_{1}+2\left(K_{2}+\frac{K_{4}}{3}\right)\log\left(1+\frac{2\sigma^{2}T}{a_{0}^{2}}\right)+\frac{2K_{4}}{\hp}\log\frac{4K_{4}}{b_{0}^{2\hp}}\right)^{\hp}\left(1+\frac{2\sigma^{2}T}{a_{0}^{2}}\right)^{\frac{\hp}{3}}+b_{0}^{2\hp}.
\end{align*}

Finally, we have
\begin{align*}
\E\left[\bE_{T}^{\hp}\right] & \leq\begin{cases}
\left(\frac{2K_{1}}{K_{4}}\right)^{\frac{\hp}{1-2p}}+\left(\left(\frac{2K_{2}}{K_{4}}\right)^{\frac{\hp}{1-2p}}+\left(2K_{4}\right)^{\frac{\hp}{2p}}\right)\left(1+\frac{2\sigma^{2}T}{a_{0}^{2}}\right)^{\frac{\hp}{3}} & q\neq\frac{1}{4}\\
\left(2K_{1}+2\left(K_{2}+\frac{K_{4}}{3}\right)\log\left(1+\frac{2\sigma^{2}T}{a_{0}^{2}}\right)+\frac{2K_{4}}{\hp}\log\frac{4K_{4}}{b_{0}^{2\hp}}\right)^{\hp}\left(1+\frac{2\sigma^{2}T}{a_{0}^{2}}\right)^{\frac{\hp}{3}}+b_{0}^{2\hp} & q=\frac{1}{4}
\end{cases}.
\end{align*}

\textbf{Case 2:} $\E\left[\bD_{T}^{\hp}\right]\geq\E\left[\bE_{T}^{\hp}\right]$.
In this case, we will finally prove
\begin{align*}
\E\left[\bD_{T}^{\hp}\right] & \leq\left(K_{5}+\left(K_{6}+\frac{K_{7}}{3}\right)\log\left(1+\frac{2\sigma^{2}T}{a_{0}^{2}}\right)+K_{7}\log\frac{K_{8}+K_{9}}{b_{0}}\right)^{\frac{\hp}{1-p}}\left(1+\frac{2\sigma^{2}T}{a_{0}^{2}}\right)^{\frac{\hp}{3}}
\end{align*}
Note that by Holder inequality
\begin{align*}
\E\left[\bD_{T}^{\hp}\right] & =\E\left[a_{T+1}^{\frac{\hp q}{1-p}}\bD_{T}^{\hp}\times a_{T+1}^{-\frac{\hp q}{1-p}}\right]\\
 & \leq\E^{\frac{\hp}{1-p}}\left[a_{T+1}^{q}\bD_{T}^{1-p}\right]\E^{\frac{1-p-\hp}{1-p}}\left[a_{T+1}^{-\frac{\hp q}{1-p-\hp}}\right]\\
 & =\E^{\frac{\hp}{1-p}}\left[a_{T+1}^{q}\bD_{T}^{1-p}\right]\E^{\frac{1-p-\hp}{1-p}}\left[\left(1+\tH_{T}/a_{0}^{2}\right)^{\frac{2\hp q}{3(1-p-\hp)}}\right]\\
 & =\E^{\frac{\hp}{1-p}}\left[a_{T+1}^{q}\bD_{T}^{1-p}\right]\E^{\frac{\hp}{3}}\left[1+\tH_{T}/a_{0}^{2}\right],
\end{align*}
where the last step is by $\frac{2\hp q}{3(1-p-\hp)}=\frac{(1-p)\hp}{3(1-p-\hp)}=1$.
We know
\begin{align*}
\E\left[a_{T+1}^{q}\bD_{T}^{1-p}\right] & \leq K_{5}+K_{6}\E\left[\log\frac{a_{0}^{2}+\tH_{T}}{a_{0}^{2}}\right]+K_{7}\E\left[\log\frac{K_{8}+K_{9}\left(1+\tH_{T}/a_{0}^{2}\right)^{1/3}}{b_{0}}\right]\\
 & \overset{(e)}{\leq}K_{5}+K_{6}\log\frac{a_{0}^{2}+\E\left[\tH_{T}\right]}{a_{0}^{2}}+K_{7}\log\frac{K_{8}+K_{9}\E\left[\left(1+\tH_{T}/a_{0}^{2}\right)^{1/3}\right]}{b_{0}}\\
 & \overset{(f)}{\leq}K_{5}+K_{6}\log\frac{a_{0}^{2}+\E\left[\tH_{T}\right]}{a_{0}^{2}}+K_{7}\log\frac{K_{8}+K_{9}\left(1+\E\left[\tH_{T}\right]/a_{0}^{2}\right)^{1/3}}{b_{0}}\\
 & \overset{(g)}{\leq}K_{5}+K_{6}\log\frac{a_{0}^{2}+2\sigma^{2}T}{a_{0}^{2}}+K_{7}\log\frac{K_{8}+K_{9}\left(1+2\sigma^{2}T/a_{0}^{2}\right)^{1/3}}{b_{0}},
\end{align*}
where $(e)$ is by the concavity of $\log$ function, $(f)$ holds
due to $\E\left[X^{1/3}\right]\leq\E^{1/3}\left[X\right]$ for $X\geq0$,
$(g)$ is by $\E\left[\tH_{T}\right]\leq2\sigma^{2}T$. Then we have
\begin{align*}
\E\left[\bD_{T}^{\hp}\right] & \leq\E^{\frac{\hp}{1-p}}\left[a_{T+1}^{q}\bD_{T}^{1-p}\right]\E^{\frac{\hp}{3}}\left[1+\tH_{T}/a_{0}^{2}\right]\\
 & \le\left(K_{5}+K_{6}\log\frac{a_{0}^{2}+2\sigma^{2}T}{a_{0}^{2}}+K_{7}\log\frac{K_{8}+K_{9}\left(1+2\sigma^{2}T/a_{0}^{2}\right)^{1/3}}{b_{0}}\right)^{\frac{\hp}{1-p}}\left(1+\frac{2\sigma^{2}T}{a_{0}^{2}}\right)^{\frac{\hp}{3}}\\
 & \leq\left(K_{5}+\left(K_{6}+\frac{K_{7}}{3}\right)\log\left(1+\frac{2\sigma^{2}T}{a_{0}^{2}}\right)+K_{7}\log\frac{K_{8}+K_{9}}{b_{0}}\right)^{\frac{\hp}{1-p}}\left(1+\frac{2\sigma^{2}T}{a_{0}^{2}}\right)^{\frac{\hp}{3}}.
\end{align*}

Finally, combining \textbf{Case 1} and \textbf{Case 2} and using (\ref{eq:MS-H-bound-via-E-D}),
we get the desired result and finish the proof
\begin{align*}
 & \E\left[\bH_{T}^{\hp}\right]\\
\leq & 4\max\left\{ \E\left[\bE_{T}^{\hp}\right],\E\left[\bD_{T}^{\hp}\right]\right\} \\
\leq & 4\begin{cases}
\left(\frac{2K_{1}}{K_{4}}\right)^{\frac{\hp}{1-2p}}+\left(\left(\frac{2K_{2}}{K_{4}}\right)^{\frac{\hp}{1-2p}}+\left(2K_{4}\right)^{\frac{\hp}{2p}}\right)\left(1+\frac{2\sigma^{2}T}{a_{0}^{2}}\right)^{\frac{\hp}{3}} & q\neq\frac{1}{4}\\
\left(2K_{1}+2\left(K_{2}+\frac{K_{4}}{3}\right)\log\left(1+\frac{2\sigma^{2}T}{a_{0}^{2}}\right)+\frac{2K_{4}}{\hp}\log\frac{4K_{4}}{b_{0}^{2\hp}}\right)^{\hp}\left(1+\frac{2\sigma^{2}T}{a_{0}^{2}}\right)^{\frac{\hp}{3}}+b_{0}^{2\hp} & q=\frac{1}{4}
\end{cases}\\
 & +4\left(K_{5}+\left(K_{6}+\frac{K_{7}}{3}\right)\log\left(1+\frac{2\sigma^{2}T}{a_{0}^{2}}\right)+K_{7}\log\frac{K_{8}+K_{9}}{b_{0}}\right)^{\frac{\hp}{1-p}}\left(1+\frac{2\sigma^{2}T}{a_{0}^{2}}\right)^{\frac{\hp}{3}}.
\end{align*}
\end{proof}

\section{Analysis of $\protect\algnameold$ for general $p$\label{sec:Algorithm-SG}}

In this section, we give a general analysis for our Algorithm $\algnameold$.
Readers will see $p=\frac{1}{2}$ is a very special corner case.
First we recall the choices of $a_{t}$ and $b_{t}$:
\begin{align*}
a_{t+1} & =(1+\sum_{i=1}^{t}\|\nabla f(x_{i},\xi_{i})\|/a_{0}^{2})^{-2/3},\\
b_{t} & =(b_{0}^{1/p}+\sum_{i=1}^{t}\|d_{i}\|^{2})^{p}/a_{t+1}^{q}
\end{align*}
where $p,q$ satisfy $p+2q=1,p\in\left[\frac{1}{4},\frac{1}{2}\right].$
$a_{0}>0$ and $b_{0}>0$ are absolute constants. Naturally, we have
$a_{1}=1$. We will finally prove the following theorem.
\begin{thm}
\label{thm:SG-rate}Under the assumptions 1-4, by defining $\hp=\frac{2(1-p)}{3}\in\left[\frac{1}{3},\frac{1}{2}\right]$,
we have
\begin{align*}
 & \E\left[\bH_{T}^{\hp}\right]\\
\leq & 4C_{9}\mathds{1}\left[\left(2\sigma^{2}T\right)^{\hp}\leq4C_{9}\right]+4C_{10}\mathds{1}\left[\left(2\sigma^{2}T\right)^{\hp}\leq4C_{10}\right]\\
 & +4\begin{cases}
\left(\frac{2C_{1}}{C_{3}}\right)^{\frac{\hp}{1-2p}}+\left(\left(\frac{2C_{2}}{C_{3}}\right)^{\frac{\hp}{1-2p}}+\left(2C_{3}\right)^{\frac{\hp}{2p}}\right)\left(1+\frac{2\left(2\sigma^{2}T\right)^{\hp}}{a_{0}^{2\hp}}\right)^{\frac{1}{3}} & p\neq\frac{1}{2}\\
\left(C_{1}+\left(\frac{C_{2}}{\hp}+\frac{C_{3}}{\hp}\right)\log\left(1+\frac{\left(2\sigma^{2}T\right)^{\hp}}{\min\left\{ a_{0}^{2\hp}/2,4b_{0}^{2\hp}\right\} }\right)\right)^{\hp}\left(1+\frac{2\left(2\sigma^{2}T\right)^{\hp}}{a_{0}^{2\hp}}\right)^{\frac{1}{3}} & p=\frac{1}{2}
\end{cases}.\\
 & +4\left(C_{4}+\left(3C_{5}+C_{6}\right)\log\frac{a_{0}^{2/3}+2\left(2\sigma^{2}T\right)^{1/3}}{a_{0}^{2/3}}+C_{6}\log\frac{2C_{7}+2C_{8}}{b_{0}}\right)^{\frac{\hp}{1-p}}\\
 & \quad\times\left(1+\frac{2\left(2\sigma^{2}T\right)^{\hp}}{a_{0}^{2\hp}}\right)^{1/3}
\end{align*}
where $C_{i},i\in\left[10\right]$ are some constants only depending
on $a_{0},b_{0},\sigma,\hG,\beta,p,q,F(x_{1})-F^{*}$. To simplify
our final bound, we only indicate the dependency on $\beta$ and $F(x_{1})-F^{*}$
when $\sigma\neq0$ and $T$ is big enough to eliminate $C_{9}$ and
$C_{10}$
\[
\E\left[\bH_{T}^{\hp}\right]=O\left(\left((F(x_{1})-F^{*})^{\frac{\hp}{1-p}}+\beta^{\frac{\hp}{p}}\log^{\frac{\hp}{1-p}}\beta+\beta^{\frac{\hp}{p}}\log^{\frac{\hp}{1-p}}\left(1+\sigma^{2}T\right)\right)(1+\sigma^{2}T)^{\frac{\hp}{3}}\right).
\]
\end{thm}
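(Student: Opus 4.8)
The plan is to mirror the three‑phase structure used for Theorem~\ref{thm:MS-rate} ($\algnamenew$) — bound $\E[\bE_T]$, bound $\E[\bD_T]$, then combine via Lemma~\ref{lem:decomposition} — while exploiting a structural simplification peculiar to $\algnameold$. Here $a_{t+1}=(1+\sum_{i\le t}\|\nabla f(x_i,\xi_i)\|^2/a_0^2)^{-2/3}\in\F_t$, so the coefficient $1-a_{t+1}$ in the update for $d_{t+1}$ is $\F_t$‑measurable; consequently the cross term $M_{t+1}$ of Lemma~\ref{lem:vr-inequality} already satisfies $\E[M_{t+1}\mid\F_t]=0$, there is no analogue of ``Challenge~2'', and the truncation time $\tau=\max\{t\in[T]:a_t\ge K_{-1}\}$ is itself a stopping time (not $\tau+1$). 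The increments are controlled directly by Assumption~4 through $a_{t+1}^{-3/2}-a_t^{-3/2}=\|\nabla f(x_t,\xi_t)\|^2/a_0^2\le\hG^2/a_0^2$, which plays the role of the $4\hS^2/a_0^2$ bound for $\algnamenew$; in particular $a_{T+1}^{-3/2}=1+\hH_T/a_0^2$.

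\textbf{Step 1 (bounding $\bE_T$).} Start from Lemma~\ref{lem:vr-inequality}, divide by $a_{t+1}^{2\ell}$ for $\ell\in[\tfrac14,\tfrac12]$ (eventually $\ell=q$), sum and take expectations; since $\E[M_{t+1}]=0$ the cross terms drop out. Telescoping the $\|\epsilon_t\|^2$ terms yields $\sum_t(a_{t+1}^{-2\ell}-a_t^{-2\ell})\|\epsilon_t\|^2$, which I split at $\tau$: for $t\le\tau$, $a_{t+1}\ge K_0$ lets it be absorbed into a $K_0$‑weighted $\bE_\tau$ quantity, bounded as in the $\algnamenew$ analysis using Lemma~\ref{lem:smooth-Z} and the AdaGrad‑type summation lemmas of Section~\ref{sec:Basic-inequalities} applied to $\hH_T$; for $t>\tau$, smallness of $a_t$ makes the increment coefficient at most $\tfrac23 a_{t+1}^{1-2\ell}$, reabsorbed on the left. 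The $\|Z_{t+1}\|^2$ terms are handled by Lemma~\ref{lem:smooth-Z} together with $a_{t+1}^{2q}b_t^2=(b_0^{1/p}+\bD_t)^{2p}$ and Lemma~\ref{lem:inequality-1}. This gives $\E[a_{T+1}^{1-2q}\bE_T]\le C_1+C_2\cdot(\text{a power or log of }1+\hH_T/a_0^2)+C_3\,\E[\bD_T^{1-2p}]$ (with $C_3\,\E[\log(1+\bD_T/b_0^2)]$ replacing the last term when $p=\tfrac12$), and the case $\ell=\tfrac14$ gives $\E[\bE_{T,1/2}]$ with the same shape.

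\textbf{Step 2 (bounding $\bD_T$).} This is essentially the $\algnamenew$ argument, since Lemma~\ref{lem:f-value-analysis} and Lemma~\ref{lem:residual-bound} are already stated for $\algnameold$: apply them, substitute the $\E[\bE_{T,1/2}]$ bound of Step~1, and obtain $\E[a_{T+1}^q\bD_T^{1-p}]\le C_4+C_5\,\E[\log\tfrac{a_0^2+\hH_T}{a_0^2}]+C_6\,\E[\log\tfrac{C_7+C_8(1+\hH_T/a_0^2)^{1/3}}{b_0}]$.

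\textbf{Step 3 (combining — the main obstacle).} Put $\hp=\frac{2(1-p)}{3}\in[\tfrac13,\tfrac12]$; Lemma~\ref{lem:decomposition} gives $\E[\bH_T^{\hp}]\le 4\max\{\E[\bE_T^{\hp}],\E[\bD_T^{\hp}]\}$, and Hölder turns Steps~1–2 into bounds on $\E[\bE_T^{\hp}],\E[\bD_T^{\hp}]$ times a small power of $\E[(1+\hH_T/a_0^2)^{c}]$. Unlike $\algnamenew$, where $\E[\tH_T]\le 2\sigma^2T$ closes everything, here one only has $\hH_T\le 2\bH_T+2\sum_i\|\nabla f(x_i,\xi_i)-\nabla F(x_i)\|^2$, hence $\E[(1+\hH_T/a_0^2)^{c}]\le 1+(2\sigma^2T/a_0^2)^{c}+(2/a_0^2)^{c}\E[\bH_T^{c}]$ by subadditivity of $x\mapsto x^{c}$ and Jensen; since the relevant exponents all satisfy $c\le\hp\le 1$, this produces a \emph{self-referential} inequality of the schematic form
\[
\E[\bH_T^{\hp}]\ \lesssim\ (\text{polylog in }T)\cdot\Bigl(1+(2\sigma^2T/a_0^2)^{\hp}+\E[\bH_T^{\hp}]\Bigr)^{1/3}.
\]
Because the self-reference is sublinear, this resolves by a case split: when $(2\sigma^2T)^{\hp}$ dominates the constant term one recovers the factor $\bigl(1+\tfrac{2(2\sigma^2T)^{\hp}}{a_0^{2\hp}}\bigr)^{1/3}$ of the statement; otherwise $\E[\bH_T^{\hp}]$ is merely bounded by an absolute constant, recorded as the higher-order terms $C_9,C_{10}$ gated by $\mathds{1}[(2\sigma^2T)^{\hp}\le 4C_9]$ and $\mathds{1}[(2\sigma^2T)^{\hp}\le 4C_{10}]$ — this is precisely the extra term $W_1$ discussed after Theorem~\ref{thm:Main-SG-convergence-rate}. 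Running both cases (and both of $\bE_T$ versus $\bD_T$), and using Lemma~\ref{lem:inequality-log-bound-g} to strip the residual self-reference inside the logarithm when $p=\tfrac12$, yields the displayed bound; the simplified formula then follows by keeping only the $\beta$‑ and $(F(x_1)-F^*)$‑dependence and invoking concavity of $x^{\hp}$, exactly as at the end of the proof of Theorem~\ref{thm:MS-rate}. I expect Step~3 — taming the self-reference introduced by $\E[\hH_T]\le 2\E[\bH_T]+2\sigma^2T$ and correctly bookkeeping the indicator/higher-order terms — to be where the difficulty concentrates; Steps~1 and 2 are routine adaptations of the $\algnamenew$ analysis together with the shared lemmas of Sections~\ref{sec:Basic-analysis} and~\ref{sec:Basic-inequalities}.
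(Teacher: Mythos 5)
Your proposal is correct and follows essentially the same route as the paper's proof: the same three-phase structure (bound $\E[\bE_{T,\cdot}]$ via the stopping time $\tau$ and the fact that $a_{t+1}\in\F_t$ kills the cross terms, bound $\E[a_{T+1}^{q}\bD_T^{1-p}]$ via Lemmas \ref{lem:f-value-analysis} and \ref{lem:residual-bound}, then combine with Hölder), and you correctly identify the one genuinely new obstacle — that $\E[\hH_T^{\hp}]\le 4\max\{\E[\bE_T^{\hp}],\E[\bD_T^{\hp}]\}+(2\sigma^2T)^{\hp}$ makes the final inequality self-referential — and resolve it exactly as the paper does, via sublinearity of the self-reference and the indicator-gated constants $C_9,C_{10}$.
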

\begin{rem}
For all $i\in\left[10\right]$, the constant $C_{i}$ will be defined
in the proof that follows.
\end{rem}
Again, by the concavity of $x^{\hp}$, we have the following convergence
theorem, of which the proof is omitted.
\begin{thm}
Under the assumptions 1-4 by defining $\hp=\frac{2(1-p)}{3}\in\left[\frac{1}{3},\frac{1}{2}\right]$,
when $\sigma\neq0$ and $T$ is big enough, we have
\[
\E\left[\|\nabla F(\xout)\|^{2\hp}\right]=O\left((F(x_{1})-F^{*})^{\frac{\hp}{1-p}}+\beta^{\frac{\hp}{p}}\log^{\frac{\hp}{1-p}}\beta+\beta^{\frac{\hp}{p}}\log^{\frac{\hp}{1-p}}\left(1+\sigma^{2}T\right)\right)\left(\frac{1}{T^{\hp}}+\frac{\sigma^{2\hp/3}}{T^{2\hp/3}}\right).
\]
\end{thm}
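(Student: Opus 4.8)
The plan is to obtain this convergence bound as a direct consequence of Theorem~\ref{thm:SG-rate}, which already controls $\E[\bH_T^{\hp}]$, by combining it with the uniform-output rule and the concavity of $u\mapsto u^{\hp}$.

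First I would translate the expected-gradient quantity into $\bH_T$. Since $\xout=x_t$ with $t$ drawn uniformly from $[T]$, we have $\E[\|\nabla F(\xout)\|^{2\hp}]=\tfrac1T\sum_{t=1}^{T}\E\big[(\|\nabla F(x_t)\|^2)^{\hp}\big]$. As $\hp\le 1$, the function $u\mapsto u^{\hp}$ is concave on $[0,\infty)$, so Jensen's inequality applied to the uniform average over $t$ gives $\tfrac1T\sum_{t=1}^{T}(\|\nabla F(x_t)\|^2)^{\hp}\le\big(\tfrac1T\sum_{t=1}^{T}\|\nabla F(x_t)\|^2\big)^{\hp}=(\bH_T/T)^{\hp}$. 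Taking expectations yields $\E[\|\nabla F(\xout)\|^{2\hp}]\le T^{-\hp}\,\E[\bH_T^{\hp}]$.

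Next I would substitute the bound from Theorem~\ref{thm:SG-rate}. Under the hypothesis that $\sigma\neq0$ and $T$ is large enough, the two indicator terms $C_9\mathds{1}[(2\sigma^2T)^{\hp}\le4C_9]$ and $C_{10}\mathds{1}[(2\sigma^2T)^{\hp}\le4C_{10}]$ vanish, and each remaining term is a constant (depending only on $a_0,b_0,\sigma,\hG,\beta,p,q,F(x_1)-F^{*}$) times a factor $\big(1+(2\sigma^2T)^{\hp}/a_0^{2\hp}\big)^{1/3}=\Theta\big((1+\sigma^2T)^{\hp/3}\big)$. Tracking only the dependence on $\beta$ and $F(x_1)-F^{*}$ inside the constants $C_1,\dots,C_8$—exactly as recorded in the simplified display of Theorem~\ref{thm:SG-rate}—the bracketed constant collapses to the three summands $(F(x_1)-F^{*})^{\hp/(1-p)}$, $\beta^{\hp/p}\log^{\hp/(1-p)}\beta$, and $\beta^{\hp/p}\log^{\hp/(1-p)}(1+\sigma^2T)$; in the corner case $p=\tfrac12$ the polynomial powers of Theorem~\ref{thm:SG-rate} degenerate to logarithms, which are still captured by these same expressions. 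This gives $\E[\bH_T^{\hp}]=O\big(\big((F(x_1)-F^{*})^{\hp/(1-p)}+\beta^{\hp/p}\log^{\hp/(1-p)}\beta+\beta^{\hp/p}\log^{\hp/(1-p)}(1+\sigma^2T)\big)(1+\sigma^2T)^{\hp/3}\big)$.

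Finally I would divide through by $T^{\hp}$ and use subadditivity of $x\mapsto x^{\hp/3}$ to write $(1+\sigma^2T)^{\hp/3}\le 1+(\sigma^2T)^{\hp/3}$, so that $(1+\sigma^2T)^{\hp/3}/T^{\hp}\le 1/T^{\hp}+\sigma^{2\hp/3}/T^{2\hp/3}$; plugging this into the previous display produces precisely the claimed rate. I expect the one genuinely delicate point to be the constant-tracking step—verifying that the exponents $\hp/(1-p)$ and $\hp/p$ emerging from $C_1,\dots,C_8$ agree with those in Theorem~\ref{thm:SG-rate} and that the $p=\tfrac12$ branch, where the bound switches from powers to logarithms, is absorbed consistently into the $O(\cdot)$—whereas the Jensen reduction and the final split are routine.
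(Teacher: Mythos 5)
Your proposal is correct and matches the paper's intended argument: the paper omits the proof of this theorem but states it follows from Theorem \ref{thm:SG-rate} "by the concavity of $x^{\hp}$", which is exactly your Jensen reduction $\E[\|\nabla F(\xout)\|^{2\hp}]\le T^{-\hp}\E[\bH_{T}^{\hp}]$ followed by substituting the simplified bound and splitting $(1+\sigma^{2}T)^{\hp/3}/T^{\hp}\le 1/T^{\hp}+\sigma^{2\hp/3}/T^{2\hp/3}$. The constant-tracking you flag as delicate is already done inside Theorem \ref{thm:SG-rate}'s simplified display, so nothing further is needed.
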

Here, we give a more explicit convergence dependency for $p=\frac{1}{2}$
used in Theorem \ref{thm:Main-SG-convergence-rate}.
\begin{thm}
Under the assumptions 1-4, when $p=\frac{1}{2}$, by setting $\lambda=\min\left\{ 1,(a_{0}/\hG)^{2}\right\} $(which
is used in $C_{4}$ to $C_{8}$ and $C_{10}$) we get the best dependency
on $\hG$. For simplicity, under the setting $a_{0}=b_{0}=\eta=1$,
we have
\begin{align*}
\E\left[\|\nabla F(\xout)\|^{2/3}\right] & =O\left(\frac{W_{1}\mathds{1}\left[\left(\sigma^{2}T\right)^{1/3}\leq W_{1}\right]}{T^{1/3}}+\left(W_{2}+W_{3}\log^{2/3}\left(1+\sigma^{2}T\right)\right)\left(\frac{1}{T^{1/3}}+\frac{\sigma^{2/9}}{T^{2/9}}\right)\right)
\end{align*}
where $W_{1}=O\big(F(x_{1})-F^{*}+\sigma^{2}+\rn^{2}+\beta\big(1+\rn^{2}\big)\log\big(\beta+\rn^{2}\beta\big)\big)$,
$W_{2}=O\big((F(x_{1})-F^{*})^{2/3}+\sigma^{4/3}+\rn^{4/3}+(1+\rn^{4/3})\beta^{2/3}\log^{2/3}\big(\beta+\rn^{2}\beta\big)\big)$
and $W_{3}=O\big((1+\rn^{4/3})\beta^{2/3}\big)$.
\end{thm}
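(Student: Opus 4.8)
The plan is to derive the statement as the $p=\tfrac12$ specialization of Theorem~\ref{thm:SG-rate}. For $p=\tfrac12$ we have $q=\tfrac14$ and $\hp=\tfrac{2(1-p)}{3}=\tfrac13$, so only the ``$p=\tfrac12$'' and ``$q=\tfrac14$'' branches of that bound survive. First I would substitute $a_0=b_0=\eta=1$ and $\lambda=\min\{1,(a_0/\hG)^2\}=\min\{1,\hG^{-2}\}$ into the definitions of $C_1,\dots,C_{10}$ from the proof of Theorem~\ref{thm:SG-rate} and bound each by an explicit polynomial in $F(x_1)-F^*,\sigma,\hG,\beta_{\max}$. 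Recall that in the $\bD_T$ analysis (Lemma~\ref{lem:f-value-analysis}) $\lambda$ multiplies the error-term bound $\E[\bE_{T,1/2}]=O(\sigma^2+\hG^2)$ while $\lambda^{-1}$ enters the step-size constants through the residual term $\eta\beta_{\max}/(a_{t+1}^{1/2}\lambda)$ bounded via Lemma~\ref{lem:residual-bound}; choosing $\lambda=\min\{1,\hG^{-2}\}$ equalizes the two, which is exactly what minimizes the $\hG$-dependence, collapsing it to $O(\beta(1+\hG^2))$ inside the logarithms ($C_6,C_7,C_8$) and to $O(\sigma^2+\hG^2)$ (plus $F(x_1)-F^*$) in the $F$-term constant $C_4$. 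For $q=\tfrac14$ one also notes that the ``$K_0$-type'' stopping-time constant of the $\algnameold$ analysis enters only through the exponent $2\ell-\tfrac12=0$, so it drops out entirely.

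Next I would pass from $\E[\bH_T^{\hp}]$ to $\E[\|\nabla F(\xout)\|^{2\hp}]$. Since $\xout=x_t$ with $t\sim\mathrm{Uniform}([T])$, conditioning on the trajectory and applying the power-mean inequality for the concave map $z\mapsto z^{1/3}$ gives $\E[\|\nabla F(\xout)\|^{2/3}]=\E[\tfrac1T\sum_{t=1}^T\|\nabla F(x_t)\|^{2/3}]\le\E[(\bH_T/T)^{1/3}]=T^{-1/3}\E[\bH_T^{1/3}]$. Plugging in the specialized bound, I use $(1+2\sigma^2T)^{1/9}\le 1+(2\sigma^2T)^{1/9}$, which after division by $T^{1/3}$ produces precisely the factor $\tfrac1{T^{1/3}}+\tfrac{\sigma^{2/9}}{T^{2/9}}$, and subadditivity of $z\mapsto z^{1/3}$ and $z\mapsto z^{2/3}$ to split the ``constant $+$ $\log$'' factors. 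The $\bD_T$-term of Theorem~\ref{thm:SG-rate} carries the outer exponent $\tfrac{\hp}{1-p}=\tfrac23$: its constant part (dominated by $C_4,\dots,C_9$, giving $(F(x_1)-F^*)^{2/3}+\sigma^{4/3}+\hG^{4/3}+(1+\hG^{4/3})\beta^{2/3}\log^{2/3}(\beta+\hG^2\beta)$) collects into $W_2$ and the coefficient of $\log^{2/3}(1+\sigma^2T)$ into $W_3$, after rewriting the $p=\tfrac12$ logarithm $\log(1+(2\sigma^2T)^{1/3}/c)$ as $\Theta(\log(1+\sigma^2T))$ via $\tfrac13\log(1+x)\le\log(1+x^{1/3})\le\log(1+x)$. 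The $\bE_T$-term carries exponent $\hp=\tfrac13$ and only contributes $\log^{1/3}$ plus lower-order constants, which fold into $W_2,W_3$ since $\log^{1/3}(1+\sigma^2T)=O(1+\log^{2/3}(1+\sigma^2T))$. Finally the two indicator pieces $4C_9\mathds{1}[(2\sigma^2T)^{1/3}\le 4C_9]+4C_{10}\mathds{1}[(2\sigma^2T)^{1/3}\le 4C_{10}]$ become, after division by $T^{1/3}$, the term $W_1\mathds{1}[(\sigma^2T)^{1/3}\le W_1]/T^{1/3}$ with $W_1=O(C_9+C_{10})=O(F(x_1)-F^*+\sigma^2+\hG^2+\beta(1+\hG^2)\log(\beta+\hG^2\beta))$, read off from the small-$\sigma^2T$ (constant) branch of the self-referential bounds on $\bD_T$ and $\bE_T$ inside the proof of Theorem~\ref{thm:SG-rate}.

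The main obstacle is the constant bookkeeping in the first step: the $C_i$ are defined only implicitly within the proof of Theorem~\ref{thm:SG-rate}, so one must expand them, substitute $\lambda=\min\{1,\hG^{-2}\}$, and verify that no hidden higher power of $\hG$ (or $\sigma$) survives — in particular that $W_2$ and $W_3$ acquire only $\hG^{4/3}$ (the $\tfrac23$-power of the $\hG^2$ appearing in $C_4,\dots,C_9$) and $\log^{2/3}(\beta+\hG^2\beta)$, rather than worse. Everything afterwards — the power-mean/concavity step, the $(1+2\sigma^2T)^{1/9}$ splitting, and absorbing the lower-order $\bE_T$ contributions — is routine.
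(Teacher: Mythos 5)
Your proposal is correct and follows the same route the paper intends (and largely omits): specializing Theorem \ref{thm:SG-rate} to $p=\tfrac12$, $q=\tfrac14$, $\hp=\tfrac13$, substituting $a_0=b_0=\eta=1$ and $\lambda=\min\{1,\hG^{-2}\}$ into $C_1,\dots,C_{10}$, and then passing to $\E[\|\nabla F(\xout)\|^{2/3}]$ via Jensen applied to $z\mapsto z^{1/3}$. Your constant bookkeeping checks out — in particular $\lambda\hG^2\le 1$ and $1/\lambda=\max\{1,\hG^2\}$ give $C_7+C_8=O((1+\hG^2)\beta_{\max})$ and $\lambda C_1(1/4)=O(\sigma^2+\hG^2)$, which is exactly what yields the stated $W_1,W_2,W_3$.
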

To start with, we first state the following useful bound for $a_{t}$:
\begin{lem}
\label{lem:SG-a-bound}$\forall t\geq1$, there is
\begin{align*}
a_{t+1}^{-3/2}-a_{t}^{-3/2} & \leq(\hG/a_{0})^{2}.
\end{align*}
\end{lem}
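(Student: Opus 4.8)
The plan is to recognize that, unlike in $\algnamenew$ where $a_{t}\in\F_{t}$ and $a_{t+1}^{-3/2}$ accumulates squared gradient \emph{differences}, here the update rule of $\algnameold$ makes $a_{t+1}^{-3/2}$ simply an affine running sum of squared stochastic gradient norms. Raising both sides of $a_{t+1}=\left(1+\sum_{i=1}^{t}\|\nabla f(x_i,\xi_i)\|^2/a_0^2\right)^{-2/3}$ to the power $-3/2$ gives the identity
\[
a_{t+1}^{-3/2}=1+\sum_{i=1}^{t}\frac{\|\nabla f(x_i,\xi_i)\|^2}{a_0^2},\qquad t\ge 0,
\]
which is consistent with the initialization $a_1=1$ (the empty sum is $0$).

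From here I would take the difference of consecutive indices: for every $t\ge 1$,
\[
a_{t+1}^{-3/2}-a_t^{-3/2}=\frac{\|\nabla f(x_t,\xi_t)\|^2}{a_0^2},
\]
and then invoke Assumption 4, which guarantees $\|\nabla f(x,\xi)\|\le \hG$ for all $x\in\R^d$ and all $\xi\in\mathbf{support}(\domxi)$; hence $\|\nabla f(x_t,\xi_t)\|^2\le \hG^2$ and
\[
a_{t+1}^{-3/2}-a_t^{-3/2}\le \frac{\hG^2}{a_0^2}=(\hG/a_0)^2,
\]
which is exactly the claim.

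There is essentially no obstacle: the lemma is a telescoping identity combined with the boundedness Assumption 4, and the only point to double-check is the base case $t=1$, which the running-sum formula handles automatically since $a_1=1$. This is the $\algnameold$ counterpart of the bound $a_{\tau+1}^{-3/2}-a_\tau^{-3/2}\le 4\hS^2/a_0^2$ used for $\algnamenew$ in Lemma \ref{lem:stopping-time-bound-g-d}; the difference is that we apply the bounded-gradient Assumption 4 directly instead of the bounded-differences Assumption 5, which both simplifies the argument (no stopping time is needed for this step) and removes the factor of $4$.
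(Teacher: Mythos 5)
Your proof is correct and is essentially identical to the paper's: both observe that $a_{t+1}^{-3/2}-a_{t}^{-3/2}=\|\nabla f(x_{t},\xi_{t})\|^{2}/a_{0}^{2}$ by the definition of $a_{t+1}$ in $\algnameold$ and then apply Assumption 4. The extra remarks about the base case and the comparison with Lemma \ref{lem:stopping-time-bound-g-d} are accurate but not needed.
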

\begin{proof}
\begin{align*}
a_{t+1}^{-3/2}-a_{t}^{-3/2} & =\|\nabla f(x_{t},\xi_{t})\|^{2}/a_{0}^{2}\leq(\hG/a_{0})^{2}.
\end{align*}
\end{proof}

\subsection{Analysis of $\protect\bE_{T}$}

Following a similar approach, we define a random time $\tau$ satisfying

\[
\tau=\max\left\{ \left[T\right],a_{t}\geq C_{0}\right\} 
\]
where 
\[
C_{0}\coloneqq\min\left\{ 1,(a_{0}/\hG)^{4}\right\} .
\]
Note that $\left\{ \tau=t\right\} =\left\{ a_{t}\geq C_{0},a_{t+1}<C_{0}\right\} \in\F_{t}$,
this means $\tau$ is a stopping time. We now prove a useful proposition
of $\tau$:
\begin{lem}
\label{lem:SG-stopping-time}$\forall t\geq\tau+1$, we have
\begin{align*}
a_{t+1}^{-1}-a_{t}^{-1}\leq & 2/3.
\end{align*}
\end{lem}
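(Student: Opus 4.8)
The plan is to read off $a_{t+1}^{-1}-a_t^{-1}$ as a small increment of the concave function $h(y)=y^{2/3}$, and then use the defining property of $\tau$ — namely that $a_t<C_0$ for every $t\geq\tau+1$ — to turn the resulting estimate into the constant $2/3$. Note first that, by the update rule of $\algnameold$, $a_{t+1}^{-3/2}=1+\sum_{i=1}^{t}\|\nabla f(x_i,\xi_i)\|^2/a_0^2$, so $a_t$ is non-increasing in $t$ and Lemma \ref{lem:SG-a-bound} (which uses Assumption 4) gives $a_{t+1}^{-3/2}-a_t^{-3/2}=\|\nabla f(x_t,\xi_t)\|^2/a_0^2\leq \hG^2/a_0^2$.

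Next, write $a_{t+1}^{-1}-a_t^{-1}=h(a_{t+1}^{-3/2})-h(a_t^{-3/2})$ with $h(y)=y^{2/3}$. Since $a_{t+1}\leq a_t$ we have $a_{t+1}^{-3/2}\geq a_t^{-3/2}$, and concavity of $h$ gives $h(y_1)-h(y_2)\leq h'(y_2)(y_1-y_2)=\tfrac{2}{3}y_2^{-1/3}(y_1-y_2)$ for $y_1\geq y_2>0$. Taking $y_1=a_{t+1}^{-3/2}$ and $y_2=a_t^{-3/2}$,
\[
a_{t+1}^{-1}-a_t^{-1}\leq \frac{2}{3}\bigl(a_t^{-3/2}\bigr)^{-1/3}\bigl(a_{t+1}^{-3/2}-a_t^{-3/2}\bigr)=\frac{2\,a_t^{1/2}}{3}\bigl(a_{t+1}^{-3/2}-a_t^{-3/2}\bigr)\leq \frac{2\,a_t^{1/2}\,\hG^2}{3\,a_0^2}.
\]

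Finally, since $t\geq\tau+1$ and $\tau=\max\{t\in[T]:a_t\geq C_0\}$ with $C_0=\min\{1,(a_0/\hG)^4\}$, the maximality of $\tau$ forces $a_t<C_0\leq (a_0/\hG)^4$, hence $a_t^{1/2}<(a_0/\hG)^2$. Substituting into the displayed bound yields $a_{t+1}^{-1}-a_t^{-1}\leq \tfrac{2}{3}\cdot(a_0/\hG)^2\cdot \hG^2/a_0^2=\tfrac{2}{3}$, as claimed. This argument is the exact analogue of the second estimate in Lemma \ref{lem:stopping-time-bound-g-d} for $\algnamenew$, the only differences being that here $\tau$ is itself a stopping time (as $\{\tau=t\}\in\F_t$) and that the increment bound comes from Assumption 4 rather than Assumption 5; there is no real obstacle, the only point requiring care being the direction of the concavity inequality and the fact that $C_0\leq 1$ is used so that $C_0\leq(a_0/\hG)^4$ holds even when $\hG\geq a_0$.
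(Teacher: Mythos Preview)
Your proof is correct and is essentially identical to the paper's own argument: both apply the concavity of $h(y)=y^{2/3}$ at the point $y_2=a_t^{-3/2}$ to get $a_{t+1}^{-1}-a_t^{-1}\leq \tfrac{2}{3}a_t^{1/2}\hG^2/a_0^2$, and then use $a_t\leq a_{\tau+1}<C_0\leq(a_0/\hG)^4$ to conclude. The only cosmetic difference is that the paper writes out the increment $a_{t+1}^{-3/2}-a_t^{-3/2}=\|\nabla f(x_t,\xi_t)\|^2/a_0^2$ directly rather than invoking Lemma~\ref{lem:SG-a-bound}.
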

\begin{proof}
Let $h(y)=y^{2/3}$. Due to the concavity, we know $h(y_{1})-h(y_{2})\leq h'(y_{2})(y_{1}-y_{2})=\frac{2(y_{1}-y_{2})}{3y_{2}^{1/3}}$.
Now we have
\begin{align*}
a_{t+1}^{-1}-a_{t}^{-1} & =(a_{t}^{-3/2}+\|\nabla f(x_{t},\xi_{t})\|^{2}/a_{0}^{2})^{2/3}-(a_{t}^{-3/2})^{2/3}\\
 & \leq\frac{2a_{t}^{1/2}\|\nabla f(x_{t},\xi_{t})\|^{2}}{3a_{0}^{2}}\leq\frac{2a_{t}^{1/2}\hG^{2}}{3a_{0}^{2}}\leq\frac{2}{3}
\end{align*}
where the last step is by $a_{t}\leq a_{\tau+1}<C_{0}\leq(a_{0}/\hG)^{4}$.
\end{proof}

\subsubsection{Bound on $\protect\E\left[\protect\bE_{\tau,3/2-2\ell}\right]$ for
$\ell\in\left[\frac{1}{4},\frac{1}{2}\right]$}

Similar to the analysis of $\algnamenew$, we choose to bound $\E\left[\bE_{\tau,3/2-2\ell}\right]$.
We first prove the following bound on $\E\left[\bE_{\tau,3/2-2\ell}\right]$:
\begin{lem}
\label{lem:SG-E-tau}For any $\ell\in\left[\frac{1}{4},\frac{1}{2}\right]$,
we have
\begin{align*}
\E\left[\bE_{\tau,3/2-2\ell}\right] & \leq\frac{\sigma^{2}+24a_{0}^{2}+4\hG^{2}}{C_{0}^{2\ell-1/2}}+\frac{2\eta^{2}\beta^{2}}{C_{0}^{2\ell-1/2}}\E\left[\sum_{t=1}^{T}\frac{\|d_{t}\|^{2}}{b_{t}^{2}}\right].
\end{align*}
\end{lem}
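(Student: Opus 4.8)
The plan is to follow the template of the META-STORM estimate in Lemma~\ref{lem:MS-E-tau}, while exploiting a structural simplification available for $\algnameold$. Because $a_{t+1}=(1+\sum_{i=1}^{t}\norm{\nabla f(x_{i},\xi_{i})}^{2}/a_{0}^{2})^{-2/3}$ involves only $\xi_{1},\dots,\xi_{t}$, we have $a_{t+1}\in\F_{t}$, hence $b_{t},x_{t+1}\in\F_{t}$ as well. This makes the cross term $M_{t+1}$ from Lemma~\ref{lem:vr-inequality} a genuine martingale difference: $\E[Z_{t+1}\mid\F_{t}]=0$, $\E[\nabla f(x_{t+1},\xi_{t+1})-\nabla F(x_{t+1})\mid\F_{t}]=0$, and the coefficients $(1-a_{t+1})^{2}$, $(1-a_{t+1})a_{t+1}$, $\epsilon_{t}$ are all $\F_{t}$-measurable, so $\E[M_{t+1}\mid\F_{t}]=0$. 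Since $\tau$ is a bounded stopping time (the excerpt notes $\{\tau=t\}\in\F_{t}$), the optional stopping theorem applied to the martingale $\mathcal{M}_{n}=\sum_{t=1}^{n}M_{t+1}$ gives $\E[\sum_{t=1}^{\tau}M_{t+1}]=0$ directly; unlike the META-STORM analysis (where $a_{t+1}$ is only $\F_{t+1}$-measurable), no auxiliary sequence $N_{t+1}$ is needed.

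With that simplification, I would sum the inequality of Lemma~\ref{lem:vr-inequality} over $t=1,\dots,\tau$ and take expectations. The telescoping part contributes $\E[\norm{\epsilon_{1}}^{2}-\norm{\epsilon_{\tau+1}}^{2}]\le\E[\norm{\epsilon_{1}}^{2}]=\E[\norm{\nabla f(x_{1},\xi_{1})-\nabla F(x_{1})}^{2}]\le\sigma^{2}$ by Assumption~2. For the $Z_{t+1}$ term, enlarge the summation range to $[T]$ (the summands are nonnegative) and apply Lemma~\ref{lem:smooth-Z}, giving $\E[\sum_{t=1}^{\tau}2\norm{Z_{t+1}}^{2}]\le 2\eta^{2}\beta^{2}\,\E[\sum_{t=1}^{T}\norm{d_{t}}^{2}/b_{t}^{2}]$. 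The remaining term, $\E[\sum_{t=1}^{\tau}2a_{t+1}^{2}\norm{\nabla f(x_{t+1},\xi_{t+1})-\nabla F(x_{t+1})}^{2}]$, is the one that carries the argument: using $\E[\norm{X-\E[X\mid\F_{t}]}^{2}\mid\F_{t}]\le\E[\norm{X}^{2}\mid\F_{t}]$ together with $a_{t+1}^{2}\in\F_{t}$, it is at most $\E[\sum_{t=1}^{T}2a_{t+1}^{2}\norm{\nabla f(x_{t+1},\xi_{t+1})}^{2}]$. Writing $v_{t}:=\norm{\nabla f(x_{t},\xi_{t})}^{2}/a_{0}^{2}$ so that $a_{t+1}^{-3/2}=1+\sum_{i\le t}v_{i}$ and $v_{t+1}\le\hG^{2}/a_{0}^{2}$ by Assumption~4, this equals $2a_{0}^{2}\sum_{t=1}^{T}v_{t+1}/(1+\sum_{i\le t}v_{i})^{4/3}$, which the elementary inequalities of Section~\ref{sec:Basic-inequalities} (the analogues of Lemmas~\ref{lem:inequality-4} and \ref{lem:inequality-5} used for META-STORM) bound by $2a_{0}^{2}(12+2\hG^{2}/a_{0}^{2})=24a_{0}^{2}+4\hG^{2}$.

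Combining the three pieces yields $\E[\bE_{\tau,1}]\le\sigma^{2}+24a_{0}^{2}+4\hG^{2}+2\eta^{2}\beta^{2}\,\E[\sum_{t=1}^{T}\norm{d_{t}}^{2}/b_{t}^{2}]$. The last step is to pass from $\bE_{\tau,1}$ to $\bE_{\tau,3/2-2\ell}$. Since $\ell\in[\tfrac14,\tfrac12]$ gives $2\ell-\tfrac12\ge 0$, and $a_{t+1}\ge C_{0}$ for all $t\le\tau$ (this requires a short argument from the definition of $\tau$ and Lemma~\ref{lem:SG-a-bound} to control the boundary value $a_{\tau+1}$, exactly as Lemma~\ref{lem:stopping-time-bound-g-d} does for META-STORM), we obtain the termwise comparison $C_{0}^{2\ell-1/2}a_{t+1}^{3/2-2\ell}\le a_{t+1}$, hence $C_{0}^{2\ell-1/2}\bE_{\tau,3/2-2\ell}\le\bE_{\tau,1}$; dividing by $C_{0}^{2\ell-1/2}$ gives the claimed bound.

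The main obstacle I anticipate is the gradient-variance term: getting the clean constant $24a_{0}^{2}+4\hG^{2}$ requires the telescoping-type estimate for $\sum_{t}v_{t+1}/(1+\sum_{i\le t}v_{i})^{4/3}$ to be carried out carefully, since the denominator index lags the numerator and one must control the jumps of $1+\sum_{i\le t}v_{i}$ via $v_{t+1}\le\hG^{2}/a_{0}^{2}$; this is precisely where Assumption~4 (boundedness of the stochastic gradients) is indispensable, and where the analysis diverges from that of META-STORM, which relies on gradient differences and Assumption~5 instead. A secondary technical point is making the optional-stopping step fully rigorous (integrability of the $M_{t+1}$, boundedness of $\tau$) and checking that $a_{\tau+1}$ does not fall below $C_{0}$, so that the final termwise comparison really holds up to $t=\tau$.
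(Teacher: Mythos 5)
Your overall route is the paper's: exploit $a_{t+1}\in\F_{t}$ so that $M_{t+1}$ is a genuine martingale difference and optional stopping kills it with no auxiliary sequence $N_{t+1}$; bound the $Z$-term via Lemma \ref{lem:smooth-Z}; bound the variance term by $\E[\sum_{t}2a_{t+1}^{2}\|\nabla f(x_{t+1},\xi_{t+1})\|^{2}]$ and then by $24a_{0}^{2}+4\hG^{2}$ via the telescoping inequality (only Lemma \ref{lem:inequality-4} is needed here, after reindexing); and finally convert $\bE_{\tau,1}$ to $\bE_{\tau,3/2-2\ell}$ by a termwise comparison. All of that matches.

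The genuine gap is at the boundary term $t=\tau$. Your termwise comparison $C_{0}^{2\ell-1/2}a_{t+1}^{3/2-2\ell}\leq a_{t+1}$ requires $a_{t+1}\geq C_{0}$ for every $t\leq\tau$, and in particular $a_{\tau+1}\geq C_{0}$. That is false in general: by definition of $\tau$ one only knows $a_{\tau}\geq C_{0}$, and $a_{\tau+1}^{-3/2}=a_{\tau}^{-3/2}+\|\nabla f(x_{\tau},\xi_{\tau})\|^{2}/a_{0}^{2}$ can push $a_{\tau+1}$ strictly below $C_{0}$. The META-STORM device you invoke (Lemma \ref{lem:stopping-time-bound-g-d}) does not rescue the threshold itself — it produces a \emph{degraded} constant $K_{0}<K_{-1}$, and the META-STORM bound is stated in terms of $K_{0}$. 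Transplanting that here would give you the lemma with $C_{0}$ replaced by $(C_{0}^{-3/2}+\hG^{2}/a_{0}^{2})^{-2/3}<C_{0}$, i.e., a strictly weaker inequality than the one claimed (same order, so downstream results survive, but the stated constants do not). The paper's actual fix is different and cleaner: it sums the recursion of Lemma \ref{lem:vr-inequality} only up to $\tau-1$, so the telescope leaves the \emph{unweighted} remainder $\|\epsilon_{\tau}\|^{2}$ on the left-hand side, and then bounds the $t=\tau$ term of $C_{0}^{2\ell-1/2}\bE_{\tau,3/2-2\ell}$ by exactly that $\|\epsilon_{\tau}\|^{2}$ using only $C_{0}^{2\ell-1/2}a_{\tau+1}^{3/2-2\ell}\leq1$ (both factors are at most $1$ and both exponents are nonnegative). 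No lower bound on $a_{\tau+1}$ is ever needed. You should adopt that truncation-at-$\tau-1$ step, or accept a worse constant in the lemma.
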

\begin{proof}
We start from Lemma \ref{lem:vr-inequality},
\begin{align*}
a_{t+1}\|\epsilon_{t}\|^{2} & \leq\|\epsilon_{t}\|^{2}-\|\epsilon_{t+1}\|^{2}+2\|Z_{t+1}\|^{2}\\
 & \quad+2a_{t+1}^{2}\|\nabla f(x_{t+1},\xi_{t+1})-\nabla F(x_{t+1})\|^{2}+M_{t+1}.
\end{align*}
Summing up from $1$ to $\tau-1$ and taking the expectations on both
sides, we obtain
\begin{align*}
\E\left[\bE_{\tau-1,1}\right] & \le\E\bigg[\sum_{t=1}^{\tau-1}\|\epsilon_{t}\|^{2}-\|\epsilon_{t+1}\|^{2}+2\|Z_{t+1}\|^{2}\\
 & \qquad+2a_{t+1}^{2}\|\nabla f(x_{t+1},\xi_{t+1})-\nabla F(x_{t+1})\|^{2}+M_{t+1}\bigg]\\
 & =\E\bigg[\|\epsilon_{1}\|^{2}-\|\epsilon_{\tau}\|^{2}+\sum_{t=1}^{\tau-1}2\|Z_{t+1}\|^{2}\\
 & \qquad+2a_{t+1}^{2}\|\nabla f(x_{t+1},\xi_{t+1})-\nabla F(x_{t+1})\|^{2}+M_{t+1}\bigg]\\
 & \leq\E\bigg[\|\epsilon_{1}\|^{2}-\|\epsilon_{\tau}\|^{2}+\sum_{t=1}^{T}2\|Z_{t+1}\|^{2}\\
 & \qquad+2a_{t+1}^{2}\|\nabla f(x_{t+1},\xi_{t+1})-\nabla F(x_{t+1})\|^{2}+\sum_{t=1}^{\tau-1}M_{t+1}\bigg]
\end{align*}
\begin{align*}
\Rightarrow\E\left[\bE_{\tau-1,1}+\|\epsilon_{\tau}\|^{2}\right] & \leq\sigma^{2}+\E\bigg[\sum_{t=1}^{T}2\|Z_{t+1}\|^{2}\\
 & \qquad+2a_{t+1}^{2}\|\nabla f(x_{t+1},\xi_{t+1})-\nabla F(x_{t+1})\|^{2}+\sum_{t=1}^{\tau-1}M_{t+1}\bigg]
\end{align*}
Because $C_{0}\leq1$, $a_{\tau+1}\leq1$, $2\ell-1/2\geq0$ and $3/2-2\ell\geq0$,
so we have
\begin{align*}
C_{0}^{2\ell-1/2}a_{\tau+1}^{3/2-2\ell} & \leq1.
\end{align*}
Besides, for $t\leq\tau-1$, by the definition of $\tau$, we have
$C_{0}\leq a_{t+1}$, then we know
\[
C_{0}^{2\ell-1/2}a_{t+1}^{3/2-2\ell}\leq a_{t+1}^{2\ell-1/2}a_{t+1}^{3/2-2\ell}=a_{t+1}.
\]
These two results give us
\begin{align*}
C_{0}^{2\ell-1/2}\bE_{\tau,3/2-2\ell} & =C_{0}^{2\ell-1/2}\sum_{t=1}^{\tau}a_{t+1}^{3/2-2\ell}\|\epsilon_{t}\|^{2}\leq\sum_{t=1}^{\tau-1}a_{t+1}\|\epsilon_{t}\|^{2}+\|\epsilon_{\tau}\|^{2}\\
 & =\bE_{\tau-1,1}+\|\epsilon_{\tau}\|^{2},
\end{align*}
which implies
\begin{align*}
\E\left[C_{0}^{2\ell-1/2}\bE_{\tau,3/2-2\ell}\right] & \leq\sigma^{2}+\E\bigg[\sum_{t=1}^{T}2\|Z_{t+1}\|^{2}\\
 & \qquad+2a_{t+1}^{2}\|\nabla f(x_{t+1},\xi_{t+1})-\nabla F(x_{t+1})\|^{2}+\sum_{t=1}^{\tau-1}M_{t+1}\bigg]
\end{align*}
Let$\mathcal{M}_{t}\coloneqq\sum_{i=1}^{t}M_{i}\in\F_{t}$ with $M_{1}=0$.
For $s\leq t$, we know $\E\left[M_{t}|\F_{s}\right]=0$, hence $\mathcal{M}_{t}$
is a martingale. Note that $\tau$ is a bounded stopping time, hence
by optional sampling theorem
\[
\E\left[\sum_{t=1}^{\tau-1}M_{t+1}\right]=\E\left[\mathcal{M}_{\tau}\right]=0.
\]
Now we have
\[
\E\left[C_{0}^{2\ell-1/2}\bE_{\tau,3/2-2\ell}\right]\leq\sigma^{2}+\E\left[\sum_{t=1}^{T}2\|Z_{t+1}\|^{2}+2a_{t+1}^{2}\|\nabla f(x_{t+1},\xi_{t+1})-\nabla F(x_{t+1})\|^{2}\right].
\]
By Lemma \ref{lem:smooth-Z}
\begin{align*}
\E\left[\|Z_{t+1}\|^{2}\mid\F_{t}\right] & \leq\eta^{2}\beta^{2}\frac{\|d_{t}\|^{2}}{b_{t}^{2}}.
\end{align*}
Besides, under our current choice, $a_{t+1}\in\F_{t}$,
\begin{align*}
 & \E\left[a_{t+1}^{2}\|\nabla f(x_{t+1},\xi_{t+1})-\nabla F(x_{t+1})\|^{2}|\F_{t}\right]\\
= & a_{t+1}^{2}\E\left[\|\nabla f(x_{t+1},\xi_{t+1})-\nabla F(x_{t+1})\|^{2}|\F_{t}\right]\\
\leq & a_{t+1}^{2}\E\left[\|\nabla f(x_{t+1},\xi_{t+1})\|^{2}|\F_{t}\right].
\end{align*}
Using these two bounds, we have
\begin{align*}
\E\left[C_{0}^{2\ell-1/2}\bE_{\tau,3/2-2\ell}\right] & \leq\sigma^{2}+\E\left[\sum_{t=1}^{T}2\eta^{2}\beta^{2}\frac{\|d_{t}\|^{2}}{b_{t}^{2}}+2a_{t+1}^{2}\|\nabla f(x_{t+1},\xi_{t+1})\|^{2}\right]\\
 & =\sigma^{2}+\E\left[\sum_{t=1}^{T}2\eta^{2}\beta^{2}\frac{\|d_{t}\|^{2}}{b_{t}^{2}}+2a_{0}^{2}\times\frac{\|\nabla f(x_{t+1},\xi_{t+1})\|^{2}/a_{0}^{2}}{(1+\sum_{i=1}^{t}\|\nabla f(x_{i},\xi_{i})\|^{2}/a_{0}^{2})^{4/3}}\right]\\
 & \leq\sigma^{2}+24a_{0}^{2}+4\widehat{G}^{2}+2\eta^{2}\beta^{2}\E\left[\sum_{t=1}^{T}\frac{\|d_{t}\|^{2}}{b_{t}^{2}}\right],
\end{align*}
where the last inequality holds by Lemma \ref{lem:inequality-4}.
Dividing both sides by $C_{0}^{2\ell-1/2}$ , we get the desired bound
immediately
\begin{align*}
\E\left[\bE_{\tau,3/2-2\ell}\right] & \leq\frac{\sigma^{2}+24a_{0}^{2}+4\widehat{G}^{2}}{C_{0}^{2\ell-1/2}}+\frac{2\eta^{2}\beta^{2}}{C_{0}^{2\ell-1/2}}\E\left[\sum_{t=1}^{T}\frac{\|d_{t}\|^{2}}{b_{t}^{2}}\right].
\end{align*}
\end{proof}

\subsubsection{Bound on $\protect\E\left[\protect\bE_{T,1-2\ell}\right]$ for $\ell\in\left[\frac{1}{4},\frac{1}{2}\right]$}

With the previous result on $\E\left[\bE_{\tau,3/2-2\ell}\right]$,
we can bound $\E\left[\bE_{T,1-2\ell}\right]$.
\begin{lem}
\label{lem:SG-E-T}For any $\ell\in\left[\frac{1}{4},\frac{1}{2}\right]$,
we have
\begin{align*}
\E\left[\bE_{T,1-2\ell}\right] & \leq C_{1}(\ell)+C_{2}(\ell)\begin{cases}
\E\left[\left(\hH_{T}/a_{0}^{2}\right)^{\frac{4\ell-1}{3}}\right] & \ell>\frac{1}{4}\\
\E\left[\log\left(1+\hH_{T}/a_{0}^{2}\right)\right] & \ell=\frac{1}{4}
\end{cases}\\
 & \quad+\E\left[\sum_{t=1}^{T}\left(\frac{\hG^{2}}{a_{0}^{2}C_{0}^{2\ell-1/2}}a_{t+1}^{2\ell}+1\right)6\eta^{2}\beta^{2}\frac{\|d_{t}\|^{2}}{a_{t+1}^{2\ell}b_{t}^{2}}\right],
\end{align*}
where
\begin{align*}
C_{1}(\ell) & \coloneqq3\left(\sigma^{2}+6\hG^{2}+\frac{\hG^{2}\left(\sigma^{2}+24a_{0}^{2}+4\hG^{2}\right)}{a_{0}^{2}C_{0}^{2\ell-1/2}}\right)\\
C_{2}(\ell) & \coloneqq\begin{cases}
\frac{18a_{0}^{2}}{4\ell-1} & \ell>\frac{1}{4}\\
6a_{0}^{2} & \ell=\frac{1}{4}
\end{cases}.
\end{align*}
\end{lem}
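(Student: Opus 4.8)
The plan is to replay the $\bE_T$-analysis of $\algnamenew$ (Lemma~\ref{lem:MS-E-T}) in the easier regime of $\algnameold$, where the momentum coefficient $a_{t+1}$ is built from $\|\nabla f(x_i,\xi_i)\|^2$ with $i\le t$ and is therefore $\F_t$-measurable (as are $b_t$ and $x_{t+1}$). First I would start from the variance-reduction recursion of Lemma~\ref{lem:vr-inequality}, divide both sides by $a_{t+1}^{2\ell}$, sum over $t=1,\dots,T$, and take expectations. The telescoping part $\sum_t\big(\|\epsilon_t\|^2-\|\epsilon_{t+1}\|^2\big)/a_{t+1}^{2\ell}$ I rearrange by Abel summation into $\|\epsilon_1\|^2+\sum_t\big(a_{t+1}^{-2\ell}-a_t^{-2\ell}\big)\|\epsilon_t\|^2$ minus a nonnegative tail, using $a_1=1$ and $\E\|\epsilon_1\|^2\le\sigma^2$. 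The key simplification over $\algnamenew$ is that here $M_{t+1}$ is a genuine martingale difference: since $a_{t+1},b_t,x_{t+1}\in\F_t$ and $\xi_{t+1}\perp\F_t$, one has $\E[Z_{t+1}\mid\F_t]=0$ and $\E[\nabla f(x_{t+1},\xi_{t+1})-\nabla F(x_{t+1})\mid\F_t]=0$, hence $\E[M_{t+1}/a_{t+1}^{2\ell}\mid\F_t]=0$, so the entire $M$-contribution vanishes in expectation — no auxiliary $N_{t+1}$ sequence and no Cauchy--Schwarz splitting is needed. This leaves
\[
\E[\bE_{T,1-2\ell}]\le\sigma^2+\E\Big[\sum_t\big(a_{t+1}^{-2\ell}-a_t^{-2\ell}\big)\|\epsilon_t\|^2\Big]+2\E\Big[\sum_t\frac{\|Z_{t+1}\|^2}{a_{t+1}^{2\ell}}\Big]+2\E\Big[\sum_t a_{t+1}^{2-2\ell}\|\nabla f(x_{t+1},\xi_{t+1})-\nabla F(x_{t+1})\|^2\Big].
\]

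Next I would bound the three sums. For the error-telescoping sum I split at the stopping time $\tau$: for $t\le\tau$, Lemma~\ref{lem:SG-a-bound} gives $a_{t+1}^{-2\ell}-a_t^{-2\ell}\le\big(a_{t+1}^{-3/2}-a_t^{-3/2}\big)a_{t+1}^{3/2-2\ell}\le(\hG/a_0)^2a_{t+1}^{3/2-2\ell}$, so that partial sum is $\le(\hG/a_0)^2\bE_{\tau,3/2-2\ell}$; for $t\ge\tau+1$, Lemma~\ref{lem:SG-stopping-time} together with $a_t^{2\ell}a_{t+1}^{1-2\ell}\le a_t$ gives $a_{t+1}^{-2\ell}-a_t^{-2\ell}\le\big(a_{t+1}^{-1}-a_t^{-1}\big)a_{t+1}^{1-2\ell}\le\tfrac23 a_{t+1}^{1-2\ell}$, so that partial sum is $\le\tfrac23\bE_{T,1-2\ell}$. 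For the $Z$-sum, conditioning on $\F_t$, pulling out $a_{t+1}^{-2\ell}$, and applying Lemma~\ref{lem:smooth-Z} yields $\E[\|Z_{t+1}\|^2/a_{t+1}^{2\ell}\mid\F_t]\le\eta^2\beta^2\|d_t\|^2/(a_{t+1}^{2\ell}b_t^2)$. For the gradient-variance sum I use $\E[\|X-\E[X\mid\F_t]\|^2\mid\F_t]\le\E[\|X\|^2\mid\F_t]$ to replace $\|\nabla f(x_{t+1},\xi_{t+1})-\nabla F(x_{t+1})\|^2$ by $\|\nabla f(x_{t+1},\xi_{t+1})\|^2$ and note $a_{t+1}^{2-2\ell}=\big(1+\sum_{i\le t}\|\nabla f(x_i,\xi_i)\|^2/a_0^2\big)^{-4(1-\ell)/3}$, so the sum equals $a_0^2\sum_t\frac{\|\nabla f(x_{t+1},\xi_{t+1})\|^2/a_0^2}{(1+\sum_{i\le t}\|\nabla f(x_i,\xi_i)\|^2/a_0^2)^{4(1-\ell)/3}}$; the basic summation inequalities of Section~\ref{sec:Basic-inequalities} (the analogues of Lemmas~\ref{lem:inequality-4}, \ref{lem:inequality-5} and the related estimates) absorb the index shift and bound this by a finite constant plus $\tfrac{3}{4\ell-1}(\hH_T/a_0^2)^{(4\ell-1)/3}$ when $\ell>\tfrac14$ (exponent $<1$) and plus $\log(1+\hH_T/a_0^2)$ when $\ell=\tfrac14$ (exponent $=1$).

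Finally I would substitute the three estimates, move $\tfrac23\E[\bE_{T,1-2\ell}]$ to the left, multiply through by $3$, and insert the bound on $\E[\bE_{\tau,3/2-2\ell}]$ from Lemma~\ref{lem:SG-E-tau}. The $T$-independent terms collect into $C_1(\ell)$, the $\hH_T$-dependent term into $C_2(\ell)$ times the stated expression, and the two $\|d_t\|^2/b_t^2$ contributions — one directly from the $Z$-sum, one carried inside Lemma~\ref{lem:SG-E-tau} via $\frac{2\eta^2\beta^2}{C_0^{2\ell-1/2}}\E[\sum_t\|d_t\|^2/b_t^2]$ — merge, after rewriting $\|d_t\|^2/b_t^2=a_{t+1}^{2\ell}\cdot\|d_t\|^2/(a_{t+1}^{2\ell}b_t^2)$, into $\sum_t\big(\tfrac{\hG^2}{a_0^2C_0^{2\ell-1/2}}a_{t+1}^{2\ell}+1\big)6\eta^2\beta^2\|d_t\|^2/(a_{t+1}^{2\ell}b_t^2)$, matching the claim. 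I do not anticipate a genuine obstacle — $\algnameold$ is precisely the case where $a_{t+1}\in\F_t$, so the delicate martingale device that dominates the $\algnamenew$ proof is absent — and the only care required is bookkeeping: tracking which summation inequality from Section~\ref{sec:Basic-inequalities} applies over the whole range $\ell\in[\tfrac14,\tfrac12]$ (the exponent $4(1-\ell)/3$ crosses $1$ exactly at $\ell=\tfrac14$, which is why the bound has two cases) and keeping the accumulated constants consistent with the stated $C_1(\ell),C_2(\ell)$.
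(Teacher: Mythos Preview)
Your proposal is correct and follows essentially the same route as the paper's proof: start from Lemma~\ref{lem:vr-inequality}, divide by $a_{t+1}^{2\ell}$, use $a_{t+1}\in\F_t$ to kill the $M_{t+1}$ term and pull $a_{t+1}^{-2\ell}$ outside the conditional expectations, split the telescoping error sum at $\tau$ via Lemmas~\ref{lem:SG-a-bound} and~\ref{lem:SG-stopping-time}, rearrange the $\tfrac23\bE_{T,1-2\ell}$ term, and insert Lemma~\ref{lem:SG-E-tau}. The one small correction is that the summation inequality you need for the gradient-variance term is Lemma~\ref{lem:inequality-2} (general exponent $p=4(1-\ell)/3\in[\tfrac23,1]$ with the $c_{t+1}$ index shift), not Lemmas~\ref{lem:inequality-4}/\ref{lem:inequality-5}, which are specific to exponent $4/3$; this is exactly what produces the $6\hG^2$ constant and the $\tfrac{3}{4\ell-1}$ versus $\log$ dichotomy you describe.
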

\begin{proof}
Starting from Lemma \ref{lem:vr-inequality} as well
\begin{align*}
a_{t+1}\|\epsilon_{t}\|^{2} & \leq\|\epsilon_{t}\|^{2}-\|\epsilon_{t+1}\|^{2}+2\|Z_{t+1}\|^{2}+2a_{t+1}^{2}\|\nabla f(x_{t+1},\xi_{t+1})-\nabla F(x_{t+1})\|^{2}+M_{t+1}.
\end{align*}
Dividing both sides by $a_{t+1}^{2\ell}$ and taking expectations,
we have
\begin{align}
\E\left[a_{t+1}^{1-2\ell}\|\epsilon_{t}\|^{2}\right] & \leq\E\bigg[\frac{\|\epsilon_{t}\|^{2}}{a_{t+1}^{2\ell}}-\frac{\|\epsilon_{t+1}\|^{2}}{a_{t+1}^{2\ell}}+\frac{2}{a_{t+1}^{2\ell}}\|Z_{t+1}\|^{2}\nonumber \\
 & \quad+2a_{t+1}^{2-2\ell}\|\nabla f(x_{t+1},\xi_{t+1})-\nabla F(x_{t+1})\|^{2}+\frac{M_{t+1}}{a_{t+1}^{2\ell}}\bigg].\label{eq:SG-E-bound-1}
\end{align}
Note that under our current choice, $a_{t+1}\in\F_{t}$, hence we
have
\begin{align*}
\E\left[\frac{M_{t+1}}{a_{t+1}^{2\ell}}\right] & =\E\left[\frac{\E\left[M_{t+1}|\F_{t}\right]}{a_{t+1}^{2\ell}}\right]=0;\\
\E\left[\frac{\|Z_{t+1}\|^{2}}{a_{t+1}^{2\ell}}\right] & =\E\left[\frac{\E\left[\|Z_{t+1}\|^{2}|\F_{t}\right]}{a_{t+1}^{2\ell}}\right]\leq\E\left[\eta^{2}\beta^{2}\frac{\|d_{t}\|^{2}}{a_{t+1}^{2\ell}b_{t}^{2}}\right];\\
\E\left[a_{t+1}^{2-2\ell}\|\nabla f(x_{t+1},\xi_{t+1})-\nabla F(x_{t+1})\|^{2}\right] & =\E\left[a_{t+1}^{2-2\ell}\E\left[\|\nabla f(x_{t+1},\xi_{t+1})-\nabla F(x_{t+1})\|^{2}|\F_{t}\right]\right]\\
 & \leq\E\left[a_{t+1}^{2-2\ell}\|\nabla f(x_{t+1},\xi_{t+1})\|^{2}\right],
\end{align*}
where the second bound holds by Lemma \ref{lem:smooth-Z}. Plugging
these three bounds into (\ref{eq:SG-E-bound-1}), we know
\begin{align*}
\E\left[a_{t+1}^{1-2\ell}\|\epsilon_{t}\|^{2}\right] & \leq\E\left[\frac{\|\epsilon_{t}\|^{2}}{a_{t+1}^{2\ell}}-\frac{\|\epsilon_{t+1}\|^{2}}{a_{t+1}^{2\ell}}+2\eta^{2}\beta^{2}\frac{\|d_{t}\|^{2}}{a_{t+1}^{2\ell}b_{t}^{2}}+2a_{t+1}^{2-2\ell}\|\nabla f(x_{t+1},\xi_{t+1})\|^{2}\right].
\end{align*}
Now sum up from $1$ to $T$ to get
\begin{align}
 & \E\left[\bE_{T,1-2\ell}\right]\nonumber \\
\leq & \E\left[\sum_{t=1}^{T}\frac{\|\epsilon_{t}\|^{2}}{a_{t+1}^{2\ell}}-\frac{\|\epsilon_{t+1}\|^{2}}{a_{t+1}^{2\ell}}+2\eta^{2}\beta^{2}\frac{\|d_{t}\|^{2}}{a_{t+1}^{2\ell}b_{t}^{2}}+2a_{t+1}^{2-2\ell}\|\nabla f(x_{t+1},\xi_{t+1})\|^{2}\right]\nonumber \\
\leq & \sigma^{2}+\E\left[\underbrace{\sum_{t=1}^{T}\left(a_{t+1}^{-2\ell}-a_{t}^{-2\ell}\right)\|\epsilon_{t}\|^{2}}_{(i)}+2\eta^{2}\beta^{2}\sum_{t=1}^{T}\frac{\|d_{t}\|^{2}}{a_{t+1}^{2\ell}b_{t}^{2}}+\underbrace{\sum_{t=1}^{T}2a_{t+1}^{2-2\ell}\|\nabla f(x_{t+1},\xi_{t+1})\|^{2}}_{(ii)}\right].\label{eq:SG-E-bound-2}
\end{align}

For $(i)$, we split the time by $\tau$
\begin{align*}
\sum_{t=1}^{T}\left(a_{t+1}^{-2\ell}-a_{t}^{-2\ell}\right)\|\epsilon_{t}\|^{2} & =\sum_{t=1}^{\tau}\left(a_{t+1}^{-2\ell}-a_{t}^{-2\ell}\right)\|\epsilon_{t}\|^{2}+\sum_{t=\tau+1}^{T}\left(a_{t+1}^{-2\ell}-a_{t}^{-2\ell}\right)\|\epsilon_{t}\|^{2}\\
 & \leq\sum_{t=1}^{\tau}\left(a_{t+1}^{-3/2}-a_{t}^{-3/2}\right)a_{t+1}^{3/2-2\ell}\|\epsilon_{t}\|^{2}+\sum_{t=\tau+1}^{T}\left(a_{t+1}^{-1}-a_{t}^{-1}\right)a_{t+1}^{1-2\ell}\|\epsilon_{t}\|^{2}\\
 & \leq\frac{\hG^{2}}{a_{0}^{2}}\sum_{t=1}^{\tau}a_{t+1}^{3/2-2\ell}\|\epsilon_{t}\|^{2}+\sum_{t=\tau+1}^{T}\frac{2}{3}a_{t+1}^{1-2\ell}\|\epsilon_{t}\|^{2}\\
 & \leq\frac{\hG^{2}}{a_{0}^{2}}\sum_{t=1}^{\tau}a_{t+1}^{3/2-2\ell}\|\epsilon_{t}\|^{2}+\sum_{t=1}^{T}\frac{2}{3}a_{t+1}^{1-2\ell}\|\epsilon_{t}\|^{2}\\
 & =\frac{\hG^{2}}{a_{0}^{2}}\bE_{\tau,3/2-2\ell}+\frac{2}{3}\bE_{T,1-2\ell},
\end{align*}
where the second inequality is by Lemma \ref{lem:SG-a-bound} and
Lemma \ref{lem:SG-stopping-time}.

Next, for $(ii)$, we use Lemma \ref{lem:inequality-2} to get
\begin{align*}
 & \sum_{t=1}^{T}2a_{t+1}^{2-2\ell}\|\nabla f(x_{t+1},\xi_{t+1})\|^{2}\\
= & 2a_{0}^{2}\sum_{t=1}^{T}\frac{\|\nabla f(x_{t+1},\xi_{t+1})\|^{2}/a_{0}^{2}}{\left(1+\sum_{i=1}^{t}\|\nabla f(x_{i},\xi_{i})\|^{2}/a_{0}^{2}\right)^{4(1-\ell)/3}}\\
\leq & 2a_{0}^{2}\times\left(\frac{3\hG^{2}}{a_{0}^{2}}+\begin{cases}
\frac{1}{1-4(1-\ell)/3}\left(\frac{\sum_{i=1}^{T}\|\nabla f(x_{i},\xi_{i})\|^{2}}{a_{0}^{2}}\right)^{1-4(1-\ell)/3} & 4(1-\ell)/3<1\\
\log\left(1+\frac{\sum_{i=1}^{T}\|\nabla f(x_{i},\xi_{i})\|^{2}}{a_{0}^{2}}\right) & 4(1-\ell)/3=1
\end{cases}\right)\\
= & 6\hG^{2}+\begin{cases}
\frac{6a_{0}^{2}}{4\ell-1}\left(\hH_{T}/a_{0}^{2}\right)^{\frac{4\ell-1}{3}} & \ell>\frac{1}{4}\\
2a_{0}^{2}\log\left(1+\hH_{T}/a_{0}^{2}\right) & \ell=\frac{1}{4}
\end{cases}.
\end{align*}

Plugging these two bounds into (\ref{eq:SG-E-bound-2}), we have
\begin{align*}
\E\left[\bE_{T,1-2\ell}\right] & \leq\sigma^{2}+6\hG^{2}+\E\left[\frac{\widehat{G}^{2}}{a_{0}^{2}}\bE_{\tau,3/2-2\ell}+\frac{2}{3}\bE_{T,1-2\ell}+2\eta^{2}\beta^{2}\sum_{t=1}^{T}\frac{\|d_{t}\|^{2}}{a_{t+1}^{2\ell}b_{t}^{2}}\right]\\
 & \quad+\begin{cases}
\frac{6a_{0}^{2}}{4\ell-1}\left(\hH_{T}/a_{0}^{2}\right)^{\frac{4\ell-1}{3}} & \ell>\frac{1}{4}\\
2a_{0}^{2}\log\left(1+\hH_{T}/a_{0}^{2}\right) & \ell=\frac{1}{4}
\end{cases}.
\end{align*}
Thus
\begin{align*}
\E\left[\bE_{T,1-2\ell}\right] & \le3\left(\sigma^{2}+6\widehat{G}^{2}\right)+\frac{3\widehat{G}^{2}}{a_{0}^{2}}\E\left[\bE_{\tau,3/2-2\ell}\right]+\begin{cases}
\frac{18a_{0}^{2}}{4\ell-1}\E\left[\left(\hH_{T}/a_{0}^{2}\right)^{\frac{4\ell-1}{3}}\right] & \ell>\frac{1}{4}\\
6a_{0}^{2}\E\left[\log\left(1+\hH_{T}/a_{0}^{2}\right)\right] & \ell=\frac{1}{4}
\end{cases}\\
 & \quad+6\eta^{2}\beta^{2}\E\left[\sum_{t=1}^{T}\frac{\|d_{t}\|^{2}}{a_{t+1}^{2\ell}b_{t}^{2}}\right]
\end{align*}
Plugging the bound on $\E\left[\bE_{\tau,3/2-2\ell}\right]$ in Lemma
\ref{lem:SG-E-tau}, we finally get
\begin{align*}
\E\left[\bE_{T,1-2\ell}\right] & \leq\underbrace{3\left(\sigma^{2}+6\widehat{G}^{2}+\frac{\widehat{G}^{2}\left(\sigma^{2}+24a_{0}^{2}+4\widehat{G}^{2}\right)}{a_{0}^{2}C_{0}^{2\ell-1/2}}\right)}_{C_{1}(\ell)}+C_{2}(\ell)\begin{cases}
\E\left[\left(\hH_{T}/a_{0}^{2}\right)^{\frac{4\ell-1}{3}}\right] & \ell>\frac{1}{4}\\
\E\left[\log\left(1+\hH_{T}/a_{0}^{2}\right)\right] & \ell=\frac{1}{4}
\end{cases}\\
 & \quad+\E\left[\sum_{t=1}^{T}\left(\frac{\widehat{G}^{2}}{a_{0}^{2}C_{0}^{2\ell-1/2}}a_{t+1}^{2\ell}+1\right)6\eta^{2}\beta^{2}\frac{\|d_{t}\|^{2}}{a_{t+1}^{2\ell}b_{t}^{2}}\right],
\end{align*}
where
\[
C_{2}(\ell)\coloneqq\begin{cases}
\frac{18a_{0}^{2}}{4\ell-1} & \ell>\frac{1}{4}\\
6a_{0}^{2} & \ell=\frac{1}{4}
\end{cases}.
\]
\end{proof}

\subsubsection{Bound on $\protect\E\left[\protect\bE_{T,1/2}\right]$}

The following bound on $\E\left[\bE_{T,1/2}\right]$ will be useful
when we bound $\bD_{T}$.
\begin{cor}
\label{cor:SG-E-1/2}We have
\begin{align*}
\E\left[\bE_{T,1/2}\right] & \leq C_{1}\left(1/4\right)+C_{2}\left(1/4\right)\E\left[\log\left(1+\hH_{T}/a_{0}^{2}\right)\right]\\
 & \qquad+\E\left[\sum_{t=1}^{T}\left(\frac{\widehat{G}^{2}}{a_{0}^{2}}a_{t+1}^{1/2}+1\right)6\eta^{2}\beta^{2}\frac{\|d_{t}\|^{2}}{a_{t+1}^{1/2}b_{t}^{2}}\right].
\end{align*}
\end{cor}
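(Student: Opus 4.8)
The plan is to obtain Corollary~\ref{cor:SG-E-1/2} as nothing more than the $\ell=\tfrac14$ instance of Lemma~\ref{lem:SG-E-T}. First I would check that $\ell=\tfrac14$ lies in the admissible range $\left[\tfrac14,\tfrac12\right]$ required by Lemma~\ref{lem:SG-E-T}; it does, as the left endpoint, so the lemma applies directly. With this choice the exponent $1-2\ell$ equals $\tfrac12$, so the quantity $\bE_{T,1-2\ell}$ on the left-hand side of the lemma is precisely $\bE_{T,1/2}$.

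Next I would simplify the two places where the factor $C_0^{2\ell-1/2}$ occurs. Since $2\ell-\tfrac12=0$ when $\ell=\tfrac14$, we have $C_0^{2\ell-1/2}=C_0^{0}=1$, so this factor disappears both from the expression defining $C_1(\ell)$ in Lemma~\ref{lem:SG-E-T} and from the coefficient $\tfrac{\hG^2}{a_0^2 C_0^{2\ell-1/2}}$ multiplying $a_{t+1}^{2\ell}$ inside the residual sum. Substituting $2\ell=\tfrac12$ then turns $a_{t+1}^{2\ell}$ into $a_{t+1}^{1/2}$ and the denominator $a_{t+1}^{2\ell}b_t^2$ into $a_{t+1}^{1/2}b_t^2$, which is exactly the sum appearing in the statement.

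Finally, because $\ell=\tfrac14$ is the boundary case distinguished in Lemma~\ref{lem:SG-E-T}, the bracketed term there is the logarithmic expression $\E\left[\log\left(1+\hH_T/a_0^2\right)\right]$ with coefficient $C_2(1/4)$, rather than the power term $\E\left[(\hH_T/a_0^2)^{(4\ell-1)/3}\right]$ used for $\ell>\tfrac14$. Assembling these three observations reproduces the claimed inequality verbatim. There is no genuine obstacle here: the corollary is a pure specialization, and the only point worth verifying is that the endpoint $\ell=\tfrac14$ is indeed covered by Lemma~\ref{lem:SG-E-T}, which it is.
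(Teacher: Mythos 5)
Your proof is correct and is exactly the paper's argument: the corollary is the $\ell=\tfrac14$ specialization of Lemma~\ref{lem:SG-E-T}, with $C_0^{2\ell-1/2}=1$ and the logarithmic branch of the case distinction. Nothing further is needed.
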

\begin{proof}
Take $\ell=\frac{1}{4}$ in Lemma \ref{lem:SG-E-T}.
\end{proof}

\subsubsection{Bound on $\protect\E\left[a_{T+1}^{1-2q}\protect\bE_{T}\right]$}
\begin{lem}
\label{lem:SG-E-final-bound}Given $p+2q=1$,$p\in\left[\frac{1}{4},\frac{1}{2}\right]$,
we have
\begin{align*}
\E\left[a_{T+1}^{1-2q}\bE_{T}\right] & \leq\begin{cases}
C_{1}+C_{2}\E\left[\left(\hH_{T}/a_{0}^{2}\right)^{\frac{4q-1}{3}}\right]+C_{3}\E\left[\bD_{T}^{1-2p}\right] & q>\frac{1}{4}\\
C_{1}+C_{2}\E\left[\log\left(1+\hH_{T}/a_{0}^{2}\right)\right]+C_{3}\E\left[\log\left(1+\frac{\bD_{T}}{b_{0}^{2}}\right)\right] & q=\frac{1}{4}
\end{cases},
\end{align*}
where
\begin{align*}
C_{1} & \coloneqq C_{1}(q)\\
C_{2} & \coloneqq C_{2}(q)\\
C_{3} & \coloneqq\begin{cases}
\left(\frac{\widehat{G}^{2}}{a_{0}^{2}C_{0}^{2q-1/2}}+1\right)\frac{6\eta^{2}\beta^{2}}{4q-1} & q>\frac{1}{4}\\
\left(\frac{\widehat{G}^{2}}{a_{0}^{2}}+1\right)6\eta^{2}\beta^{2} & q=\frac{1}{4}
\end{cases}.
\end{align*}
\end{lem}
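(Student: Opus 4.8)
The plan is to mirror the argument used for $\algnamenew$ in Lemma~\ref{lem:MS-E-final-bound}, but starting from the $\algnameold$-specific variance-reduction estimate in Lemma~\ref{lem:SG-E-T} instead of Lemma~\ref{lem:MS-E-T}. The crucial structural fact I will exploit is that for $\algnameold$ the step size satisfies $b_{t}=(b_{0}^{1/p}+\sum_{i=1}^{t}\|d_{i}\|^{2})^{p}/a_{t+1}^{q}$, so the denominator $a_{t+1}^{2q}b_{t}^{2}$ appearing in Lemma~\ref{lem:SG-E-T} collapses exactly to $(b_{0}^{1/p}+\sum_{i=1}^{t}\|d_{i}\|^{2})^{2p}$, removing the explicit $a_{t+1}$ dependence from that sum.

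First I would invoke Lemma~\ref{lem:SG-E-T} with $\ell=q$ (noting $q=(1-p)/2\in[1/4,3/8]\subseteq[1/4,1/2]$). This gives $\E[\bE_{T,1-2q}]$ bounded by $C_{1}(q)$, the power/log term in $\hH_{T}$ governed by $C_{2}(q)$, and a residual term $\E[\sum_{t=1}^{T}(\tfrac{\hG^{2}}{a_{0}^{2}C_{0}^{2q-1/2}}a_{t+1}^{2q}+1)\,6\eta^{2}\beta^{2}\tfrac{\|d_{t}\|^{2}}{a_{t+1}^{2q}b_{t}^{2}}]$. Using $a_{t+1}\le 1$ to replace $a_{t+1}^{2q}$ by $1$ inside the parenthesis, and then substituting $a_{t+1}^{2q}b_{t}^{2}=(b_{0}^{1/p}+\sum_{i=1}^{t}\|d_{i}\|^{2})^{2p}$, this residual term is at most $(\tfrac{\hG^{2}}{a_{0}^{2}C_{0}^{2q-1/2}}+1)\,6\eta^{2}\beta^{2}\,\E[\sum_{t=1}^{T}\tfrac{\|d_{t}\|^{2}}{(b_{0}^{1/p}+\sum_{i=1}^{t}\|d_{i}\|^{2})^{2p}}]$.

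Next I would bound the remaining sum $\sum_{t=1}^{T}\tfrac{\|d_{t}\|^{2}}{(b_{0}^{1/p}+\sum_{i=1}^{t}\|d_{i}\|^{2})^{2p}}$ by splitting on whether $2p<1$. When $q>\tfrac14$ (equivalently $p<\tfrac12$, so $2p<1$), Lemma~\ref{lem:inequality-1} gives the bound $\tfrac{1}{1-2p}\bD_{T}^{1-2p}=\tfrac{1}{4q-1}\bD_{T}^{1-2p}$, using $1-2p=4q-1$; absorbing the $\tfrac{1}{4q-1}$ into the constant yields exactly $C_{3}=(\tfrac{\hG^{2}}{a_{0}^{2}C_{0}^{2q-1/2}}+1)\tfrac{6\eta^{2}\beta^{2}}{4q-1}$. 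When $q=\tfrac14$ (i.e.\ $p=\tfrac12$) the sum is $\sum_{t=1}^{T}\tfrac{\|d_{t}\|^{2}}{b_{0}^{2}+\bD_{t}}\le\log(1+\bD_{T}/b_{0}^{2})$ by the standard telescoping $1-\tfrac{1}{x}\le\log x$ argument, giving $C_{3}=(\tfrac{\hG^{2}}{a_{0}^{2}}+1)\,6\eta^{2}\beta^{2}$ (here $C_{0}^{2q-1/2}=C_{0}^{0}=1$). Finally, since $1-2q=p>0$ and $a_{t}$ is nonincreasing, $\bE_{T,1-2q}=\sum_{t=1}^{T}a_{t+1}^{1-2q}\|\epsilon_{t}\|^{2}\ge a_{T+1}^{1-2q}\sum_{t=1}^{T}\|\epsilon_{t}\|^{2}=a_{T+1}^{1-2q}\bE_{T}$, which converts the bound on $\E[\bE_{T,1-2q}]$ into the claimed bound on $\E[a_{T+1}^{1-2q}\bE_{T}]$, with $C_{1}\coloneqq C_{1}(q)$ and $C_{2}\coloneqq C_{2}(q)$.

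I do not expect a genuine obstacle here; the argument is a direct assembly of Lemma~\ref{lem:SG-E-T}, the $\algnameold$ step-size identity, and Lemma~\ref{lem:inequality-1}. The only point requiring care is the degenerate corner case $q=\tfrac14\Leftrightarrow p=\tfrac12$, where the power-sum estimate $\bD_{T}^{1-2p}/(1-2p)$ becomes $0/0$ and must be replaced by the logarithmic bound $\log(1+\bD_{T}/b_{0}^{2})$; this forces the two-branch definition of $C_{3}$ and is the reason the final statement itself is split into the $q>\tfrac14$ and $q=\tfrac14$ cases. Everything else is constant bookkeeping.
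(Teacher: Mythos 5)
Your proposal is correct and follows essentially the same route as the paper's proof: apply Lemma~\ref{lem:SG-E-T} with $\ell=q$, use $a_{t+1}\le 1$ to pull out the constant, exploit the identity $a_{t+1}^{2q}b_{t}^{2}=(b_{0}^{1/p}+\sum_{i=1}^{t}\|d_{i}\|^{2})^{2p}$, invoke Lemma~\ref{lem:inequality-1} (with the logarithmic branch at $p=\tfrac12$), and finish with $\bE_{T,1-2q}\ge a_{T+1}^{1-2q}\bE_{T}$. No gaps.
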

\begin{proof}
When $p\neq\frac{1}{2}\Leftrightarrow q>\frac{1}{4}$, by Lemma \ref{lem:SG-E-T},
taking $\ell=q$, we know
\begin{align*}
\E\left[\bE_{T,1-2q}\right] & \leq C_{1}(q)+C_{2}(q)\E\left[\left(\hH_{T}/a_{0}^{2}\right)^{\frac{4q-1}{3}}\right]+\E\left[\sum_{t=1}^{T}\left(\frac{\widehat{G}^{2}}{a_{0}^{2}C_{0}^{2q-1/2}}a_{t+1}^{2q}+1\right)6\eta^{2}\beta^{2}\frac{\|d_{t}\|^{2}}{a_{t+1}^{2q}b_{t}^{2}}\right]\\
 & \leq C_{1}(q)+C_{2}(q)\E\left[\left(\hH_{T}/a_{0}^{2}\right)^{\frac{4q-1}{3}}\right]+\left(\frac{\widehat{G}^{2}}{a_{0}^{2}C_{0}^{2q-1/2}}+1\right)6\eta^{2}\beta^{2}\E\left[\sum_{t=1}^{T}\frac{\|d_{t}\|^{2}}{a_{t+1}^{2q}b_{t}^{2}}\right]\\
 & \overset{(a)}{=}C_{1}(q)+C_{2}(q)\E\left[\left(\hH_{T}/a_{0}^{2}\right)^{\frac{4q-1}{3}}\right]\\
 & \qquad\qquad\qquad\qquad+\left(\frac{\widehat{G}^{2}}{a_{0}^{2}C_{0}^{2q-1/2}}+1\right)6\eta^{2}\beta^{2}\times\E\left[\sum_{t=1}^{T}\frac{\|d_{t}\|^{2}}{\left(b_{0}^{1/p}+\sum_{i=1}^{t}\|d_{i}\|^{2}\right)^{2p}}\right]\\
 & \overset{(b)}{\leq}C_{1}(q)+C_{2}(q)\E\left[\left(\hH_{T}/a_{0}^{2}\right)^{\frac{4q-1}{3}}\right]+\left(\frac{\widehat{G}^{2}}{a_{0}^{2}C_{0}^{2q-1/2}}+1\right)6\eta^{2}\beta^{2}\E\left[\frac{\bD_{T}^{1-2p}}{1-2p}\right]\\
 & \overset{(c)}{=}C_{1}(q)+C_{2}(q)\E\left[\left(\hH_{T}/a_{0}^{2}\right)^{\frac{4q-1}{3}}\right]+\left(\frac{\widehat{G}^{2}}{a_{0}^{2}C_{0}^{2q-1/2}}+1\right)\frac{6\eta^{2}\beta^{2}}{4q-1}\E\left[\bD_{T}^{1-2p}\right],
\end{align*}
where $(a)$ is by
\begin{align*}
a_{t+1}^{2q}b_{t}^{2} & =a_{t+1}^{2q}\frac{\left(b_{0}^{1/p}+\sum_{i=1}^{t}\|d_{i}\|^{2}\right)^{2p}}{a_{t+1}^{2q}}=\left(b_{0}^{1/p}+\sum_{i=1}^{t}\|d_{i}\|^{2}\right)^{2p},
\end{align*}
$(b)$ is by Lemma \ref{lem:inequality-1}, $(c)$ is by $1-2p=4q-1$.

When $p=\frac{1}{2}\Leftrightarrow q=\frac{1}{4}$, by a similar argument,
we have
\[
\E\left[\bE_{T,1-2q}\right]\leq C_{1}(q)+C_{2}(q)\E\left[\log\left(1+\hH_{T}/a_{0}^{2}\right)\right]+\left(\frac{\widehat{G}^{2}}{a_{0}^{2}}+1\right)6\eta^{2}\beta^{2}\E\left[\log\left(1+\frac{\bD_{T}}{b_{0}^{2}}\right)\right].
\]
Now we can define
\[
C_{3}\coloneqq\begin{cases}
\left(\frac{\widehat{G}^{2}}{a_{0}^{2}C_{0}^{2q-1/2}}+1\right)\frac{6\eta^{2}\beta^{2}}{4q-1} & q>\frac{1}{4}\\
\left(\frac{\widehat{G}^{2}}{a_{0}^{2}}+1\right)6\eta^{2}\beta^{2} & q=\frac{1}{4}
\end{cases}.
\]
The final step is by noticing for $1-2q=p>0$
\begin{align*}
\bE_{T,1-2q} & =\sum_{t=1}^{T}a_{t+1}^{1-2q}\|\epsilon_{t}\|^{2}\geq a_{T+1}^{1-2q}\sum_{t=1}^{T}\|\epsilon_{t}\|^{2}=a_{T+1}^{1-2q}\bE_{T}.
\end{align*}
\end{proof}

\subsection{Analysis of $\protect\bD_{T}$}

We will prove the following bound
\begin{lem}
\label{lem:D-final-bound-g-n}Given $p+2q=1$,$p\in\left[\frac{1}{4},\frac{1}{2}\right]$,
we have
\begin{align*}
\E\left[a_{T+1}^{q}\bD_{T}^{1-p}\right] & \leq C_{4}+C_{5}\E\left[\log\frac{a_{0}^{2}+\hH_{T}}{a_{0}^{2}}\right]+C_{6}\E\left[\log\frac{C_{7}+C_{8}\left(1+\hH_{T}/a_{0}^{2}\right)^{1/3}}{b_{0}}\right]
\end{align*}
where
\begin{align*}
C_{4} & \coloneqq b_{0}^{\frac{1}{p}-1}+\frac{2}{\eta}\left(F(x_{1})-F^{*}\right)+\frac{\lambda C_{1}\left(1/4\right)}{\eta\beta_{\max}},C_{5}\coloneqq\frac{\lambda C_{2}\left(1/4\right)}{\eta\beta_{\max}},\\
C_{6} & \coloneqq\frac{\left(C_{7}+C_{8}\right)^{\frac{1}{p}-1}}{1-p},C_{7}\coloneqq\left(1+\frac{6\lambda\widehat{G}^{2}}{a_{0}^{2}}\right)\eta\beta_{\max},C_{8}\coloneqq\left(\frac{1}{\lambda}+6\lambda\right)\eta\beta_{\max},\\
\lambda & >0\text{ can be any number}.
\end{align*}
\end{lem}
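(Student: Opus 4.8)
The plan is to follow exactly the template of the proof of Lemma~\ref{lem:MS-D-final-bound}, adapted to the $a_{t+1}$-based step size used in $\algnameold$. First I would invoke Lemma~\ref{lem:f-value-analysis}, which is stated uniformly for all three algorithms, to obtain, for an arbitrary free parameter $\lambda>0$,
\[
\E\left[a_{T+1}^{q}\bD_{T}^{1-p}\right]\leq b_{0}^{\frac{1}{p}-1}+\frac{2}{\eta}\left(F(x_{1})-F^{*}\right)+\E\left[\sum_{t=1}^{T}\left(\eta\beta_{\max}+\frac{\eta\beta_{\max}}{a_{t+1}^{1/2}\lambda}-b_{t}\right)\frac{\|d_{t}\|^{2}}{b_{t}^{2}}\right]+\frac{\lambda\E\left[\bE_{T,1/2}\right]}{\eta\beta_{\max}}.
\]
Then I would substitute the bound on $\E[\bE_{T,1/2}]$ from Corollary~\ref{cor:SG-E-1/2}. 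Its constant part contributes $\frac{\lambda C_{1}(1/4)}{\eta\beta_{\max}}$ and its logarithmic part contributes $\frac{\lambda C_{2}(1/4)}{\eta\beta_{\max}}\E[\log(1+\hH_{T}/a_{0}^{2})]$; combined with $b_{0}^{1/p-1}+\frac{2}{\eta}(F(x_{1})-F^{*})$ these are exactly $C_{4}$ and the $C_{5}$-term in the statement.

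The second step is to merge the two remaining $\sum_{t}\|d_{t}\|^{2}/b_{t}^{2}$-type sums into one. The sum coming from Corollary~\ref{cor:SG-E-1/2} simplifies because $\big(\widehat{G}^{2}a_{t+1}^{1/2}/a_{0}^{2}+1\big)/a_{t+1}^{1/2}=\widehat{G}^{2}/a_{0}^{2}+a_{t+1}^{-1/2}$, and because $\frac{\lambda}{\eta\beta_{\max}}\cdot 6\eta^{2}\beta^{2}\le 6\lambda\eta\beta_{\max}$ (using $\beta\le\beta_{\max}$ and $\beta_{\max}\ge 1$). Collecting the coefficients of $a_{t+1}^{0}$ and of $a_{t+1}^{-1/2}$ then produces a single expression of the form $\sum_{t=1}^{T}\big(C_{7}+C_{8}a_{t+1}^{-1/2}-b_{t}\big)\frac{\|d_{t}\|^{2}}{b_{t}^{2}}$ with $C_{7}=(1+6\lambda\widehat{G}^{2}/a_{0}^{2})\eta\beta_{\max}$ and $C_{8}=(1/\lambda+6\lambda)\eta\beta_{\max}$, matching the definitions in the statement.

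Finally I would apply Lemma~\ref{lem:residual-bound} in its $\algnameold$ form (with $A=C_{7}$, $B=C_{8}$), which bounds that sum by $\frac{(C_{7}+C_{8})^{1/p-1}}{1-p}\log\frac{C_{7}+a_{T+1}^{-1/2}C_{8}}{b_{0}}=C_{6}\log\frac{C_{7}+C_{8}(1+\hH_{T}/a_{0}^{2})^{1/3}}{b_{0}}$, where I use that in $\algnameold$ the recursion gives $a_{T+1}^{-1/2}=(1+\hH_{T}/a_{0}^{2})^{1/3}$. Assembling the three contributions and rewriting $\log(1+\hH_{T}/a_{0}^{2})=\log\frac{a_{0}^{2}+\hH_{T}}{a_{0}^{2}}$ yields the claimed inequality. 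The only real subtlety — and the step most prone to slips — is the coefficient bookkeeping when combining the two sums: correctly pulling out the $a_{t+1}^{\pm1/2}$ powers and performing the $\beta$-versus-$\beta_{\max}$ substitution so that the residual has precisely the shape $A+B a_{t+1}^{-1/2}-b_{t}$ demanded by Lemma~\ref{lem:residual-bound}. There is no new analytic difficulty beyond what is already packaged in Lemmas~\ref{lem:f-value-analysis} and \ref{lem:residual-bound} and Corollary~\ref{cor:SG-E-1/2}.
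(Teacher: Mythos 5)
Your proposal is correct and follows essentially the same route as the paper's proof: Lemma~\ref{lem:f-value-analysis}, then Corollary~\ref{cor:SG-E-1/2}, then merging the coefficients into the form $C_{7}+C_{8}a_{t+1}^{-1/2}-b_{t}$ and applying Lemma~\ref{lem:residual-bound} with $a_{T+1}^{-1/2}=(1+\hH_{T}/a_{0}^{2})^{1/3}$. The coefficient bookkeeping you describe (including the $\beta^{2}\le\beta_{\max}^{2}$ substitution) matches the paper's exactly.
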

\begin{proof}
The same as before, we start from Lemma \ref{lem:f-value-analysis}
\[
\E\left[a_{T+1}^{q}\bD_{T}^{1-p}\right]\leq b_{0}^{\frac{1}{p}-1}+\frac{2}{\eta}\left(F(x_{1})-F^{*}\right)+\E\left[\sum_{t=1}^{T}\left(\eta\beta_{\max}+\frac{\eta\beta_{\max}}{a_{t+1}^{1/2}\lambda}-b_{t}\right)\frac{\|d_{t}\|^{2}}{b_{t}^{2}}\right]+\frac{\lambda\E\left[\bE_{T,1/2}\right]}{\eta\beta_{\max}}
\]
where $\lambda>0$ is used to reduce the order of $\hG$ in the final
bound. In the proof of the general case , we don't choose $\lambda$
explicitly anymore. Plugging in the bound on $\E\left[\bE_{T,1/2}\right]$
in Corollary \ref{cor:SG-E-1/2}, we know
\begin{align}
\E\left[a_{T+1}^{q}\bD_{T}^{1-p}\right] & \leq b_{0}^{\frac{1}{p}-1}+\frac{2}{\eta}\left(F(x_{1})-F^{*}\right)+\frac{\lambda C_{1}\left(1/4\right)}{\eta\beta_{\max}}+\frac{\lambda C_{2}\left(1/4\right)}{\eta\beta_{\max}}\E\left[\log\frac{a_{0}^{2}+\hH_{T}}{a_{0}^{2}}\right]\nonumber \\
 & \quad+\E\left[\sum_{t=1}^{T}\left(\left(1+\frac{6\lambda\widehat{G}^{2}}{a_{0}^{2}}\right)\eta\beta_{\max}+\left(\frac{1}{\lambda}+6\lambda\right)\frac{\eta\beta_{\max}}{a_{t+1}^{1/2}}-b_{t}\right)\frac{\|d_{t}\|^{2}}{b_{t}^{2}}\right]\nonumber \\
 & =C_{4}+C_{5}\E\left[\log\frac{a_{0}^{2}+\hH_{T}}{a_{0}^{2}}\right]\nonumber \\
 & \quad+\E\left[\underbrace{\sum_{t=1}^{T}\left(\left(1+\frac{6\lambda\widehat{G}^{2}}{a_{0}^{2}}\right)\eta\beta_{\max}+\left(\frac{1}{\lambda}+6\lambda\right)\frac{\eta\beta_{\max}}{a_{t+1}^{1/2}}-b_{t}\right)\frac{\|d_{t}\|^{2}}{b_{t}^{2}}}_{(i)}\right].\label{eq:SG-bound-D}
\end{align}
Applying Lemma \ref{lem:residual-bound} to $(i)$, we get
\begin{align*}
(i) & \leq\frac{\left(\left(1+\frac{6\lambda\widehat{G}^{2}}{a_{0}^{2}}+\frac{1}{\lambda}+6\lambda\right)\eta\beta_{\max}\right)^{\frac{1}{p}-1}}{1-p}\\
 & \qquad\times\log\frac{\left(1+\frac{6\lambda\widehat{G}^{2}}{a_{0}^{2}}\right)\eta\beta_{\max}+\left(\frac{1}{\lambda}+6\lambda\right)\eta\beta_{\max}\left(1+\hH_{T}/a_{0}^{2}\right)^{1/3}}{b_{0}}\\
 & =C_{6}\log\frac{C_{7}+C_{8}\left(1+\hH_{T}/a_{0}^{2}\right)^{1/3}}{b_{0}}
\end{align*}
By using this bound to (\ref{eq:SG-bound-D}), the proof is completed.
\end{proof}

\subsection{Combine the Bounds and the Final Proof.}

From Lemma \ref{lem:SG-E-final-bound}, we have
\begin{align*}
\E\left[a_{T+1}^{1-2q}\bE_{T}\right] & \leq\begin{cases}
C_{1}+C_{2}\E\left[\left(\hH_{T}/a_{0}^{2}\right)^{\frac{4q-1}{3}}\right]+C_{3}\E\left[\bD_{T}^{1-2p}\right] & q>\frac{1}{4}\\
C_{1}+C_{2}\E\left[\log\left(1+\hH_{T}/a_{0}^{2}\right)\right]+C_{3}\E\left[\log\left(1+\frac{\bD_{T}}{b_{0}^{2}}\right)\right] & q=\frac{1}{4}
\end{cases}
\end{align*}
From Lemma \ref{lem:D-final-bound-g-n}, we have
\begin{align*}
\E\left[a_{T+1}^{q}\bD_{T}^{1-p}\right] & \leq C_{4}+C_{5}\E\left[\log\frac{a_{0}^{2}+\hH_{T}}{a_{0}^{2}}\right]+C_{6}\E\left[\log\frac{C_{7}+C_{8}\left(1+\hH_{T}/a_{0}^{2}\right)^{1/3}}{b_{0}}\right]
\end{align*}

Now let 
\[
\hp=\frac{2(1-p)}{3}\in\left[\frac{1}{3},\frac{1}{2}\right].
\]
Apply Lemma \ref{lem:decomposition}, we have
\begin{align}
\E\left[\bH_{T}^{\hp}\right] & \leq2^{\hp+1}\max\left\{ \E\left[\bE_{T}^{\hp}\right],\E\left[\bD_{T}^{\hp}\right]\right\} \leq4\max\left\{ \E\left[\bE_{T}^{\hp}\right],\E\left[\bD_{T}^{\hp}\right]\right\} ,\label{eq:SG-H-bound-via-E-D}
\end{align}
Now we can give the final proof of Theorem \ref{thm:SG-rate}.

\begin{proof}
First, we have
\begin{align}
\E\left[\hH_{T}^{\hp}\right] & =\E\left[\left(\sum_{i=1}^{T}\|\nabla f(x_{i},\xi_{i})\|^{2}\right)^{\hp}\right]\nonumber \\
 & \leq\E\left[\left(\sum_{i=1}^{T}2\|\nabla F(x_{i})\|^{2}+2\|\nabla f(x_{i},\xi_{i})-\nabla F(x_{i})\|^{2}\right)^{\hp}\right]\nonumber \\
 & =\E\left[\left(2\bH_{T}+2\sum_{i=1}^{T}\|\nabla f(x_{i},\xi_{i})-\nabla F(x_{i})\|^{2}\right)^{\hp}\right]\nonumber \\
 & \leq\E\left[2^{\hp}\bH_{T}^{\hp}+\left(2\sum_{i=1}^{T}\|\nabla f(x_{i},\xi_{i})-\nabla F(x_{i})\|^{2}\right)^{\hp}\right]\nonumber \\
 & =2^{\hp}\E\left[\bH_{T}^{\hp}\right]+\E\left[\left(2\sum_{i=1}^{T}\|\nabla f(x_{i},\xi_{i})-\nabla F(x_{i})\|^{2}\right)^{\hp}\right]\nonumber \\
 & \leq2^{\hp}\E\left[\bH_{T}^{\hp}\right]+\E^{\hp}\left[\left(2\sum_{i=1}^{T}\|\nabla f(x_{i},\xi_{i})-\nabla F(x_{i})\|^{2}\right)\right]\nonumber \\
 & \leq2^{\hp}\E\left[\bH_{T}^{\hp}\right]+\left(2\sigma^{2}T\right)^{\hp}\leq2^{2\hp+1}\max\left\{ \E\left[\bE_{T}^{\hp}\right],\E\left[\bD_{T}^{\hp}\right]\right\} +\left(2\sigma^{2}T\right)^{\hp}\nonumber \\
 & \leq4\max\left\{ \E\left[\bE_{T}^{\hp}\right],\E\left[\bD_{T}^{\hp}\right]\right\} +\left(2\sigma^{2}T\right)^{\hp}.\label{eq:SG-hH-bound}
\end{align}
Now we consider following two cases:

\textbf{Case 1:} $\E\left[\bE_{T}^{\hp}\right]\geq\E\left[\bD_{T}^{\hp}\right]$.
In this case, we will finally prove
\begin{align*}
\E\left[\bE_{T}^{\hp}\right] & \leq\begin{cases}
\left(\frac{2C_{1}}{C_{3}}\right)^{\frac{\hp}{1-2p}}+\left(\left(\frac{2C_{2}}{C_{3}}\right)^{\frac{\hp}{1-2p}}+\left(2C_{3}\right)^{\frac{\hp}{2p}}\right)\left(1+\frac{2\left(2\sigma^{2}T\right)^{\hp}}{a_{0}^{2\hp}}\right)^{\frac{1}{3}}\\
\qquad\qquad\qquad\qquad\qquad\qquad\qquad+C_{9}\mathds{1}\left[\left(2\sigma^{2}T\right)^{\hp}\leq4C_{9}\right] & q\neq\frac{1}{4}\\
\left(C_{1}+\left(\frac{C_{2}}{\hp}+\frac{C_{3}}{\hp}\right)\log\left(1+\frac{\left(2\sigma^{2}T\right)^{\hp}}{\min\left\{ a_{0}^{2\hp}/2,4b_{0}^{2\hp}\right\} }\right)\right)^{\hp}\left(1+\frac{2\left(2\sigma^{2}T\right)^{\hp}}{a_{0}^{2\hp}}\right)^{\frac{1}{3}}\\
\qquad\qquad\qquad\qquad\qquad\qquad\qquad+C_{9}\mathds{1}\left[\left(2\sigma^{2}T\right)^{\hp}\leq4C_{9}\right] & q=\frac{1}{4}
\end{cases}.
\end{align*}
where $C_{9}$ is a constant. Note that by Holder inequality
\begin{align*}
\E\left[\bE_{T}^{\hp}\right] & =\E\left[a_{T+1}^{(1-2q)\hp}\bE_{T}^{\hp}\times a_{T+1}^{-(1-2q)\hp}\right]\\
 & \leq\E^{\hp}\left[a_{T+1}^{1-2q}\bE_{T}\right]\E^{1-\hp}\left[a_{T+1}^{\frac{-(1-2q)\hp}{1-\hp}}\right]\\
 & =\E^{\hp}\left[a_{T+1}^{1-2q}\bE_{T}\right]\E^{1-\hp}\left[(1+\hH_{T}/a_{0}^{2})^{\frac{2(1-2q)\hp}{3(1-\hp)}}\right]\\
 & \overset{(a)}{=}\E^{\hp}\left[a_{T+1}^{1-2q}\bE_{T}\right]\E^{1-\hp}\left[(1+\hH_{T}/a_{0}^{2})^{\frac{2p\hp}{3(1-\hp)}}\right]\\
 & \overset{(b)}{\leq}\E^{\hp}\left[a_{T+1}^{1-2q}\bE_{T}\right]\E^{\frac{2p}{3}}\left[(1+\hH_{T}/a_{0}^{2})^{\hp}\right]\\
 & \leq\E^{\hp}\left[a_{T+1}^{1-2q}\bE_{T}\right]\E^{\frac{2p}{3}}\left[1+\left(\hH_{T}/a_{0}^{2}\right)^{\hp}\right]
\end{align*}
where $(a)$ is by $1-2q=p$, $(b)$ is due to $\frac{2p}{3(1-\hp)}=\frac{2p}{1+2p}<1$.

First, if $q\neq\frac{1}{4}$, we have 
\begin{align*}
\E\left[a_{T+1}^{1-2q}\bE_{T}\right] & \leq C_{1}+C_{2}\E\left[\left(\hH_{T}/a_{0}^{2}\right)^{\frac{4q-1}{3}}\right]+C_{3}\E\left[\bD_{T}^{1-2p}\right]\\
 & \overset{(c)}{\leq}C_{1}+C_{2}\E^{\frac{1-2p}{3\hp}}\left[\left(\hH_{T}/a_{0}^{2}\right)^{\hp}\right]+C_{3}\E^{\frac{1-2p}{\hp}}\left[\bD_{T}^{\hp}\right]\\
 & \overset{(d)}{\leq}C_{1}+C_{2}\left(\frac{4\E\left[\bE_{T}^{\hp}\right]+\left(2\sigma^{2}T\right)^{\hp}}{a_{0}^{2\hp}}\right)^{\frac{1-2p}{3\hp}}+C_{3}\E^{\frac{1-2p}{\hp}}\left[\bE_{T}^{\hp}\right],
\end{align*}
where $(c)$ is by $\frac{4q-1}{3}=\frac{1-2p}{3}\leq\frac{2-2p}{3}=\hp$
and $p\geq\frac{1}{4}\Rightarrow1-2p\leq\frac{2-2p}{3}=\hp$, $(d)$
is by (\ref{eq:SG-hH-bound}) and $\E\left[\bD_{T}^{\hp}\right]\leq\E\left[\bE_{T}^{\hp}\right]$.
Then we know
\begin{align*}
\E\left[\bE_{T}^{\hp}\right] & \leq\E^{\hp}\left[a_{T+1}^{1-2q}\bE_{T}\right]\E^{\frac{2p}{3}}\left[1+\left(\hH_{T}/a_{0}^{2}\right)^{\hp}\right]\\
 & \le\left(C_{1}+C_{2}\left(\frac{4\E\left[\bE_{T}^{\hp}\right]+\left(2\sigma^{2}T\right)^{\hp}}{a_{0}^{2\hp}}\right)^{\frac{1-2p}{3\hp}}+C_{3}\E^{\frac{1-2p}{\hp}}\left[\bE_{T}^{\hp}\right]\right)^{\hp}\\
 & \quad\times\left(1+\frac{4\E\left[\bE_{T}^{\hp}\right]+\left(2\sigma^{2}T\right)^{\hp}}{a_{0}^{2\hp}}\right)^{\frac{2p}{3}}.
\end{align*}

If $4\E\left[\bE_{T}^{\hp}\right]\leq\left(2\sigma^{2}T\right)^{\hp}$,
we will get
\begin{align*}
\E\left[\bE_{T}^{\hp}\right] & \leq\left(C_{1}+C_{2}\left(\frac{2\left(2\sigma^{2}T\right)^{\hp}}{a_{0}^{2\hp}}\right)^{\frac{1-2p}{3\hp}}+C_{3}\E^{\frac{1-2p}{\hp}}\left[\bE_{T}^{\hp}\right]\right)^{\hp}\left(1+\frac{2\left(2\sigma^{2}T\right)^{\hp}}{a_{0}^{2\hp}}\right)^{\frac{2p}{3}}.
\end{align*}
If $C_{3}\E^{\frac{1-2p}{\hp}}\left[\bE_{T}^{\hp}\right]\leq C_{1}+C_{2}\left(\frac{2\left(2\sigma^{2}T\right)^{\hp}}{a_{0}^{2\hp}}\right)^{\frac{1-2p}{3\hp}}$,
we have
\begin{align*}
\E^{\frac{1-2p}{\hp}}\left[\bE_{T}^{\hp}\right] & \leq\frac{C_{1}}{C_{3}}+\frac{C_{2}}{C_{3}}\left(\frac{2\left(2\sigma^{2}T\right)^{\hp}}{a_{0}^{2\hp}}\right)^{\frac{1-2p}{3\hp}}\\
\Rightarrow\E\left[\bE_{T}^{\hp}\right] & \leq\left(\frac{C_{1}}{C_{3}}+\frac{C_{2}}{C_{3}}\left(\frac{2\left(2\sigma^{2}T\right)^{\hp}}{a_{0}^{2\hp}}\right)^{\frac{1-2p}{3\hp}}\right)^{\frac{\hp}{1-2p}}\\
 & \leq\left(\frac{2C_{1}}{C_{3}}\right)^{\frac{\hp}{1-2p}}+\left(\frac{2C_{2}}{C_{3}}\right)^{\frac{\hp}{1-2p}}\left(\frac{2\left(2\sigma^{2}T\right)^{\hp}}{a_{0}^{2\hp}}\right)^{\frac{1}{3}}.
\end{align*}
If $C_{3}\E^{\frac{1-2p}{\hp}}\left[\bE_{T}^{\hp}\right]\geq C_{1}+C_{2}\left(\frac{2\left(2\sigma^{2}T\right)^{\hp}}{a_{0}^{2\hp}}\right)^{\frac{1-2p}{3\hp}}$,
we have
\begin{align*}
\E\left[\bE_{T}^{\hp}\right] & \leq\left(2C_{3}\E^{\frac{1-2p}{\hp}}\left[\bE_{T}^{\hp}\right]\right)^{\hp}\left(1+\frac{2\left(2\sigma^{2}T\right)^{\hp}}{a_{0}^{2\hp}}\right)^{\frac{2p}{3}}\\
 & =\left(2C_{3}\right)^{\hp}\E^{1-2p}\left[\bE_{T}^{\frac{2(1-p)}{3}}\right]\left(1+\frac{2\left(2\sigma^{2}T\right)^{\hp}}{a_{0}^{2\hp}}\right)^{\frac{2p}{3}}\\
\Rightarrow\E\left[\bE_{T}^{\hp}\right] & \leq\left(2C_{3}\right)^{\frac{\hp}{2p}}\left(1+\frac{2\left(2\sigma^{2}T\right)^{\hp}}{a_{0}^{2\hp}}\right)^{\frac{1}{3}}.
\end{align*}
Combining two cases, we know under $4\E\left[\bE_{T}^{\hp}\right]\leq\left(2\sigma^{2}T\right)^{\hp}$
\begin{align*}
\E\left[\bE_{T}^{\hp}\right] & \leq\left(\frac{2C_{1}}{C_{3}}\right)^{\frac{\hp}{1-2p}}+\left(\frac{2C_{2}}{C_{3}}\right)^{\frac{\hp}{1-2p}}\left(\frac{2\left(2\sigma^{2}T\right)^{\hp}}{a_{0}^{2\hp}}\right)^{\frac{1}{3}}+\left(2C_{3}\right)^{\frac{\hp}{2p}}\left(1+\frac{2\left(2\sigma^{2}T\right)^{\hp}}{a_{0}^{2\hp}}\right)^{\frac{1}{3}}\\
 & \leq\left(\frac{2C_{1}}{C_{3}}\right)^{\frac{\hp}{1-2p}}+\left(\left(\frac{2C_{2}}{C_{3}}\right)^{\frac{\hp}{1-2p}}+\left(2C_{3}\right)^{\frac{\hp}{2p}}\right)\left(1+\frac{2\left(2\sigma^{2}T\right)^{\hp}}{a_{0}^{2\hp}}\right)^{\frac{1}{3}}.
\end{align*}

Now if $4\E\left[\bE_{T}^{\hp}\right]\geq\left(2\sigma^{2}T\right)^{\hp}$,
then we have
\begin{align}
\E\left[\bE_{T}^{\hp}\right] & \le\left(C_{1}+C_{2}\left(\frac{8\E\left[\bE_{T}^{\hp}\right]}{a_{0}^{2\hp}}\right)^{\frac{1-2p}{3\hp}}+C_{3}\E^{\frac{1-2p}{\hp}}\left[\bE_{T}^{\hp}\right]\right)^{\hp}\left(1+\frac{8\E\left[\bE_{T}^{\hp}\right]}{a_{0}^{2\hp}}\right)^{\frac{2p}{3}}\nonumber \\
 & \leq\left(C_{1}^{\hp}+C_{2}^{\hp}\left(\frac{8\E\left[\bE_{T}^{\hp}\right]}{a_{0}^{2\hp}}\right)^{\frac{1-2p}{3}}+C_{3}^{\hp}\E^{1-2p}\left[\bE_{T}^{\hp}\right]\right)\left(1+\frac{8\E\left[\bE_{T}^{\hp}\right]}{a_{0}^{2\hp}}\right)^{\frac{2p}{3}}.\label{eq:C_9-g-n-1}
\end{align}
We claim there is a constant $C_{9}$ such that $\E\left[\bE_{T}^{\hp}\right]\leq C_{9}$
because the highest order of $\E\left[\bE_{T}^{\hp}\right]$ is only
$1-2p+\frac{2p}{3}=1-\frac{4p}{3}<1$. Here we give the order of $C_{9}$
directly without proof
\[
C_{9}=O\left(a_{0}^{2\hp}+\left(\frac{C_{1}}{C_{3}}\right)^{\frac{\hp}{1-2p}}+\left(C_{2}^{\frac{3\hp}{2}}+C_{3}^{\frac{3\hp}{4p}}\right)\frac{1}{a_{0}^{\hp}}\right).
\]

Hence, when $q\neq\frac{1}{4}$, we finally have
\[
\E\left[\bE_{T}^{\hp}\right]\leq\left(\frac{2C_{1}}{C_{3}}\right)^{\frac{\hp}{1-2p}}+\left(\left(\frac{2C_{2}}{C_{3}}\right)^{\frac{\hp}{1-2p}}+\left(2C_{3}\right)^{\frac{\hp}{2p}}\right)\left(1+\frac{2\left(2\sigma^{2}T\right)^{\hp}}{a_{0}^{2\hp}}\right)^{\frac{1}{3}}+C_{9}\mathds{1}\left[\left(2\sigma^{2}T\right)^{\hp}\leq4C_{9}\right].
\]
Following a similar approach, we can prove for $q=\frac{1}{4},$there
is
\[
\E\left[\bE_{T}^{\hp}\right]\leq\left(C_{1}+\left(\frac{C_{2}}{\hp}+\frac{C_{3}}{\hp}\right)\log\left(1+\frac{\left(2\sigma^{2}T\right)^{\hp}}{\min\left\{ a_{0}^{2\hp}/2,4b_{0}^{2\hp}\right\} }\right)\right)^{\hp}\left(1+\frac{2\left(2\sigma^{2}T\right)^{\hp}}{a_{0}^{2\hp}}\right)^{\frac{1}{3}}+C_{9},
\]
where
\[
C_{9}=O\left(C_{1}^{1/2}+\left(C_{2}^{1/2}+C_{3}^{1/2}\right)\log^{1/2}\frac{C_{2}+C_{3}}{a_{0}^{2\hp}b_{0}^{\hp}}+a_{0}^{2\hp}+a_{0}^{3\hp}+a_{0}^{\hp}b_{0}^{2\hp}\right).
\]

Finally, we have
\begin{align*}
\E\left[\bE_{T}^{\hp}\right] & \leq\begin{cases}
\left(\frac{2C_{1}}{C_{3}}\right)^{\frac{\hp}{1-2p}}+\left(\left(\frac{2C_{2}}{C_{3}}\right)^{\frac{\hp}{1-2p}}+\left(2C_{3}\right)^{\frac{\hp}{2p}}\right)\left(1+\frac{2\left(2\sigma^{2}T\right)^{\hp}}{a_{0}^{2\hp}}\right)^{\frac{1}{3}}\\
\qquad\qquad\qquad\qquad\qquad\qquad\qquad+C_{9}\mathds{1}\left[\left(2\sigma^{2}T\right)^{\hp}\leq4C_{9}\right] & q\neq\frac{1}{4}\\
\left(C_{1}+\left(\frac{C_{2}}{\hp}+\frac{C_{3}}{\hp}\right)\log\left(1+\frac{\left(2\sigma^{2}T\right)^{\hp}}{\min\left\{ a_{0}^{2\hp}/2,4b_{0}^{2\hp}\right\} }\right)\right)^{\hp}\left(1+\frac{2\left(2\sigma^{2}T\right)^{\hp}}{a_{0}^{2\hp}}\right)^{\frac{1}{3}}\\
\qquad\qquad\qquad\qquad\qquad\qquad\qquad+C_{9}\mathds{1}\left[\left(2\sigma^{2}T\right)^{\hp}\leq4C_{9}\right] & q=\frac{1}{4}
\end{cases}.
\end{align*}

\textbf{Case 2:} $\E\left[\bE_{T}^{\hp}\right]\leq\E\left[\bD_{T}^{\hp}\right]$.
In this case, we will finally prove
\begin{align*}
\E\left[\bD_{T}^{\hp}\right] & \leq\left(C_{4}+\left(3C_{5}+C_{6}\right)\log\frac{a_{0}^{2/3}+2\left(2\sigma^{2}T\right)^{1/3}}{a_{0}^{2/3}}+C_{6}\log\frac{2C_{7}+2C_{8}}{b_{0}}\right)^{\frac{\hp}{1-p}}\\
 & \quad\times\left(1+\frac{2\left(2\sigma^{2}T\right)^{\hp}}{a_{0}^{2\hp}}\right)^{1/3}+C_{10}.
\end{align*}
where $C_{10}$ is a constant. Note that by Holder inequality
\begin{align*}
\E\left[\bD_{T}^{\hp}\right] & =\E\left[a_{T+1}^{\frac{q\hp}{1-p}}\bD_{T}^{\hp}\times a_{T+1}^{-\frac{q\hp}{1-p}}\right]\\
 & \leq\E^{\frac{\hp}{1-p}}\left[a_{T+1}^{q}\bD_{T}^{1-p}\right]\E^{\frac{1-p-\hp}{1-p}}\left[a_{T+1}^{-\frac{q\hp}{1-p-\hp}}\right]\\
 & =\E^{\frac{\hp}{1-p}}\left[a_{T+1}^{q}\bD_{T}^{1-p}\right]\E^{\frac{1-p-\hp}{1-p}}\left[\left(1+\hH_{T}/a_{0}^{2}\right)^{\frac{2q\hp}{3(1-p-\hp)}}\right]\\
 & \overset{(e)}{\leq}\E^{\frac{\hp}{1-p}}\left[a_{T+1}^{q}\bD_{T}^{1-p}\right]\E^{\frac{1}{3}}\left[\left(1+\hH_{T}/a_{0}^{2}\right)^{\hp}\right]\\
 & \leq\E^{\frac{\hp}{1-p}}\left[a_{T+1}^{q}\bD_{T}^{1-p}\right]\E^{\frac{1}{3}}\left[1+\left(\hH_{T}/a_{0}^{2}\right)^{\hp}\right]
\end{align*}
where $(e)$ is by $\frac{2q}{3(1-p-\hp)}=\frac{1-p}{3(1-p-\hp)}=1$.
We know
\begin{align*}
 & \E\left[a_{T+1}^{q}\bD_{T}^{1-p}\right]\\
\leq & C_{4}+C_{5}\E\left[\log\frac{a_{0}^{2}+\hH_{T}}{a_{0}^{2}}\right]+C_{6}\E\left[\log\frac{C_{7}+C_{8}\left(1+\hH_{T}/a_{0}^{2}\right)^{1/3}}{b_{0}}\right]\\
 & =C_{4}+\frac{C_{5}}{\hp}\E\left[\log\left(\frac{a_{0}^{2}+\hH_{T}}{a_{0}^{2}}\right)^{\hp}\right]+\frac{C_{6}}{3\hp}\E\left[\log\left(\frac{C_{7}+C_{8}\left(1+\hH_{T}/a_{0}^{2}\right)^{1/3}}{b_{0}}\right)^{3\hp}\right]\\
\overset{(f)}{\leq} & C_{4}+\frac{C_{5}}{\hp}\E\left[\log\frac{a_{0}^{2\hp}+\hH_{T}^{\hp}}{a_{0}^{2\hp}}\right]+\frac{C_{6}}{3\hp}\E\left[\log\frac{\left(2C_{7}\right)^{3\hp}+\left(2C_{8}\right)^{3\hp}\left(1+\left(\hH_{T}/a_{0}^{2}\right)^{\hp}\right)}{b_{0}^{3\hp}}\right]\\
\overset{(g)}{\leq} & C_{4}+\frac{C_{5}}{\hp}\log\frac{a_{0}^{2\hp}+\E\left[\hH_{T}^{\hp}\right]}{a_{0}^{2\hp}}+\frac{C_{6}}{3\hp}\log\frac{\left(2C_{7}\right)^{3\hp}+\left(2C_{8}\right)^{3\hp}\left(1+\E\left[\hH_{T}^{\hp}\right]/a_{0}^{2\hp}\right)}{b_{0}^{3\hp}}\\
\overset{(h)}{\leq} & C_{4}+\frac{C_{5}}{\hp}\log\frac{a_{0}^{2\hp}+4\E\left[\bD_{T}^{\hp}\right]+\left(2\sigma^{2}T\right)^{\hp}}{a_{0}^{2\hp}}\\
 & \quad+\frac{C_{6}}{3\hp}\log\frac{\left(2C_{7}\right)^{3\hp}+\left(2C_{8}\right)^{3\hp}\left(1+\frac{4\E\left[\bD_{T}^{\hp}\right]+\left(2\sigma^{2}T\right)^{\hp}}{a_{0}^{2\hp}}\right)}{b_{0}^{3\hp}}
\end{align*}
where $(f)$ is by $(x+y)^{p}\leq x^{p}+y^{p}$,$\left(x+y\right)^{q}\leq(2x)^{q}+(2y)^{q}$
for $0\leq x,y,0\leq p\leq1,q\geq0$, $(g)$ holds by the concavity
of $\log$ function, $(h)$ is due to (\ref{eq:SG-hH-bound}) and
$\E\left[\bE_{T}^{\hp}\right]\leq\E\left[\bD_{T}^{\hp}\right]$. Then
we know
\begin{align*}
\E\left[\bD_{T}^{\hp}\right] & \leq\E^{\frac{\hp}{1-p}}\left[a_{T+1}^{q}\bD_{T}^{1-p}\right]\E^{\frac{1}{3}}\left[1+\left(\hH_{T}/a_{0}^{2}\right)^{\hp}\right]\\
 & \le\left(C_{4}+\frac{C_{5}}{\hp}\log\frac{a_{0}^{2\hp}+4\E\left[\bD_{T}^{\hp}\right]+\left(2\sigma^{2}T\right)^{\hp}}{a_{0}^{2\hp}}\right.\\
 & \quad\left.+\frac{C_{6}}{3\hp}\log\frac{\left(2C_{7}\right)^{3\hp}+\left(2C_{8}\right)^{3\hp}\left(1+\frac{4\E\left[\bD_{T}^{\hp}\right]+\left(2\sigma^{2}T\right)^{\hp}}{a_{0}^{2\hp}}\right)}{b_{0}^{3\hp}}\right)^{\frac{\hp}{1-p}}\\
 & \quad\times\left(1+\frac{4\E\left[\bD_{T}^{\hp}\right]+\left(2\sigma^{2}T\right)^{\hp}}{a_{0}^{2\hp}}\right)^{1/3}.
\end{align*}

If $4\E\left[\bD_{T}^{\hp}\right]\leq\left(2\sigma^{2}T\right)^{\hp}$,
we will get
\begin{align*}
\E\left[\bD_{T}^{\hp}\right] & \leq\left(C_{4}+\frac{C_{5}}{\hp}\log\frac{a_{0}^{2\hp}+2\left(2\sigma^{2}T\right)^{\hp}}{a_{0}^{2\hp}}+\frac{C_{6}}{3\hp}\log\frac{\left(2C_{7}\right)^{3\hp}+\left(2C_{8}\right)^{3\hp}\left(1+\frac{2\left(2\sigma^{2}T\right)^{\hp}}{a_{0}^{2\hp}}\right)}{b_{0}^{3\hp}}\right)^{\frac{\hp}{1-p}}\\
 & \quad\times\left(1+\frac{2\left(2\sigma^{2}T\right)^{\hp}}{a_{0}^{2\hp}}\right)^{1/3}\\
 & \leq\left(C_{4}+\left(\frac{C_{5}}{\hp}+\frac{C_{6}}{3\hp}\right)\log\frac{a_{0}^{2\hp}+2\left(2\sigma^{2}T\right)^{\hp}}{a_{0}^{2\hp}}+\frac{C_{6}}{3\hp}\log\frac{\left(2C_{7}\right)^{3\hp}+\left(2C_{8}\right)^{3\hp}}{b_{0}^{3\hp}}\right)^{\frac{\hp}{1-p}}\\
 & \quad\times\left(1+\frac{2\left(2\sigma^{2}T\right)^{\hp}}{a_{0}^{2\hp}}\right)^{1/3}\\
 & \leq\left(C_{4}+\left(3C_{5}+C_{6}\right)\log\frac{a_{0}^{2/3}+2\left(2\sigma^{2}T\right)^{1/3}}{a_{0}^{2/3}}+C_{6}\log\frac{2C_{7}+2C_{8}}{b_{0}}\right)^{\frac{\hp}{1-p}}\\
 & \quad\times\left(1+\frac{2\left(2\sigma^{2}T\right)^{\hp}}{a_{0}^{2\hp}}\right)^{1/3}.
\end{align*}

If $4\E\left[\bD_{T}^{\hp}\right]\geq\left(2\sigma^{2}T\right)^{\hp}$,
we have
\begin{align}
\E\left[\bD_{T}^{\hp}\right] & \leq\left(C_{4}+\frac{C_{5}}{\hp}\log\frac{a_{0}^{2\hp}+8\E\left[\bD_{T}^{\hp}\right]}{a_{0}^{2\hp}}+\frac{C_{6}}{3\hp}\log\frac{\left(2C_{7}\right)^{3\hp}+\left(2C_{8}\right)^{3\hp}\left(1+\frac{8\E\left[\bD_{T}^{\hp}\right]}{a_{0}^{2\hp}}\right)}{b_{0}^{3\hp}}\right)^{\frac{\hp}{1-p}}\nonumber \\
 & \quad\times\left(1+\frac{8\E\left[\bD_{T}^{\hp}\right]}{a_{0}^{2\hp}}\right)^{1/3}.\label{eq:C_10-g-n}
\end{align}
which implies there is a constant $C_{10}$ such that $\E\left[\bD_{T}^{\hp}\right]\leq C_{10}$.
Here we give the order of $C_{10}$ directly without proof
\[
C_{10}=O\left(a_{0}^{2\hp}+a_{0}^{3\hp}+C_{4}+C_{6}\log\frac{C_{7}+C_{8}}{b_{0}}+(C_{5}+C_{6})\log\frac{C_{5}+C_{6}}{a_{0}^{3\hp}}\right)
\]

Combining these two results, we know
\begin{align*}
\E\left[\bD_{T}^{\hp}\right] & \leq\left(C_{4}+\left(3C_{5}+C_{6}\right)\log\frac{a_{0}^{2/3}+2\left(2\sigma^{2}T\right)^{1/3}}{a_{0}^{2/3}}+C_{6}\log\frac{2C_{7}+2C_{8}}{b_{0}}\right)^{\frac{\hp}{1-p}}\\
 & \quad\times\left(1+\frac{2\left(2\sigma^{2}T\right)^{\hp}}{a_{0}^{2\hp}}\right)^{1/3}+C_{10}\mathds{1}\left[\left(2\sigma^{2}T\right)^{\hp}\leq4C_{10}\right].
\end{align*}

Finally, combining \textbf{Case 1} and \textbf{Case 2} and using \ref{eq:SG-H-bound-via-E-D},
we get the desired result and the finish the proof 
\begin{align*}
 & \E\left[\bH_{T}^{\hp}\right]\\
\leq & 4\max\left\{ \E\left[\bE_{T}^{\hp}\right],\E\left[\bD_{T}^{\hp}\right]\right\} \\
\leq & 4C_{9}\mathds{1}\left[\left(2\sigma^{2}T\right)^{\hp}\leq4C_{9}\right]+4C_{10}\mathds{1}\left[\left(2\sigma^{2}T\right)^{\hp}\leq4C_{10}\right]\\
 & +4\begin{cases}
\left(\frac{2C_{1}}{C_{3}}\right)^{\frac{\hp}{1-2p}}+\left(\left(\frac{2C_{2}}{C_{3}}\right)^{\frac{\hp}{1-2p}}+\left(2C_{3}\right)^{\frac{\hp}{2p}}\right)\left(1+\frac{2\left(2\sigma^{2}T\right)^{\hp}}{a_{0}^{2\hp}}\right)^{\frac{1}{3}} & q\neq\frac{1}{4}\\
\left(C_{1}+\left(\frac{C_{2}}{\hp}+\frac{C_{3}}{\hp}\right)\log\left(1+\frac{\left(2\sigma^{2}T\right)^{\hp}}{\min\left\{ a_{0}^{2\hp}/2,4b_{0}^{2\hp}\right\} }\right)\right)^{\hp}\left(1+\frac{2\left(2\sigma^{2}T\right)^{\hp}}{a_{0}^{2\hp}}\right)^{\frac{1}{3}} & q=\frac{1}{4}
\end{cases}.\\
 & +4\left(C_{4}+\left(3C_{5}+C_{6}\right)\log\frac{a_{0}^{2/3}+2\left(2\sigma^{2}T\right)^{1/3}}{a_{0}^{2/3}}+C_{6}\log\frac{2C_{7}+2C_{8}}{b_{0}}\right)^{\frac{\hp}{1-p}}\\
 & \quad\times\left(1+\frac{2\left(2\sigma^{2}T\right)^{\hp}}{a_{0}^{2\hp}}\right)^{1/3}
\end{align*}
\end{proof}

\section{Algorithm $\protect\algnamena$ and its analysis for general $p$
\label{sec:Algorithm-NA}}

Algorithm $\algnamena$ is shown in Algorithm \ref{alg:fully-adaptive-na}.
To highlight the differences with $\algnameold$ and $\algnamenew$,
we set $a_{t}$ only based on the time round $t$, not using the stochastic
gradients. This is the reason that the convergence of this algorithm
does not depend on bounded stochastic gradients or bounded stochastic
gradients differences assumptions. Moreover, the requirement of $p\in\left(0,\frac{1}{2}\right]$
is also more relaxed compared with our previous algorithms.

\begin{algorithm}[h]
\caption{$\protect\algnamena$}
\label{alg:fully-adaptive-na}

\textbf{Input:} Initial point $x_{1}\in\R^{d}$

\textbf{Parameters: }$a_{0}>\sqrt{\frac{2}{3}},b_{0},\eta,p\in\left(0,\frac{1}{2}\right],p+2q=1$

Sample $\xi_{1}\sim\domxi,d_{1}=\nabla f(x_{1},\xi_{1})$

\textbf{for} $t=1,\cdots,T$ \textbf{do:}

$\quad$$a_{t+1}=\left(1+t/a_{0}^{2}\right)^{-\frac{2}{3}}$

$\quad$$b_{t}=(b_{0}^{1/p}+\sum_{i=1}^{t}\|d_{i}\|^{2})^{p}/a_{t+1}^{q}$

$\quad$$x_{t+1}=x_{t}-\frac{\eta}{b_{t}}d_{t}$

$\quad$Sample $\xi_{t+1}\sim\domxi$

$\quad$$d_{t+1}=\nabla f(x_{t+1},\xi_{t+1})+(1-a_{t+1})(d_{t}-\nabla f(x_{t},\xi_{t+1}))$

\textbf{end for}

\textbf{Output $\xout=x_{t}$} where $t\sim\mathrm{Uniform}\left(\left[T\right]\right)$.
\end{algorithm}

Now we give the main convergence result, Theorem \ref{thm:NA-rate},
of $\algnamena$. As we discussed before, it can achieve the rate
$\widetilde{O}(1/T^{3})$ under the weakest assumptions 1-3, however,
with losing the adaptivity to the variance parameter $\sigma$ as
a tradeoff.
\begin{thm}
\label{thm:NA-rate}Under the assumptions 1-3, by defining $\hp=1-p\in\left[\frac{1}{2},1\right)$,
we have (omitting the dependency on $\eta,a_{0}\text{ and }b_{0}$)
\begin{align*}
\E\left[\bH_{T}^{\hp}\right] & =O\left(\left(F(x_{1})-F^{*}+\beta^{\frac{\hp}{p}}\log\left(\beta T\right)+\sigma^{2}\log T+\sigma^{2\hp}\right)T^{\frac{\hp}{3}}\right).
\end{align*}
\end{thm}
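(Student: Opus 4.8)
The plan is to mirror the two-pronged strategy used for $\algnamenew$ and $\algnameold$: by Lemma~\ref{lem:decomposition} with $\hp=1-p\in[\tfrac12,1)$ it suffices to bound $\E[\bE_T^{\hp}]$ and $\E[\bD_T^{\hp}]$, and these are controlled in terms of one another. The decisive simplification is that in $\algnamena$ the coefficient $a_{t+1}=(1+t/a_0^2)^{-2/3}$ is \emph{deterministic}. Hence $a_{t+1}\in\F_t$ trivially, so the martingale-difference term $M_{t+1}$ in Lemma~\ref{lem:vr-inequality} already satisfies $\E[M_{t+1}\mid\F_t]=0$ (no $N_{t+1}$/optional-stopping detour), and the Lemma~\ref{lem:smooth-Z}/variance bounds survive division by powers of $a_{t+1}$. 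There is also no random time $\tau$: the hypothesis $a_0>\sqrt{2/3}$ forces the gap bound $a_{t+1}^{-1}-a_t^{-1}=(1+t/a_0^2)^{2/3}-(1+(t-1)/a_0^2)^{2/3}\le\tfrac{2}{3a_0^2}<1$ to hold for \emph{every} $t$ (and $a_{t+1}^{-3/2}-a_t^{-3/2}=1/a_0^2$ exactly), which plays the role of the earlier stopping-time lemmas.

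\textbf{Bounding $\bE_T$.} Starting from Lemma~\ref{lem:vr-inequality}, divide by $a_{t+1}^{2\ell}$ for $\ell\in\{\tfrac14,q\}$, take expectations (killing $M_{t+1}$), and sum. The telescoping-with-drift term $\sum_t(a_{t+1}^{-2\ell}-a_t^{-2\ell})\|\epsilon_t\|^2$ is controlled using the uniform gap bound and the elementary comparisons $a_t^{s}\le Ca_{t+1}^{1-2\ell}$ (since $a_t\le1$ and $a_t/a_{t+1}$ is bounded), so it is absorbed as a fraction $<1$ of $\bE_{T,1-2\ell}$; this is the step that needs $a_0>\sqrt{2/3}$. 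The $Z$-term, via Lemma~\ref{lem:smooth-Z} and the identity $a_{t+1}^{2q}b_t^2=(b_0^{1/p}+\bD_t)^{2p}$, reduces to $\eta^2\beta^2\sum_t\|d_t\|^2/(b_0^{1/p}+\bD_t)^{2p}$ (times a factor $\le1$), bounded by Lemma~\ref{lem:inequality-1} by $\bD_T^{1-2p}/(1-2p)$ if $p<\tfrac12$ and by $\log(1+\bD_T/b_0^2)$ if $p=\tfrac12$. The pure-variance term is $\sigma^2\sum_t a_{t+1}^{2-2\ell}$: for $\ell=q$ it equals $\sigma^2\sum_t(1+t/a_0^2)^{-\frac{2(1+p)}{3}}=O(\sigma^2 T^{(1-2p)/3})$ (resp.\ $O(\sigma^2\log T)$ for $p=\tfrac12$), and for $\ell=\tfrac14$ it is $\sigma^2\sum_t(1+t/a_0^2)^{-1}=O(\sigma^2\log T)$ — the origin of the $\sigma^{2\hp}$ and $\sigma^2\log T$ terms. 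Collecting, $\E[\bE_{T,1-2q}]\le O(\sigma^2+\sigma^2 T^{(1-2p)/3})+O(\beta^2)\E[\bD_T^{1-2p}]$, and similarly $\E[\bE_{T,1/2}]\le O(\sigma^2+\sigma^2\log T)+O(\beta^2)\sum_t\|d_t\|^2/(a_{t+1}^{1/2}b_t^2)$.

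\textbf{Bounding $\bD_T$.} Apply Lemma~\ref{lem:f-value-analysis}, plug in the $\E[\bE_{T,1/2}]$ estimate, and \emph{fold} the residual $\sum_t\|d_t\|^2/(a_{t+1}^{1/2}b_t^2)$ it produces into the residual sum $\sum_t(\eta\beta_{\max}+\tfrac{\eta\beta_{\max}}{a_{t+1}^{1/2}\lambda}-b_t)\|d_t\|^2/b_t^2$ already present (as in the proofs of Theorems~\ref{thm:MS-rate}/\ref{thm:SG-rate}, a spurious $a_{t+1}^{1/2}$ weight in the $\beta^2$ factor collapses to a constant). The result has the shape $\sum_t(A+B/a_{t+1}^{1/2}-b_t)\|d_t\|^2/b_t^2$ with $A,B=O(\beta)$ (after choosing $\lambda$ as in the $p=\tfrac12$ statement to optimize constants), so Lemma~\ref{lem:residual-bound} bounds it by $\tfrac{(A+B)^{1/p-1}}{1-p}\log\frac{A+a_{T+1}^{-1/2}B}{b_0}$, and $a_{T+1}^{-1/2}=(1+T/a_0^2)^{1/3}=O(T^{1/3})$ \emph{deterministically}. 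Crucially this gives a bound with \emph{no} $\bD_T$ self-term,
\[
\E\!\left[a_{T+1}^{q}\bD_T^{1-p}\right]=O\!\left(F(x_1)-F^*+\sigma^2\log T+\beta^{1/p-1}\log(\beta T)\right),
\]
and since $a_{T+1}^{-q}=(1+T/a_0^2)^{(1-p)/3}=\Theta(T^{\hp/3})$ is deterministic, $\E[\bD_T^{\hp}]=\E[\bD_T^{1-p}]=O\!\big(T^{\hp/3}(F(x_1)-F^*+\sigma^2\log T+\beta^{\hp/p}\log(\beta T))\big)$, using $1/p-1=\hp/p$.

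\textbf{Combining.} For $\bE_T$, substitute the $\bD_T$-bound into the $\E[\bD_T^{1-2p}]$ term via Jensen $\E[\bD_T^{1-2p}]\le\E[\bD_T^{1-p}]^{(1-2p)/(1-p)}$ (valid since $1-2p\le1-p$), then use $a_{T+1}^{1-2q}\bE_T\le\bE_{T,1-2q}$ with $a_{T+1}^{-p}=\Theta(T^{2p/3})$ and $\E[\bE_T^{\hp}]\le\E[\bE_T]^{\hp}$; the $T$-exponents combine as $\tfrac{2p}{3}+\tfrac{1-2p}{3}=\tfrac13$, and after applying Young's inequality to the mixed terms $\beta^{2\hp}(F(x_1)-F^*)^{1-2p}$ and $\beta^{2\hp}(\sigma^2\log T)^{1-2p}$ (exponents $\tfrac1{1-2p},\tfrac1{2p}$, so the $\beta$-power becomes $\beta^{2\hp\cdot\frac1{2p}}=\beta^{\hp/p}$) one gets $\E[\bE_T^{\hp}]=O\!\big(T^{\hp/3}(F(x_1)-F^*+\beta^{\hp/p}\log(\beta T)+\sigma^2\log T+\sigma^{2\hp})\big)$. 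Feeding both estimates into Lemma~\ref{lem:decomposition} gives the theorem; the cases $p<\tfrac12$ and $p=\tfrac12$ are treated exactly as in Theorems~\ref{thm:MS-rate}/\ref{thm:SG-rate}, with $\bD_T^{1-2p}$- and $\tH_T^{(4q-1)/3}$-style terms replaced by logarithms when $p=\tfrac12$. The main obstacle is the first step: making the telescoping-drift term genuinely absorbable needs the uniform gap estimate (hence $a_0>\sqrt{2/3}$) together with the right power comparisons between $a_t$ and $a_{t+1}$; thereafter the exponent bookkeeping is routine and, unlike in Theorems~\ref{thm:MS-rate}/\ref{thm:SG-rate}, requires no Hölder juggling of moments of $a_{T+1}^{-c}$ since $a_{T+1}$ is deterministic.
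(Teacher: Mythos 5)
Your proposal matches the paper's proof in all essentials: the determinism of $a_{t+1}$ kills the $M_{t+1}$ correction and the stopping time, $\E[\bE_{T,1/2}]$ and a weighted version of $\bE_{T}$ are bounded via the uniform gap estimate $a_{t+1}^{-1}-a_{t}^{-1}\le 2/(3a_0^2)<1$, $\E[\bD_T^{1-p}]$ follows from Lemmas \ref{lem:f-value-analysis} and \ref{lem:residual-bound} with the deterministic factor $(1+T/a_0^2)^{(1-p)/3}$, and the pieces are assembled through Lemma \ref{lem:decomposition} with the same exponent bookkeeping producing $\sigma^{2\hp}$, $\sigma^2\log T$, and $\beta^{\hp/p}$. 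The only immaterial deviations are that the paper bounds $\E[\bE_T]$ by dividing the recursion by $a_{t+1}$ itself and pulling the factor $a_{T+1}^{-p}=(1+T/a_0^2)^{2p/3}$ out of the $Z$-sum rather than working with $\bE_{T,1-2q}$ and converting at the end, and it closes the argument with a max/self-bounding case analysis rather than your direct substitution of the closed-form $\bD_T$ bound (both are valid precisely because that bound contains no $\bD_T$ self-term).
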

By combining the above theorem with the concavity of $x^{\hp}$,
we give the following convergence guarantee omitting the proof:
\begin{thm}
\label{thm:NA-rate-2}There is
\[
\E\left[\|\nabla F(\xout)\|^{2\hp}\right]=O\left(\frac{F(x_{1})-F^{*}+\beta^{\frac{\hp}{p}}\log\left(\beta T\right)+\sigma^{2}\log T+\sigma^{2\hp}}{T^{\frac{2\hp}{3}}}\right).
\]
\end{thm}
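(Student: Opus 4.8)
The plan is to reuse the three-step framework developed for $\algnamenew$ and $\algnameold$: first reduce $\E[\bH_T^{\hp}]$ to $\E[\bD_T^{\hp}]$ and $\E[\bE_T^{\hp}]$ via Lemma~\ref{lem:decomposition}; then bound $\E[\bE_T]$ in suitable $a$-weighted forms through the variance-reduction recursion of Lemma~\ref{lem:vr-inequality}; then bound $\E[a_{T+1}^{q}\bD_T^{1-p}]$ through the function-value analysis of Lemma~\ref{lem:f-value-analysis}; and finally combine, exploiting that here $a_{t+1}=(1+t/a_0^{2})^{-2/3}$ is deterministic and $a_{T+1}=\Theta(T^{-2/3})$ is a known constant. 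Two points that were delicate for the other algorithms become trivial: since $a_{t+1}$ depends only on $t$, $a_{t+1}\in\F_t$, so $\E[M_{t+1}\mid\F_t]=0$ holds directly (no $N_{t+1}$ martingale correction is needed), and no stopping time $\tau$ is required at all. The price of non-adaptivity to $\sigma$ will surface in the $\bE_T$ step, where the stochastic residual term no longer telescopes to $O(1)$ but accumulates a $\log T$ (or $T^{(1-2p)/3}$) factor.

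For the $\bE_T$ step I would divide Lemma~\ref{lem:vr-inequality} by $a_{t+1}^{2\ell}$ for $\ell\in[\tfrac14,\tfrac12]$ (the instances $\ell=\tfrac14$ and $\ell=q$ are the ones ultimately needed, producing $\bE_{T,1/2}$ and $\bE_{T,p}=\bE_{T,1-2q}$), sum, and take expectations. The Abel term $\sum_t(a_{t+1}^{-2\ell}-a_t^{-2\ell})\|\epsilon_t\|^2$ is controlled using concavity of $y\mapsto y^{2/3}$, which yields $a_{t+1}^{-1}-a_t^{-1}\le\tfrac{2}{3a_0^{2}}a_t^{1/2}\le\tfrac{2}{3a_0^{2}}$ and hence $a_{t+1}^{-2\ell}-a_t^{-2\ell}\le\tfrac{2}{3a_0^{2}}a_{t+1}^{1-2\ell}$, so it is absorbed into $\tfrac{2}{3a_0^{2}}\bE_{T,1-2\ell}$ --- this is exactly where the hypothesis $a_0>\sqrt{2/3}$ is used, to make the absorbing coefficient strictly below $1$. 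The $Z_{t+1}$ term is handled by Lemma~\ref{lem:smooth-Z}, giving $\eta^{2}\beta^{2}\,\E[\sum_t\|d_t\|^{2}/(a_{t+1}^{2\ell}b_t^{2})]$, and the stochastic residual contributes $2\sigma^{2}\sum_t a_{t+1}^{2-2\ell}=2\sigma^{2}\sum_t(1+t/a_0^{2})^{-4(1-\ell)/3}$, which is $O(\sigma^{2}\log T)$ when $\ell=\tfrac14$ and $O(\sigma^{2}T^{(1-2p)/3})$ when $\ell=q$ and $p<\tfrac12$. Using $a_{t+1}^{2q}b_t^{2}=(b_0^{1/p}+\bD_t)^{2p}$ and Lemma~\ref{lem:inequality-1}, the $\ell=q$ instance gives $\E[a_{T+1}^{p}\bE_T]\le\E[\bE_{T,p}]\le O(\sigma^{2}+\sigma^{2}T^{(1-2p)/3})+O(\beta^{2})\,\E[\bD_T^{1-2p}]$ (a logarithm replaces $\bD_T^{1-2p}$ when $p=\tfrac12$), while the $\ell=\tfrac14$ instance gives $\E[\bE_{T,1/2}]\le O(\sigma^{2}\log T)+O(\beta^{2})\,\E[\sum_t(a_{t+1}^{1/2}+1)\|d_t\|^{2}/(a_{t+1}^{1/2}b_t^{2})]$.

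For the $\bD_T$ step I would plug the $\E[\bE_{T,1/2}]$ bound into Lemma~\ref{lem:f-value-analysis} and then --- following the $\algnamenew$ argument rather than STORM+'s --- merge the resulting $\sum_t(\cdots)\|d_t\|^{2}/(a_{t+1}^{1/2}b_t^{2})$ and $\sum_t(\cdots)\|d_t\|^{2}/b_t^{2}$ contributions with the existing term $\sum_t(\eta\beta_{\max}+\eta\beta_{\max}/(a_{t+1}^{1/2}\lambda)-b_t)\|d_t\|^{2}/b_t^{2}$ into one expression of the form handled by Lemma~\ref{lem:residual-bound}, obtaining the purely logarithmic bound $\tfrac{(A+B)^{1/p-1}}{1-p}\log\tfrac{A+a_{T+1}^{-1/2}B}{b_0}$. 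Since $a_{T+1}^{-1/2}=(1+T/a_0^{2})^{1/3}=\Theta(T^{1/3})$ and $A+B=O(\beta_{\max})$ for a fixed choice of $\lambda$, this is $O(\beta^{\hp/p}\log(\beta T))$ (using $1/p-1=\hp/p$), and the leftover variance piece of $\E[\bE_{T,1/2}]$ contributes $O(\sigma^{2}\log T)$. The key point is that after the merge no $\bD_T^{1-2p}$ term survives, so the bound on $\E[a_{T+1}^{q}\bD_T^{1-p}]$ is self-contained, yielding $\E[\bD_T^{\hp}]=\E[\bD_T^{1-p}]=O\big(T^{\hp/3}(F(x_1)-F^{*}+\beta^{\hp/p}\log(\beta T)+\sigma^{2}\log T)\big)$ after multiplying by $a_{T+1}^{-q}=\Theta(T^{\hp/3})$.

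For the combination: substitute the now-known $\bD_T$ bound into $\E[\bD_T^{1-2p}]\le\E^{(1-2p)/(1-p)}[\bD_T^{1-p}]$ and back into the $\bE_T$ bound; using $\hp/(1-p)=1$ and $2p/3+(1-2p)/3=1/3$, the $O(\beta^{2})\,\E[\bD_T^{1-2p}]$ term becomes $O\big(\beta^{2}T^{1/3}D^{(1-2p)/(1-p)}\big)$ with $D:=F(x_1)-F^{*}+\beta^{\hp/p}\log(\beta T)+\sigma^{2}\log T$, while the stochastic residual becomes $O(\sigma^{2}T^{1/3})$; raising to the power $\hp$ by Jensen and using the Young-type identity $\beta^{2\hp}D^{1-2p}=(\beta^{\hp/p})^{2p}D^{1-2p}\le\beta^{\hp/p}+D$ (where $p\le\tfrac12$ is used, equivalently $\beta^{2\hp}\le\beta^{\hp/p}$) collapses everything to $\E[\bE_T^{\hp}]=O\big(T^{\hp/3}(F(x_1)-F^{*}+\beta^{\hp/p}\log(\beta T)+\sigma^{2}\log T+\sigma^{2\hp})\big)$. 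Theorem~\ref{thm:NA-rate} then follows from Lemma~\ref{lem:decomposition} by taking the maximum of the two bounds, and Theorem~\ref{thm:NA-rate-2} from concavity of $x\mapsto x^{\hp}$ together with the power-mean estimate $\tfrac1T\sum_t\|\nabla F(x_t)\|^{2\hp}\le T^{-\hp}\bH_T^{\hp}$. The step I expect to be hardest is keeping the $\bD_T^{1-2p}$ self-reference out of the $\bD_T$ bound via the merge --- so that it is discharged only once, on the $\bE_T$ side --- and then checking that all the mixed terms $\beta^{a}D^{b}T^{c}$ really do collapse to the four advertised terms rather than leaving a larger power of $T$ or of $\beta$.
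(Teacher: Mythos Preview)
Your proposal is correct and follows the same three-part scaffold as the paper (Lemmas~\ref{lem:NA-E-T-1/2}, \ref{lem:NA-E-final-bound}, \ref{lem:NA-D-final-bound}): bound $\E[\bE_{T,1/2}]$, bound $\E[\bE_T]$, bound $\E[\bD_T^{1-p}]$, then combine via Lemma~\ref{lem:decomposition} and pass to $\xout$ by concavity of $x^{\hp}$. The paper in fact states Theorem~\ref{thm:NA-rate-2} as an immediate corollary of Theorem~\ref{thm:NA-rate} (``by combining the above theorem with the concavity of $x^{\hp}$''), so what you have sketched is essentially the proof of Theorem~\ref{thm:NA-rate} as well. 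You are also right that the determinism of $a_{t+1}$ eliminates both the stopping-time argument and the $N_{t+1}$ martingale correction --- the paper exploits this the same way.

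There are two minor differences worth noting. First, the paper divides Lemma~\ref{lem:vr-inequality} by $a_{t+1}$ (i.e.\ $\ell=\tfrac12$) to bound $\E[\bE_T]$ directly, rather than by $a_{t+1}^{2q}$ to get $\E[\bE_{T,p}]$; these are equivalent here since $a_{t+1}$ is a deterministic constant and $a_{T+1}^{-p}\E[\bE_{T,p}]\ge\E[\bE_T]$ recovers the same estimate. Second, in the combination step the paper runs a case split on whether $\E[\bE_T^{\hp}]\ge\E[\bD_T^{\hp}]$ and, in the affirmative case, replaces $\E[\bD_T^{1-2p}]$ by $\E^{(1-2p)/\hp}[\bE_T^{\hp}]$ to obtain and solve a self-referential inequality for $\E[\bE_T^{\hp}]$; you instead substitute the already-closed bound on $\E[\bD_T^{1-p}]$ and finish with the Young-type inequality $(\beta^{\hp/p})^{2p}D^{1-2p}\le\beta^{\hp/p}+D$. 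Your route is slightly more direct and avoids the sub-case analysis; both yield the same final rate.
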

Note that $2\hp\geq1$, hence the criterion, $\E\left[\|\nabla F(\xout)\|^{2\hp}\right]$,
used in Theorem \ref{thm:NA-rate-2} is strictly stronger than $\E\left[\|\nabla F(\xout)\|\right]$.
In the following sections, we will give a proof of Theorem \ref{thm:NA-rate}.

\subsection{Bound on $\protect\E\left[\protect\bE_{T,1/2}\right]$}
\begin{lem}
\label{lem:NA-E-T-1/2}Given $p+2q=1$, $p\in\left(0,\frac{1}{2}\right]$,
we have
\[
\E\left[\bE_{T,1/2}\right]\leq\frac{\sigma^{2}\left(1+2a_{0}^{2}\log\left(1+T/a_{0}^{2}\right)\right)+2\eta^{2}\beta^{2}\E\left[\sum_{t=1}^{T}\frac{\|d_{t}\|^{2}}{a_{t+1}^{1/2}b_{t}^{2}}\right]}{1-2/(3a_{0}^{2})}.
\]
\end{lem}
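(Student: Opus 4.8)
The plan is to run the variance-reduction argument for $\bE_{T}$ along the weighted sequence $\bE_{T,1/2}=\sum_{t=1}^{T}a_{t+1}^{1/2}\|\epsilon_{t}\|^{2}$, exploiting the fact that in $\algnamena$ the momentum parameter $a_{t+1}=(1+t/a_{0}^{2})^{-2/3}$ is \emph{deterministic}; this is what makes the analysis substantially simpler than for $\algnamenew$ and $\algnameold$. First I would take Lemma~\ref{lem:vr-inequality}, divide both sides by $a_{t+1}^{1/2}$, take expectations, and sum over $t=1,\dots,T$. Since $a_{t+1}$ is deterministic, hence $\F_{t}$-measurable, the cross term drops out in expectation: $M_{t+1}$ is a fixed scalar multiple of inner products of the $\F_{t}$-measurable vector $\epsilon_{t}$ with $Z_{t+1}$ and with $\nabla f(x_{t+1},\xi_{t+1})-\nabla F(x_{t+1})$, both of which have zero conditional mean given $\F_{t}$ (because $x_{t+1}\in\F_{t}$), so $\E[M_{t+1}/a_{t+1}^{1/2}]=0$ with no stopping-time argument needed. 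Pulling $a_{t+1}^{1/2}$ out of conditional expectations, Lemma~\ref{lem:smooth-Z} gives $\E[\|Z_{t+1}\|^{2}/a_{t+1}^{1/2}]\le\eta^{2}\beta^{2}\E[\|d_{t}\|^{2}/(a_{t+1}^{1/2}b_{t}^{2})]$, and Assumption~2 gives $\E[a_{t+1}^{3/2}\|\nabla f(x_{t+1},\xi_{t+1})-\nabla F(x_{t+1})\|^{2}]\le a_{t+1}^{3/2}\sigma^{2}$.

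Second, I would handle the resulting sum $\sum_{t=1}^{T}\big(a_{t+1}^{-1/2}\|\epsilon_{t}\|^{2}-a_{t+1}^{-1/2}\|\epsilon_{t+1}\|^{2}\big)$. Reindexing the second piece, using $a_{1}=1$, and discarding the final nonnegative term, this is at most $\|\epsilon_{1}\|^{2}+\sum_{t=1}^{T}\big(a_{t+1}^{-1/2}-a_{t}^{-1/2}\big)\|\epsilon_{t}\|^{2}$, and $\E[\|\epsilon_{1}\|^{2}]\le\sigma^{2}$ since $\epsilon_{1}=\nabla f(x_{1},\xi_{1})-\nabla F(x_{1})$. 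The key scalar estimate is
\[
a_{t+1}^{-1/2}-a_{t}^{-1/2}\le\tfrac{2}{3a_{0}^{2}}\,a_{t+1}^{1/2}\qquad(t\ge1),
\]
proved by noting $a_{t+1}^{-1/2}=(1+t/a_{0}^{2})^{1/3}$, so concavity of $s\mapsto(1+s/a_{0}^{2})^{1/3}$ gives $a_{t+1}^{-1/2}-a_{t}^{-1/2}\le\tfrac{1}{3a_{0}^{2}}(1+(t-1)/a_{0}^{2})^{-2/3}=\tfrac{a_{t}}{3a_{0}^{2}}$, after which one checks the elementary inequality $a_{t}\le2a_{t+1}^{1/2}$, i.e.\ $1+t/a_{0}^{2}\le4(1+(t-1)/a_{0}^{2})^{2}$, which holds for all $t\ge1$ because $a_{0}^{2}>2/3$. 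Consequently $\sum_{t=1}^{T}\big(a_{t+1}^{-1/2}-a_{t}^{-1/2}\big)\|\epsilon_{t}\|^{2}\le\tfrac{2}{3a_{0}^{2}}\bE_{T,1/2}$.

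Assembling the pieces gives
\[
\E\big[\bE_{T,1/2}\big]\le\sigma^{2}+\tfrac{2}{3a_{0}^{2}}\E\big[\bE_{T,1/2}\big]+2\eta^{2}\beta^{2}\E\Big[\sum_{t=1}^{T}\tfrac{\|d_{t}\|^{2}}{a_{t+1}^{1/2}b_{t}^{2}}\Big]+2\sigma^{2}\sum_{t=1}^{T}a_{t+1}^{3/2}.
\]
Finally I would bound $\sum_{t=1}^{T}a_{t+1}^{3/2}=\sum_{t=1}^{T}(1+t/a_{0}^{2})^{-1}\le\int_{0}^{T}\frac{a_{0}^{2}}{a_{0}^{2}+s}\,ds=a_{0}^{2}\log(1+T/a_{0}^{2})$ by the integral comparison, so $\sigma^{2}+2\sigma^{2}\sum_{t}a_{t+1}^{3/2}\le\sigma^{2}\big(1+2a_{0}^{2}\log(1+T/a_{0}^{2})\big)$; since $a_{0}>\sqrt{2/3}$ we have $1-2/(3a_{0}^{2})>0$, and moving $\tfrac{2}{3a_{0}^{2}}\E[\bE_{T,1/2}]$ to the left (valid because $\bE_{T,1/2}$ is a finite sum of integrable terms) and dividing yields the claim. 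The part needing the most care — though it is light compared with the $\algnamenew$/$\algnameold$ analyses — is the uniform-in-$t$ scalar inequality $a_{t+1}^{-1/2}-a_{t}^{-1/2}\le\tfrac{2}{3a_{0}^{2}}a_{t+1}^{1/2}$ under $a_{0}>\sqrt{2/3}$, which is exactly what forces the hypothesis on $a_{0}$ and the factor $1-2/(3a_{0}^{2})$ in the denominator; everything else is a routine conditioning argument that works precisely because $a_{t+1}$ is deterministic.
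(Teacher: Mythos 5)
Your proof is correct and follows essentially the same route as the paper's: divide the recursion from Lemma \ref{lem:vr-inequality} by the deterministic $a_{t+1}^{1/2}$ so that $\E[M_{t+1}/a_{t+1}^{1/2}]=0$, absorb the telescoping residual into $\frac{2}{3a_{0}^{2}}\bE_{T,1/2}$, bound $\sum_{t}a_{t+1}^{3/2}\le a_{0}^{2}\log(1+T/a_{0}^{2})$, and rearrange using $a_{0}>\sqrt{2/3}$. The only cosmetic difference is in the scalar estimate for the residual: the paper uses $a_{t+1}^{-1/2}-a_{t}^{-1/2}\le\left(a_{t+1}^{-1}-a_{t}^{-1}\right)a_{t+1}^{1/2}\le\frac{2}{3a_{0}^{2}}a_{t+1}^{1/2}$, whereas you use concavity of the cube root together with $a_{t}\le2a_{t+1}^{1/2}$; both yield the same $\frac{2}{3a_{0}^{2}}$ factor.
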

\begin{proof}
We start from Lemma \ref{lem:vr-inequality},
\begin{align*}
a_{t+1}\|\epsilon_{t}\|^{2} & \leq\|\epsilon_{t}\|^{2}-\|\epsilon_{t+1}\|^{2}+2\|Z_{t+1}\|^{2}+2a_{t+1}^{2}\|\nabla f(x_{t+1},\xi_{t+1})-\nabla F(x_{t+1})\|^{2}+M_{t+1}.
\end{align*}
Dividing both sides by $a_{t+1}^{1/2}$, summing up from $1$ to $T$
and taking the expectations on both sides, we obtain
\begin{align*}
 & \E\left[\bE_{T,1/2}\right]\\
\le & \E\left[\sum_{t=1}^{T}\frac{\|\epsilon_{t}\|^{2}-\|\epsilon_{t+1}\|^{2}+2\|Z_{t+1}\|^{2}+2a_{t+1}^{2}\|\nabla f(x_{t+1},\xi_{t+1})-\nabla F(x_{t+1})\|^{2}+M_{t+1}}{a_{t+1}^{1/2}}\right]\\
\le & \sigma^{2}+\E\bigg[\sum_{t=1}^{T}\left(a_{t+1}^{-1}-a_{t}^{-1}\right)a_{t+1}^{1/2}\|\epsilon_{t}\|^{2}+\frac{2}{a_{t+1}^{1/2}}\|Z_{t+1}\|^{2}\\
 & \qquad\qquad+2a_{t+1}^{3/2}\|\nabla f(x_{t+1},\xi_{t+1})-\nabla F(x_{t+1})\|^{2}+\frac{M_{t+1}}{a_{t+1}^{1/2}}\bigg]
\end{align*}
Because $a_{t+1}$is not random, we know
\begin{align*}
\E\left[\frac{2}{a_{t+1}^{1/2}}\|Z_{t+1}\|^{2}\right] & \leq\E\left[\frac{2\eta^{2}\beta^{2}\|d_{t}\|^{2}}{a_{t+1}^{1/2}b_{t}^{2}}\right],\\
\E\left[2a_{t+1}^{3/2}\|\nabla f(x_{t+1},\xi_{t+1})-\nabla F(x_{t+1})\|^{2}\right] & \leq2a_{t+1}^{3/2}\sigma^{2},\\
\E\left[\frac{M_{t+1}}{a_{t+1}^{1/2}}\right] & =0,
\end{align*}
where the first inequality is by Lemma \ref{lem:smooth-Z}. Besides,
by the concavity of $x^{2/3}$ and $a_{0}>\sqrt{\frac{2}{3}}$, we
know
\begin{align*}
a_{t+1}^{-1}-a_{t}^{-1} & =\left(1+t/a_{0}^{2}\right)^{2/3}-\left(1+\left(t-1\right)/a_{0}^{2}\right)^{2/3}\\
 & \leq\frac{2}{3a_{0}^{2}\left(1+\left(t-1\right)/a_{0}^{2}\right)^{1/3}}\leq\frac{2}{3a_{0}^{2}}<1.
\end{align*}
Then we have
\begin{align*}
\E\left[\bE_{T,1/2}\right] & \leq\sigma^{2}+\E\left[\frac{2}{3a_{0}^{2}}\bE_{T,1/2}+\sum_{t=1}^{T}\frac{2\eta^{2}\beta^{2}\|d_{t}\|^{2}}{a_{t+1}^{1/2}b_{t}^{2}}+2a_{t+1}^{3/2}\sigma^{2}\right]\\
\Rightarrow\E\left[\bE_{T,1/2}\right] & \leq\frac{\sigma^{2}\left(1+2\sum_{t=1}^{T}a_{t+1}^{3/2}\right)+2\eta^{2}\beta^{2}\E\left[\sum_{t=1}^{T}\frac{\|d_{t}\|^{2}}{a_{t+1}^{1/2}b_{t}^{2}}\right]}{1-2/(3a_{0}^{2})}.
\end{align*}
Note that
\[
\sum_{t=1}^{T}a_{t+1}^{3/2}=\sum_{t=1}^{T}\frac{1}{1+t/a_{0}^{2}}\leq a_{0}^{2}\log\left(1+T/a_{0}^{2}\right).
\]
So we know
\[
\E\left[\bE_{T,1/2}\right]\leq\frac{\sigma^{2}\left(1+2a_{0}^{2}\log\left(1+T/a_{0}^{2}\right)\right)+2\eta^{2}\beta^{2}\E\left[\sum_{t=1}^{T}\frac{\|d_{t}\|^{2}}{a_{t+1}^{1/2}b_{t}^{2}}\right]}{1-2/(3a_{0}^{2})}.
\]
\end{proof}

\subsection{Bound on $\protect\E\left[\protect\bE_{T}\right]$}
\begin{lem}
\label{lem:NA-E-final-bound}Given $p+2q=1$, $p\in\left(0,\frac{1}{2}\right]$,
we have
\[
\E\left[\bE_{T}\right]\leq\frac{6a_{0}^{2}\sigma^{2}\left(1+T/a_{0}^{2}\right)^{1/3}}{1-2/(3a_{0}^{2})}+\frac{2\eta^{2}\beta^{2}(1+T/a_{0}^{2})^{\frac{2p}{3}}}{1-2/(3a_{0}^{2})}\begin{cases}
\frac{\E\left[\bD_{T}^{1-2p}\right]}{1-2p} & p\neq\frac{1}{2}\\
\E\left[\log\left(1+\frac{\bD_{T}}{b_{0}^{2}}\right)\right] & p=\frac{1}{2}
\end{cases}.
\]
\end{lem}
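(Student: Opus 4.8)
The plan is to mimic the proof of Lemma~\ref{lem:NA-E-T-1/2}, but to divide the per-step variance-reduction bound by $a_{t+1}$ rather than $a_{t+1}^{1/2}$ so that the left-hand side sums to $\bE_T$ exactly. Concretely, I would start from Lemma~\ref{lem:vr-inequality}, divide both sides by $a_{t+1}$, sum over $t=1,\dots,T$, and take expectations. Since $a_{t+1}=(1+t/a_0^2)^{-2/3}$ is deterministic in $\algnamena$, the martingale argument gives $\E\left[M_{t+1}/a_{t+1}\right]=0$; Lemma~\ref{lem:smooth-Z} gives $\E\left[\|Z_{t+1}\|^2/a_{t+1}\mid\F_t\right]\le\eta^2\beta^2\|d_t\|^2/(a_{t+1}b_t^2)$; and Assumption~2 gives $\E\left[\|\nabla f(x_{t+1},\xi_{t+1})-\nabla F(x_{t+1})\|^2\right]\le\sigma^2$.

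Next I would handle the telescoping term. Abel summation together with $\epsilon_1=0$ yields
\[
\sum_{t=1}^{T}\frac{\|\epsilon_t\|^2-\|\epsilon_{t+1}\|^2}{a_{t+1}}\le\sum_{t=1}^{T}\left(a_{t+1}^{-1}-a_t^{-1}\right)\|\epsilon_t\|^2\le\frac{2}{3a_0^2}\bE_T,
\]
where the last inequality uses $a_{t+1}^{-1}-a_t^{-1}=(1+t/a_0^2)^{2/3}-(1+(t-1)/a_0^2)^{2/3}\le\tfrac{2}{3a_0^2}$, a consequence of the concavity of $x^{2/3}$ and $a_0>\sqrt{2/3}$ that is already established inside the proof of Lemma~\ref{lem:NA-E-T-1/2}. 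This $\tfrac{2}{3a_0^2}\bE_T$ contribution is absorbed into the left-hand side at the end.

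For the gradient-estimator term I would use $a_{t+1}^{2q}b_t^2=(b_0^{1/p}+\bD_t)^{2p}$ and the identity $2q-1=-p$ to rewrite $\frac{\|d_t\|^2}{a_{t+1}b_t^2}=a_{t+1}^{-p}\frac{\|d_t\|^2}{(b_0^{1/p}+\bD_t)^{2p}}\le a_{T+1}^{-p}\frac{\|d_t\|^2}{(b_0^{1/p}+\bD_t)^{2p}}$ (since $a_{t+1}$ is nonincreasing), with $a_{T+1}^{-p}=(1+T/a_0^2)^{2p/3}$; then Lemma~\ref{lem:inequality-1} bounds $\sum_{t=1}^T\|d_t\|^2/(b_0^{1/p}+\bD_t)^{2p}$ by $\bD_T^{1-2p}/(1-2p)$ when $p<\tfrac12$ and by $\log(1+\bD_T/b_0^2)$ when $p=\tfrac12$, producing the two cases in the statement. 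For the remaining term $\sum_{t=1}^T 2a_{t+1}\sigma^2$, since $s\mapsto(1+s/a_0^2)^{-2/3}$ is decreasing, $\sum_{t=1}^T a_{t+1}\le\int_0^T(1+s/a_0^2)^{-2/3}\,ds=3a_0^2\big((1+T/a_0^2)^{1/3}-1\big)\le 3a_0^2(1+T/a_0^2)^{1/3}$, one of the elementary summation bounds of Section~\ref{sec:Basic-inequalities}, which gives the $6a_0^2\sigma^2(1+T/a_0^2)^{1/3}$ term. Collecting these bounds, moving $\tfrac{2}{3a_0^2}\bE_T$ to the left, and dividing by $1-\tfrac{2}{3a_0^2}$ yields the claimed inequality.

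Since every ingredient is already available, I do not anticipate a genuine obstacle; the only care required is the exponent bookkeeping ($2q-1=-p$ and $a_{T+1}^{-p}=(1+T/a_0^2)^{2p/3}$), keeping the $p=\tfrac12$ corner case separate from $p<\tfrac12$, and invoking the correct elementary bound for $\sum_{t}a_{t+1}$ (which diverges like $T^{1/3}$, unlike the $\sum_t a_{t+1}^{3/2}$ used in Lemma~\ref{lem:NA-E-T-1/2}, which is only logarithmic).
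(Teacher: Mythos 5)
Your proposal follows essentially the same route as the paper's proof: divide the recursion of Lemma~\ref{lem:vr-inequality} by $a_{t+1}$, use that $a_{t+1}$ is deterministic in $\algnamena$ so the $M_{t+1}$ term vanishes in expectation, bound $a_{t+1}^{-1}-a_t^{-1}\le\frac{2}{3a_0^2}$ and absorb the resulting $\frac{2}{3a_0^2}\bE_T$ into the left-hand side, rewrite $a_{t+1}b_t^2=a_{t+1}^{p}(b_0^{1/p}+\bD_t)^{2p}$ via $1-2q=p$, and finish with Lemma~\ref{lem:inequality-1} and the bound $\sum_{t=1}^T a_{t+1}\le 3a_0^2\big((1+T/a_0^2)^{1/3}-1\big)$. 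The one slip is your claim that $\epsilon_1=0$: that assumption was made only in the non-adaptive warm-up of Section~\ref{subsec:Non-adaptive-algorithm}, whereas in the actual algorithm $d_1=\nabla f(x_1,\xi_1)$, so $\epsilon_1=\nabla f(x_1,\xi_1)-\nabla F(x_1)$ with $\E\|\epsilon_1\|^2\le\sigma^2$; the Abel summation therefore leaves an extra additive $\sigma^2$, which the paper absorbs into the $6a_0^2\sigma^2(1+T/a_0^2)^{1/3}$ term using the slack $-6a_0^2\sigma^2$ from the exact evaluation of $\sum_t a_{t+1}$ (valid since $a_0>\sqrt{2/3}$). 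With that one-line correction your argument yields exactly the stated bound.
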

\begin{proof}
We start from Lemma \ref{lem:vr-inequality},
\begin{align*}
a_{t+1}\|\epsilon_{t}\|^{2} & \leq\|\epsilon_{t}\|^{2}-\|\epsilon_{t+1}\|^{2}+2\|Z_{t+1}\|^{2}+2a_{t+1}^{2}\|\nabla f(x_{t+1},\xi_{t+1})-\nabla F(x_{t+1})\|^{2}+M_{t+1}.
\end{align*}
Dividing both sides by $a_{t+1}$, summing up from $1$ to $T$ and
taking the expectations on both sides, we obtain
\begin{align*}
 & \E\left[\bE_{T}\right]\\
\le & \E\left[\sum_{t=1}^{T}\frac{\|\epsilon_{t}\|^{2}-\|\epsilon_{t+1}\|^{2}+2\|Z_{t+1}\|^{2}+2a_{t+1}^{2}\|\nabla f(x_{t+1},\xi_{t+1})-\nabla F(x_{t+1})\|^{2}+M_{t+1}}{a_{t+1}}\right]\\
\le & \sigma^{2}+\E\bigg[\sum_{t=1}^{T}\underbrace{\left(a_{t+1}^{-1}-a_{t}^{-1}\right)}_{\leq2/(3a_{0}^{2})}\|\epsilon_{t}\|^{2}+\frac{2}{a_{t+1}}\|Z_{t+1}\|^{2}\\
 & \qquad\qquad+2a_{t+1}\|\nabla f(x_{t+1},\xi_{t+1})-\nabla F(x_{t+1})\|^{2}+\frac{M_{t+1}}{a_{t+1}}\bigg]\\
\leq & \sigma^{2}+\E\bigg[\frac{2}{3a_{0}^{2}}\bE_{T}+\sum_{t=1}^{T}\frac{2}{a_{t+1}}\|Z_{t+1}\|^{2}\\
 & \qquad\qquad+2a_{t+1}\|\nabla f(x_{t+1},\xi_{t+1})-\nabla F(x_{t+1})\|^{2}+\frac{M_{t+1}}{a_{t+1}}\bigg].
\end{align*}
Because $a_{t+1}$is not random, we know
\begin{align*}
\E\left[\frac{2}{a_{t+1}}\|Z_{t+1}\|^{2}\right] & \leq\E\left[\frac{2\eta^{2}\beta^{2}\|d_{t}\|^{2}}{a_{t+1}b_{t}^{2}}\right],\\
\E\left[2a_{t+1}\|\nabla f(x_{t+1},\xi_{t+1})-\nabla F(x_{t+1})\|^{2}\right] & \leq2a_{t+1}\sigma^{2},\\
\E\left[\frac{M_{t+1}}{a_{t+1}}\right] & =0,
\end{align*}
where the first inequality is by Lemma \ref{lem:smooth-Z}. Then we
know
\begin{align*}
\E\left[\bE_{T}\right] & \leq\sigma^{2}+\E\left[\frac{2}{3a_{0}^{2}}\bE_{T}+\sum_{t=1}^{T}\frac{2\eta^{2}\beta^{2}\|d_{t}\|^{2}}{a_{t+1}b_{t}^{2}}+2a_{t+1}\sigma^{2}\right]\\
\Rightarrow\E\left[\bE_{T}\right] & \leq\frac{\sigma^{2}\left(1+2\sum_{t=1}^{T}a_{t+1}\right)+2\eta^{2}\beta^{2}\E\left[\sum_{t=1}^{T}\frac{\|d_{t}\|^{2}}{a_{t+1}b_{t}^{2}}\right]}{1-2/(3a_{0}^{2})}.
\end{align*}
Note that there is
\begin{align*}
\sum_{t=1}^{T}\frac{\|d_{t}\|^{2}}{a_{t+1}b_{t}^{2}} & =\sum_{t=1}^{T}\frac{\|d_{t}\|^{2}}{a_{t+1}^{1-2q}\left(b_{0}^{1/p}+\bD_{t}\right)^{2p}}\overset{(a)}{=}\sum_{t=1}^{T}\frac{\|d_{t}\|^{2}}{a_{t+1}^{p}\left(b_{0}^{1/p}+\bD_{t}\right)^{2p}}\\
 & \leq(1+T/a_{0}^{2})^{\frac{2p}{3}}\sum_{t=1}^{T}\frac{\|d_{t}\|^{2}}{\left(b_{0}^{1/p}+\bD_{t}\right)^{2p}}\\
 & \overset{(b)}{\leq}(1+T/a_{0}^{2})^{\frac{2p}{3}}\begin{cases}
\frac{\bD_{T}^{1-2p}}{1-2p} & p\neq\frac{1}{2}\\
\log\left(1+\frac{\bD_{T}}{b_{0}^{2}}\right) & p=\frac{1}{2}
\end{cases},
\end{align*}
where $(a)$ is by $1-2q=p$, $(b)$ is by Lemma \ref{lem:inequality-1}.
Besides
\[
\sum_{t=1}^{T}a_{t+1}=\sum_{t=1}^{T}\frac{1}{\left(1+t/a_{0}^{2}\right)^{2/3}}\leq3a_{0}^{2}\left(1+T/a_{0}^{2}\right)^{1/3}-3a_{0}^{2}<3a_{0}^{2}\left(1+T/a_{0}^{2}\right)^{1/3}-2.
\]
So we know
\[
\E\left[\bE_{T}\right]\leq\frac{6a_{0}^{2}\sigma^{2}\left(1+T/a_{0}^{2}\right)^{1/3}}{1-2/(3a_{0}^{2})}+\frac{2\eta^{2}\beta^{2}(1+T/a_{0}^{2})^{\frac{2p}{3}}}{1-2/(3a_{0}^{2})}\begin{cases}
\frac{\E\left[\bD_{T}^{1-2p}\right]}{1-2p} & p\neq\frac{1}{2}\\
\E\left[\log\left(1+\frac{\bD_{T}}{b_{0}^{2}}\right)\right] & p=\frac{1}{2}
\end{cases}.
\]
\end{proof}

\subsection{Bound on $\protect\E\left[\protect\bD_{T}^{1-p}\right]$}
\begin{lem}
\label{lem:NA-D-final-bound}Given $p+2q=1$, $p\in\left(0,\frac{1}{2}\right]$,
we have
\begin{align*}
\E\left[\bD_{T}^{1-p}\right] & \leq\left(1+T/a_{0}^{2}\right)^{\frac{1-p}{3}}\left(b_{0}^{\frac{1}{p}-1}+\frac{2}{\eta}\left(F(x_{1})-F^{*}\right)+\frac{\sigma^{2}\left(1+2a_{0}^{2}\log\left(1+T/a_{0}^{2}\right)\right)}{\eta\beta_{\max}\left(1-2/(3a_{0}^{2})\right)}\right)\\
 & \quad+\frac{\left(1+T/a_{0}^{2}\right)^{\frac{1-p}{3}}}{1-p}\left(\frac{3a_{0}^{2}-1}{3a_{0}^{2}-2}4\eta\beta_{\max}\right)^{\frac{1}{p}-1}\log\frac{\left(1+\frac{9a_{0}^{2}-2}{3a_{0}^{2}-2}\left(1+T/a_{0}^{2}\right)^{1/3}\right)\eta\beta_{\max}}{b_{0}}.
\end{align*}
\end{lem}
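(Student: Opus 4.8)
The plan is to obtain this bound as a direct corollary of three results already established: the function‑value analysis of Lemma~\ref{lem:f-value-analysis}, the estimate on $\E[\bE_{T,1/2}]$ in Lemma~\ref{lem:NA-E-T-1/2}, and the residual estimate in Lemma~\ref{lem:residual-bound}; the point is that for $\algnamena$ the momentum coefficient $a_{t+1}=(1+t/a_0^2)^{-2/3}$ is \emph{deterministic}, so none of the martingale/stopping‑time machinery needed for $\algnameold$ and $\algnamenew$ is required here. First I would invoke Lemma~\ref{lem:f-value-analysis} with the choice $\lambda=1$, which gives
\[
\E\left[a_{T+1}^{q}\bD_{T}^{1-p}\right]\leq b_{0}^{\frac{1}{p}-1}+\frac{2}{\eta}\left(F(x_{1})-F^{*}\right)+\E\left[\sum_{t=1}^{T}\left(\eta\beta_{\max}+\frac{\eta\beta_{\max}}{a_{t+1}^{1/2}}-b_{t}\right)\frac{\|d_{t}\|^{2}}{b_{t}^{2}}\right]+\frac{\E\left[\bE_{T,1/2}\right]}{\eta\beta_{\max}}.
\]
Then I would substitute the bound of Lemma~\ref{lem:NA-E-T-1/2} for $\E[\bE_{T,1/2}]$: its $\sigma^2$‑part yields exactly the term $\frac{\sigma^{2}(1+2a_{0}^{2}\log(1+T/a_{0}^{2}))}{\eta\beta_{\max}(1-2/(3a_{0}^{2}))}$, while the part $\frac{2\eta^2\beta^2}{1-2/(3a_0^2)}\E[\sum_t \|d_t\|^2/(a_{t+1}^{1/2}b_t^2)]$ is absorbed back into the residual sum after using $\beta^2\le\beta_{\max}^2$.

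After this absorption the residual sum has the shape $\sum_{t=1}^{T}(A+\frac{B}{a_{t+1}^{1/2}}-b_{t})\frac{\|d_{t}\|^{2}}{b_{t}^{2}}$ with $A=\eta\beta_{\max}$ and $B=\bigl(1+\tfrac{2}{1-2/(3a_{0}^{2})}\bigr)\eta\beta_{\max}=\tfrac{9a_{0}^{2}-2}{3a_{0}^{2}-2}\,\eta\beta_{\max}$ after clearing $3a_{0}^{2}$ from the denominator; note that $a_{0}>\sqrt{2/3}$ ensures $1-2/(3a_{0}^{2})>0$ and hence $A,B\ge0$, as Lemma~\ref{lem:residual-bound} requires. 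Applying the $\algnamena$‑case of Lemma~\ref{lem:residual-bound} bounds this sum by $\frac{(A+B)^{1/p-1}}{1-p}\log\frac{A+a_{T+1}^{-1/2}B}{b_{0}}$, and I would record the two elementary identities $A+B=4\eta\beta_{\max}\tfrac{3a_{0}^{2}-1}{3a_{0}^{2}-2}$ and $\tfrac{A+a_{T+1}^{-1/2}B}{b_{0}}=\tfrac{(1+\frac{9a_{0}^{2}-2}{3a_{0}^{2}-2}(1+T/a_{0}^{2})^{1/3})\eta\beta_{\max}}{b_{0}}$, using $a_{T+1}^{-1/2}=(1+T/a_{0}^{2})^{1/3}$.

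The final step is to pass from $\E[a_{T+1}^{q}\bD_{T}^{1-p}]$ to $\E[\bD_{T}^{1-p}]$: since $a_{T+1}$ is deterministic, $\E[\bD_{T}^{1-p}]=a_{T+1}^{-q}\,\E[a_{T+1}^{q}\bD_{T}^{1-p}]$, and $a_{T+1}^{-q}=(1+T/a_{0}^{2})^{2q/3}=(1+T/a_{0}^{2})^{(1-p)/3}$ because $p+2q=1$. Multiplying the accumulated upper bound by this factor reproduces the claimed inequality term by term. I do not expect a genuine obstacle: the only work is the algebraic bookkeeping that matches the constants $\tfrac{9a_{0}^{2}-2}{3a_{0}^{2}-2}$ and $\tfrac{3a_{0}^{2}-1}{3a_{0}^{2}-2}$ and the check that $a_{0}>\sqrt{2/3}$ keeps every denominator positive; note also that, in contrast to the $\bE_T$ bounds, no separate treatment of $p=\tfrac12$ is needed because $\frac{(A+B)^{1/p-1}}{1-p}\log(\cdot)$ remains valid at $p=\tfrac12$.
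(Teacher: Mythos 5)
Your proposal is correct and follows essentially the same route as the paper's proof: invoke Lemma~\ref{lem:f-value-analysis} with $\lambda=1$, substitute Lemma~\ref{lem:NA-E-T-1/2} and absorb its $\|d_t\|^2$ part into the residual sum with $A=\eta\beta_{\max}$ and $B=\tfrac{9a_0^2-2}{3a_0^2-2}\eta\beta_{\max}$, apply Lemma~\ref{lem:residual-bound}, and multiply through by the deterministic factor $a_{T+1}^{-q}=(1+T/a_0^2)^{(1-p)/3}$. The constants and identities you record match the paper's exactly.
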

\begin{proof}
The same as before, we start from Lemma \ref{lem:f-value-analysis}
\begin{align*}
\E\left[a_{T+1}^{q}\bD_{T}^{1-p}\right] & \leq b_{0}^{\frac{1}{p}-1}+\frac{2}{\eta}\left(F(x_{1})-F^{*}\right)\\
 & \quad+\E\left[\sum_{t=1}^{T}\left(\eta\beta_{\max}+\frac{\eta\beta_{\max}}{a_{t+1}^{1/2}\lambda}-b_{t}\right)\frac{\|d_{t}\|^{2}}{b_{t}^{2}}\right]+\frac{\lambda\E\left[\bE_{T,1/2}\right]}{\eta\beta_{\max}}.
\end{align*}
Now we simply take $\lambda=1$ and use Lemma \ref{lem:NA-E-T-1/2}
to get 
\begin{align}
\E\left[a_{T+1}^{q}\bD_{T}^{1-p}\right] & \leq b_{0}^{\frac{1}{p}-1}+\frac{2}{\eta}\left(F(x_{1})-F^{*}\right)+\frac{\sigma^{2}\left(1+2a_{0}^{2}\log\left(1+T/a_{0}^{2}\right)\right)}{\eta\beta_{\max}\left(1-2/(3a_{0}^{2})\right)}\nonumber \\
 & \quad+\E\left[\underbrace{\sum_{t=1}^{T}\left(\left(1+\frac{9a_{0}^{2}-2}{\left(3a_{0}^{2}-2\right)a_{t+1}^{1/2}}\right)\eta\beta_{\max}-b_{t}\right)\frac{\|d_{t}\|^{2}}{b_{t}^{2}}}_{(i)}\right].\label{eq:NA-bound-D}
\end{align}
Applying Lemma \ref{lem:residual-bound} to $(i)$, we get
\[
(i)\leq\frac{\left(\frac{3a_{0}^{2}-1}{3a_{0}^{2}-2}4\eta\beta_{\max}\right)^{\frac{1}{p}-1}}{1-p}\log\frac{\left(1+\frac{9a_{0}^{2}-2}{3a_{0}^{2}-2}\left(1+T/a_{0}^{2}\right)^{1/3}\right)\eta\beta_{\max}}{b_{0}}.
\]
Note that $a_{T+1}^{q}=a_{T+1}^{\frac{1-p}{2}}=\left(1+T/a_{0}^{2}\right)^{-\frac{1-p}{3}}$
is deterministic, by multiplying both sides of (\ref{eq:NA-bound-D})
by $\left(1+T/a_{0}^{2}\right)^{\frac{1-p}{3}}$ , we get the desired
result.
\end{proof}

\subsection{Combine the bounds and the final proof.}

From Lemma \ref{lem:NA-E-final-bound}, we have
\begin{align*}
\E\left[\bE_{T}\right] & \leq\frac{6a_{0}^{2}\sigma^{2}\left(1+T/a_{0}^{2}\right)^{1/3}}{1-2/(3a_{0}^{2})}+\frac{2\eta^{2}\beta^{2}(1+T/a_{0}^{2})^{\frac{2p}{3}}}{1-2/(3a_{0}^{2})}\begin{cases}
\frac{\E\left[\bD_{T}^{1-2p}\right]}{1-2p} & p\neq\frac{1}{2}\\
\E\left[\log\left(1+\frac{\bD_{T}}{b_{0}^{2}}\right)\right] & p=\frac{1}{2}
\end{cases}.
\end{align*}
From Lemma \ref{lem:NA-D-final-bound}, we have
\begin{align*}
\E\left[\bD_{T}^{1-p}\right] & \leq\left(1+T/a_{0}^{2}\right)^{\frac{1-p}{3}}\left(b_{0}^{\frac{1}{p}-1}+\frac{2}{\eta}\left(F(x_{1})-F^{*}\right)+\frac{\sigma^{2}\left(1+2a_{0}^{2}\log\left(1+T/a_{0}^{2}\right)\right)}{\eta\beta_{\max}\left(1-2/(3a_{0}^{2})\right)}\right)\\
 & \quad+\frac{\left(1+T/a_{0}^{2}\right)^{\frac{1-p}{3}}}{1-p}\left(\frac{3a_{0}^{2}-1}{3a_{0}^{2}-2}4\eta\beta_{\max}\right)^{\frac{1}{p}-1}\log\frac{\left(1+\frac{9a_{0}^{2}-2}{3a_{0}^{2}-2}\left(1+T/a_{0}^{2}\right)^{1/3}\right)\eta\beta_{\max}}{b_{0}}.
\end{align*}

Now let 
\[
\hp=1-p\in\left[\frac{1}{2},1\right).
\]
Apply Lemma \ref{lem:decomposition}, we know
\begin{equation}
\E\left[\bH_{T}^{\hp}\right]\leq4\max\left\{ \E\left[\bE_{T}^{\hp}\right],\E\left[\bD_{T}^{\hp}\right]\right\} .\label{eq:NA-H-bound-via-E-D}
\end{equation}
Now we can give the final proof of Theorem \ref{thm:NA-rate}.

\begin{proof}
Now we consider following two cases:

\textbf{Case 1:} $p\neq\frac{1}{2}$. Note that by Holder inequality
\begin{align*}
\E\left[\bE_{T}^{\hp}\right] & =\E^{\hp}\left[\bE_{T}\right],\\
\E\left[\bD_{T}^{1-2p}\right] & \leq\E^{\frac{1-2p}{\hp}}\left[\bD_{T}^{\hp}\right].
\end{align*}
So we know
\begin{align*}
\E\left[\bE_{T}^{\hp}\right] & \leq\left(\frac{6a_{0}^{2}\sigma^{2}\left(1+T/a_{0}^{2}\right)^{1/3}}{1-2/(3a_{0}^{2})}+\frac{2\eta^{2}\beta^{2}(1+T/a_{0}^{2})^{\frac{2p}{3}}}{(1-2/(3a_{0}^{2}))(1-2p)}\E\left[\bD_{T}^{1-2p}\right]\right)^{\hp}\\
 & \leq\left(\frac{6a_{0}^{2}\sigma^{2}\left(1+T/a_{0}^{2}\right)^{1/3}}{1-2/(3a_{0}^{2})}+\frac{2\eta^{2}\beta^{2}(1+T/a_{0}^{2})^{\frac{2p}{3}}}{(1-2/(3a_{0}^{2}))(1-2p)}\E^{\frac{1-2p}{\hp}}\left[\bD_{T}^{\hp}\right]\right)^{\hp}.
\end{align*}
Now if $\E\left[\bE_{T}^{\hp}\right]\geq\E\left[\bD_{T}^{\hp}\right]$,
we know
\begin{align*}
\E\left[\bE_{T}^{\hp}\right] & \leq\left(\frac{6a_{0}^{2}\sigma^{2}\left(1+T/a_{0}^{2}\right)^{1/3}}{1-2/(3a_{0}^{2})}+\frac{2\eta^{2}\beta^{2}(1+T/a_{0}^{2})^{\frac{2p}{3}}}{(1-2/(3a_{0}^{2}))(1-2p)}\E^{\frac{1-2p}{\hp}}\left[\bE_{T}^{\hp}\right]\right)^{\hp}\\
 & \leq\left(\frac{6a_{0}^{2}\sigma^{2}\left(1+T/a_{0}^{2}\right)^{1/3}}{1-2/(3a_{0}^{2})}\right)^{\hp}+\left(\frac{2\eta^{2}\beta^{2}(1+T/a_{0}^{2})^{\frac{2p}{3}}}{(1-2/(3a_{0}^{2}))(1-2p)}\right)^{\hp}\E^{1-2p}\left[\bE_{T}^{\hp}\right]
\end{align*}
Then if $\left(\frac{2\eta^{2}\beta^{2}(1+T/a_{0}^{2})^{\frac{2p}{3}}}{(1-2/(3a_{0}^{2}))(1-2p)}\right)^{\hp}\E^{1-2p}\left[\bE_{T}^{\hp}\right]\leq\left(\frac{6a_{0}^{2}\sigma^{2}\left(1+T/a_{0}^{2}\right)^{1/3}}{1-2/(3a_{0}^{2})}\right)^{\hp}$,
we know
\begin{align*}
\E\left[\bE_{T}^{\hp}\right] & \leq2\left(\frac{6a_{0}^{2}\sigma^{2}\left(1+T/a_{0}^{2}\right)^{1/3}}{1-2/(3a_{0}^{2})}\right)^{\hp}=\left(\frac{2^{\frac{1}{\hp}}6a_{0}^{2}\sigma^{2}}{1-2/(3a_{0}^{2})}\right)^{\hp}\left(1+T/a_{0}^{2}\right)^{\frac{\hp}{3}}
\end{align*}
If $\left(\frac{2\eta^{2}\beta^{2}(1+T/a_{0}^{2})^{\frac{2p}{3}}}{(1-2/(3a_{0}^{2}))(1-2p)}\right)^{\hp}\E^{1-2p}\left[\bE_{T}^{\hp}\right]\geq\left(\frac{6a_{0}^{2}\sigma^{2}\left(1+T/a_{0}^{2}\right)^{1/3}}{1-2/(3a_{0}^{2})}\right)^{\hp}$,
we know
\begin{align*}
\E\left[\bE_{T}^{\hp}\right] & \leq2\left(\frac{2\eta^{2}\beta^{2}(1+T/a_{0}^{2})^{\frac{2p}{3}}}{(1-2/(3a_{0}^{2}))(1-2p)}\right)^{\hp}\E^{1-2p}\left[\bE_{T}^{\hp}\right]\\
\Rightarrow\E\left[\bE_{T}^{\hp}\right] & \leq\left(\frac{2^{\frac{1}{\hp}}2\eta^{2}\beta^{2}}{(1-2/(3a_{0}^{2}))(1-2p)}\right)^{\frac{\hp}{2p}}\left(1+T/a_{0}^{2}\right)^{\frac{\hp}{3}}.
\end{align*}
Hence under $\E\left[\bE_{T}^{\hp}\right]\geq\E\left[\bD_{T}^{\hp}\right]$,
we get
\[
\E\left[\bE_{T}^{\hp}\right]\leq\left(\left(\frac{2^{\frac{1}{\hp}}2\eta^{2}\beta^{2}}{(1-2/(3a_{0}^{2}))(1-2p)}\right)^{\frac{\hp}{2p}}+\left(\frac{2^{\frac{1}{\hp}}6a_{0}^{2}\sigma^{2}}{1-2/(3a_{0}^{2})}\right)^{\hp}\right)\left(1+T/a_{0}^{2}\right)^{\frac{\hp}{3}}.
\]
Then by using (\ref{eq:NA-H-bound-via-E-D}), we know
\begin{align*}
 & \E\left[\bH_{T}^{\hp}\right]\\
\leq & 4\max\left\{ \E\left[\bE_{T}^{\hp}\right],\E\left[\bD_{T}^{\hp}\right]\right\} \\
\leq & 4\left(1+T/a_{0}^{2}\right)^{\frac{\hp}{3}}\\
 & \times\left[\left(\frac{2^{\frac{1}{\hp}}2\eta^{2}\beta^{2}}{(1-2/(3a_{0}^{2}))(1-2p)}\right)^{\frac{\hp}{2p}}+\left(\frac{2^{\frac{1}{\hp}}6a_{0}^{2}\sigma^{2}}{1-2/(3a_{0}^{2})}\right)^{\hp}\right.\\
 & \left.+b_{0}^{\frac{1}{p}-1}+\frac{2}{\eta}\left(F(x_{1})-F^{*}\right)+\frac{\sigma^{2}\left(1+2a_{0}^{2}\log\left(1+T/a_{0}^{2}\right)\right)}{\eta\beta_{\max}\left(1-2/(3a_{0}^{2})\right)}\right.\\
 & \left.+\frac{\left(\frac{3a_{0}^{2}-1}{3a_{0}^{2}-2}4\eta\beta_{\max}\right)^{\frac{1}{p}-1}}{1-p}\log\frac{\left(1+\frac{9a_{0}^{2}-2}{3a_{0}^{2}-2}\left(1+T/a_{0}^{2}\right)^{1/3}\right)\eta\beta_{\max}}{b_{0}}\right]\\
= & O\left(\left(F(x_{1})-F^{*}+\beta^{\frac{\hp}{p}}\log\left(\beta T\right)+\sigma^{2}\log T+\sigma^{2\hp}\right)T^{\frac{\hp}{3}}\right).
\end{align*}

\textbf{Case 2:} $p=\frac{1}{2}$. By a similar proof, we still have
\begin{align*}
\E\left[\bH_{T}^{\hp}\right]\leq & O\left(\left(F(x_{1})-F^{*}+\beta^{\frac{\hp}{p}}\log\left(\beta T\right)+\sigma^{2}\log T+\sigma^{2\hp}\right)T^{\frac{\hp}{3}}\right)
\end{align*}
\end{proof}

\section{Basic inequalities\label{sec:Basic-inequalities}}

In this section, we prove some technical lemmas used in our proof.
\begin{lem}
\label{lem:inequality-1}For $c_{0}>0$, $c_{i\geq1}\geq0$, $p\in(0,1]$,
we have
\[
\sum_{t=1}^{T}\frac{c_{t}}{(c_{0}+\sum_{i=1}^{t}c_{i})^{p}}\leq\begin{cases}
\frac{1}{1-p}\left(\sum_{i=1}^{T}c_{i}\right)^{1-p} & p\neq1\\
\log\left(1+\frac{\sum_{i=1}^{T}c_{i}}{c_{0}}\right) & p=1
\end{cases}.
\]
\end{lem}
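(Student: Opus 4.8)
### Proof Proposal for Lemma~\ref{lem:inequality-1}

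The plan is to prove the two cases separately, in both cases by exploiting telescoping/comparison with an integral. Write $S_0 = c_0$ and $S_t = c_0 + \sum_{i=1}^t c_i$ for $t \ge 1$, so that $S_t$ is a nondecreasing sequence with $S_t - S_{t-1} = c_t \ge 0$, and $\sum_{i=1}^T c_i = S_T - S_0 = S_T - c_0$. The summand is then $c_t / S_t^p = (S_t - S_{t-1})/S_t^p$, and the goal is to bound $\sum_{t=1}^T (S_t - S_{t-1})/S_t^p$.

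For the case $p \neq 1$ (so $p \in (0,1)$), first I would dispose of the trivial situation where some $c_t = 0$: those terms contribute nothing, so we may discard them and assume $S_t$ is strictly increasing; alternatively just handle it via the inequality below which is valid with equality when $S_t = S_{t-1}$. The key step is the pointwise bound
\[
\frac{S_t - S_{t-1}}{S_t^p} \le \frac{1}{1-p}\left(S_t^{1-p} - S_{t-1}^{1-p}\right),
\]
which follows because $x \mapsto x^{1-p}$ is concave for $p \in (0,1)$, hence $S_t^{1-p} - S_{t-1}^{1-p} \ge (1-p) S_t^{-p}(S_t - S_{t-1})$ (the derivative of $x^{1-p}$ at the right endpoint $S_t$ underestimates the secant slope); equivalently one integrates $\frac{d}{dx} x^{1-p} = (1-p)x^{-p}$ over $[S_{t-1}, S_t]$ and uses that $x^{-p} \ge S_t^{-p}$ there. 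Summing over $t = 1, \dots, T$ telescopes to $\frac{1}{1-p}(S_T^{1-p} - S_0^{1-p}) \le \frac{1}{1-p} S_T^{1-p}$; but I actually want the sharper $\frac{1}{1-p}(S_T - S_0)^{1-p} = \frac{1}{1-p}\big(\sum_{i=1}^T c_i\big)^{1-p}$, which is even larger since $(S_T - S_0)^{1-p} \ge S_T^{1-p} - S_0^{1-p}$ by subadditivity of $x \mapsto x^{1-p}$ on $[0,\infty)$ (concave, vanishing at $0$). So the chain is: telescoped sum $\le \frac{1}{1-p}(S_T^{1-p} - c_0^{1-p}) \le \frac{1}{1-p}(S_T - c_0)^{1-p}$. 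For the case $p = 1$, the analogous pointwise bound is $\frac{S_t - S_{t-1}}{S_t} \le \log S_t - \log S_{t-1}$, again from concavity of $\log$ (the secant slope of $\log$ on $[S_{t-1},S_t]$ is at least $1/S_t$), and summing telescopes to $\log S_T - \log S_0 = \log(S_T/c_0) = \log\big(1 + \frac{1}{c_0}\sum_{i=1}^T c_i\big)$.

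I do not anticipate a genuine obstacle here — this is a standard "AdaGrad-type" summation lemma. The only points requiring a little care are (i) making the concavity inequality go in the correct direction (we need the secant slope bounded \emph{below} by the derivative at the \emph{right} endpoint, which is where the decreasing function $x^{-p}$ is smallest), and (ii) the final subadditivity step $(a+b)^{1-p} \ge a^{1-p}$... wait, more precisely $S_T^{1-p} - c_0^{1-p} \le (S_T - c_0)^{1-p}$, which is the statement that $f(x) = x^{1-p}$ satisfies $f(u) - f(v) \le f(u-v)$ for $u \ge v \ge 0$; this holds because $f$ is concave with $f(0) = 0$, hence subadditive, so $f(u) = f((u-v) + v) \le f(u-v) + f(v)$. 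Both sub-steps are one line each, so the whole proof is short; I would present the two cases in a single \texttt{proof} environment with a case split.
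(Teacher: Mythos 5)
Your proof is correct. The $p=1$ case is identical to the paper's argument (the inequality $1-\tfrac{1}{x}\le\log x$ applied to the ratio of consecutive partial sums, then telescoping). For $p\in(0,1)$ the paper instead reduces to a cited result (Lemma 3 of the STORM+ paper): it locates the first index $T_{0}$ with $c_{T_{0}}>0$, discards the vanishing leading terms and the offset $c_{0}$ by monotonicity of the denominator, and applies the cited bound to the shifted sequence. You give a self-contained argument: the pointwise bound $\frac{S_{t}-S_{t-1}}{S_{t}^{p}}\le\frac{1}{1-p}\bigl(S_{t}^{1-p}-S_{t-1}^{1-p}\bigr)$ from concavity of $x^{1-p}$, telescoping to $\frac{1}{1-p}\bigl(S_{T}^{1-p}-c_{0}^{1-p}\bigr)$, and then subadditivity of $x^{1-p}$ (concave, vanishing at $0$) to absorb the offset and reach $\frac{1}{1-p}\bigl(S_{T}-c_{0}\bigr)^{1-p}=\frac{1}{1-p}\bigl(\sum_{i=1}^{T}c_{i}\bigr)^{1-p}$. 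Both routes rest on the same telescoping-against-the-antiderivative mechanism, but yours avoids the external citation and the $T_{0}$ bookkeeping, and the subadditivity step is a cleaner way to handle $c_{0}$; all the individual inequalities you invoke (right-endpoint derivative below the secant slope for a concave function, $f(u)-f(v)\le f(u-v)$ for concave $f$ with $f(0)=0$, and the degenerate case $c_{t}=0$ holding with equality) are verified correctly.
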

\begin{proof}
We first prove the case $p\neq1$. From Lemma 3 in \cite{levy2021storm+},
for $b_{1}>0,b_{i\geq2}\geq0$, $p\in(0,1)$, we have
\[
\sum_{t=1}^{T}\frac{b_{t}}{(\sum_{i=1}^{t}b_{i})^{p}}\leq\frac{1}{1-p}\left(\sum_{i=1}^{T}b_{i}\right)^{1-p}.
\]
Now we define
\[
T_{0}=\min\left\{ t\in\left[T\right]:c_{t}>0\right\} .
\]
By the definition of $T_{0}$, we know for any $1\leq t\leq T_{0}-1$,
$c_{t}=0.$ Then we have
\begin{align*}
\sum_{t=1}^{T}\frac{c_{t}}{(c_{0}+\sum_{i=1}^{t}c_{i})^{p}} & =\sum_{t=1}^{T_{0}-1}\frac{c_{t}}{(c_{0}+\sum_{i=1}^{t}c_{i})^{p}}+\sum_{t=T_{0}}^{T}\frac{c_{t}}{(c_{0}+\sum_{i=1}^{T_{0}-1}c_{i}+\sum_{i=T_{0}}^{t}c_{i})^{p}}\\
 & =\sum_{t=T_{0}}^{T}\frac{c_{t}}{(c_{0}+\sum_{i=T_{0}}^{t}c_{i})^{p}}\leq\sum_{t=T_{0}}^{T}\frac{c_{t}}{(\sum_{i=T_{0}}^{t}c_{i})^{p}}\\
 & \leq\frac{1}{1-p}\left(\sum_{i=T_{0}}^{T}c_{i}\right)^{1-p}=\frac{1}{1-p}\left(\sum_{i=1}^{T}c_{i}\right)^{1-p}.
\end{align*}

For $p=1$, we know
\begin{align*}
\sum_{t=1}^{T}\frac{c_{t}}{c_{0}+\sum_{i=1}^{t}c_{i}} & =\sum_{t=1}^{T}1-\frac{c_{0}+\sum_{i=1}^{t-1}c_{i}}{c_{0}+\sum_{i=T_{0}}^{t}c_{i}}\\
 & \leq\sum_{t=1}^{T}\log\frac{c_{0}+\sum_{i=T_{0}}^{t}c_{i}}{c_{0}+\sum_{i=1}^{t-1}c_{i}}\\
 & =\log\left(1+\frac{\sum_{i=1}^{T}c_{i}}{c_{0}}\right),
\end{align*}
where the inequality holds by $1-\frac{1}{x}\leq\log x$.
\end{proof}

\begin{lem}
\label{lem:inequality-2}For $c_{0}>0$, $c_{i\geq1}\in(0,c]$, $p\in(0,1]$,
we have
\[
\sum_{t=1}^{T}\frac{c_{t+1}}{(c_{0}+\sum_{i=1}^{t}c_{i})^{p}}\leq\frac{3c}{c_{0}^{p}}+\begin{cases}
\frac{1}{1-p}\left(\sum_{i=1}^{T}c_{i}\right)^{1-p} & p\neq1\\
\log\left(1+\frac{\sum_{i=1}^{T}c_{i}}{c_{0}}\right) & p=1
\end{cases}.
\]
\end{lem}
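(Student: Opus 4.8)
The plan is to reduce the shifted sum to the unshifted sum already controlled by Lemma~\ref{lem:inequality-1}, paying only a bounded additive correction. Write $D_t=\sum_{i=1}^{t}c_i$ (with $D_0=0$) and reindex the left-hand side as $\sum_{t=1}^{T}\frac{c_{t+1}}{(c_0+D_t)^p}=\sum_{s=2}^{T+1}\frac{c_s}{(c_0+D_{s-1})^p}$. I would compare this against $U:=\sum_{s=2}^{T+1}\frac{c_{s-1}}{(c_0+D_{s-1})^p}=\sum_{t=1}^{T}\frac{c_t}{(c_0+D_t)^p}$, which by Lemma~\ref{lem:inequality-1} is at most $\frac{1}{1-p}\big(\sum_{i=1}^{T}c_i\big)^{1-p}$ when $p\neq1$ and at most $\log\big(1+\frac{\sum_{i=1}^{T}c_i}{c_0}\big)$ when $p=1$. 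Hence it suffices to bound the difference $\sum_{s=2}^{T+1}\frac{c_s-c_{s-1}}{(c_0+D_{s-1})^p}$ by $\frac{3c}{c_0^p}$ (in fact $\frac{2c}{c_0^p}$ will come out).

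For this difference I would apply summation by parts. Setting $v_s=c_s$ and $w_s=(c_0+D_{s-1})^{-p}$ (note $w$ is nonincreasing in $s$ because $D$ is nondecreasing and $x\mapsto x^{-p}$ is decreasing), Abel's identity gives
\[
\sum_{s=2}^{T+1}(v_s-v_{s-1})w_s=v_{T+1}w_{T+1}-v_1w_2+\sum_{s=2}^{T}v_s(w_s-w_{s+1}).
\]
The first term equals $c_{T+1}(c_0+D_T)^{-p}\le c\,c_0^{-p}$; the second term is $-c_1(c_0+D_1)^{-p}\le0$; and the last term, using $0\le v_s\le c$ and $w_s-w_{s+1}\ge0$, telescopes: it is at most $c\sum_{s=2}^{T}(w_s-w_{s+1})=c\big((c_0+D_1)^{-p}-(c_0+D_T)^{-p}\big)\le c\,c_0^{-p}$. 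Adding these three estimates bounds the correction by $\frac{2c}{c_0^p}\le\frac{3c}{c_0^p}$, and combining with the bound on $U$ yields the claim. The $p=1$ case needs no separate treatment: the telescoping and boundary estimates only use monotonicity of $x\mapsto x^{-p}$, not $p<1$, and $U$ is handled by the $p=1$ branch of Lemma~\ref{lem:inequality-1}.

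The only thing to watch is the index bookkeeping across the reindexing $s=t+1$, in particular making sure the Abel boundary term lands on $(c_0+D_T)^{-p}$ rather than $(c_0+D_{T+1})^{-p}$, so that $c$ enters only through the ratio $c/c_0^p$ and never through $\sum_i c_i$. Notably no case split on the size of the jumps $c_s/(c_0+D_{s-1})$ is needed, which is what keeps the correction a genuine constant multiple of $c/c_0^p$.
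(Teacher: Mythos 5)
Your proof is correct, and it takes a genuinely different route from the paper's. The paper introduces the index $T_{0}=\min\{t:\sum_{i=1}^{t}c_{i}\geq c\}$, crudely bounds the denominators by $c_{0}^{p}$ before $T_{0}$ (where the numerators sum to at most $2c$), and after $T_{0}$ uses the accumulated mass $c\geq c_{t+1}$ to absorb the one-step index shift, i.e. $c_{0}+c+\sum_{i=T_{0}+1}^{t}c_{i}\geq c_{0}+\sum_{i=T_{0}+1}^{t+1}c_{i}$, so that Lemma \ref{lem:inequality-1} applies to the tail. You instead compare the shifted sum directly to the unshifted sum $U$ and control the difference $\sum_{s=2}^{T+1}(c_{s}-c_{s-1})w_{s}$ by Abel summation, exploiting only that the weights $w_{s}=(c_{0}+D_{s-1})^{-p}$ are nonincreasing and that $0\le c_{s}\le c$; the boundary term and the telescoped sum each contribute at most $c/c_{0}^{p}$, so you even get the slightly sharper correction $2c/c_{0}^{p}$. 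Both arguments reduce to Lemma \ref{lem:inequality-1} in the end; your version is cleaner in that it needs no case split on when the running sum first reaches $c$ and treats $p=1$ uniformly, while the paper's split-at-$T_{0}$ template is the one it reuses verbatim for the doubly-shifted Lemma \ref{lem:inequality-3} and the $4/3$-power Lemma \ref{lem:inequality-5} (your Abel argument would adapt to those too, since it only uses monotonicity of the weights). Your index bookkeeping, in particular that the Abel boundary term is $c_{T+1}(c_{0}+D_{T})^{-p}$ and not $c_{T+1}(c_{0}+D_{T+1})^{-p}$, checks out.
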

\begin{proof}
Define
\[
T_{0}=\min\left\{ t\in\left[T\right],\sum_{i=1}^{t}c_{i}\geq c\right\} ,
\]
then we know
\begin{align*}
\sum_{t=1}^{T}\frac{c_{t+1}}{(c_{0}+\sum_{i=1}^{t}c_{i})^{p}} & \leq\sum_{t=1}^{T-1}\frac{c_{t+1}}{(c_{0}+\sum_{i=1}^{t}c_{i})^{p}}+\frac{c}{c_{0}^{p}}\\
 & =\frac{c}{c_{0}^{p}}+\sum_{t=1}^{T_{0}-1}\frac{c_{t+1}}{(c_{0}+\sum_{i=1}^{t}c_{i})^{p}}+\sum_{t=T_{0}}^{T-1}\frac{c_{t+1}}{(c_{0}+\sum_{i=1}^{T_{0}}c_{i}+\sum_{i=T_{0}+1}^{t}c_{i})^{p}}\\
 & \leq\frac{c}{c_{0}^{p}}+\sum_{t=1}^{T_{0}-1}\frac{c_{t+1}}{c_{0}^{p}}+\sum_{t=T_{0}}^{T-1}\frac{c_{t+1}}{(c_{0}+c+\sum_{i=T_{0}+1}^{t}c_{i})^{p}}\\
 & \leq\frac{3c}{c_{0}^{p}}+\sum_{t=T_{0}}^{T-1}\frac{c_{t+1}}{(c_{0}+\sum_{i=T_{0}+1}^{t+1}c_{i})^{p}}\\
 & \overset{(a)}{\leq}\frac{3c}{c_{0}^{p}}+\begin{cases}
\frac{1}{1-p}\left(\sum_{i=T_{0}+1}^{T}c_{i}\right)^{1-p} & p\neq1\\
\log\left(1+\frac{\sum_{i=T_{0}+1}^{T}c_{i}}{c_{0}}\right) & p=1
\end{cases}\\
 & \leq\frac{3c}{c_{0}^{p}}+\begin{cases}
\frac{1}{1-p}\left(\sum_{i=1}^{T}c_{i}\right)^{1-p} & p\neq1\\
\log\left(1+\frac{\sum_{i=1}^{T}c_{i}}{c_{0}}\right) & p=1
\end{cases}
\end{align*}
where $(a)$ is by Lemma \ref{lem:inequality-1}.
\end{proof}

\begin{lem}
\label{lem:inequality-3}For $c_{0}>0$, $c_{i\geq1}\in(0,c]$, $p\in(0,1]$,
we have
\[
\sum_{t=1}^{T}\frac{c_{t+1}}{(c_{0}+\sum_{i=1}^{t-1}c_{i})^{p}}\leq\frac{6c}{c_{0}^{p}}+\begin{cases}
\frac{1}{1-p}\left(\sum_{i=1}^{T}c_{i}\right)^{1-p} & p\neq1\\
\log\left(1+\frac{\sum_{i=1}^{T}c_{i}}{c_{0}}\right) & p=1
\end{cases}.
\]
\end{lem}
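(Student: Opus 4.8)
The plan is to follow the proof of Lemma~\ref{lem:inequality-2} almost verbatim, with two adjustments that account for the fact that the denominator now lags the numerator by \emph{two} indices ($c_{t+1}$ over $c_0+\sum_{i=1}^{t-1}c_i$) rather than one: I will peel off one extra term at the end, and I will threshold the partial sums at $2c$ instead of at $c$ so as to create exactly the surplus needed to absorb the two ``missing'' weights $c_t,c_{t+1}$.

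First I would dispose of the degenerate range. If $\sum_{i=1}^{T-1}c_i<2c$, then every denominator is at least $c_0$, so the whole sum is at most $\frac{1}{c_0^{p}}\sum_{t=1}^{T}c_{t+1}=\frac{1}{c_0^{p}}\sum_{i=2}^{T+1}c_i\le\frac{1}{c_0^{p}}\big(\sum_{i=1}^{T-1}c_i+c_T+c_{T+1}\big)<\frac{4c}{c_0^{p}}$, which is below the claimed bound. Otherwise, I would peel off the last term, bounding it crudely by $\frac{c_{T+1}}{(c_0+\sum_{i=1}^{T-1}c_i)^{p}}\le\frac{c}{c_0^{p}}$, so that in the residual sum $\sum_{t=1}^{T-1}\frac{c_{t+1}}{(c_0+\sum_{i=1}^{t-1}c_i)^{p}}$ the numerator index $t+1$ ranges only up to $T$ (this is what prevents an unwanted $c_{T+1}$ from appearing inside the final power/log).

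Next, set $T_0:=\min\{t\in[T]:\sum_{i=1}^{t-1}c_i\ge 2c\}$ (note $T_0\ge 2$ since the empty partial sum is $0$), and split the residual sum at $T_0$. For $t\le T_0-1$ one has $\sum_{i=1}^{t-1}c_i<2c$, hence denominator $\ge c_0$, and since $\sum_{i=1}^{T_0}c_i=\sum_{i=1}^{T_0-2}c_i+c_{T_0-1}+c_{T_0}<2c+c+c=4c$, this first block contributes at most $\frac{1}{c_0^{p}}\sum_{i=2}^{T_0}c_i<\frac{4c}{c_0^{p}}$. For $t\ge T_0$ comes the one genuinely new step (the ``borrowing''): because $\sum_{i=1}^{T_0-1}c_i\ge 2c\ge c_t+c_{t+1}$ (each $c_i\le c$),
\[
c_0+\sum_{i=1}^{t-1}c_i=\Big(c_0+\sum_{i=1}^{T_0-1}c_i\Big)+\sum_{i=T_0}^{t-1}c_i\ \ge\ c_0+c_t+c_{t+1}+\sum_{i=T_0}^{t-1}c_i\ =\ c_0+\sum_{i=T_0}^{t+1}c_i .
\]
Substituting $b_j:=c_{j+T_0-1}$ converts the second block into $\sum_{k}\frac{b_k}{(c_0+\sum_{j=1}^{k}b_j)^{p}}$ over a range of $k$'s contained in $\{1,\dots,T-T_0+1\}$, i.e.\ exactly the shape handled by Lemma~\ref{lem:inequality-1}; applying it yields $\frac{1}{1-p}\big(\sum_{i=T_0}^{T}c_i\big)^{1-p}\le\frac{1}{1-p}\big(\sum_{i=1}^{T}c_i\big)^{1-p}$ when $p\neq 1$ and $\log\!\big(1+\frac{\sum_{i=1}^{T}c_i}{c_0}\big)$ when $p=1$. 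Adding the three contributions gives at most $\frac{c}{c_0^{p}}+\frac{4c}{c_0^{p}}+(\text{the term from Lemma~\ref{lem:inequality-1}})\le\frac{6c}{c_0^{p}}+(\text{same term})$, which is the claim.

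I expect the only real obstacle to be the index bookkeeping: making sure the borrowing identity and the reindexing by $b_j=c_{j+T_0-1}$ are lined up so the numerator index matches the upper summation limit (so that Lemma~\ref{lem:inequality-1} applies with no stray multiplicative constant), and checking the boundary ranges ($T_0$ undefined, $T_0=2$, $T_0\in\{T-1,T\}$, empty blocks) so that the constant $6$ is genuinely attained. None of this is deep, but it must be carried out with the same care as in the proof of Lemma~\ref{lem:inequality-2}.
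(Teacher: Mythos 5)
Your argument is correct, and every step checks out: the degenerate case ($\sum_{i=1}^{T-1}c_i<2c$) gives $4c/c_0^{p}$; in the main case the peeled tail term contributes $c/c_0^{p}$, the early block $t\le T_0-1$ contributes at most $4c/c_0^{p}$ (using minimality of $T_0$ so that $\sum_{i=1}^{T_0-2}c_i<2c$), and the borrowing inequality $\sum_{i=1}^{T_0-1}c_i\ge 2c\ge c_t+c_{t+1}$ correctly shifts the denominator to $c_0+\sum_{i=T_0}^{t+1}c_i$, putting the late block exactly in the form of Lemma \ref{lem:inequality-1} after reindexing. The route is a mild but genuine variant of the paper's: the paper splits at the first time the partial sum reaches $c$, borrows a single term $c_t$ to shift the denominator by one index, and thereby reduces Lemma \ref{lem:inequality-3} to Lemma \ref{lem:inequality-2} (which itself repeats the same trick to reach Lemma \ref{lem:inequality-1}), accumulating $3c/c_0^p+3c/c_0^p=6c/c_0^p$; you instead double the threshold to $2c$, borrow both $c_t$ and $c_{t+1}$ at once, and land directly on Lemma \ref{lem:inequality-1} in a single step, with the tail term $c_{T+1}$ peeled off explicitly rather than inside the intermediate lemma. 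Your version is marginally more self-contained and even yields the slightly better constant $5c/c_0^{p}$, at the cost of a little more case analysis up front; the paper's chained reduction is shorter on the page because Lemma \ref{lem:inequality-2} has already absorbed half the work.
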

\begin{proof}
Define 
\[
T_{0}=\min\left\{ t\in\left[T\right]:\sum_{i=1}^{t-1}c_{i}\geq c\right\} .
\]
Then we know
\begin{align*}
\sum_{t=1}^{T}\frac{c_{t+1}}{(c_{0}+\sum_{i=1}^{t-1}c_{i})^{p}} & =\sum_{t=1}^{T_{0}-1}\frac{c_{t+1}}{(c_{0}+\sum_{i=1}^{t}c_{i})^{p}}+\sum_{t=T_{0}}^{T}\frac{c_{t+1}}{(c_{0}+\sum_{i=1}^{T_{0}-1}c_{i}+\sum_{i=T_{0}}^{t-1}c_{i})^{p}}\\
 & \leq\sum_{t=1}^{T_{0}-1}\frac{c_{t+1}}{c_{0}^{p}}+\sum_{t=T_{0}}^{T}\frac{c_{t+1}}{(c_{0}+c+\sum_{i=T_{0}}^{t-1}c_{i})^{p}}\\
 & \leq\frac{3c}{c_{0}^{p}}+\sum_{t=T_{0}}^{T}\frac{c_{t+1}}{(c_{0}+\sum_{i=T_{0}}^{t}c_{i})^{p}}\\
 & \overset{(a)}{\leq}\frac{6c}{c_{0}^{p}}+\begin{cases}
\frac{1}{1-p}\left(\sum_{i=T_{0}}^{T}c_{i}\right)^{1-p} & p\neq1\\
\log\left(1+\frac{\sum_{i=T_{0}}^{T}c_{i}}{c_{0}}\right) & p=1
\end{cases}\\
 & \leq\frac{6c}{c_{0}^{p}}+\begin{cases}
\frac{1}{1-p}\left(\sum_{i=1}^{T}c_{i}\right)^{1-p} & p\neq1\\
\log\left(1+\frac{\sum_{i=1}^{T}c_{i}}{c_{0}}\right) & p=1
\end{cases}
\end{align*}
where $(a)$ is by Lemma \ref{lem:inequality-2}.
\end{proof}

\begin{lem}
\label{lem:inequality-4}(Lemma 6 in \cite{levy2021storm+}), for
$c_{i\geq1}\in(0,c]$, we have
\[
\sum_{t=1}^{T}\frac{c_{t}}{(1+\sum_{i=1}^{t-1}c_{i})^{4/3}}\leq12+2c.
\]
\end{lem}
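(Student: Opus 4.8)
The plan is to control the ``off--by--one'' shift in the denominator (it uses the partial sum up to $t-1$, not $t$) by splitting the sum at the first index where the running sum reaches $c$. Write $S_t=\sum_{i=1}^t c_i$, so $S_0=0$ and $S_{t-1}=\sum_{i=1}^{t-1}c_i$. First I would dispose of the trivial case $S_T<c$: since $(1+S_{t-1})^{4/3}\ge 1$, the whole sum is at most $\sum_{t=1}^T c_t=S_T<c\le 12+2c$.

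Otherwise, let $T_0=\min\{t\in[T]:S_t\ge c\}$. For the prefix $t\le T_0$ I would bound each summand crudely by $c_t$ and telescope: $\sum_{t\le T_0}\frac{c_t}{(1+S_{t-1})^{4/3}}\le S_{T_0}=S_{T_0-1}+c_{T_0}<c+c=2c$, using minimality of $T_0$ for $S_{T_0-1}<c$ and $c_{T_0}\le c$. For the tail $t>T_0$ I would use that $S_{t-1}\ge S_{T_0}\ge c\ge c_t$, hence $S_t=S_{t-1}+c_t\le 2S_{t-1}$ and so $1+S_t\le 2(1+S_{t-1})$, which gives $(1+S_{t-1})^{4/3}\ge 2^{-4/3}(1+S_t)^{4/3}$. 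Then a standard integral comparison, $\frac{c_t}{(1+S_t)^{4/3}}=\int_{S_{t-1}}^{S_t}(1+S_t)^{-4/3}\,dx\le\int_{S_{t-1}}^{S_t}(1+x)^{-4/3}\,dx$, telescopes to $\sum_{t>T_0}\frac{c_t}{(1+S_t)^{4/3}}\le\int_{S_{T_0}}^{\infty}(1+x)^{-4/3}\,dx\le 3$, so $\sum_{t>T_0}\frac{c_t}{(1+S_{t-1})^{4/3}}\le 3\cdot 2^{4/3}<12$. Adding the prefix and tail bounds yields $12+2c$.

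The main obstacle is precisely why the one--line argument $\sum_t\frac{c_t}{(1+S_{t-1})^{4/3}}\le(1+c)^{4/3}\sum_t\frac{c_t}{(1+S_t)^{4/3}}\le 3(1+c)^{4/3}$ is not good enough: the $(1+c)$ blow--up incurred by replacing $S_{t-1}$ with $S_t$ is only affordable while the running sum is of order $c$; once $S_{t-1}\gtrsim c$ the same replacement costs merely a universal constant ($2^{4/3}$). Isolating the short $O(c)$--mass prefix, so that the expensive shift is applied only there, is the crux; the remaining steps (the telescoping integral bound and the elementary inequality $1+S_t\le 2(1+S_{t-1})$) are routine. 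Alternatively one could invoke Lemma~\ref{lem:inequality-2} with $p=1$ after the shift, but the self--contained splitting argument above is cleaner and gives the stated constants directly.
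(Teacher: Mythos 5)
Your proof is correct. Note that the paper does not actually prove this lemma: it is imported verbatim as Lemma~6 of \cite{levy2021storm+}, so there is no in-paper argument to compare against. Your self-contained derivation is sound at every step: the trivial case $S_T<c$ is handled; the prefix $t\le T_0$ contributes at most $S_{T_0}=S_{T_0-1}+c_{T_0}<2c$ by minimality of $T_0$ and $c_{T_0}\le c$; for the tail, $S_{t-1}\ge c\ge c_t$ gives $1+S_t\le 2(1+S_{t-1})$, the integral comparison telescopes to $\int_{S_{T_0}}^{\infty}(1+x)^{-4/3}\,dx\le 3$, and $3\cdot 2^{4/3}<12$. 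Your splitting device (cut at the first time the running sum reaches $c$, pay $O(c)$ on the short prefix, pay a universal constant on the tail) is in fact exactly the technique the paper itself uses in the proofs of the adjacent Lemmas~\ref{lem:inequality-2}, \ref{lem:inequality-3}, and \ref{lem:inequality-5}, so your argument is both valid and stylistically consistent with how the paper handles the shifted-denominator issue elsewhere; the only thing it buys beyond the citation is self-containedness with explicit constants.
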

\begin{lem}
\label{lem:inequality-5}For $c_{i\geq1}\in(0,c]$, we have, we have
\[
\sum_{t=1}^{T}\frac{c_{t+1}}{(1+\sum_{i=1}^{t-1}c_{i})^{4/3}}\leq12+5c.
\]
\end{lem}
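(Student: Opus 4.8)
The plan is to reduce the bound to Lemma~\ref{lem:inequality-4} by absorbing the two-step gap between the numerator index $t+1$ and the denominator index appearing in that lemma. Write $S_t := \sum_{i=1}^{t} c_i$ with the convention $S_0 = 0$, so that the quantity to bound is $\sum_{t=1}^{T} c_{t+1} (1+S_{t-1})^{-4/3}$, whereas Lemma~\ref{lem:inequality-4} controls sums of the form $\sum_t c_t (1+S_{t-1})^{-4/3}$.

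First I would split each summand via the elementary identity
\[
\frac{c_{t+1}}{(1+S_{t-1})^{4/3}} = \frac{c_{t+1}}{(1+S_t)^{4/3}} + c_{t+1}\left(\frac{1}{(1+S_{t-1})^{4/3}} - \frac{1}{(1+S_t)^{4/3}}\right),
\]
which holds term by term. Summing over $t = 1, \dots, T$: in the second group every bracket is nonnegative (the $S_t$ are nondecreasing) and $c_{t+1} \le c$, so that group is at most $c \sum_{t=1}^{T}\big((1+S_{t-1})^{-4/3} - (1+S_t)^{-4/3}\big) = c\big(1 - (1+S_T)^{-4/3}\big) \le c$ by telescoping. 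In the first group, the substitution $s = t+1$ gives $\sum_{t=1}^{T} c_{t+1}(1+S_t)^{-4/3} = \sum_{s=2}^{T+1} c_s (1+S_{s-1})^{-4/3} \le \sum_{s=1}^{T+1} c_s (1+S_{s-1})^{-4/3}$, which is precisely the left-hand side of Lemma~\ref{lem:inequality-4} applied to the sequence $c_1, \dots, c_{T+1}$ (all lying in $(0,c]$), hence at most $12 + 2c$. Combining the two pieces yields $12 + 3c \le 12 + 5c$, which is even slightly stronger than claimed.

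I do not expect a genuine obstacle here; the only care needed is the index bookkeeping and noting that Lemma~\ref{lem:inequality-4} may be applied to the padded sequence of length $T+1$. For completeness I would remark that a case-splitting argument in the spirit of the proofs of Lemmas~\ref{lem:inequality-2} and~\ref{lem:inequality-3} also works: split the sum at $T_0 := \min\{t \in [T] : S_{t-1} \ge c\}$, bounding the $t < T_0$ portion by $\sum_{t < T_0} c_{t+1} \le S_{T_0} < 2c$ (the denominators being $\ge 1$ there), and for $t \ge T_0$ using $1 + S_{t-1} \ge \tfrac{1}{2}(1+S_t)$ (valid since $S_{t-1} \ge c$ and $c_t \le c$) to pass to $(1+S_t)^{-4/3}$ before invoking Lemma~\ref{lem:inequality-4}; this yields a bound of the same shape but with larger constants, so I would present the telescoping version.
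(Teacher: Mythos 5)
Your main argument is correct, and it takes a genuinely different route from the paper. The paper proves this lemma by the case-splitting device it also uses for Lemmas \ref{lem:inequality-2} and \ref{lem:inequality-3}: it sets $T_{0}=\min\{t\in[T]:\sum_{i=1}^{t-1}c_{i}\geq c\}$, bounds the portion $t<T_{0}$ crudely by $3c$ (denominators $\geq1$ and the relevant prefix sum is $<3c$ by minimality of $T_{0}$), and for $t\geq T_{0}$ uses $1+\sum_{i=1}^{t-1}c_{i}\geq1+c+\sum_{i=T_{0}}^{t-1}c_{i}\geq1+\sum_{i=T_{0}}^{t}c_{i}$ to re-index and invoke Lemma \ref{lem:inequality-4}, arriving at $12+5c$. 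Your telescoping decomposition
\[
\frac{c_{t+1}}{(1+S_{t-1})^{4/3}}=\frac{c_{t+1}}{(1+S_{t})^{4/3}}+c_{t+1}\Big(\frac{1}{(1+S_{t-1})^{4/3}}-\frac{1}{(1+S_{t})^{4/3}}\Big)
\]
is cleaner: it avoids the case split entirely, handles the index lag by a telescoping sum bounded by $c$, and reduces the remaining sum to Lemma \ref{lem:inequality-4} applied to the padded sequence $c_{1},\dots,c_{T+1}$, yielding the sharper constant $12+3c$. One small correction to your closing remark about the case-splitting alternative: with $T_{0}$ defined via $S_{t-1}\geq c$, minimality only gives $S_{T_{0}-2}<c$, hence $S_{T_{0}}<3c$ rather than $2c$; this is exactly the $3c$ the paper uses, and it does not affect your main proof.
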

\begin{proof}
Define
\[
T_{0}=\min\left\{ t\in\left[T\right]:\sum_{i=1}^{t-1}c_{i}\geq c\right\} .
\]
Then we know
\begin{align*}
\sum_{t=1}^{T}\frac{c_{t+1}}{(1+\sum_{i=1}^{t-1}c_{i})^{4/3}} & =\sum_{t=1}^{T_{0}-1}\frac{c_{t+1}}{(1+\sum_{i=1}^{t-1}c_{i})^{4/3}}+\sum_{t=T_{0}}^{T}\frac{c_{t+1}}{(1+\sum_{i=1}^{t-1}c_{i})^{4/3}}\\
 & \leq\sum_{t=1}^{T_{0}-1}c_{t+1}+\sum_{t=T_{0}}^{T}\frac{c_{t+1}}{(1+\sum_{i=1}^{T_{0}-1}c_{i}+\sum_{i=T_{0}}^{t-1}c_{i})^{4/3}}\\
 & \leq3c+\sum_{t=T_{0}}^{T}\frac{c_{t+1}}{(1+c+\sum_{i=T_{0}}^{t-1}c_{i})^{4/3}}\\
 & \leq3c+\sum_{t=T_{0}}^{T}\frac{c_{t+1}}{(1+\sum_{i=T_{0}}^{t}c_{i})^{4/3}}\\
 & \leq12+5c,
\end{align*}
where the last inequality is by Lemma \ref{lem:inequality-4}.
\end{proof}

\begin{lem}
\label{lem:cauchy-bound-g-1}Given $0\leq x\leq y\leq1$, $0<\ell\leq1$,
we have
\[
\left(\frac{(1-x^{1/\ell})^{2}}{x^{2}}-\frac{(1-y^{1/\ell})^{2}}{y^{2}}\right)^{2}\leq\frac{y^{2}-x^{2}}{\ell^{2}x^{4}y^{2}}.
\]
\end{lem}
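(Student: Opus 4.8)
The plan is to bypass any calculus-heavy estimate on $g(t):=\frac{(1-t^{1/\ell})^2}{t^2}$ and instead exploit a clean difference-of-squares factorization. Writing $s:=1/\ell\ge1$, the left-hand side of the claimed inequality is exactly $(g(x)-g(y))^2$; we may clearly assume $0<x\le y\le1$ (if $x=0$ both sides are $+\infty$). The first step is the identity
\[
g(x)-g(y)=\frac{y^2(1-x^s)^2-x^2(1-y^s)^2}{x^2y^2}=\frac{PQ}{x^2y^2},
\]
where $P:=y(1-x^s)-x(1-y^s)=(y-x)+xy\bigl(y^{s-1}-x^{s-1}\bigr)$ and $Q:=y(1-x^s)+x(1-y^s)=(x+y)-xy\bigl(x^{s-1}+y^{s-1}\bigr)$.

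Next I would bound the two factors separately. For $Q$: since $t^s\le t$ for $t\in(0,1]$ and $s\ge1$, we get $x^sy+xy^s\le2xy\le x+y$ (the last step because $x+y-2xy=x(1-y)+y(1-x)\ge0$), so $0\le Q\le x+y$. For $P$: it is nonnegative because $x^{s-1}\le y^{s-1}$, and for the upper bound I would use $y^{s-1}-x^{s-1}=(s-1)\int_x^y u^{s-2}\,du\le(s-1)(y-x)\sup_{u\in[x,y]}u^{s-2}\le\frac{(s-1)(y-x)}{x}$, where the supremum is at most $1/x$ by a two-case check ($u^{s-2}\le y^{s-2}\le1\le1/x$ when $s\ge2$, and $u^{s-2}\le x^{s-2}\le x^{-1}$ when $1\le s<2$ since then $2-s\le1$; for $s=1$ the difference is just $0$). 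Hence $P\le(y-x)\bigl(1+y(s-1)\bigr)\le s(y-x)$ using $y\le1$.

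Finally, assembling these gives $|g(x)-g(y)|=\frac{PQ}{x^2y^2}\le\frac{s(y-x)(x+y)}{x^2y^2}$, and squaring,
\[
(g(x)-g(y))^2\le\frac{s^2(y-x)^2(x+y)^2}{x^4y^4}=\frac{s^2(y^2-x^2)}{x^4y^2}\cdot\frac{y^2-x^2}{y^2}\le\frac{s^2(y^2-x^2)}{x^4y^2}=\frac{y^2-x^2}{\ell^2x^4y^2},
\]
where the last inequality uses $(y^2-x^2)/y^2\le1$ and $s^2=1/\ell^2$. The only delicate point is the estimate $y^{s-1}-x^{s-1}\le(s-1)(y-x)/x$, which is where the small $s\ge2$ versus $s<2$ case split enters; the rest is routine algebra, and the bound is essentially tight (the ratio of the two sides tends to $\ell^2$ as $x\to0$, $y\to1$), which is why the looser route of bounding $g'$ directly does not suffice.
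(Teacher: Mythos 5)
Your proof is correct and follows essentially the same route as the paper's: both factor the left-hand side as a difference of squares, bound the sum factor by $\frac{1}{x}+\frac{1}{y}$, bound the difference factor by $\frac{1}{\ell}\left(\frac{1}{x}-\frac{1}{y}\right)=\frac{s(y-x)}{xy}$, and then assemble using $(y^{2}-x^{2})^{2}\leq(y^{2}-x^{2})y^{2}$. The only variation is mechanical: the paper obtains the bound on the difference factor by checking that $\frac{1-x^{1/\ell}}{x}-\frac{1}{\ell x}$ is nondecreasing, whereas you expand $P=(y-x)+xy(y^{s-1}-x^{s-1})$ and control $y^{s-1}-x^{s-1}$ via an integral representation -- two equivalent mean-value-type arguments.
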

\begin{proof}
Note that
\begin{align*}
\left(\frac{(1-x^{1/\ell})^{2}}{x^{2}}-\frac{(1-y^{1/\ell})^{2}}{y^{2}}\right)^{2} & =\left(\frac{1-x^{1/\ell}}{x}+\frac{1-y^{1/\ell}}{y}\right)^{2}\left(\frac{1-x^{1/\ell}}{x}-\frac{1-y^{1/\ell}}{y}\right)^{2}\\
 & \leq\left(\frac{1}{x}+\frac{1}{y}\right)^{2}\left(\frac{1-x^{1/\ell}}{x}-\frac{1-y^{1/\ell}}{y}\right)^{2},
\end{align*}
now let $h(x)=\frac{1-x^{1/l}}{x}$, we can find $h'(x)=-\frac{(1-\ell)x^{1/\ell}+\ell}{\ell x^{2}}\leq0$.
Hence
\[
\frac{1-x^{1/\ell}}{x}-\frac{1-y^{1/\ell}}{y}=h(x)-h(y)\geq0.
\]
Besides, let $g(x)=h(x)-\frac{1}{\ell x}$, we can find that
\[
g'(x)=\frac{\left(1-\ell\right)\left(1-x^{1/\ell}\right)}{\ell x^{2}}\geq0.
\]
This means
\[
h(x)-\frac{1}{\ell x}-h(y)+\frac{1}{\ell y}=g(x)-g(y)\leq0,
\]
which implies
\[
0\leq h(x)-h(y)\leq\frac{1}{\ell x}-\frac{1}{\ell y}.
\]
Thus we finally have
\begin{align*}
\left(\frac{(1-x^{1/\ell})^{2}}{x^{2}}-\frac{(1-y^{1/\ell})^{2}}{y^{2}}\right)^{2} & \leq\left(\frac{1}{x}+\frac{1}{y}\right)^{2}\left(h(x)-h(y)\right)^{2}\\
 & \leq\left(\frac{1}{x}+\frac{1}{y}\right)^{2}\left(\frac{1}{\ell x}-\frac{1}{\ell y}\right)^{2}\\
 & =\frac{\left(y^{2}-x^{2}\right)^{2}}{\ell^{2}x^{4}y^{4}}\\
 & \leq\frac{y^{2}-x^{2}}{\ell^{2}x^{4}y^{2}}.
\end{align*}
\end{proof}

\begin{lem}
\label{lem:cauchy-bound-g-2}Given $0\leq x\leq y\leq1$, $0<\ell\leq\frac{1}{2}$,
we have
\[
\left((1-x^{1/\ell})x^{1/\ell-2}-(1-y^{1/\ell})y^{1/\ell-2}\right)^{2}\leq\frac{y^{2}-x^{2}}{\ell^{2}x^{2}}y^{2/\ell-4}.
\]
\end{lem}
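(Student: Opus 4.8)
The plan is to follow the template of the proof of Lemma~\ref{lem:cauchy-bound-g-1}, but since the quantity being squared here is \emph{not} a difference of two squares, the factorization trick used there does not apply; instead I will split it into two pieces that are separately monotone in the relevant variable. Write $g(t):=(1-t^{1/\ell})t^{1/\ell-2}=t^{1/\ell-2}-t^{2/\ell-2}$ for $t\in(0,1]$, so that the left-hand side is exactly $(g(x)-g(y))^2$. Since $0<\ell\le\tfrac12$ forces $1/\ell-2\ge0$ and $2/\ell-2\ge2$, both $t\mapsto t^{1/\ell-2}$ and $t\mapsto t^{2/\ell-2}$ are nondecreasing on $[0,1]$. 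Hence, setting $P:=y^{2/\ell-2}-x^{2/\ell-2}\ge0$ and $Q:=y^{1/\ell-2}-x^{1/\ell-2}\ge0$, we have $g(x)-g(y)=P-Q$, so $|g(x)-g(y)|\le\max\{P,Q\}$, and it suffices to show that $P^2$ and $Q^2$ are each at most $\tfrac{y^2-x^2}{\ell^2x^2}\,y^{2/\ell-4}$. (The degenerate case $x=0$ is trivial, as the right-hand side is then infinite while the left-hand side is finite.)

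To bound $P$, I would substitute $s=x^2\le u=y^2\le1$ so that $P=\varphi(u)-\varphi(s)$ with $\varphi(r)=r^{1/\ell-1}$, which is convex on $[0,\infty)$ because $1/\ell-1\ge1$; the one-sided convexity estimate $\varphi(u)-\varphi(s)\le\varphi'(u)(u-s)$ gives $P\le(1/\ell-1)\,y^{2/\ell-4}(y^2-x^2)$. Squaring, cross-multiplying by $\ell^2x^2$, and using $\ell^2(1/\ell-1)^2=(1-\ell)^2$, the target for $P^2$ reduces to $(1-\ell)^2\,y^{2/\ell-4}(y^2-x^2)\,x^2\le1$, which holds since each of $(1-\ell)^2$, $y^{2/\ell-4}$, $y^2-x^2$, $x^2$ is at most $1$ (note $2/\ell-4\ge0$). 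For $Q$, I would use the analogous one-sided estimate for $r\mapsto r^{1/\ell-2}$, splitting on whether this power is convex or concave: for $\ell\le\tfrac13$ the exponent $1/\ell-2\ge1$ gives $Q\le(1/\ell-2)\,y^{1/\ell-3}(y-x)$, while for $\tfrac13\le\ell\le\tfrac12$ the exponent lies in $[0,1]$ and gives $Q\le(1/\ell-2)\,x^{1/\ell-3}(y-x)$. In either case, squaring, multiplying by $\ell^2x^2$, cancelling the common power of $y$ (resp.\ of $x$), and using $\ell^2(1/\ell-2)^2=(1-2\ell)^2\le1$ leaves an elementary inequality: $(1-2\ell)^2(y-x)\,x^2\le(y+x)\,y^2$ in the first case and $(1-2\ell)^2\,x^{2/\ell-4}(y-x)\le(y+x)\,y^{2/\ell-4}$ in the second, each of which follows from $0\le x\le y\le1$ together with $2/\ell-4\ge0$. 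Combining the bounds on $P^2$ and $Q^2$ yields $\max\{P^2,Q^2\}\le\tfrac{y^2-x^2}{\ell^2x^2}\,y^{2/\ell-4}$, which finishes the proof.

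The delicate point — the part I expect to require the most care — is that the estimates for $P$ and $Q$ are essentially tight, so there is no slack to lose even a constant factor of $2$; in particular one cannot simply bound $|g(x)-g(y)|$ by a sum of two terms and invoke $(a+b)^2\le2a^2+2b^2$, nor use a crude mean-value bound on $g'(\xi)$ over $\xi\in[x,y]$. Writing $g(x)-g(y)=P-Q$ with $P,Q\ge0$ and using $|P-Q|\le\max\{P,Q\}$ is precisely what avoids this loss, and for the same reason one must use the sharp one-sided convexity/concavity inequalities (derivative evaluated at the correct endpoint) rather than a two-sided estimate. The only genuine bookkeeping nuisance is the convex-versus-concave split for the exponent $1/\ell-2$, which is $\ge1$ when $\ell\le\tfrac13$ and in $[0,1]$ when $\tfrac13\le\ell\le\tfrac12$.
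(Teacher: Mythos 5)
Your proof is correct, and it takes a genuinely different route from the paper's. The paper applies the mean value theorem once to the full function $h(t)=(1-t^{1/\ell})t^{1/\ell-2}$, writes $h(x)-h(y)=h'(z)(x-y)$ for some $z\in[x,y]$, and then does all the work in bounding the derivative at the unknown point: it shows $z^{2/\ell-6}\le y^{2/\ell-4}/x^{2}$ (splitting implicitly on the sign of $2/\ell-6$) and, crucially, that $\left|2(\ell-1)z^{1/\ell}+1-2\ell\right|\le1$, which is exactly where $\ell\le\tfrac12$ enters; the case $\ell=\tfrac12$ is treated separately there. You instead decompose $h(t)=t^{1/\ell-2}-t^{2/\ell-2}$ into two monomials that are both nondecreasing on $[0,1]$, use $|P-Q|\le\max\{P,Q\}$ for $P,Q\ge0$, and bound each increment by a one-sided tangent-line estimate at the correct endpoint, with a convex-versus-concave split according to whether $1/\ell-2\ge1$. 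What the paper's argument buys is brevity — one derivative computation and one uniform bound on it — at the cost of having to control a sign-changing expression at an uncontrolled intermediate point. What yours buys is that each derivative you evaluate is at a known endpoint and each monomial piece is monotone, so the verifications reduce to products of factors that are individually at most $1$; the price is the $\max$ trick (which, as you correctly note, is what prevents a constant-factor loss that the lemma cannot absorb) and the extra case analysis at $\ell=\tfrac13$. Your separate treatment of $x=0$ is also fine and mirrors the implicit degeneracy in the paper's statement. I see no gap.
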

\begin{proof}
If $\ell=\frac{1}{2}$, then we know
\begin{align*}
 & \left((1-x^{1/\ell})x^{1/\ell-2}-(1-y^{1/\ell})y^{1/\ell-2}\right)^{2}\\
= & \left(y^{2}-x^{2}\right)^{2}\leq\left(y^{2}-x^{2}\right)y^{2}\leq\frac{4\left(y^{2}-x^{2}\right)}{x^{2}y^{2}}y^{2}\\
= & \frac{y^{2}-x^{2}}{\ell^{2}x^{2}y^{2}}y^{2/\ell-4}.
\end{align*}
If $\ell\neq\frac{1}{2}$, let $h(x)$ denote $(1-x^{1/\ell})x^{1/\ell-2}$,
then we know $h'(x)=x^{1/\ell-3}\frac{2\left(\ell-1\right)x^{1/\ell}-2\ell+1}{\ell}$.
By Taylor's expansion, there exists $x\leq z\leq y$, such that
\begin{align*}
h(x)-h(y) & =h'(z)(x-y)\\
 & =z^{1/\ell-3}\frac{2\left(\ell-1\right)z^{1/\ell}-2\ell+1}{\ell}(x-y).
\end{align*}
This will give us
\begin{align*}
\left((1-x^{1/\ell})x^{1/\ell-2}-(1-y^{1/\ell})y^{1/\ell-2}\right)^{2} & =(h(x)-h(y))^{2}\\
 & =z^{2/\ell-6}\times\frac{\left(2\left(\ell-1\right)z^{1/\ell}-2\ell+1\right)^{2}}{\ell^{2}}\times\left(y-x\right)^{2}\\
 & \leq\frac{y^{2/\ell-4}}{x^{2}}\times\frac{1}{\ell^{2}}\times\left(y^{2}-x^{2}\right)\\
 & =\frac{y^{2}-x^{2}}{\ell^{2}x^{2}}y^{2/\ell-4}.
\end{align*}
\end{proof}

\begin{lem}
\label{lem:inequality-poly-g}Given $m,n\geq0$, For $0\leq x\leq m$,
we have
\[
(m-x)x^{n}\leq\left(\frac{m}{n+1}\right)^{n+1}n^{n}.
\]
\end{lem}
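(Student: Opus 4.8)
The plan is to treat this as a one‑variable optimization: fix $m,n\geq 0$ and bound $g(x)=(m-x)x^{n}$ over $x\in[0,m]$. The cleanest route avoids calculus entirely and relies only on the weighted AM--GM inequality $a^{\lambda}b^{1-\lambda}\leq\lambda a+(1-\lambda)b$, valid for $a,b\geq 0$ and $\lambda\in[0,1]$.

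First I would dispose of the degenerate cases. If $m=0$ then $x=0$ is forced and both sides vanish (the $0^{0}$ convention is irrelevant on the right since the factor $m^{n+1}=0$). If $n=0$ then $g(x)=m-x\leq m=\left(\frac{m}{1}\right)^{1}0^{0}$, so the bound holds, with equality at $x=0$. If $x=0$ and $n>0$ the left side is $0$ and there is nothing to prove. So it remains to handle $m,n>0$ and $0<x\leq m$.

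For this main case, apply weighted AM--GM with $a=m-x\geq 0$, $b=x/n\geq 0$, and $\lambda=\frac{1}{n+1}$, so that $1-\lambda=\frac{n}{n+1}\in[0,1]$. This gives
\[
(m-x)^{\frac{1}{n+1}}\left(\frac{x}{n}\right)^{\frac{n}{n+1}}\;\leq\;\frac{m-x}{n+1}+\frac{n}{n+1}\cdot\frac{x}{n}\;=\;\frac{m-x}{n+1}+\frac{x}{n+1}\;=\;\frac{m}{n+1}.
\]
Raising both sides to the power $n+1$ and multiplying through by $n^{n}$ yields $(m-x)x^{n}\leq n^{n}\left(\frac{m}{n+1}\right)^{n+1}$, which is precisely the claim.

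I do not expect any genuine obstacle: the only points to watch are the boundary conventions ($0^{0}$, $m=0$, $n=0$) and checking that the AM--GM weights lie in $[0,1]$, which holds since $n\geq 0$. As an alternative, one can differentiate: $g'(x)=x^{n-1}\big(nm-(n+1)x\big)$ vanishes on $(0,m]$ only at $x^{\star}=\frac{nm}{n+1}$, where $g$ attains the value $\frac{m}{n+1}\left(\frac{nm}{n+1}\right)^{n}=n^{n}\left(\frac{m}{n+1}\right)^{n+1}$; comparing with the endpoint values $g(0)=g(m)=0$ shows this is the maximum over $[0,m]$. Either argument is only a few lines.
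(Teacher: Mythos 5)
Your proof is correct and is essentially the same argument as the paper's: the paper applies the concavity of $\log$ with weights $\frac{1}{n+1}$ and $\frac{n}{n+1}$ to $\log(m-x)+n\log\frac{x}{n}$, which is precisely your weighted AM--GM step written in logarithmic form. Your explicit treatment of the degenerate cases $m=0$, $n=0$, $x=0$ is a minor tidiness improvement over the paper's version but not a different method.
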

\begin{proof}
Note that
\begin{align*}
\log\left(\left(m-x\right)x^{n}\right) & =\log\left(m-x\right)+n\log x=\log\left(m-x\right)+n\log\frac{x}{n}+n\log n\\
 & \overset{(a)}{\leq}\left(n+1\right)\log\left(\frac{m-x}{n+1}+\frac{n}{n+1}\times\frac{x}{n}\right)+n\log n\\
 & =\left(n+1\right)\log\frac{m}{n+1}+n\log n=\log\left(\left(\frac{m}{n+1}\right)^{n+1}n^{n}\right)
\end{align*}
where $(a)$ is by the concavity of $\log$ function. Then we know
$\left(m-x\right)x^{n}\leq\left(\frac{m}{n+1}\right)^{n+1}n^{n}.$
\end{proof}

\begin{lem}
\label{lem:inequality-log-bound-g}Given $X,A,B\geq0,C>0,D\geq0,0\leq u\leq1$,if
we have
\[
X\leq\left(A+B\log\left(1+\frac{X}{C}\right)\right)^{u}D,
\]
then there is
\[
X\leq\left(2A+2B\log\frac{4uBD}{C}+\left(\frac{C}{D}\right)^{1/u}\right)^{u}D.
\]
Especially, when $D\geq1$, we know
\[
X\leq\left(2A+2B\log\frac{4uBD}{C}+C^{1/u}\right)^{u}D.
\]
\end{lem}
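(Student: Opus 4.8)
The plan is to read the hypothesis as a self-bounding inequality: the only obstruction is that $X$ reappears inside the logarithm on the right, so the recursion must be ``unrolled'' by linearizing $\log$. First I would discard the degenerate cases: if $D=0$ the hypothesis forces $X=0$; if $u=0$ it reads $X\le D$ and the claim is trivial; if $B=0$ then $X\le A^uD$ already lies below the claimed bound. So assume $B,u,D>0$ and $X>0$, and (as in every application in the paper) that the base $2A+2B\log(4uBD/C)+(C/D)^{1/u}$ of the claimed bound is nonnegative, since otherwise the statement is vacuous. It is convenient to substitute $V\coloneqq(X/D)^{1/u}>0$, that is, $X=DV^u$, so that the hypothesis becomes $V\le A+B\log\bigl(1+DV^u/C\bigr)$ and the goal becomes $V\le 2A+2B\log(4uBD/C)+(C/D)^{1/u}$.

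Next I would split on the size of $X$ relative to $C$. If $X\le C$, then $X\le C=\bigl((C/D)^{1/u}\bigr)^uD$, and since $1/u\ge1$ this already puts $X$ at most the claimed bound (nonnegativity of the base is exactly what is used here). In the main case $X=DV^u>C$ we have $1+DV^u/C\le 2DV^u/C$, hence
\[
V\le A+B\log\tfrac{2D}{C}+Bu\log V .
\]

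The key step is to linearize $\log V$ via the tangent-line inequality $\log V\le V/M+\log M-1$, valid for all $V,M>0$, choosing $M$ proportional to $Bu$ so that the coefficient $Bu/M$ of the resulting linear-in-$V$ term is at most $\tfrac12$. Taking $M=2Bu$ gives $Bu\log V\le\tfrac12 V+Bu\log(2Bu)-Bu\le\tfrac12 V+Bu\log(2Bu)$, so after moving $\tfrac12 V$ to the left,
\[
V\le 2A+2B\log\tfrac{2D}{C}+2Bu\log(2Bu)=2A+2B\log\Bigl(\tfrac{2D}{C}(2Bu)^u\Bigr).
\]
It then remains to absorb the factor $(2Bu)^u$ into the target logarithm $\log(4uBD/C)$: when $2Bu\ge1$ one has $(2Bu)^u\le 2Bu$ because $u\le1$, hence $\tfrac{2D}{C}(2Bu)^u\le\tfrac{4uBD}{C}$ and $V\le 2A+2B\log(4uBD/C)$; when $2Bu<1$ the term $2Bu\log(2Bu)$ is negative and bounded below, and the residual discrepancy between $\log(2D/C)$ and $\log(4uBD/C)$ is covered by the additive slack $(C/D)^{1/u}$ carried along from the first case (again using that $t\mapsto t-2\log t$ is bounded below). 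Raising the resulting bound on $V$ to the power $u$ and multiplying by $D$ yields $X=DV^u$ bounded as stated. The ``especially'' case is immediate: if $D\ge1$ then $D^{1/u}\ge1$, so $(C/D)^{1/u}=C^{1/u}/D^{1/u}\le C^{1/u}$, and one replaces the slack term accordingly.

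I expect the genuine difficulty to be the self-reference itself — turning $X\le f(X)$ with $f$ growing only like $(\log X)^u$ into an explicit estimate — which the tangent-line trick resolves precisely because the critical choice $M\asymp Bu$ makes the linear term absorbable back into the left-hand side. The remaining work is routine but fiddly constant bookkeeping across the regimes $D\gtrless1$, $2Bu\gtrless1$ and $4uBD/C\gtrless1$; this is where the otherwise mysterious slack term $(C/D)^{1/u}$ in the statement earns its place.
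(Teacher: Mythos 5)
Your strategy coincides with the paper's own: substitute $Y=(X/D)^{1/u}$, dispose of the ``$+1$'' inside the logarithm (you do this via the case split $X\lessgtr C$, the paper via $(x+y)^{1/u}\le(2x)^{1/u}+(2y)^{1/u}$ --- interchangeable), and then linearize the logarithm with a tangent-line bound calibrated so that the coefficient of $Y$ on the right-hand side is $\tfrac12$; your choice $M=2Bu$ is exactly the paper's normalization by $2uB(D/C)^{1/u}$ followed by $\log x\le x$. Up to bookkeeping this is the same proof. (A minor point: your first case $X\le C$ needs $2A+2B\log\frac{4uBD}{C}\ge0$, not merely nonnegativity of the whole base.)

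The genuine gap is your treatment of the regime $2Bu<1$. After absorbing $\tfrac12 V$ you have $V\le 2A+2B\log\frac{2D}{C}+2Bu\log(2Bu)$, and since $2B\log\frac{4uBD}{C}=2B\log\frac{2D}{C}+2B\log(2Bu)$, reaching the target requires $2B(1-u)\log\frac{1}{2Bu}\le(C/D)^{1/u}$. The left side depends only on $B$ and $u$, while $(C/D)^{1/u}$ can be made arbitrarily small by shrinking $C$ relative to $D$, so the slack term cannot ``cover the residual discrepancy'' as you assert. In fact no argument can close this case, because the statement is false there: take $u=\tfrac12$, $A=0$, $B=e^{-10}$, $C=2e^{-10}$, $D=1$ and $X=10^{-2}$; then $\big(B\log(1+X/C)\big)^{1/2}\approx1.46\times10^{-2}\ge X$, so the hypothesis holds, yet the claimed bound equals $\big((C/D)^{2}\big)^{1/2}=2e^{-10}\approx9.1\times10^{-5}<X$. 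You should note, however, that the paper's own proof carries the identical defect: its final inequality $B\log2+uB\log(2uB)+B\log\frac{D}{C}\le B\log\frac{4uBD}{C}$ reduces to $(u-1)\log(2uB)\le0$ and thus silently requires $2uB\ge1$. Under that extra hypothesis your argument and the paper's are both correct and essentially identical, including the ``especially'' clause via $(C/D)^{1/u}\le C^{1/u}$ for $D\ge1$.
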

\begin{proof}
Let $Y=(X/D)^{1/u}$, then we know
\begin{align*}
Y & \leq A+B\log\left(1+\frac{DY^{u}}{C}\right)\\
 & =A+uB\log\left(1+\frac{DY^{u}}{C}\right)^{1/u}\\
 & \overset{(a)}{\leq}A+uB\log\left(2^{1/u}+\left(\frac{2D}{C}\right)^{1/u}Y\right)\\
 & =A+uB\log2^{1/u}+uB\log\left(1+\left(\frac{D}{C}\right)^{1/u}Y\right)\\
 & =A+B\log2+uB\log\frac{1+\left(\frac{D}{C}\right)^{1/u}Y}{2uB\left(\frac{D}{C}\right)^{1/u}}+uB\log\left(2uB\left(\frac{D}{C}\right)^{1/u}\right)\\
 & \overset{(b)}{\leq}A+B\log2+\frac{\left(C/D\right)^{1/u}}{2}+\frac{Y}{2}+uB\log2uB+B\log\frac{D}{C}\\
 & \leq\frac{Y}{2}+A+B\log\frac{4uBD}{C}+\frac{\left(C/D\right)^{1/u}}{2},
\end{align*}
where $(a)$ is by $\left(x+y\right)^{p}\leq\left(2x\right)^{p}+\left(2y\right)^{p}$,
for $x,y\geq0,p\geq1$. $(b)$ is by $\log x\leq x-1\leq x$. Then
we know
\begin{align*}
Y & \leq2A+2B\log\frac{4uBD}{C}+\left(\frac{C}{D}\right)^{1/u}\\
\Rightarrow X & \leq\left(2A+2B\log\frac{4uBD}{C}+\left(\frac{C}{D}\right)^{1/u}\right)^{u}D.
\end{align*}
\end{proof}

\end{document}